\DeclarePairedDelimiter\br{(}{)}
\DeclarePairedDelimiter\brs{[}{]}
\DeclarePairedDelimiter\brc{\{}{\}}
\DeclarePairedDelimiter\inner{\langle}{\rangle}
\DeclarePairedDelimiter\abs{\lvert}{\rvert}
\definecolor{darkgreen}{rgb}{0,0.5,0}
\DeclareMathOperator*{\argmin}{arg\,min}
\DeclareMathOperator{\aset}{\mathcal{A}}
\DeclareMathOperator{\sset}{\mathcal{S}}
\DeclareMathOperator{\Cmax}{C_{\text{max}}}
\DeclareMathOperator{\CmaxSquare}{C^2_{\text{max}}}
\DeclareMathOperator{\CmaxLambda}{C_{\text{max},\lambda}}
\DeclareMathOperator{\CmaxLambdaSquare}{C^2_{\text{max},\lambda}}
\DeclareMathOperator{\dom}{\text{dom}}
\def\A{{\mathcal A}}
\def\vpik{v^{\pi_k}}
\def\vHpik{v_\lambda^{\pi_k}}
\def\vHpi{v_\lambda^{\pi}}
\def\qHpi{q_\lambda^{\pi}}
\def\qHpik{q_\lambda^{\pi_k}}
\def\hatqHpik{\hat q_\lambda^{\pi_k}}
\def\E{{\mathbb E}}
\def\R{\mathbb R}
\def\vstar{v^*}
\def\vHstar{v_\lambda^*}
\def\pistar{\pi^*}
\def\pik{\pi_k}
\def\triangleq{:=}
\newcommand*\dkl[2]{d_{KL}(#1||#2)}
\newcommand*\bregman[2]{B_\omega\left(#1,#2\right)}
\newcommand*\bregmanShort[2]{B_\omega(#1,#2)}
\newcommand*\bregmanS[3]{B_\omega\left(#1;#2,#3\right)}
\newcommand*\omegaS[2]{\omega\left(#1;#2\right)}
\def\euclid{C_{\omega,1}}
\def\approxAgrowth{C_{\omega,2}}
\def\bregmanMax{C_{\omega,3}}
\def\omegaBound{D_\omega}
\def\gradBound{h_\omega}
\def\hatgradBound{\hat h_\omega}
\def\diffBound{A_\omega}
\def\HoeffdingBound{r_\omega}
\DeclareMathOperator{\simplex}{\Delta_{\A}}
\DeclareMathOperator{\ssimplex}{\Delta_\A^S}
\newcommand*\norm[1]{\left\|#1\right\|}
\newcolumntype{P}[1]{>{\centering\arraybackslash}p{#1}}
\newcolumntype{L}[1]{>{\raggedright\arraybackslash}p{#1}}
    \newcommand{\lior}[1]{\textcolor{blue}{\{Lior: #1\}}}
    \newcommand{\yon}[1]{\textcolor{darkgreen}{\{Yonathan: #1\}}}
    \newcommand{\shie}[1]{\textcolor{red}{\{Shie: #1\}}}
    \newcommand{\todo}[1]{\textcolor{red}{\{TODO: #1\}}}
    \newcommand{\nadav}[1]{\textcolor{purple}{\{Nadav: #1\}}}
    \newcommand{\lior}[1]{}
    \newcommand{\yon}[1]{}
    \newcommand{\todo}[1]{}
    \newcommand{\nadav}[1]{}
    \newcommand{\shie}[1]{}
\newtheorem{proposition}{Proposition}
\newtheorem{assumption}{Assumption}
\newtheorem{corollary}[proposition]{Corollary}
\newtheorem{lemma}[proposition]{Lemma}
\newtheorem{theorem}[proposition]{Theorem}
\newtheorem*{theorem*}{Theorem}
\newtheorem{defn}{Definition}
\newtheorem{remark}{Remark}
\newcommand{\citet}[1]{\citeauthor{#1} \shortcite{#1}}
\newcommand{\citep}{\cite}
\newcommand{\citealp}[1]{\citeauthor{#1} \citeyear{#1}}
\definecolor{darkgray}{rgb}{0.66, 0.66, 0.66}
\renewcommand{\appendixtocname}{List of Appendices}
\let\oldappendix\appendices
\renewcommand{\appendices}{%
  \clearpage
  \renewcommand{\thesection}{\Alph{section}}
  \let\tf@toc\tf@app
  \addtocontents{app}{\protect\setcounter{tocdepth}{2}}
  \immediate\write\@auxout{%
    \string\let\string\tf@toc\string\tf@app^^J
  }
  \oldappendix
}%
\newcommand{\listofappendices}{%
  \begingroup
  \renewcommand{\contentsname}{\appendixtocname}
  \let\@oldstarttoc\@starttoc
  \def\@starttoc##1{\@oldstarttoc{app}}
  \tableofcontents
  \endgroup
}
\title{Adaptive Trust Region Policy Optimization:\\
Global Convergence and Faster Rates for Regularized MDPs}
\author{{Lior Shani\textsuperscript{\rm $\dagger$}, Yonathan Efroni\textsuperscript{\rm $\dagger$}, Shie Mannor} \\
\textsuperscript{\rm $\dagger$} equal contribution\\
Technion - Israel Institute of Technology \\
Haifa, Israel}
\begin{document}

\maketitle

\begin{abstract}

Trust region policy optimization (TRPO) is a popular and empirically successful policy search algorithm in Reinforcement Learning (RL) in which a surrogate problem, that restricts consecutive policies to be `close' to one another, is iteratively solved. Nevertheless, TRPO has been considered a heuristic algorithm inspired by Conservative Policy Iteration (CPI). We show that the adaptive scaling mechanism used in TRPO is in fact the natural ``RL version" of traditional trust-region methods from convex analysis. We first analyze TRPO in the planning setting, in which we have access to the model and the entire state space. Then, we consider sample-based TRPO and establish $\tilde O(1/\sqrt{N})$ convergence rate to the global optimum. Importantly, the adaptive scaling mechanism allows us to analyze TRPO in {\em regularized MDPs} for which we prove fast rates of $\tilde O(1/N)$, much like results in convex optimization. This is the first result in RL of better rates when regularizing the instantaneous cost or reward.

\end{abstract}

\section{Introduction}

The field of Reinforcement learning (RL) \cite{sutton2018reinforcement} tackles the problem of learning how to act optimally in an unknown dynamic environment. The agent is allowed to apply actions on the environment, and by doing so, to manipulate its state. Then, based on the rewards or costs it accumulates, the agent learns how to act optimally. The foundations of RL lie in the theory of Markov Decision Processes (MDPs), where an agent has an access to the model of the environment and can plan to act optimally.

\textbf{Trust Region Policy Optimization (TRPO):} Trust region methods are a popular class of techniques to solve an RL problem and span a wide variety of algorithms including Non-Euclidean TRPO (NE-TRPO)~\cite{schulman2015trust} and Proximal Policy Optimization~\cite{schulman2017proximal}. In these methods a sum of two terms is iteratively being minimized: a linearization of the objective function and a proximity term which restricts two consecutive updates to be `close' to each other, as in Mirror Descent (MD)~\cite{beck2003mirror}. In spite of their popularity, much less is understood in terms of their convergence guarantees and they are considered heuristics~\cite{schulman2015trust,papini2019smoothing} (see Figure~\ref{fig: TRPO and CPI}).

\textbf{TRPO and Regularized MDPs:} Trust region methods are often used in conjunction with regularization. This is commonly done by adding the negative entropy to the instantaneous cost \cite{nachum2017trust,schulman2017proximal}. The intuitive justification for using entropy regularization is that it induces inherent exploration~\cite{fox2015taming}, and the advantage of `softening' the Bellman equation \cite{nachum2018path,dai2017sbeed}. Recently, \citet{ahmed2019understanding} empirically observed that adding entropy regularization results in a smoother objective which in turn leads to faster convergence when the learning rate is chosen more aggressively. Yet, to the best of our knowledge, there is no finite-sample analysis that demonstrates faster convergence rates for regularization in MDPs. This comes in stark contrast to well established faster rates for strongly convex objectives w.r.t.~convex ones~\cite{nesterov1998introductory}. In this work we refer to regularized MDPs as describing a more general case in which a strongly convex function is added to the immediate cost.

The goal of this work is to bridge the gap between the practicality of trust region methods in RL and the scarce theoretical guarantees for standard (unregularized) and regularized MDPs. To this end, we revise a fundamental question in this context:
\begin{itemize}
\item[] \textbf{\emph{What is the proper form of the proximity term 
in trust region methods for RL?}}
\end{itemize}
In~\citet{schulman2015trust}, two proximity terms are suggested which result in two possible versions of trust region methods for RL. The first (\citealp{schulman2015trust}, Algorithm~1) is motivated by Conservative Policy Iteration (CPI)~\cite{kakade2003sample} and results in an improving and thus converging algorithm in its exact error-free version. Yet, it seems computationally infeasible to produce a sample-based version of this algorithm. The second algorithm, with an adaptive proximity term which depends on the current policy (\citealp{schulman2015trust}, Equation~12), is described as a heuristic approximation of the first, with no convergence guarantees, but leads to NE-TRPO, currently among the most popular algorithms in RL (see Figure~\ref{fig: TRPO and CPI}).

In this work, we focus on tabular discounted MDPs and study a general TRPO method which uses the latter adaptive proximity term. Unlike the common belief, we show this adaptive scaling mechanism is `natural' and imposes the structure of RL onto traditional trust region methods from convex analysis. We refer to this method as adaptive TRPO, and analyze two of its instances: NE-TRPO~(\citealp{schulman2015trust}, Equation~12) and Projected Policy Gradient (PPG), as illustrated in Figure~\ref{fig: TRPO and CPI}. In Section~\ref{sec: MD}, we review results from convex analysis that will be used in our analysis. Then, we start by deriving in Section~\ref{sec: directional derivative} a closed form solution of the linearized objective functions for RL. In Section~\ref{sec: uniform TRPO}, using the closed form of the linearized objective, we formulate and analyze Uniform TRPO. This method assumes simultaneous access to the state space and that a model is given. In Section~\ref{sec: exact approximate}, we relax these assumptions and study Sample-Based TRPO, a sample-based version of Uniform TRPO, while building on the analysis of Section~\ref{sec: uniform TRPO}. The main contributions of this paper are:
\begin{itemize}
    \item We establish $\tilde O(1/\sqrt{N})$ convergence rate to the global optimum for both Uniform and Sample-Based TRPO.
    \item We prove a faster rate of $\tilde O(1/N)$ for regularized MDPs. To the best of our knowledge, it is the first evidence for faster convergence rates using regularization in RL.
    \item The analysis of Sample-Based TRPO, unlike CPI, does not rely on improvement arguments. This allows us to choose a more aggressive learning rate relatively to CPI which leads to an improved sample complexity even for the unregularized case.
\end{itemize}

\begin{figure}
	\centering
	\begin{tikzpicture}[->,>=stealth',shorten >=1pt,auto,node distance=2.7cm,
		semithick, state/.style={square, draw, minimum size=0.5cm}]
	\tikzstyle{startstop} = [rectangle, rounded corners, minimum width=1cm, minimum height=1cm,text centered, draw=black, font=\small]
	\tikzstyle{sum} = [draw, fill=white, circle, node distance=1cm]
    \tikzstyle{pinstyle} = [pin edge={to-,thin,black}]

    \tikzstyle{arrow} = [thick,->,>=stealth]
    \tikzstyle{bad_arrow} = [thick, dashed,->,>=stealth]

	    \node [sum] (sum) {+};
		
		\node (MD) [startstop, align=center, right of=sum, node distance = 2.2cm] {Mirror Descent \\ (\citeauthor{beck2003mirror})};
		
		\node (Adaptive) [startstop, align=center, left of=sum, node distance = 2cm] {Adaptive Scaling \\ of Proximal term};

		\node (TRPO) [startstop, align=center, below of=sum, node distance=1.4cm] {Adaptive TRPO \\ (this paper)};
		
		\node (NETRPO) [startstop, align=center, below of=TRPO, right of=TRPO, node distance=1.3cm,xshift=0.2cm] {NE-TRPO \\ (\citeauthor{schulman2015trust})};
		
		\node (PPG) [startstop, align=center, below of=TRPO, left of=TRPO, node distance=1.2cm, xshift=-0.2cm] {Projected \\ Policy Gradient};
		
		\node (CPI) [startstop, align=center, right of=TRPO, xshift=0.5cm] {CPI \\ (\citeauthor{kakade2002approximately})};
		
        \node [coordinate, below of=TRPO, node distance=0.7cm] (TRPO_bottom) {};
        
        \node [coordinate, below of=CPI, node distance=1.5cm] (CPI_bottom) {};
        

        \draw [arrow] (Adaptive) -- (sum);
        \draw [arrow] (MD) -- (sum);
        \draw [arrow] (sum) -- (TRPO);
        \draw [arrow] (TRPO) -- (TRPO_bottom) --  (NETRPO);
        \draw [arrow] (TRPO) -- (TRPO_bottom) --  (PPG);
        \draw [bad_arrow] (CPI) -- (CPI_bottom) -- (NETRPO);

	\end{tikzpicture}
	\caption{The adaptive TRPO: a solid line implies a formal relation; a dashed line implies a heuristic relation.}
	\label{fig: TRPO and CPI}
\end{figure}
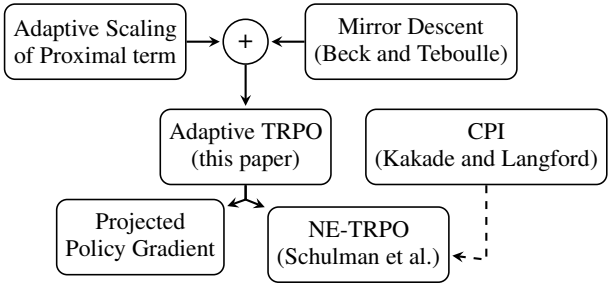

\section{Mirror Descent in Convex Optimization}\label{sec: MD}
Mirror descent (MD) \cite{beck2003mirror} is a well known first-order trust region optimization method for solving constrained convex problems, i.e, for finding
\begin{align}\label{eq: convex optimization}
    x^*\in\argmin_{x \in C} f(x),
\end{align}
where $f$ is a convex function and $C$ is a convex compact set. In each iteration, MD minimizes a linear approximation of the objective function, using the gradient $\nabla f(x_k)$, together with a proximity term by which the updated $x_{k+1}$ is `close' to $x_k$. Thus, it is considered a trust region method, as the iterates are `close' to one another. The iterates of MD are
\begin{align}\label{eq: MD iteration}
x_{k+1}\in \argmin_{x\in C}  \inner*{ \nabla f(x_k), x - x_k } + \frac{1}{t_k}\bregman{x}{x_k},
\end{align}
where $\bregman{x}{x_k}\triangleq\omega(x)-\omega(x_k)-\langle \nabla \omega(x_k), x-x_k\rangle$ is the Bregman distance associated with a strongly convex $\omega$ and $t_k$ is a stepsize (see Appendix~\ref{supp: assumptions omega}). In the general convex case, MD converges to the optimal solution of \eqref{eq: convex optimization} with a rate of $\tilde O(1/\sqrt{N})$, where $N$ is the number of MD iterations \cite{beck2003mirror,juditsky2011first}, i.e., $f(x_k)-f^* \leq  \tilde O(1/\sqrt{k})$, where $f^*=f(x^*)$.

The convergence rate can be further improved when $f$ is a part of special classes of functions. One such class is the set of $\lambda$-strongly convex functions w.r.t. the Bregman distance. We say that $f$ is $\lambda$-strongly convex w.r.t. the Bregman distance if $f(y)\geq f(x) +\langle \nabla f(x),y - x\rangle+\lambda \bregman{y}{x}$. For such $f$, improved convergence rate of $\tilde O(1/N)$ can be obtained \cite{juditsky2011first,nedic2014stochastic}. Thus, instead of using MD to optimize a convex $f$, one can consider the following regularized problem,
\begin{align}\label{eq: regularized convex optimization}
    x^*=\argmin_{x \in C} f(x)+\lambda g(x),
\end{align}
where $g$ is a strongly convex regularizer with coefficient ${\lambda>0}$. Define $F_\lambda(x)\triangleq f(x)+\lambda g(x)$, then, each iteration of MD becomes,
\begin{align}\label{eq: regularized MD iteration}
x_{k+1} = \argmin_{x\in C} \inner*{\nabla F_\lambda(x_k), x - x_k } + \frac{1}{t_k}\bregman{x}{x_k} .
\end{align}
Solving \eqref{eq: regularized MD iteration} allows faster convergence, at the expense of adding a bias to the solution of \eqref{eq: convex optimization}. Trivially, by setting $\lambda=0$, we go back to the unregularized convex case. 

In the following, we consider two common choices of $\omega$ which induce a proper Bregman distance: (a) \textbf{The euclidean case}, with $\omega(\cdot)=\frac{1}{2}\norm{\cdot}_2^2$ and the resulting Bregman distance is the squared euclidean norm $\bregman{x}{y} = \frac{1}{2}\norm{x-y}_2^2$. In this case, \eqref{eq: MD iteration} becomes the \emph{Projected Gradient Descent} algorithm (\citealp{beck2017first}, Section 9.1), where in each iteration, the update step goes along the direction of the gradient at $x_k$, $\nabla f (x_k)$, and then projected back to the convex set $C$,
$
    x_{k+1} = P_c\left(x_k - t_k \nabla f(x_k)\right),
$
where $P_c(x)=\min_{y\in C} \frac{1}{2}\norm{x-y}_2^2$ is the orthogonal projection operator w.r.t. the euclidean norm. 

(b) \textbf{The non-euclidean case}, where $\omega(\cdot)=H(\cdot)$ is the negative entropy, and the Bregman distance then becomes the Kullback-Leibler divergence, $\bregman{x}{y}=\dkl{x}{y}$. In this case, MD becomes the \emph{Exponentiated Gradient Descent} algorithm. Unlike the euclidean case, where we need to project back into the set, when choosing $\omega$ as the negative entropy, \eqref{eq: MD iteration} has a closed form solution (\citealp{beck2017first}, Example 3.71),
$
    x^i_{k+1} = \frac{x^i_k e^{-t_k \nabla_i f(x_k)}}{\sum_j x^j_k e^{-t_k \nabla_j f(x_k)}},
$
where $x^i_k$ and $\nabla_i f$ are the $i$-th coordinates of $x_k$ and $\nabla f$. 


\section{Preliminaries and Notations}\label{sec: notation and pre}

We consider the infinite-horizon discounted MDP which is defined as the 5-tuple $(\mathcal{S}, \mathcal{A},P,C,\gamma)$ \cite{sutton2018reinforcement}, where ${\mathcal S}$ and ${\mathcal A}$ are finite state and action sets with cardinality of $S=|\mathcal{S}|$ and $A=|\mathcal{A}|$, respectively. The transition kernel is $P \equiv P(s'|s,a)$, $C \equiv c(s,a)$ is a cost function bounded in $[0,\Cmax]$\footnote{We work with costs instead of rewards to comply with convex analysis. All results are valid to the case where a reward is used.}, and $\gamma\in(0,1)$ is a discount factor. Let $\pi: \mathcal{S}\rightarrow \simplex$ be a stationary policy, where $\simplex$ is the set probability distributions on $\mathcal{A}$. Let $v^\pi \in \mathbb{R}^{S}$ be the value of a policy $\pi,$ with its $s\in \mathcal{S}$ entry given by ${v^\pi(s) \triangleq \E^{\pi}[\sum_{t=0}^\infty\gamma^tr(s_t,\pi(s_t))\mid s_0=s]}$, and ${\E^\pi[\cdot \mid s_0=s]}$ denotes expectation w.r.t. the distribution induced by $\pi$ and conditioned on the event $\{s_0=s\}.$  
It is known that $v^\pi=\sum_{t=0}^\infty \gamma^t (P^\pi)^t c^\pi=(I-\gamma P^\pi)^{-1}c^\pi,$ with the component-wise values $[P^\pi]_{s,s'}  \triangleq P(s'\mid s, \pi(s))$ and $[c^\pi]_s \triangleq  c(s,\pi(s))$. Our goal is to find a policy $\pi^*$ yielding the optimal value $v^*$ such that
\begin{equation}
v^* = \min_\pi (I-\gamma P^\pi)^{-1} c^\pi = (I-\gamma P^{\pi^*})^{-1} c^{\pi^*}.\label{mdp}
\end{equation}
This goal can be achieved using the classical operators:
\begin{align}
\forall v,\pi,~  T^\pi v  =  c^\pi +\gamma P^\pi v, \mathrm{and} \ \forall v,~  T v  =  \min_\pi T^\pi v,
\end{align}
where $T^\pi$ is a linear operator, $T$ is the optimal Bellman operator and both $T^\pi$ and $T$ are $\gamma$-contraction mappings w.r.t. the max-norm. The fixed points of $T^\pi$ and $T$ are $v^\pi$ and $v^*$. 
 
A large portion of this paper is devoted to analysis of regularized MDPs: A regularized MDP is an MDP with a shaped cost denoted by $c_\lambda^\pi$ for $\lambda \geq 0$. Specifically, the cost of a policy $\pi$ on a regularized MDP translates to ${c^{\pi}_\lambda(s) \triangleq c^{\pi}(s)+\lambda\omegaS{s}{\pi}}$ where $\omegaS{s}{\pi} \triangleq \omega(\pi(\cdot \mid s))$ and $\omega: \simplex \rightarrow \mathbb{R}$ is a $1$-strongly convex function. We denote $\omega(\pi)\in \mathbb{R}^S$ as the corresponding state-wise vector. See that for $\lambda=0$, the cost $c^\pi$ is recovered. In this work we consider two choices of $\omega$: the {\bf euclidean case} $\omegaS{s}{\pi}=\frac{1}{2}\norm{\pi(\cdot\mid s)}_2^2$ and {\bf non-euclidean case} $\omegaS{s}{\pi}= H(\pi(\cdot\mid s))+\log A$. By this choice we have that $0 \leq c^\pi_\lambda(s)\leq  \CmaxLambda$ where $\CmaxLambda=\Cmax+\lambda$ and $\CmaxLambda=\Cmax+\lambda \log A$, for the euclidean and non-euclidean cases, respectively. With some abuse of notation we omit $\omega$ from~$\CmaxLambda$.

The value of a stationary policy $\pi$ on the regularized MDP is $v_\lambda^\pi=(I-\gamma P^{\pi})^{-1} c_\lambda^{\pi}$. Furthermore, the optimal value $\vHstar$, optimal policy $\pi_\lambda^*$ and Bellman operators of the regularized MDP are generalized as follows,
\begin{align}
    &\vHstar = \min_\pi (I-\gamma P^\pi)^{-1} c_\lambda^\pi = (I-\gamma P^{\pi_\lambda^*})^{-1} c_\lambda^{\pi_\lambda^*}, \label{eq: ergularized mdp}\\
    &\forall v,\pi,~  T_\lambda^\pi v  =  c_\lambda^\pi +\gamma P^\pi v, \mathrm{and} \ \forall v,~  T_\lambda  v  =  \min_\pi T_\lambda^\pi  v.\nonumber
\end{align}
As Bellman operators for MDPs, both $T_{\lambda}^\pi,T$ are $\gamma$-contractions with fixed points $v_\lambda^\pi,v^*_\lambda$ \cite{geist2019theory}. Denoting $c^\pi_\lambda(s,a) = c(s,a)+\lambda \omegaS{s}{\pi}$, the $q$-function of a policy $\pi$ for a regularized MDP is defined as $q_\lambda^\pi(s,a) = c^\pi_\lambda(s,a)+\gamma \sum\nolimits_{s'}p^\pi(s'\mid s)v_\lambda^\pi(s').$


When the state space is small and the dynamics of environment is known~\eqref{mdp}, \eqref{eq: ergularized mdp} can be solved using DP approaches. However, in case of a large state space it is expected to be computationally infeasible to apply such algorithms as they require access to the entire state space. In this work, we construct a sample-based algorithm which minimizes the following \emph{scalar objective} instead of~\eqref{mdp}, \eqref{eq: ergularized mdp}, 
\begin{align}
    \min_{\pi\in \ssimplex} \E_{s \sim \mu} [v^\pi_\lambda (s)]= \min_{\pi\in \ssimplex}\mu v^\pi_\lambda, \label{eq: scalar objective}
\end{align}
where $\mu(\cdot)$ is a probability measure over the state space. Using this objective, one wishes to find a policy $\pi$ which minimizes the expectation of $v^{\pi}_\lambda(s)$ under a measure $\mu$. This objective is widely used in the RL literature \cite{sutton2000policy,kakade2002approximately,schulman2015trust}.

Here, we always choose the regularization function $\omega$ to be associated with the Bregman distance used, $B_\omega$. This simplifies the analysis as $c_\lambda^\pi$ is $\lambda$-strongly convex w.r.t. $B_\omega$ by definition. Given two policies $\pi_1,\pi_2$, we denote their Bregman distance as $\bregmanS{s}{\pi_1}{\pi_2}\triangleq \bregman{\pi_1(\cdot\mid s)}{\pi_2(\cdot\mid s)}$ and $\bregman{\pi_1}{\pi_2}\! \in \! \mathbb{R}^S$ is the corresponding state-wise vector. The euclidean choice for $\omega$ leads to $\bregmanS{s}{\pi_1}{\pi_2}\!=\!\frac{1}{2}\norm{\pi_1(\cdot\mid s) - \pi_2(\cdot\mid s)}_2^2$, and the non-euclidean choice to $\bregmanS{s}{\pi_1}{\pi_2}\!=\!\dkl{\pi_1(\cdot \! \mid \! s)}{\pi_2(\cdot \! \mid \! s)}$. In the results we use the following $\omega$-dependent constant, $\euclid =\sqrt{A}$ in the euclidean case, and $\euclid =1$ in the non-euclidean case.

For brevity, we omit constant and logarithmic factors when using $O(\cdot)$, and omit any factors other than non-logarithmic factors in $N$, when using $\tilde O(\cdot)$. For $x,y\in \R^{S \times A}$, the state-action inner product is $\inner*{x,y}=\sum_{s,a}x(s,a)y(s,a)$, and the fixed-state inner product is $\inner*{x(s,\cdot),y(s,\cdot)}=\sum_{a} x(s,a)y(s,a)$. 
Lastly, when $x\in\mathbb{R}^{S\times S\times A}$ (e.g., first claim of Proposition~\ref{prop: directional derivative for MDPs}) the inner product $\inner{x,y}$ is a vector in $\mathbb{R}^S$ where $\inner{x,y}(s) \triangleq \inner{x(s,\cdot,\cdot),y},$ with some abuse of notation. 

\section{Linear Approximation of a Policy's Value}\label{sec: directional derivative}

As evident from the updating rule of MD~\eqref{eq: MD iteration}, a crucial step in adapting MD to solve MDPs is studying the linear approximation of the objective, $\inner{\nabla f(x),x'-x}$, i.e., the directional derivative in the direction of an element from the convex set. The objectives considered in this work are~\eqref{eq: ergularized mdp}, \eqref{eq: scalar objective}, and the optimization set is the convex set of policies $\ssimplex$. Thus, we study $\left\langle \nabla {\vHpi},\pi '-\pi  \right\rangle$ and $\left\langle \nabla \mu{\vHpi},\pi '-\pi  \right\rangle$, for which the following proposition gives a closed form:

\begin{restatable}[Linear Approximation of a Policy's Value]{proposition}{propDirectionalDerivative}
\label{prop: directional derivative for MDPs}
Let $\pi,\pi' \in \Delta^S_A$, and $d_{\mu,\pi} = (1-\gamma) \mu(I-\gamma P^\pi)^{-1}$. Then, 
\begin{align}
        &\!\inner*{ \nabla_\pi {\vHpi},\pi '\!-\!\pi  }\! = \!(I\!-\!\gamma P^\pi)^{-1} \!\left( T_{\lambda}^{\pi'}v^{\pi}_\lambda\!-\!\vHpi \!-\! \lambda \bregman{\pi'}{\pi}\right), \label{eq: Regularized value directional derivative first}\\
        &\! \inner*{ \nabla_\pi \mu{\vHpi},\pi '\!-\!\pi  } 
        \!=\! \frac{1}{1\!-\!\gamma}d_{\mu,\pi} \!\left( T_{\lambda}^{\pi'}v^{\pi}_\lambda\!-\!\vHpi \!-\!\lambda \bregman{\pi'}{\pi}\right). \label{eq: Regularized value directional derivative second}
\end{align}
\end{restatable}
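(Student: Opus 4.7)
My plan is to reduce the claim to a directional-derivative computation along the line segment $\pi_\epsilon \triangleq (1-\epsilon)\pi + \epsilon\pi'$, which remains in $\ssimplex$ for $\epsilon\in[0,1]$. By the per-state convention introduced in the preliminaries, the inner product $\inner*{\nabla_\pi \vHpi,\pi'-\pi}$ is the $\mathbb{R}^S$-valued directional derivative $\frac{d}{d\epsilon}\big|_{\epsilon=0} v_\lambda^{\pi_\epsilon}$, so it suffices to evaluate this derivative. I will do so through the regularized performance-difference identity
\begin{equation*}
v_\lambda^{\tilde\pi} - \vHpi \;=\; (I-\gamma P^{\tilde\pi})^{-1}\!\left(T_\lambda^{\tilde\pi}\vHpi - \vHpi\right),
\end{equation*}
which is a one-line consequence of $v_\lambda^{\tilde\pi} = (I-\gamma P^{\tilde\pi})^{-1} c_\lambda^{\tilde\pi}$ together with the Bellman fixed-point equation $\vHpi = T_\lambda^{\pi}\vHpi$.

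Substituting $\tilde\pi=\pi_\epsilon$ and differentiating at $\epsilon=0$, the right factor $T_\lambda^{\pi_\epsilon}\vHpi-\vHpi$ vanishes at $\epsilon=0$ (again by the Bellman equation for $\pi$), so the product rule collapses: the derivative of $(I-\gamma P^{\pi_\epsilon})^{-1}$ drops out and only the derivative of the bracket survives, pre-multiplied by the constant $(I-\gamma P^\pi)^{-1}$. Expanding $T_\lambda^{\pi_\epsilon}\vHpi = c^{\pi_\epsilon} + \lambda\,\omega(\pi_\epsilon) + \gamma P^{\pi_\epsilon}\vHpi$, the affine dependence of $c^{\pi_\epsilon}$ and $P^{\pi_\epsilon}$ on $\epsilon$ immediately contributes $c^{\pi'}-c^\pi$ and $\gamma(P^{\pi'}-P^\pi)\vHpi$, respectively.

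The only nontrivial piece is $\lambda\,\frac{d}{d\epsilon}|_{\epsilon=0}\,\omega(\pi_\epsilon)$, which state-wise equals $\lambda\inner*{\nabla\omega(\pi(\cdot\mid s)),\pi'(\cdot\mid s)-\pi(\cdot\mid s)}$. Here I invoke the identity built into the Bregman definition, $\inner*{\nabla\omega(\pi),\pi'-\pi} = \omega(\pi')-\omega(\pi)-\bregman{\pi'}{\pi}$, so that the full derivative of the shaped cost reads $c_\lambda^{\pi'}-c_\lambda^{\pi}-\lambda\bregman{\pi'}{\pi}$. Collecting the three contributions yields
\begin{equation*}
\frac{d}{d\epsilon}\bigg|_{\epsilon=0} T_\lambda^{\pi_\epsilon}\vHpi \;=\; T_\lambda^{\pi'}\vHpi - T_\lambda^{\pi}\vHpi - \lambda\bregman{\pi'}{\pi},
\end{equation*}
and using $T_\lambda^{\pi}\vHpi = \vHpi$ produces precisely the bracketed expression in~\eqref{eq: Regularized value directional derivative first}. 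The second identity~\eqref{eq: Regularized value directional derivative second} then follows by pre-multiplying the first by $\mu$ and recognizing $\mu(I-\gamma P^\pi)^{-1} = (1-\gamma)^{-1}d_{\mu,\pi}$, which is just the definition of the discounted occupancy rearranged.

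The main obstacle is exactly this nonlinear-in-$\pi$ regularizer: a direct computation produces a raw linear form $\inner*{\nabla\omega(\pi),\pi'-\pi}$, which does not match the Bregman-penalty structure $-\lambda\bregman{\pi'}{\pi}$ appearing on the right-hand side. Trading the linear form for the Bregman form via the identity above is the single nontrivial step, and it is what makes the expression land in exactly the shape needed for the mirror-descent analysis of adaptive TRPO carried out in the following sections.
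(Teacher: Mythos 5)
Your proposal is correct, but it reaches the result by a genuinely different route than the paper. The paper first proves a policy gradient theorem for regularized MDPs: it extends $v_\lambda^\pi$ to arguments $y\in\mathbb{R}^{S\times A}$ off the simplex, shows the extended value is the fixed point of a contraction, computes $\nabla_\pi \vHpi$ explicitly, and only then forms the inner product $\inner*{\nabla_\pi\vHpi,\pi'-\pi}$, converting $\inner*{q_\lambda^\pi(s,\cdot),\pi'-\pi}$ into $T_\lambda^{\pi'}\vHpi-\vHpi$ plus the $\omega$-difference (by adding and subtracting $\lambda\omegaS{s}{\pi'}$) and absorbing the linear term $\inner*{\nabla\omega(\pi),\pi'-\pi}$ into the Bregman distance. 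You instead never compute the gradient: you write the performance-difference identity $v_\lambda^{\pi_\epsilon}-\vHpi=(I-\gamma P^{\pi_\epsilon})^{-1}(T_\lambda^{\pi_\epsilon}\vHpi-\vHpi)$, differentiate along the segment $\pi_\epsilon=(1-\epsilon)\pi+\epsilon\pi'$, and use that the bracket vanishes at $\epsilon=0$ so the product rule kills the derivative of the resolvent; the affine dependence of $c^{\pi_\epsilon}$ and $P^{\pi_\epsilon}$ plus the same Bregman identity for $\frac{d}{d\epsilon}\omega(\pi_\epsilon)$ then give exactly $T_\lambda^{\pi'}\vHpi-\vHpi-\lambda\bregman{\pi'}{\pi}$, and multiplying by $\mu$ yields \eqref{eq: Regularized value directional derivative second}. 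This is shorter and arguably more transparent, since the hardest part of the paper's proof (the explicit gradient formula via extended value functions) is bypassed. The one point you pass over quickly is the identification of $\inner*{\nabla_\pi\vHpi,\pi'-\pi}$ with $\frac{d}{d\epsilon}\big|_{\epsilon=0}v_\lambda^{\pi_\epsilon}$: this is the chain rule and requires (Fr\'echet) differentiability of $\pi\mapsto\vHpi$ in a neighborhood of the simplex, not merely existence of the directional derivative along the segment; this is precisely what the paper's extended-value-function construction in Appendix~\ref{supp: policy gradients and directional derivatives} supplies (and, as there, the non-euclidean case needs $\pi$ with full support for $\nabla\omega$ to exist). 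With that differentiability granted — which the statement implicitly presupposes by writing $\nabla_\pi\vHpi$ — your argument is complete; what the paper's longer route buys in exchange is the regularized policy gradient theorem itself (Corollary~\ref{Cor: Policy Gradient thm}), which is of independent use.
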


The proof, supplied in Appendix~\ref{supp: policy gradients and directional derivatives}, is a direct application of a Policy Gradient Theorem~\cite{sutton2000policy} derived for regularized MDPs. Importantly, the linear approximation is scaled by $(I-\gamma P^\pi)^{-1}$ or $\frac{1}{1-\gamma}d_{\mu,\pi}$, the discounted visitation frequency induced by the current policy. In what follows, we use this understanding to properly choose an \emph{adaptive scaling} for the proximity term of TRPO, which allows us to use methods from convex optimization.

\section{Uniform Trust Region Policy Optimization}\label{sec: uniform TRPO}

In this section we formulate \emph{Uniform TRPO}, a trust region \emph{planning} algorithm with an adaptive proximity term by which \eqref{eq: ergularized mdp} can be solved, i.e., an optimal policy which jointly minimizes the vector $v^\pi_\lambda$ is acquired. We show that the presence of the adaptive term simplifies the update rule of Uniform TRPO and then analyze its performance for both the regularized ($\lambda>0$) and  unregularized ($\lambda=0$) cases. Despite the fact~\eqref{eq: ergularized mdp} is not a convex optimization problem, the presence of the adaptive term  allows us to use techniques applied for MD in convex analysis and establish convergence to the global optimum with rates of $\tilde O(1/\sqrt{N})$ and $\tilde O(1/N)$ for the  unregularized and regularized case, respectively.

\begin{algorithm}
\caption{Uniform TRPO}\label{alg: uniform TRPO}
\begin{algorithmic}
\REQUIRE $t_k$, $\gamma$, $\lambda$, $\pi_0$ is the uniform policy.
\FOR{$k=0,1,...$}
    \STATE $v^{\pi_k} = (I-\gamma P^{\pi_k})^{-1} c_\lambda^{\pi_k}$
    \FOR{ $\forall s\in\sset$}
        \FOR{$\forall a\in\aset$}
            \STATE $q^{\pi_k}_\lambda(s,a) \gets c^{\pi_k}_\lambda(s,a) + \gamma \sum_{s'} p(s'|s,a) v_\lambda^{\pi_k}(s')$ 
        \ENDFOR
    \STATE $\pi_{k+1}(\cdot|s) \gets $       PolicyUpdate($\pi(\cdot|s),q^{\pi_k}_\lambda(s,\cdot), t_k, \lambda$)
    \ENDFOR
\ENDFOR
\end{algorithmic}
\end{algorithm}

Uniform TRPO repeats the following iterates
\begin{align}\label{eq: PG iteration paper}
&\pi_{k+1}\in \nonumber \argmin_{\pi\in \ssimplex} \Big\{\inner*{ \nabla \vHpik, \pi - \pi_k } \\
&\quad\quad\quad\quad\quad \quad \quad+  \frac{1}{t_k}(I-\gamma P^{\pi_k})^{-1}\bregman{\pi}{\pi_k} \Big\}.
\end{align}
The update rule resembles MD's updating-rule~\eqref{eq: MD iteration}. The updated policy minimizes the linear approximation while being not `too-far' from the current policy due to the presence of $\bregman{\pi}{\pi_k}$. However, and unlike MD's update rule, the Bregman distance is scaled by the adaptive term $(I-\gamma P^{\pi_k})^{-1}$. Applying Proposition~\ref{prop: directional derivative for MDPs}, we see why this adaptive term is so natural for RL,
\begin{align}
\label{eq: PG iteration paper directional derivative}
&\pi_{k+1}  \in \nonumber\\[-17pt]
&\!\argmin_{\pi\in \ssimplex} (I \! -\! \gamma P^{\pi_k})^{-1}\!\! \overbrace{\Big( T_{\lambda}^{\pi} \vHpik \! - \! \vHpik \!  +\! \Big( \frac{1}{t_k}\!-\!\lambda \Big) \bregmanShort{\pi}{\pi_k} \Big) }^{(*)}\!.
\end{align}
Since $(I \! -\! \gamma P^{\pi_k})^{-1}\geq 0$ component-wise, minimizing \eqref{eq: PG iteration paper directional derivative} is equivalent to minimizing the vector $(*)$. This results in a simplified update rule: instead of minimizing over $\ssimplex$ we minimize over $\simplex$ for each $s\in\mathcal{S}$ independently (see Appendix~\ref{supp: uniform TRPO}). For each $s\in \mathcal{S}$ the policy is updated by
\begin{align}
      &{\pi}_{k+1}(\cdot \! \mid \! s)\! \in \! \argmin_{\pi \in \simplex}  t_k T_\lambda^{\pi}\vHpik(s) +  (1\!-\!\lambda t_k)\bregmanS{s}{\pi}{\pi_k}.\label{eq: uniform TRPO simplied update}
\end{align}
This is the update rule of Algorithm~\ref{alg: uniform TRPO}.  Importantly, the update rule is a direct consequence of choosing the adaptive scaling for the Bregman distance in~\eqref{eq: PG iteration paper}, and without it, the trust region problem would involve optimizing over~$\ssimplex$.

\begin{algorithm}
    \caption{PolicyUpdate: PPG}
    \label{alg: Policy Gradient policy update exact}
	\begin{algorithmic}
	\STATE {\bf input: }$\pi(\cdot\mid s),q(s,\cdot), t_k,\lambda$
    \FOR{$a \in \aset$}
        \STATE $\pi(a|s) \gets \pi(a\mid s) - \frac{t_k}{1-\lambda t_k} q(s,a)$
    \ENDFOR
    \STATE $\pi(\cdot |s) \gets P_{\simplex} (\pi(\cdot\mid s))$
    \RETURN $\pi(\cdot \mid s)$    
    \end{algorithmic}
  \end{algorithm}
  
 \begin{algorithm}
    \caption{PolicyUpdate: NE-TRPO}
    \label{alg: NE-TRPO policy update exact}
	\begin{algorithmic}
	\STATE {\bf input: }$\pi(\cdot\mid s),q(s,\cdot),t_k,\lambda$
    \FOR{$a \in \aset$}
        \STATE $\pi(a\mid s) \gets \frac{ \pi( a\mid s) \exp\br*{-t_k \br*{q(s,a) + \lambda \log \pi_k (a \mid s) }}}{\sum_{a' \in \aset}{\pi ( a' \mid s ) \exp\br*{-t_k \br*{q(s,a')+\lambda \log \pi_k (a' \mid s)}}}}$
    \ENDFOR
    \RETURN $\pi(\cdot \mid s)$   
    \end{algorithmic}
  \end{algorithm}

By instantiating the \emph{PolicyUpdate} procedure  with Algorithms~\ref{alg: Policy Gradient policy update exact} and~\ref{alg: NE-TRPO policy update exact} we get the PPG and NE-TRPO, respectively, which are instances of Uniform TRPO.  Instantiating PolicyUpdate is equivalent to choosing $\omega$ and the induced Bregman distance $B_\omega$. In the euclidean case, $\omega(\cdot)= \frac{1}{2}\norm{\cdot}_2^2$  (Alg.~\ref{alg: Policy Gradient policy update exact}), and in the non-euclidean case, $\omega(\cdot)= H(\cdot)$ (Alg.~\ref{alg: NE-TRPO policy update exact}). This comes in complete analogy to the fact Projected Gradient Descent and Exponentiated Gradient Descent are instances of MD with similar choices of $\omega$ (Section~\ref{sec: MD}). 

With the analogy to MD~\eqref{eq: MD iteration} in mind, one would expect Uniform TRPO, to converge with rates $\tilde O(1/\sqrt{N})$ and $\tilde O(1/N)$ for the unregularized and regularized cases, respectively, similarly to MD. Indeed, the following theorem formalizes this intuition for a proper choice of learning rate. The proof of Theorem \ref{theorem: uniform TRPO} extends the techniques of \citet{beck2017first} from convex analysis to the non-convex optimization problem \eqref{mdp}, by relying on the adaptive scaling of the Bregman distance in~\eqref{eq: PG iteration paper} (see Appendix \ref{supp: uniform TRPO proof}). 

\begin{theorem}[Convergence Rate: Uniform TRPO]\label{theorem: uniform TRPO}  
Let $\{\pik\}_{k\geq 0}$ be the sequence generated by Uniform TRPO. Then, the following holds for all $N\geq1$:
\begin{enumerate}
    \item \emph{(Unregularized)} Let $\lambda=0$, $t_k=\frac{(1-\gamma)}{\euclid {C_{\text{max}}} \sqrt{k+1}}$, then
    \begin{align*}
       \norm{v^{\pi_N} - \vstar}_\infty   \leq 
        O\br*{\frac{\euclid \mathrm{C_{\text{max}}}}{(1-\gamma)^2\sqrt{N}} }  . \end{align*}
    \item \emph{(Regularized)} Let $\lambda>0$, $t_k =  \frac{1}{\lambda (k+2)}$, then
\begin{align*}
   \norm{v_\lambda^{\pi_N} - \vHstar}_\infty   \leq  O \br*{ \frac{\euclid ^2\CmaxLambda^2}{\lambda(1
    -\gamma)^3 N} }.
\end{align*}
\end{enumerate}
\end{theorem}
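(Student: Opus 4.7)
The proof adapts the classical Mirror Descent convergence analysis to this non-convex MDP setting, using that the adaptive scaling $(I-\gamma P^{\pi_k})^{-1}$ in~\eqref{eq: PG iteration paper} reduces the trust-region problem to the independent per-state MD-style update~\eqref{eq: uniform TRPO simplied update}. Throughout, let $\delta_k := \vHpik - \vHstar \ge 0$.

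\textbf{Step 1 (per-state fundamental inequality and monotonic improvement).} Writing the per-state objective in~\eqref{eq: uniform TRPO simplied update} as $h_k(\pi) = t_k\bigl[\inner*{\pi,\qHpik(s,\cdot)} + \lambda\,\omegaS{s}{\pi}\bigr] + (1-\lambda t_k)\,\bregmanS{s}{\pi}{\pi_k}$ (up to a $\pi$-independent constant), one checks that $h_k$ is $1$-strongly convex w.r.t.\ $B_\omega$, summing the $t_k\lambda$-strong convexity of $t_k\lambda\omega$ and the $(1-\lambda t_k)$-strong convexity of $(1-\lambda t_k)B_\omega(\cdot,\pi_k)$. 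Optimality of $\pi_{k+1}(\cdot\mid s)$ combined with this strong convexity yields the standard three-point inequality: for every $u\in\simplex$,
\begin{equation*}
t_k\bigl[T_\lambda^{\pi_{k+1}}\vHpik(s) - T_\lambda^{u}\vHpik(s)\bigr] \le (1-\lambda t_k)\bregmanS{s}{u}{\pi_k} - \bregmanS{s}{u}{\pi_{k+1}} - (1-\lambda t_k)\bregmanS{s}{\pi_{k+1}}{\pi_k}.
\end{equation*}
Setting $u=\pi_k$ gives $T_\lambda^{\pi_{k+1}}\vHpik\le\vHpik$ componentwise, and applying the nonnegative operator $(I-\gamma P^{\pi_{k+1}})^{-1}$ gives monotonic improvement $v_\lambda^{\pi_{k+1}}\le\vHpik$. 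Setting $u=\piHstar$ and using the identities $T_\lambda^{\pi_{k+1}}\vHpik-\vHpik = (I-\gamma P^{\pi_{k+1}})(v_\lambda^{\pi_{k+1}}-\vHpik)$ and $T_\lambda^{\piHstar}\vHpik-\vHpik = -(I-\gamma P^{\piHstar})\delta_k$, the inequality becomes the vector recursion
\begin{equation*}
t_k(I-\gamma P^{\piHstar})\delta_k \le t_k(I-\gamma P^{\pi_{k+1}})(\vHpik-v_\lambda^{\pi_{k+1}}) + (1-\lambda t_k)\bregman{\piHstar}{\pi_k} - \bregman{\piHstar}{\pi_{k+1}}.
\end{equation*}

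\textbf{Step 2 (telescoping).} For $\lambda=0$ and $t_k=(1-\gamma)/(\euclid\Cmax\sqrt{k+1})$, summing $k=0,\dots,N-1$ telescopes the Bregman contributions to $\bregman{\pistar}{\pi_0}\le\omegaBound\cdot\mathbf{1}$; the drift $(I-\gamma P^{\pi_{k+1}})(\vpik-v^{\pi_{k+1}})$ is bounded componentwise by $\vpik-v^{\pi_{k+1}}$ (as $P^{\pi_{k+1}}\ge0$), and an Abel summation using monotone improvement and $\vpik\in[0,\Cmax/(1-\gamma)]$ bounds $\sum_k t_k(\vpik-v^{\pi_{k+1}})$ by $O(t_0\Cmax/(1-\gamma))\cdot\mathbf{1}$. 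Applying $(I-\gamma P^{\pistar})^{-1}$ (whose $\|\cdot\|_\infty$ is $1/(1-\gamma)$), using $\delta_N\le\delta_k$ to extract $(\sum_k t_k)\delta_N$ on the left, and $\sum_k t_k=\Theta((1-\gamma)\sqrt N/(\euclid\Cmax))$, yields the $\tilde O(\euclid\Cmax/((1-\gamma)^2\sqrt N))$ rate in $\|\cdot\|_\infty$. For $\lambda>0$ and $t_k=1/(\lambda(k+2))$, dividing the recursion by $t_k$ produces coefficients $\lambda(k+1)$ and $\lambda(k+2)$ in front of the Bregman terms; weighting step $k$ by $(k+1)$ before summing makes the Bregman contributions telescope up to a residue that is absorbed by the retained negative $-(1-\lambda t_k)\bregmanShort{\pi_{k+1}}{\pi_k}$ term, while the weighted drift is again controlled by monotonicity and $v_\lambda^\pi\in[0,\CmaxLambda/(1-\gamma)]$. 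Dividing by $\sum_{k=0}^{N-1}(k+1)=\Theta(N^2)$ and applying $(I-\gamma P^{\piHstar})^{-1}$ gives the $\tilde O(\euclid^2\CmaxLambda^2/(\lambda(1-\gamma)^3 N))$ rate.

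\textbf{Main obstacle.} The technical crux is Step~2: the drift $(I-\gamma P^{\pi_{k+1}})(\vHpik - v_\lambda^{\pi_{k+1}})$ couples consecutive iterates through a \emph{different} operator each step and does not telescope directly, and the weighted Bregman contributions in the strongly convex case do not cancel cleanly either. Both are resolved by combining monotone improvement, Abel summation matched to the step-size schedule, and the extra $-(1-\lambda t_k)\bregmanShort{\pi_{k+1}}{\pi_k}$ that strong convexity leaves behind. Crucially, the adaptive scaling of the proximity term in~\eqref{eq: PG iteration paper} is what makes the MD-style inequality \emph{pointwise in $s$}, so all of these manipulations can be carried out on vector inequalities and only at the end is $\|\cdot\|_\infty$ taken; without this scaling, the Bregman term would be entangled across states via the discounted state-visitation measure and the MD-style telescoping would break.
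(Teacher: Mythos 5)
Your Step~1 and the unregularized half are correct, and they take a genuinely different route from the paper's. The paper's fundamental inequality (Appendix~\ref{supp: Fundamental inequality for Uniform TRPO}) handles the coupling term $t_k(T^{\pi_k}_\lambda \vHpik - T^{\pi_{k+1}}_\lambda \vHpik)$ with Fenchel's inequality, paying an additive $t_k^2\gradBound^2/2$ per step (which is where its $\log N$ factors come from), and then reuses the strong-convexity term to cancel the quadratic; you instead keep this term exactly, via $T^{\pi_{k+1}}_\lambda\vHpik-\vHpik=(I-\gamma P^{\pi_{k+1}})(v^{\pi_{k+1}}_\lambda-\vHpik)$, and telescope it using monotone improvement. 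Your per-state prox inequality, with coefficient $1$ on $-\bregmanS{s}{u}{\pi_{k+1}}$ coming from the extra $\lambda t_k$-strong convexity of the regularized cost, is a valid sharpening of the paper's Lemma (it follows from first-order optimality, the three-points lemma with weight $1-\lambda t_k$, and the exact Bregman identity for $\omega$), and it also delivers the improvement lemma by taking $u=\pi_k$, just as the paper does separately. In the unregularized case your bookkeeping (bounding $\sum_k t_k(I-\gamma P^{\pi_{k+1}})(\vpik-v^{\pi_{k+1}})\le t_0(v^{\pi_0}-v^{\pi_N})\le t_0\Cmax/(1-\gamma)\,e$, telescoping the Bregman terms, applying $(I-\gamma P^{\pistar})^{-1}$ and only then using $\delta_N\le\delta_k$ with $\delta_k:=v^{\pi_k}-\vstar$) is sound and in fact removes the paper's $\log N$ factor.

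The regularized half, as written, has a genuine gap. After dividing by $t_k$, the Bregman coefficients $\lambda(k+1)$ and $\lambda(k+2)$ already telescope \emph{exactly}: the term $-\lambda(k+2)\bregmanShort{\piHstar}{\pi_{k+1}}$ from step $k$ cancels the term $+\lambda(k+2)\bregmanShort{\piHstar}{\pi_{k+1}}$ from step $k+1$, so the sum is at most $\lambda\bregman{\piHstar}{\pi_0}\le\lambda\omegaBound e$. Your additional weighting of step $k$ by $(k+1)$ destroys this cancellation and leaves a positive residue of order $\sum_k\lambda(k+2)\bregmanShort{\piHstar}{\pi_{k+1}}$, which can be as large as $\Theta(\lambda\omegaBound N^2)$; it cannot be absorbed by the retained $-(1-\lambda t_k)\bregmanShort{\pi_{k+1}}{\pi_k}$, because that is a Bregman distance between \emph{different} arguments and there is no inequality bounding $\bregmanShort{\piHstar}{\pi_{k+1}}$ by $\bregmanShort{\pi_{k+1}}{\pi_k}$. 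Dividing by your normalization $\Theta(N^2)$ then leaves a non-vanishing $O(\lambda\omegaBound)$ error instead of a $1/N$ rate. The repair stays entirely inside your framework: drop the extra weight. Summing the $t_k$-divided recursion over $k=0,\dots,N$ gives $(I-\gamma P^{\piHstar})\sum_k\delta_k \le \sum_k(\vHpik-v^{\pi_{k+1}}_\lambda)+\lambda\omegaBound e \le \big(\tfrac{\CmaxLambda}{1-\gamma}+\lambda\omegaBound\big)e$ (the drift now telescopes with coefficient one), and applying $(I-\gamma P^{\piHstar})^{-1}$ followed by $(N+1)\delta_N\le\sum_k\delta_k$ yields $\norm{v^{\pi_N}_\lambda-\vHstar}_\infty\le O\big((\CmaxLambda/(1-\gamma)+\lambda\omegaBound)/((1-\gamma)N)\big)$, which implies — and is in fact tighter than — the stated $O\big(\euclid^2\CmaxLambda^2/(\lambda(1-\gamma)^3N)\big)$ bound.
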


Theorem \ref{theorem: uniform TRPO} establishes that regularization allows faster convergence of $\tilde O(1/ N$). It is important to note using such regularization leads to a `biased' solution: Generally $\norm{v^{\pi^*_\lambda} - \vstar}_\infty>0$, where we denote $\pi^*_\lambda$ as the optimal policy of the regularized MDP. In other words, the optimal policy of the regularized MDP evaluated on the  unregularized MDP is not necessarily the optimal one. However, when adding such regularization to the problem, it becomes easier to solve, in the sense Uniform TRPO converges faster (for a proper choice of learning rate). 

In the next section, we extend the analysis of Uniform TRPO to Sample-Based TRPO, and relax the assumption of having access to the entire state space in each iteration, while still securing similar convergence rates in $N$. 

\section{Exact and Sample-Based TRPO}\label{sec: exact approximate}

In the previous section we analyzed Uniform TRPO, which uniformly minimizes the vector $v^{\pi}$. Practically, in large-scale problems, such an objective is infeasible as one cannot access the entire state space, and less ambitious goal is usually defined \cite{sutton2000policy,kakade2002approximately,schulman2015trust}. The objective usually minimized is the \emph{scalar objective}~\eqref{eq: scalar objective}, the expectation of $v^{\pi}_\lambda(s)$ under a measure $\mu$,
    $\min_{\pi\in \ssimplex} \E_{s \sim \mu} [v^\pi_\lambda (s)]= \min_{\pi\in \ssimplex}\mu v^\pi_\lambda.$ 

Starting from the seminal work on CPI, it is common to assume access to the environment in the form of a \emph{$\nu$-restart model}. Using a $\nu$-restart model, the algorithm interacts with an MDP in an episodic manner. In each episode $k$, the starting state is sampled from the initial distribution $s_0\sim \nu$, and the algorithm samples a trajectory $(s_0,r_0,s_1,r_1,...)$ by following a policy $\pi_k$. As mentioned in \citet{kakade2003sample}, a $\nu$-restart model is a weaker assumption than an access to the true model or a generative model, and a stronger assumption than the case where no restarts are allowed. 

To establish global convergence guarantees for CPI, \citet{kakade2002approximately} have made the following assumption, which we also assume through the rest of this section: 
\begin{restatable}[Finite Concentrability Coefficient]{assumption}{assumptionStochasticMD}
    \label{assumption: stochastic MD}
 ${C^{\pi^*}\triangleq \norm{\frac{d_{\mu,\pi^*}}{\nu}}_\infty = \max_{s\in \mathcal{S}}\left|\frac{d_{\mu,\pi^*}(s)}{\nu(s)}\right| < \infty.}$
\end{restatable}

The term $C^{\pi^*}$ is known as a concentrability coefficient and appears often in the analysis of policy search algorithms~\cite{kakade2002approximately,scherrer2014local,bhandari2019global}. Interestingly, $C^{\pi^*}$ is considered the `best' one among all other existing concentrability coefficients in approximate Policy Iteration schemes~\cite{scherrer2014approximate}, in the sense it can be finite when the rest of them are infinite.


\subsection{Warm Up: Exact TRPO}\label{sec: exact MD}


We split the discussion on the sample-based version of TRPO: we first discuss \emph{Exact TRPO} which minimizes the scalar $\mu v^\pi_\lambda$~\eqref{eq: scalar objective} instead of minimizing the vector $v^\pi_\lambda$~\eqref{eq: ergularized mdp} as Uniform TRPO, while having an exact access to the gradients. Importantly, its updating rule is \textbf{the same update rule used in NE-TRPO}~(\citealp{schulman2015trust}, Equation~12), which uses the adaptive proximity term, and is described there as a heuristic. Specifically, there are two minor discrepancies between NE-TRPO and Exact TRPO: 1) We use a penalty formulation instead of a constrained optimization problem. 2) The policies in the Kullback-Leibler divergence are reversed.
Exact TRPO is a straightforward adaptation of Uniform TRPO to solve \eqref{eq: scalar objective} instead of~\eqref{eq: ergularized mdp} as we establish in Proposition~\ref{proposition: uniform to exact update}. Then, in the next section, we extend Exact TRPO to a sample-based version with provable guarantees. 

With the goal of minimizing the objective $\mu v^\pi_\lambda$, Exact TRPO repeats the following iterates
\begin{align}
\pi_{k+1} \in \argmin& _{\pi\in \ssimplex} \big\{ \langle \nabla \nu\vHpik, \pi - \pi_k \rangle \nonumber \\
& \quad \enskip+   \frac{1}{t_k(1-\gamma)} d_{\nu,\pi_k} \bregman{\pi}{\pi_k} \big\}, \label{eq: exact scalar TRPO}
\end{align}

Its update rule resembles MD's update rule~\eqref{eq: PG iteration paper}, but uses the $\nu$-restart distribution for the linearized term. Unlike in MD~\eqref{eq: MD iteration}, the Bregman distance is scaled by an adaptive scaling factor $d_{\nu,\pi_k}$, using $\nu$ and the policy $\pi_k$ by which the algorithm interacts with the MDP. This update rule is motivated by the one of Uniform TRPO analyzed in previous section~\eqref{eq: PG iteration paper} as the following straightforward proposition suggests (Appendix~\ref{supp: exact TRPO update rule}):

\begin{restatable}[Uniform to Exact Updates]{proposition}{UniformToExactUpdate}\label{proposition: uniform to exact update}

For any ${\pi,\pi_k\in \ssimplex}$
\begin{align*}
    &\nu\Big(\langle \nabla \vHpik, \pi - \pi_k \rangle + \frac{1}{t_k}(I-\gamma P^{\pi_k})^{-1} \bregman{\pi}{\pi_k}\Big)\\
    &=\langle \nabla \nu\vHpik, \pi - \pi_k \rangle + \frac{1}{t_k(1-\gamma)} d_{\nu,\pi_k} \bregman{\pi}{\pi_k}.
\end{align*}
\end{restatable}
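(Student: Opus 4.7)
The proof is a direct consequence of Proposition~\ref{prop: directional derivative for MDPs} together with the definition of the discounted visitation frequency $d_{\nu,\pi_k} = (1-\gamma)\nu(I-\gamma P^{\pi_k})^{-1}$. My plan is to treat the two summands on the left-hand side separately, apply Proposition~\ref{prop: directional derivative for MDPs} to re-express the directional-derivative term in closed form, and then exploit the fact that multiplying $(I-\gamma P^{\pi_k})^{-1}$ on the left by $\nu$ produces $\frac{1}{1-\gamma} d_{\nu,\pi_k}$.

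For the first summand, I would invoke the first identity of Proposition~\ref{prop: directional derivative for MDPs} to write
\[
\inner*{\nabla \vHpik, \pi - \pi_k} = (I-\gamma P^{\pi_k})^{-1}\bigl(T_\lambda^\pi \vHpik - \vHpik - \lambda \bregman{\pi}{\pi_k}\bigr).
\]
Multiplying on the left by $\nu$ and applying the definition of $d_{\nu,\pi_k}$ gives
\[
\nu \inner*{\nabla \vHpik, \pi - \pi_k} = \frac{1}{1-\gamma} d_{\nu,\pi_k}\bigl(T_\lambda^\pi \vHpik - \vHpik - \lambda \bregman{\pi}{\pi_k}\bigr),
\]
which by the second identity of Proposition~\ref{prop: directional derivative for MDPs} equals exactly $\inner*{\nabla \nu\vHpik, \pi - \pi_k}$. (Here $\nu$ commutes with the gradient because the gradient is taken in $\pi$ and $\nu$ is a fixed measure, so $\nu \nabla \vHpik = \nabla \nu \vHpik$.)

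For the second summand, the manipulation is purely definitional:
\[
\nu \cdot \frac{1}{t_k}(I-\gamma P^{\pi_k})^{-1}\bregman{\pi}{\pi_k} = \frac{1}{t_k(1-\gamma)}\, d_{\nu,\pi_k} \bregman{\pi}{\pi_k},
\]
again by the definition of $d_{\nu,\pi_k}$. Adding the two processed summands yields the right-hand side of the claim.

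There is no real obstacle here; the only thing one has to be careful about is that the matrix $(I-\gamma P^{\pi_k})^{-1}$ acts on the right-hand vector (the one indexed by states), so that $\nu$ acting on the left produces the visitation row-vector $d_{\nu,\pi_k}/(1-\gamma)$. Once this bookkeeping is done, the proposition follows in a single line. I would simply package this as a direct computation referencing Proposition~\ref{prop: directional derivative for MDPs}.
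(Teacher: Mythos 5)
Your proposal is correct. The only difference from the paper is the route taken for the gradient term: the paper never invokes Proposition~\ref{prop: directional derivative for MDPs} here, but proves $\nu\inner*{\nabla \vHpik,\pi-\pi_k}=\inner*{\nabla \nu\vHpik,\pi-\pi_k}$ directly by exchanging the sum over states with the derivative and the inner product (linearity), and then handles the Bregman term exactly as you do, via $d_{\nu,\pi_k}=(1-\gamma)\nu(I-\gamma P^{\pi_k})^{-1}$. You instead pass through both closed-form identities of Proposition~\ref{prop: directional derivative for MDPs}: multiply the vector identity \eqref{eq: Regularized value directional derivative first} by $\nu$ and recognize the result as the scalar identity \eqref{eq: Regularized value directional derivative second} with $\mu$ replaced by $\nu$. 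This is valid (the measure in Proposition~\ref{prop: directional derivative for MDPs} is arbitrary, and its second claim is itself obtained by multiplying the first by the measure), so the two arguments carry the same content; your parenthetical remark that $\nu$ commutes with the gradient is precisely the paper's argument, and is in fact the cleaner justification since it avoids relying on the closed form at all. Both routes finish identically on the proximity term.
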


Meaning, the proximal objective solved in each iteration of Exact TRPO~\eqref{eq: exact scalar TRPO} is the expectation w.r.t. the measure $\nu$ of the objective solved in Uniform TRPO~\eqref{eq: PG iteration paper}.

Similarly to the simplified update rule for Uniform TRPO~\eqref{eq: PG iteration paper directional derivative}, by using the linear approximation in Proposition~\ref{prop: directional derivative for MDPs}, it can be easily shown that using the adaptive proximity term allows to obtain a simpler update rule for Exact TRPO. Unlike Uniform TRPO which updates all states, Exact TRPO updates only states for which $d_{\nu,\pi_k}(s)>0$. Denote $\mathcal{S}_{d_{\nu,\pi_k}} =\brc*{s: d_{\nu,\pi_k}(s)>0}$ as the set of these states. Then, Exact TRPO is equivalent to the following update rule (see Appendix~\ref{supp: exact TRPO update rule}), $\forall s\in \mathcal{S}_{d_{\nu,\pi_k}}$:
\begin{align*}
      & {\pi}_{k+1}(\cdot\mid s) \!\in\!  \argmin_\pi  t_k T_\lambda^{\pi}\vHpik(s) \!+ \! (1 \!-\! \lambda t_k)\bregmanS{s}{\pi}{\pi_k},
\end{align*}
i.e., it has the same updates as Uniform TRPO, but updates only states in $\mathcal{S}_{d_{\nu,\pi_k}}$. Exact TRPO converges with similar rates for both the regularized and unregularized cases, as Uniform TRPO. These are formally stated in Appendix~\ref{supp: mu exact policy mirror descenet}.

\subsection{Sample-Based TRPO}\label{sec: approximate MD}

In this section we derive and analyze the \emph{sample-based} version of Exact TRPO, and establish high-probability convergence guarantees in a batch setting. Similarly to the previous section, we are interested in minimizing the scalar objective $\mu v_\lambda^\pi$~\eqref{eq: scalar objective}. Differently from Exact TRPO which requires an access to a model and to simultaneous updates in all states in $\mathcal{S}_{d_{\nu,\pi_k}}$, \emph{Sample-Based TRPO} assumes access to a $\nu$-restart model. Meaning, it can only access sampled trajectories and restarts according to the distribution $\nu$. 
 
\begin{algorithm}
\caption{Sample-Based TRPO}\label{alg: mu Sample-Based TRPO}
\begin{algorithmic}
\REQUIRE $t_k$, $\gamma$, $\lambda$, $\pi_0$ is the uniform policy, $\epsilon,\delta>0$
\FOR{$k=0,1,...$}
    \STATE $\mathcal{S}^k_M=\{\},\forall s,a,\ \hat{q}_\lambda^{\pi_k}(s,a)=0, n_k(s,a)=0$
    \STATE $M_k \geq \tilde O(\frac{A^2 \CmaxLambdaSquare\br*{S\log 2A + \log 1/\delta}}{(1-\gamma)^2\epsilon^2})$ {\color{gray}\# Appendix~\ref{supp: Approximation Error bounds}}
    \STATE {\color{gray}\# Sample Trajectories}
    \FOR{$m=1,..,M_k$}
        \STATE Sample $s_m\sim d_{\nu,\pi_k}(\cdot)$, $a_m\sim U(\mathcal{A})$
        \STATE $\hat{q}_\lambda^{\pi_k}(s_m,a_m, m )$= Truncated rollout of $q_\lambda^{\pi_k}(s_m,a_m)$
        \STATE $\hat{q}_\lambda^{\pi_k}(s_m,a_m)\gets \hat{q}_\lambda^{\pi_k}(s_m,a_m) + \hat{q}_\lambda^{\pi_k}(s_m,a_m,m )$
        \STATE $n_k(s_m,a_m) \gets n_k(s_m,a_m) + 1 $
        \STATE $\mathcal{S}^k_M = \mathcal{S}^k_M  \cup \{s_m\}$
    \ENDFOR
    \STATE {\color{gray}\# Update Next Policy}
    \FOR{$\forall s\in \mathcal{S}^k_M$}
        \FOR{$\forall a\in \mathcal{A}$}
           \STATE $\hat{q}^{\pi_k}_\lambda(s,a) \gets A\hat{q}^{\pi_k}_\lambda(s,a)/ \br*{\sum_a n_k(s,a) } $
        \ENDFOR
        \STATE $\pi_{k+1}(\cdot|s) = $PolicyUpdate($\pi_{k}(\cdot|s),\hat{q}^{\pi_k}_\lambda(s,\cdot), t_k,\lambda$)
    \ENDFOR
\ENDFOR
\end{algorithmic}
\end{algorithm}

Sample-Based TRPO samples $M_k$ trajectories per episode. In every trajectory of the $k$-th episode, it first samples $s_m\sim d_{\nu,\pi_k}$ and takes an action $a_m\sim U(\mathcal{A})$ where $U(\mathcal{A})$ is the uniform distribution on the set $\mathcal{A}$. Then, by following the current policy $\pi_k$, it estimates $\qHpik(s_m,a_m)$ using a rollout (possibly truncated in the infinite horizon case). We denote this estimate as $\hatqHpik(s_m,a_m,m)$ and observe it is  (nearly) an unbiased estimator of $\qHpik(s_m,a_m)$. We assume that each rollout runs sufficiently long so that the bias is small enough (the sampling process is fully described in Appendix~\ref{supp: approximate update rule TRPO}). Based on this data, Sample-Based TRPO updates the policy at the end of the $k$-th episode, by the following proximal problem, 



\begin{align}\label{eq: approximate scalar TRPO}
\pi_{k+1} \in &\argmin_{\pi\in \ssimplex} \Big\{ \frac{1}{M}\sum_{m=1}^M  \frac{1}{t_k(1-\gamma)}  \bregmanS{s_m}{\pi}{\pi_k}  \nonumber\\
& \quad + \inner{ \hat \nabla \nu\vHpik[m], \pi(\cdot\mid s_m) - \pi_k(\cdot\mid s_m) } \Big\},
\end{align}
where the estimation of the gradient is $\hat \nabla \nu\vHpik[m]\triangleq\frac{1}{1-\gamma} \br*{A \hatqHpik(s_m,\cdot,m)\mathbbm{1}\{\cdot =  a_m\}+\lambda \nabla \omegaS{s_m}{\pi_k}}$.

The following proposition motivates the study of this update rule and formalizes its relation to Exact TRPO:

\begin{restatable}[Exact to Sample-Based  Updates]{proposition}{ExactToApproximateupdate}\label{proposition: exact to approximate}
Let $\mathcal{F}_k$ be the $\sigma$-field containing all events until the end of the $k-1$ episode. Then, for any $\pi,\pi_k\in \ssimplex$ and every sample $m$,
\begin{align*}
     \langle \nabla &\nu\vHpik, \pi - \pi_k \rangle +  \frac{1}{t_k(1-\gamma)} d_{\nu,\pi_k} \bregman{\pi}{\pi_k} \\
    = \E \Big[&\inner{ \hat \nabla \nu\vHpik[m], \pi(\cdot\mid s_m) - \pi_k(\cdot\mid s_m) } \\
    & + \frac{1}{t_k(1-\gamma)}\bregmanS{s_m}{\pi}{\pi_k} \mid \mathcal{F}_k \Big].
\end{align*}
\end{restatable}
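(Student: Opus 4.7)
The plan is to verify the claim by computing the conditional expectation on the right-hand side step by step, first integrating out the action sampling and rollout, then the state sampling, and finally re-packaging the result via Proposition~\ref{prop: directional derivative for MDPs}. Since $\pi_k$ is $\mathcal{F}_k$-measurable and $\hat q_\lambda^{\pi_k}(s_m,a_m,m)$ is, by construction of the rollout, an unbiased estimator of $q_\lambda^{\pi_k}(s_m,a_m)$ given $\mathcal{F}_k,s_m,a_m$, both the rollout and the draw $a_m\sim U(\mathcal{A})$ contribute unbiasedly. The Bregman-distance term is immediate: $s_m\sim d_{\nu,\pi_k}$ directly gives $\E[\tfrac{1}{t_k(1-\gamma)}\bregmanS{s_m}{\pi}{\pi_k}\mid \mathcal{F}_k] = \tfrac{1}{t_k(1-\gamma)}d_{\nu,\pi_k}\bregman{\pi}{\pi_k}$, matching the second term on the left-hand side exactly.

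The bulk of the work is the linearization term. Unfolding the definition of $\hat\nabla\nu\vHpik[m]$ inside the inner product and summing over $a$, the $q$-contribution is $\tfrac{A}{1-\gamma}\hat q_\lambda^{\pi_k}(s_m,a_m,m)\br{\pi(a_m\mid s_m)-\pi_k(a_m\mid s_m)}$, while the $\omega$-contribution, which is $\mathcal{F}_k$-measurable once $s_m$ is fixed, equals $\tfrac{\lambda}{1-\gamma}\inner{\nabla\omegaS{s_m}{\pi_k},\pi(\cdot\mid s_m)-\pi_k(\cdot\mid s_m)}$. Averaging $a_m$ against $U(\mathcal{A})$ cancels the importance weight $A$ against the uniform probability $1/A$ and, together with unbiasedness of the rollout, reduces the $q$-contribution to $\tfrac{1}{1-\gamma}\inner{q_\lambda^{\pi_k}(s_m,\cdot),\pi(\cdot\mid s_m)-\pi_k(\cdot\mid s_m)}$.

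To match the closed form in Proposition~\ref{prop: directional derivative for MDPs}, I would use the elementary identity $T_\lambda^{\pi}\vHpik(s)-\vHpik(s) = \inner{q_\lambda^{\pi_k}(s,\cdot),\pi(\cdot\mid s)-\pi_k(\cdot\mid s)} + \lambda\br{\omegaS{s}{\pi}-\omegaS{s}{\pi_k}}$, which follows by expanding $T_\lambda^\pi \vHpik$ in terms of $q_\lambda^{\pi_k}$ and observing that $\vHpik(s)=\inner{q_\lambda^{\pi_k}(s,\cdot),\pi_k(\cdot\mid s)}$. Substituting this for the $q$-inner-product, the remaining piece $\lambda(\omegaS{s_m}{\pi}-\omegaS{s_m}{\pi_k})$ combines with $\lambda\inner{\nabla\omegaS{s_m}{\pi_k},\pi-\pi_k}$ via the defining identity $\bregmanS{s}{\pi}{\pi_k}=\omegaS{s}{\pi}-\omegaS{s}{\pi_k}-\inner{\nabla\omegaS{s}{\pi_k},\pi-\pi_k}$ of the Bregman distance, leaving exactly $-\lambda\bregmanS{s_m}{\pi}{\pi_k}$. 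Finally averaging over $s_m\sim d_{\nu,\pi_k}$ produces $\tfrac{1}{1-\gamma}d_{\nu,\pi_k}\br{T_\lambda^\pi\vHpik-\vHpik-\lambda\bregman{\pi}{\pi_k}}$, which equals $\inner{\nabla\nu\vHpik,\pi-\pi_k}$ by~\eqref{eq: Regularized value directional derivative second}. The only delicate bookkeeping is the cancellation that converts the bare difference $\omegaS{s}{\pi}-\omegaS{s}{\pi_k}$ into the Bregman term; this cancellation is precisely the reason the estimator $\hat\nabla\nu\vHpik[m]$ was defined to include the explicit $\lambda\nabla\omegaS{s_m}{\pi_k}$ correction.
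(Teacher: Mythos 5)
Your proposal is correct and follows essentially the same route as the paper's proof: condition on $(s_m,a_m)$ to replace the rollout by $q_\lambda^{\pi_k}$, average $a_m\sim U(\mathcal{A})$ so the importance weight $A$ cancels, average $s_m\sim d_{\nu,\pi_k}$, and then convert $\inner{q_\lambda^{\pi_k}+\lambda\nabla\omega(\pi_k),\pi-\pi_k}$ into $T_\lambda^{\pi}\vHpik-\vHpik-\lambda\bregman{\pi}{\pi_k}$ before invoking \eqref{eq: Regularized value directional derivative second}. The only cosmetic difference is that you re-derive inline the identity the paper cites as Lemma~\ref{lemma: connection between bellman and q function}, which is the same calculation.
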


Meaning, the expectation of the proximal objective of Sample-Based TRPO~\eqref{eq: approximate scalar TRPO} is the proximal objective of Exact TRPO~\eqref{eq: exact scalar TRPO}. This fact motivates us to study this algorithm, anticipating it inherits the convergence guarantees of its exacted counterpart.

Like Uniform and Exact TRPO, Sample-Based TRPO has a simpler update rule, in which, the optimization takes place on every \emph{visited state} at the $k$-th episode. This comes in contrast to Uniform and Exact TRPO which require access to all states in $\mathcal{S}$ or $\mathcal{S}_{d_{\nu,\pi_k}}$, and is possible due to the \emph{sample-based adaptive scaling} of the Bregman distance. Let $\mathcal{S}^k_M$ be the set of visited states at the $k$-th episode, $n(s,a)$ the number of times $(s_m,a_m)=(s,a)$ at the $k$-th episode, and 
\begin{align*}
\hat{q}^{\pi_k}_\lambda(s,a)=\frac{A}{\sum_a n(s,a)}\sum_{i=1}^{n(s,a)} \hat{q}_\lambda^{\pi_k}(s,a, m_i),    
\end{align*}
is the empirical average of all rollout estimators for $q^{\pi_k}_\lambda(s,a)$ gathered in the $k$-th episode ($m_i$ is the episode in which  $(s_m,a_m)=(s,a)$ for the $i$-th time). If the state action pair $(s,a)$ was not visited at the $k$-th episode then $\hat{q}^{\pi_k}_\lambda(s,a)=0$. Given these definitions, Sample-Based TRPO updates the policy for all $s\in\mathcal{S}^k_M$ by a simplified update rule:
\begin{align*}
      &{\pi}_{k+1}(\cdot\mid s) \\ 
      & \in\arg \min_\pi {  t_k \inner{ \hat{q}^{\pi_k}_\lambda(s,\cdot) +\lambda \nabla \omegaS{s}{\pi_k},\pi} +  \bregmanS{s}{\pi}{\pi_k}} ,
\end{align*}

As in previous sections, the euclidean and non-euclidean choices of $\omega$ correspond to a PPG and NE-TRPO instances of Sample-Based TRPO. The different choices correspond to instantiating PolicyUpdate with the subroutines~\ref{alg: Policy Gradient policy update exact} or~\ref{alg: NE-TRPO policy update exact}. Generalizing the proof technique of Exact TRPO and using standard concentration inequalities, we derive a high-probability convergence guarantee for Sample-Based TRPO (see Appendix~\ref{supp: approximate Mirror descent}). An additional important lemma for the proof is Lemma~\ref{lemma: difference of gradient of omegas approximate} provided in the appendix. This lemma bounds the change $\nabla \omega(\pi_k)- \nabla \omega(\pi_{k+1})$ between consecutive episodes by a term proportional to $t_k$. Had this bound been $t_k$-independent, the final results would deteriorate significantly. 

\begin{theorem}[Convergence Rate: Sample-Based TRPO]\label{theorem: Sample-Based TRPO All}  
Let $\{\pik\}_{k\geq 0}$ be the sequence generated by Sample-Based TRPO, using $M_k \geq O\br*{\frac{A^2 \CmaxLambdaSquare\br*{S\log A + \log 1/\delta}}{(1-\gamma)^2\epsilon^2}}$ samples in each iteration, and $\{\mu v_{\textbf{best}}^k\}_{k\geq0}$ be the sequence of best achieved values, $\mu v_{\textbf{best}}^N\triangleq \arg\min_{k=0,...,N} \mu v^{\pik}_\lambda-\mu v_\lambda^*$. Then, with probability greater than $1-\delta$ for every $\epsilon>0$ the following holds for all $N\geq1$:
\begin{enumerate}
    \item \emph{(Unregularized)} Let $\lambda=0$, $t_k=\frac{(1-\gamma)}{\euclid \Cmax  \sqrt{k+1}}$, then
    \begin{align*}
       &\mu v_{\textbf{best}}^N - \mu \vstar  \leq O\br*{\frac{\euclid \Cmax}{(1-\gamma)^2\sqrt{N}}+ \frac{C^{\pi^*}\epsilon}{(1-\gamma)^2}}.
    \end{align*}
    \item \emph{(Regularized)} Let $\lambda>0$, $t_k =  \frac{1}{\lambda (k+2)}$, then
\begin{align*}
  \mu  v_{\textbf{best}}^N - \mu  \vHstar   \leq  O \br*{ \frac{\euclid ^2 \approxAgrowth\CmaxLambda^2 }{\lambda(1
    -\gamma)^3 N} + \frac{C^{\pi^*}\epsilon}{(1-\gamma)^2}}.
\end{align*}
\end{enumerate}
Where $\approxAgrowth = 1$ for the euclidean case, and $\approxAgrowth = A^2$ for the non-euclidean case.
\end{theorem}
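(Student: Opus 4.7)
The plan is to promote the analysis of the Uniform TRPO theorem (Theorem~\ref{theorem: uniform TRPO}) to the sample-based setting by combining a per-iteration three-point inequality for the per-state proximal update with Proposition~\ref{proposition: exact to approximate}, and then to convert a pathwise bound for the exact objective into a high-probability bound on $\mu \vHpik-\mu \vHstar$ by means of a martingale concentration argument and the concentrability coefficient $C^{\pi^*}$ from Assumption~\ref{assumption: stochastic MD}.

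First I would derive a per-iteration inequality. Fixing the comparator $\piHstar$ (or $\pistar$ in the unregularized case) and applying the optimality condition of the per-state proximal problem that defines $\pi_{k+1}(\cdot\mid s_m)$, standard mirror-descent manipulations yield, for every sampled state $s_m$ in episode $k$, a three-point inequality of the form
\begin{align*}
& t_k \inner*{\,g_k(s_m,\cdot),\ \pi_{k+1}(\cdot\mid s_m)-\piHstar(\cdot\mid s_m)\,}\\
&\leq \bregmanS{s_m}{\piHstar}{\pi_k} - \bregmanS{s_m}{\piHstar}{\pi_{k+1}} - \bregmanS{s_m}{\pi_{k+1}}{\pi_k},
\end{align*}
where $g_k$ is the (regularized) sampled gradient fed into PolicyUpdate. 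Averaging over $m=1,\ldots,M_k$ and taking $\E[\cdot\mid\filt]$, Proposition~\ref{proposition: exact to approximate} converts the stochastic linear term into the true directional derivative $\inner*{\nabla\nu\vHpik,\pi_{k+1}-\piHstar}$, and Proposition~\ref{prop: directional derivative for MDPs} rewrites it as $\tfrac{1}{1-\gamma}d_{\nu,\pi_k}\br*{T_\lambda^{\pi_{k+1}}\vHpik-\vHpik-\lambda\bregman{\pi_{k+1}}{\pi_k}}$. In expectation this reproduces exactly the descent step underlying the Exact TRPO analysis.

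Next I would control the stochastic deviation. The gap between the sample-based proximal objective and its conditional expectation is a martingale difference whose increments are bounded using $\|\hatqHpik\|_\infty\leq \CmaxLambda/(1-\gamma)$ and the boundedness of $\nabla\omega(\pi_k)$ over the simplex; an Azuma–Hoeffding step together with a union bound over the at most $S$ distinct visited states per episode yields a per-iteration deviation of order $O\br*{\CmaxLambda\sqrt{S\log A+\log(1/\delta)}/((1-\gamma)\sqrt{M_k})}$, so the stated $M_k$ forces the deviation to at most $\epsilon$ uniformly in $k$ with probability at least $1-\delta$. A subtler piece, explicitly flagged by the paper's Lemma~\ref{lemma: difference of gradient of omegas approximate}, is the mirror-map drift $\nabla\omega(\pi_k)-\nabla\omega(\pi_{k+1})$ that appears after the three-point identity: it must be bounded by $O(t_k)$ \emph{uniformly} on the simplex — including the non-euclidean case, where $\nabla H(\pi)=\log\pi+1$ blows up near the boundary — otherwise the non-euclidean constant $\approxAgrowth=A^2$ would inflate without control.

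Finally I would sum the per-iteration inequalities over $k=0,\ldots,N-1$, telescope $\bregman{\piHstar}{\pi_k}$, and specialize the stepsize. For Part~1, $t_k=(1-\gamma)/(\euclid\Cmax\sqrt{k+1})$ with $\sum_{k=0}^{N-1}t_k=\Theta(\sqrt{N})$ gives an $\tilde O(1/\sqrt{N})$ bound on $\min_{k<N} d_{\nu,\pi_k}\br*{\vpik-T^{\pistar}\vpik}$; a second application of Proposition~\ref{prop: directional derivative for MDPs} at $\pi'=\pistar$ converts this into a bound on $\mu\vpik-\mu\vstar$, and the change of measure $d_{\mu,\pistar}\le C^{\pi^*}\nu$ from Assumption~\ref{assumption: stochastic MD} yields the factor $C^{\pi^*}$ in the final bound. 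For Part~2, $\lambda$-strong convexity of $c_\lambda^{\pi}$ w.r.t.~$B_\omega$ upgrades the per-iteration bound to a contractive recursion in $\bregman{\piHstar}{\pi_k}$, and the harmonic stepsize $t_k=1/(\lambda(k+2))$ then yields the $\tilde O(1/N)$ rate by the classical Juditsky–Nemirovski-type telescoping. The main obstacle will be the joint control in Step~2 of the sampling noise and the mirror-map drift in the non-euclidean case: both pieces are standard individually, but they interact through $\inner*{\nabla\omega(\pi_{k+1})-\nabla\omega(\pi_k),\,\piHstar-\pi_k}$, and only the uniform $O(t_k)$ drift bound (which exploits the specific closed form of the PolicyUpdate subroutine rather than any black-box property of $H$) keeps $\approxAgrowth$ finite. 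A secondary technical point is the truncated-rollout bias in $\hatqHpik$, which I would make $\epsilon$-small by truncating each rollout at horizon $\Theta\br*{\log(1/((1-\gamma)\epsilon))/(1-\gamma)}$ and then absorb into the $\epsilon$-term of the final bound.
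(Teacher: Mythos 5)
Your overall architecture (state-wise three-point inequality for the sampled proximal step, change of measure to $d_{\mu,\pi^*}$ via Assumption~\ref{assumption: stochastic MD}, telescoping with the stated stepsizes, best-policy-in-hindsight since there is no improvement guarantee, and truncation at horizon $O(\log(1/\epsilon)/(1-\gamma))$) matches the paper's proof. The genuine gap is in your concentration step. The per-iteration error term that must be controlled is, schematically, $\frac{1}{M_k}\sum_m \hat Y_m(\pi') - \E[\hat Y_m(\pi')\mid\filt]$ evaluated at $\pi'=\pi_{k+1}$, and $\pi_{k+1}$ is a function of the very same $M_k$ samples being averaged. So you cannot apply Azuma--Hoeffding at the realized $\pi_{k+1}$: the summands are not a martingale difference sequence with respect to any filtration that makes $\pi_{k+1}$ measurable, and your proposed ``union bound over the at most $S$ distinct visited states per episode'' neither repairs this dependence nor produces the $S\log A$ factor you quote (a union bound over $S$ events gives $\log S$). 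The paper resolves this by bounding the error uniformly over the comparator: it takes $\max_{\pi'}$ of the deviation, observes that the maximizer of this linear program lies at an extreme point of $\ssimplex$, i.e.\ a deterministic policy, and applies Hoeffding with a union bound over the $|\Pi^{\text{det}}|=A^S$ deterministic policies; $\log|\Pi^{\text{det}}|=S\log A$ is precisely the origin of the $S\log A$ term in $M_k$ (Lemma~\ref{lemma: one step approximation error bound unbiased}). Without this uniform-over-policies step your claimed per-iteration deviation bound is unjustified, and the stated sample complexity does not follow.

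A secondary inaccuracy: you invoke ``the boundedness of $\nabla\omega(\pi_k)$ over the simplex,'' which is false in the non-euclidean case, where $\nabla H$ is unbounded near the boundary. The paper instead proves (Lemma~\ref{lemma: usefull bounds approximate}) that along the algorithm's iterates $\norm{\nabla\omega(\pi_k)}_\infty$ grows only like $\log k$, and separately (Lemma~\ref{lemma: difference of gradient of omegas approximate}) that the drift $\nabla\omega(\pi_{k+1})-\nabla\omega(\pi_k)$ is $O(t_k)$; both bounds enter the Hoeffding envelope $\HoeffdingBound(k,\lambda)$ and are needed so that the per-iteration error is proportional to $t_k$ and can be divided out by $\sum_k t_k$. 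You correctly flag the drift issue, but the boundedness claim as stated would not survive the non-euclidean case and must be replaced by the iterate-dependent $\log k$ bound.
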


Similarly to Uniform TRPO, the convergence rates are $\tilde O(1/\sqrt{N})$ and $\tilde O(1/N)$ for the unregularized and regularized cases, respectively. However, the Sample-Based TRPO converges to an approximate solution, similarly to CPI. The \emph{sample complexity} for a $\frac{C^{\pi^*}\epsilon}{(1-\gamma)^2}$ error, the same as the error of CPI, is given in Table~\ref{table: sample complexity}. Interestingly, Sample-Based TRPO has better polynomial sample complexity in $(1-\gamma)^{-1}$ relatively to CPI. Importantly, \textbf{the regularized versions have a superior sample-complexity in~$\epsilon$}, which can explain the empirical success of using regularization.

\begin{table}\label{table: sample complexity}
\small
\centering
\renewcommand*{\arraystretch}{2}
\begin{tabular}{|c|c|}
    \hline
     {\bf Method} &  {\bf Sample Complexity} \\
     \hline
     TRPO (this work) &  $ \frac{ \euclid^2 A^2 C_{\text{max}}^4 \br*{S + \log \frac{1}{\delta}} }{\br*{1-\gamma}^3\epsilon^4}$ \\ \hline
     Regularized TRPO (this work) & $\frac{\euclid ^2 \approxAgrowth A^2 C_{\text{max},\lambda}^4 \br*{S + \log \frac{1}{\delta}} }{\lambda\br*{1-\gamma}^4\epsilon^3}$ \\ \hline
     CPI (\citeauthor{kakade2002approximately}) & $\frac{A^2 C_{\text{max}}^4 \br*{S + \log \frac{1}{\delta}} }{ \br*{1-\gamma}^5\epsilon^4} $ \\ \hline
     
\end{tabular}
\caption{The sample complexity of Sample-Based TRPO (TRPO) and CPI. For TRPO, the best policy so far is returned, where for CPI, the last policy $\pi_N$ is returned.}


\end{table}

\begin{remark}[Optimization Perspective]
From an optimization perspective, CPI can be interpreted as a sample-based  Conditional Gradient Descent (Frank-Wolfe) for solving MDPs~\cite{scherrer2014local}. With this in mind, the two analyzed instances of Sample-Based TRPO establish the convergence of sample-based projected and exponentiated gradient descent methods for solving MDPs: PPG and NE-TRPO. It is well known that a convex problem can be solved with any one of the three aforementioned methods. The convergence guarantees of CPI together with the ones of Sample-Based TRPO establish the same holds for RL. 
\end{remark}
\begin{remark}[Is Improvement and Early Stopping Needed?]
Unlike CPI, Sample-Based TRPO does not rely on improvement arguments or early stopping. Even so, its asymptotic performance is equivalent to CPI, and its sample complexity has better polynomial dependence in $(1-\gamma)^{-1}$. This questions the necessity of ensuring improvement for policy search methods, heavily used in the analysis of these methods, yet less used in practice, and motivated by the analysis of CPI.
\end{remark}

\section{Related Works}

The empirical success of policy search and regularization techniques in RL \cite{peters2008natural,mnih2016asynchronous,schulman2015trust,schulman2017proximal} led to non-negligible theoretical analysis of these methods. Gradient based policy search methods were mostly analyzed in the function approximation setting, e.g.,~\cite{sutton2000policy,bhatnagar2009natural,pirotta2013adaptive,dai2017sbeed,papini2019smoothing,bhandari2019global}. There, convergence to a local optimum was established under different conditions and several aspects of policy search methods were investigated. In this work, we study a trust-region based, as opposed to gradient based, policy search method in tabular RL and establish global convergence guarantees. Regarding regularization in TRPO, in \citet{neu2017unified} the authors analyzed entropy regularized MDPs from a linear programming perspective for average-reward MDPs. Yet, convergence rates were not supplied, as opposed to this paper.

In~\citet{geist2019theory} different aspects of regularized MDPs were studied, especially, when combined with MD-like updates in an approximate PI scheme (with partial value updates). The authors focus on update rules which require uniform access to the state space of the form ${\pi_{k+1}=\argmin_{\pi\in \ssimplex} \langle q_k, \pi - \pi_k \rangle + \bregman{\pi}{\pi_k}}$, similarly to the simplified update rule of Uniform TRPO~\eqref{eq: uniform TRPO simplied update} with a fixed learning rate, $t_k=1$. In this paper, we argued it is instrumental to view this update rule as an instance of the more general update rule~\eqref{eq: PG iteration paper}, i.e., MD with an adaptive proximity term. This view allowed us to formulate and analyze the adaptive Sample-Based TRPO, which does not require uniform access to the state space. Moreover, we proved Sample-Based TRPO inherits the same asymptotic performance guarantees of CPI. Specifically, the quality of the policy Sample-Based TRPO outputs depends on the concentrability coefficient $C^{\pi^*}$. The results of~\citet{geist2019theory} in the approximate setting led to a worse concentrability coefficient, $C^i_q$, which can be infinite even when $C^{\pi^*}$ is finite~\cite{scherrer2014approximate} as it depends on the worst case of all policies.

In a recent work of~\citet{agarwal2019optimality}, Section~4.2, the authors study a variant of Projected Policy Gradient Descent and analyze it under the assumption of exact gradients and uniform access to the state space. The proven convergence rate depends on both $S$ and $C^{\pi^*}$ whereas the convergence rate of Exact TRPO (Section~\ref{sec: exact MD}) does not depend on $S$ nor on $C^{\pi^*}$ (see Appendix~\ref{supp: convergence of exact trpo}), and is similar to the guarantees of Uniform TRPO (Theorem~\ref{theorem: uniform TRPO}). Furthermore, the authors do not establish faster rates for regularized MDPs. It is important to note their projected policy gradient algorithm is \emph{different} than the one we study, which can explain the discrepancy between our results. Their projected policy gradient updates by $\pi_{k+1}\in P_{ \ssimplex}\br*{\pi_k - \eta \nabla \mu v^{\pi_k}}$, whereas, the Projected Policy Gradient studied in this work applies a different update rule based on the adaptive scaling of the Bregman distance. 

Lastly, in another recent work of~\citet{liu2019neural} the authors established global convergence guarantees for a sampled-based version of TRPO when neural networks are used as the $q$-function and policy approximators. The sample complexity of their algorithm is $O(\epsilon^{-8})$ (as opposed to $O(\epsilon^{-4})$ we obtained)  neglecting other factors. It is an interesting question whether their result can be improved.

\section{Conclusions and Future Work}
We analyzed the Uniform and Sample-Based TRPO methods. The first is a planning, trust region method with an adaptive proximity term, and the latter is an RL sample-based version of the first.
Different choices of the proximity term led to two instances of the TRPO method: PPG and NE-TRPO. For both, we proved $\tilde O(1/\sqrt{N})$ convergence rate to the global optimum, and a faster $\tilde O(1/N)$ rate for regularized MDPs. Although Sample-Based TRPO does not necessarily output an improving sequence of policies, as CPI, its best policy in hindsight does improve. Furthermore, the asymptotic performance of Sample-Based TRPO  is equivalent to that of CPI, and its sample complexity exhibits better dependence in $(1-\gamma)^{-1}$. These results establish the popular NE-TRPO~\cite{schulman2015trust} should not be interpreted as an approximate heuristic to CPI but as a viable alternative.


In terms of future work, an important extension of this study is deriving algorithms with linear convergence, or, alternatively, establish impossibility results for such rates in RL problems. Moreover, while we proved positive results on regularization in RL, we solely focused on the question of optimization. We believe that establishing more positive as well as negative results on regularization in RL is of value. Lastly, studying further the implication of the adaptive proximity term in RL is of importance due to the empirical success of NE-TRPO and its now established convergence guarantees.

\section{Acknowledgments}
We would like to thank Amir Beck for illuminating discussions regarding Convex Optimization and Nadav Merlis for helpful comments. This work was partially funded by the Israel Science
Foundation under ISF grant number 1380/16.

\bibliography{Bibliography.bib}

\begin{thebibliography}{}

\bibitem[\protect\citeauthoryear{Agarwal \bgroup et al\mbox.\egroup
  }{2019}]{agarwal2019optimality}
Agarwal, A.; Kakade, S.~M.; Lee, J.~D.; and Mahajan, G.
\newblock 2019.
\newblock Optimality and approximation with policy gradient methods in markov
  decision processes.
\newblock {\em arXiv preprint arXiv:1908.00261}.

\bibitem[\protect\citeauthoryear{Ahmed \bgroup et al\mbox.\egroup
  }{2019}]{ahmed2019understanding}
Ahmed, Z.; Le~Roux, N.; Norouzi, M.; and Schuurmans, D.
\newblock 2019.
\newblock Understanding the impact of entropy on policy optimization.
\newblock In {\em International Conference on Machine Learning},  151--160.

\bibitem[\protect\citeauthoryear{Beck and Teboulle}{2003}]{beck2003mirror}
Beck, A., and Teboulle, M.
\newblock 2003.
\newblock Mirror descent and nonlinear projected subgradient methods for convex
  optimization.
\newblock {\em Operations Research Letters} 31(3):167--175.

\bibitem[\protect\citeauthoryear{Beck}{2017}]{beck2017first}
Beck, A.
\newblock 2017.
\newblock {\em First-order methods in optimization}, volume~25.
\newblock SIAM.

\bibitem[\protect\citeauthoryear{Bertsimas and
  Tsitsiklis}{1997}]{bertsimas1997introduction}
Bertsimas, D., and Tsitsiklis, J.~N.
\newblock 1997.
\newblock {\em Introduction to linear optimization}, volume~6.
\newblock Athena Scientific Belmont, MA.

\bibitem[\protect\citeauthoryear{Bhandari and Russo}{2019}]{bhandari2019global}
Bhandari, J., and Russo, D.
\newblock 2019.
\newblock Global optimality guarantees for policy gradient methods.
\newblock {\em arXiv preprint arXiv:1906.01786}.

\bibitem[\protect\citeauthoryear{Bhatnagar \bgroup et al\mbox.\egroup
  }{2009}]{bhatnagar2009natural}
Bhatnagar, S.; Sutton, R.~S.; Ghavamzadeh, M.; and Lee, M.
\newblock 2009.
\newblock Natural actor--critic algorithms.
\newblock {\em Automatica} 45(11):2471--2482.

\bibitem[\protect\citeauthoryear{Chow, Nachum, and
  Ghavamzadeh}{2018}]{nachum2018path}
Chow, Y.; Nachum, O.; and Ghavamzadeh, M.
\newblock 2018.
\newblock Path consistency learning in tsallis entropy regularized mdps.
\newblock In {\em International Conference on Machine Learning},  978--987.

\bibitem[\protect\citeauthoryear{Dai \bgroup et al\mbox.\egroup
  }{2018}]{dai2017sbeed}
Dai, B.; Shaw, A.; Li, L.; Xiao, L.; He, N.; Liu, Z.; Chen, J.; and Song, L.
\newblock 2018.
\newblock Sbeed: Convergent reinforcement learning with nonlinear function
  approximation.
\newblock In Dy, J., and Krause, A., eds., {\em Proceedings of the 35th
  International Conference on Machine Learning}, volume~80 of {\em Proceedings
  of Machine Learning Research},  1125--1134.
\newblock Stockholmsmässan, Stockholm Sweden: PMLR.

\bibitem[\protect\citeauthoryear{Farahmand, Szepesv{\'a}ri, and
  Munos}{2010}]{farahmand2010error}
Farahmand, A.~M.; Szepesv{\'a}ri, C.; and Munos, R.
\newblock 2010.
\newblock Error propagation for approximate policy and value iteration.
\newblock In {\em Advances in Neural Information Processing Systems},
  568--576.

\bibitem[\protect\citeauthoryear{Fox, Pakman, and Tishby}{2016}]{fox2015taming}
Fox, R.; Pakman, A.; and Tishby, N.
\newblock 2016.
\newblock Taming the noise in reinforcement learning via soft updates.
\newblock In {\em Proceedings of the Thirty-Second Conference on Uncertainty in
  Artificial Intelligence},  202--211.
\newblock AUAI Press.

\bibitem[\protect\citeauthoryear{Geist, Scherrer, and
  Pietquin}{2019}]{geist2019theory}
Geist, M.; Scherrer, B.; and Pietquin, O.
\newblock 2019.
\newblock A theory of regularized markov decision processes.
\newblock In {\em International Conference on Machine Learning},  2160--2169.

\bibitem[\protect\citeauthoryear{Juditsky, Nemirovski, and
  others}{2011}]{juditsky2011first}
Juditsky, A.; Nemirovski, A.; et~al.
\newblock 2011.
\newblock First order methods for nonsmooth convex large-scale optimization, i:
  general purpose methods.
\newblock {\em Optimization for Machine Learning}  121--148.

\bibitem[\protect\citeauthoryear{Kakade and
  Langford}{2002}]{kakade2002approximately}
Kakade, S., and Langford, J.
\newblock 2002.
\newblock Approximately optimal approximate reinforcement learning.
\newblock In {\em ICML}, volume~2,  267--274.

\bibitem[\protect\citeauthoryear{Kakade and others}{2003}]{kakade2003sample}
Kakade, S.~M., et~al.
\newblock 2003.
\newblock {\em On the sample complexity of reinforcement learning}.
\newblock Ph.D. Dissertation, University of London London, England.

\bibitem[\protect\citeauthoryear{Liu \bgroup et al\mbox.\egroup
  }{2019}]{liu2019neural}
Liu, B.; Cai, Q.; Yang, Z.; and Wang, Z.
\newblock 2019.
\newblock Neural proximal/trust region policy optimization attains globally
  optimal policy.
\newblock {\em arXiv preprint arXiv:1906.10306}.

\bibitem[\protect\citeauthoryear{Mnih \bgroup et al\mbox.\egroup
  }{2016}]{mnih2016asynchronous}
Mnih, V.; Badia, A.~P.; Mirza, M.; Graves, A.; Lillicrap, T.; Harley, T.;
  Silver, D.; and Kavukcuoglu, K.
\newblock 2016.
\newblock Asynchronous methods for deep reinforcement learning.
\newblock In {\em International conference on machine learning},  1928--1937.

\bibitem[\protect\citeauthoryear{Nachum \bgroup et al\mbox.\egroup
  }{2017}]{nachum2017trust}
Nachum, O.; Norouzi, M.; Xu, K.; and Schuurmans, D.
\newblock 2017.
\newblock Trust-pcl: An off-policy trust region method for continuous control.
\newblock {\em arXiv preprint arXiv:1707.01891}.

\bibitem[\protect\citeauthoryear{Nedic and Lee}{2014}]{nedic2014stochastic}
Nedic, A., and Lee, S.
\newblock 2014.
\newblock On stochastic subgradient mirror-descent algorithm with weighted
  averaging.
\newblock {\em SIAM Journal on Optimization} 24(1):84--107.

\bibitem[\protect\citeauthoryear{Nesterov}{1998}]{nesterov1998introductory}
Nesterov, Y.
\newblock 1998.
\newblock {\em Introductory lectures on convex programming volume i: Basic
  course}.
\newblock Springer, New York, NY.

\bibitem[\protect\citeauthoryear{Neu, Jonsson, and
  G{\'o}mez}{2017}]{neu2017unified}
Neu, G.; Jonsson, A.; and G{\'o}mez, V.
\newblock 2017.
\newblock A unified view of entropy-regularized markov decision processes.
\newblock {\em arXiv preprint arXiv:1705.07798}.

\bibitem[\protect\citeauthoryear{Papini, Pirotta, and
  Restelli}{2019}]{papini2019smoothing}
Papini, M.; Pirotta, M.; and Restelli, M.
\newblock 2019.
\newblock Smoothing policies and safe policy gradients.
\newblock {\em arXiv preprint arXiv:1905.03231}.

\bibitem[\protect\citeauthoryear{Peters and Schaal}{2008}]{peters2008natural}
Peters, J., and Schaal, S.
\newblock 2008.
\newblock Natural actor-critic.
\newblock {\em Neurocomputing} 71(7-9):1180--1190.

\bibitem[\protect\citeauthoryear{Pirotta, Restelli, and
  Bascetta}{2013}]{pirotta2013adaptive}
Pirotta, M.; Restelli, M.; and Bascetta, L.
\newblock 2013.
\newblock Adaptive step-size for policy gradient methods.
\newblock In {\em Advances in Neural Information Processing Systems},
  1394--1402.

\bibitem[\protect\citeauthoryear{Scherrer and Geist}{2014}]{scherrer2014local}
Scherrer, B., and Geist, M.
\newblock 2014.
\newblock Local policy search in a convex space and conservative policy
  iteration as boosted policy search.
\newblock In {\em Joint European Conference on Machine Learning and Knowledge
  Discovery in Databases},  35--50.
\newblock Springer.

\bibitem[\protect\citeauthoryear{Scherrer}{2014}]{scherrer2014approximate}
Scherrer, B.
\newblock 2014.
\newblock Approximate policy iteration schemes: a comparison.
\newblock In {\em International Conference on Machine Learning},  1314--1322.

\bibitem[\protect\citeauthoryear{Schulman \bgroup et al\mbox.\egroup
  }{2015}]{schulman2015trust}
Schulman, J.; Levine, S.; Abbeel, P.; Jordan, M.; and Moritz, P.
\newblock 2015.
\newblock Trust region policy optimization.
\newblock In {\em International Conference on Machine Learning},  1889--1897.

\bibitem[\protect\citeauthoryear{Schulman \bgroup et al\mbox.\egroup
  }{2017}]{schulman2017proximal}
Schulman, J.; Wolski, F.; Dhariwal, P.; Radford, A.; and Klimov, O.
\newblock 2017.
\newblock Proximal policy optimization algorithms.
\newblock {\em arXiv preprint arXiv:1707.06347}.

\bibitem[\protect\citeauthoryear{Sutton and
  Barto}{2018}]{sutton2018reinforcement}
Sutton, R.~S., and Barto, A.~G.
\newblock 2018.
\newblock {\em Reinforcement learning: An introduction}.
\newblock MIT press.

\bibitem[\protect\citeauthoryear{Sutton \bgroup et al\mbox.\egroup
  }{2000}]{sutton2000policy}
Sutton, R.~S.; McAllester, D.~A.; Singh, S.~P.; and Mansour, Y.
\newblock 2000.
\newblock Policy gradient methods for reinforcement learning with function
  approximation.
\newblock In {\em Advances in neural information processing systems},
  1057--1063.

\end{thebibliography}
\bibliographystyle{aaai}

\newpage
\onecolumn
\setlength{\parskip}{1em}
\setlength{\parindent}{0ex}





\listofappendices

\begin{appendices}
\pagestyle{plain}

\section{Assumptions of Mirror Descent}\label{supp: assumptions omega}

\begin{assumption}[properties of Bregman distance]\label{assumption: Bregman Distance}
\begin{enumerate}[(A)]
    \item[]
    \item $\omega$ is proper closed and convex.
    \item $\omega$ is differentiable over $\dom(\partial\omega)$.
    \item $C \subseteq \dom(\omega)$
    \item $\omega+\delta_C$ is $\sigma$-strongly convex ($\sigma>0$)
\end{enumerate}
\end{assumption}

Assumption \ref{assumption: Bregman Distance} is the main assumption regarding the underlying Bregman distance used in Mirror Descent. In our analysis, we have two common choice of $\omega$: a) the negative entropy function, denoted as $H(\cdot)$, for which the corresponding Bregman distance is $\bregman{\cdot}{\cdot}=\dkl{\cdot}{\cdot}$. b) the euclidean norm $\omega(\cdot)=\frac{1}{2} \norm{\cdot}^2$, for which the resulting Bregman distance is the euclidean distance. The convex optimization domain $C$ is in our case $\ssimplex$, the state-wise unit simplex over the space of actions. For both choices, the assumption holds. Finally, $\delta_C(x)$ is an extended real valued function which describes the optimization domain $C$. It is defined as follows: For $x \in C$, $\delta_C(x)=0$. For $x \notin C$, $\delta_C(x)=\infty$. For more details, see \cite{beck2017first}.

We go on to define the second assumption regarding the optimization problem:
\begin{assumption}\label{assumption: main}
\begin{enumerate}[(A)]
    \item[]
    \item $f:\E\rightarrow (-\infty,\infty]$ is proper closed.
    \item $C \subseteq \E$ is nonempty closed and convex.
    \item $C \subseteq int(\dom(f))$.
    \item The optimal set of (P) is nonempty.
\end{enumerate}
\end{assumption}

\section{Policy Gradient, and Directional Derivatives for Regularized MDPs}\label{supp: policy gradients and directional derivatives}

In this section we re-derive the Policy Gradient Theorem \cite{sutton2000policy} for regularized MDPs when tabular representation is used. Meaning, we explicitly calculate the derivative $\nabla_\pi \vHpi (s)$. Based on this result, we derive the directional derivative, or the linear approximation of the objective functions, $\inner{\nabla_\pi \vHpi  (s), \pi-\pi'}, \inner{\nabla_\pi \mu \vHpi (s), \pi-\pi'}$. 

\subsection{Extended Value Functions}\label{supp: extended value functions}
To formally study $\nabla_\pi \vHpi (s)$ we need to define value functions $v^\pi$ when $\pi$ is outside of the simplex $\ssimplex$, since when $\pi(a\mid s)$ changes infinitesimally, $\pi(\cdot \mid s)$ does not remain a valid probability distribution.  To this end, we study \emph{extended value functions} denoted by $v(y)\in \mathbb{R}^S$ for $y\in \mathbb{R}^{S\times A}$, and denote $v_s(y)$ as the component of $v(y)$ which corresponds to the state $s$. Furthermore, we define the following cost and dynamics,
\begin{align*}
    &c^y_{\lambda,s} \triangleq \sum_{a'} y(a'\mid s)\br*{c(s,a)+\lambda \omega_s(y}),\\
    &p^y_{s,s'} \triangleq \sum_{a'} y(a'\mid s)p(s'\mid s,a'),
\end{align*}
where $\omega_s(y) \triangleq \omega(y(\cdot \mid s))$ for $\omega: \mathbb{R}^A\rightarrow \mathbb{R}$, $p^y\in \mathbb{R}^{S\times S}$ and $c^y_{\lambda}\in \mathbb{R}^S$. 

\begin{defn}[Extended value and $q$ functions.]
An extended value function is a mapping $v:\mathbb{R}^{S\times A} \rightarrow \mathbb{R}^S$, such that for $y\in \mathbb{R}^{S\times A}$ 
\begin{align}
    v(y) \triangleq  \sum_{t=0}^{\infty} \gamma^t (p^y)^t c^y_{\lambda} , \label{eq: definition v(y)}
\end{align}

Similarly, an extended $q$-function is a mapping $q:\mathbb{R}^{S\times A} \rightarrow \mathbb{R}^{S\times A}$, such that its $s,a$ element is given by
\begin{align}
    q_{s,a}(y) \triangleq  c(s,a)  + \lambda \omega_s(y) + \gamma \sum_{s'} p(s'\mid s,a)v^y_{s'}, \label{eq: definition q(y)}
\end{align}

When $y\in \ssimplex$ is a policy, $\pi$, we denote $v(\pi)\triangleq \vHpi \in \mathbb{R}^{S}, q(\pi)=\qHpi\in \mathbb{R}^{S\times A}$.
\end{defn}

Note that in this section we use different notations than the rest of the paper, in order to generalize the discussion and keep it out of the regular RL conventions.

The following proposition establishes that $v(y)$ the fixed point of a corresponding Bellman operator when $y$ is close to the simplex component-wise.

\begin{lemma}\label{lemma: extended value function}
Let $y \in \{y'\in \mathbb{R}^{S\times A}: \forall s, \sum_{a}|y'(a\mid s)|< \frac{1}{\gamma} \}$. Define the operator $T^y:\mathbb{R}^S \rightarrow \mathbb{R}^S$ , such that for any $v\in \mathbb{R}^S$,
\begin{align*}
    (T^y v)_s \triangleq c_{\lambda,s}^y  +\gamma \sum_{s'}p^y_{s,s'} v_{s'}.
\end{align*}
Then, 
\begin{enumerate}
    \item $T^y$ is a contraction operator in the max norm.
    \item  Its fixed-point is $v(y)$ and satisfies
$
    v_{s}(y) = (T^y v(y))_s.
$
\end{enumerate}
\end{lemma}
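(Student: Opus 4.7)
The plan is to prove both claims by direct calculation, using the hypothesis $\sum_a |y(a\mid s)| < 1/\gamma$ as the single key inequality.

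First I would verify that $T^y$ is a contraction in the max norm. For any $v_1, v_2 \in \mathbb{R}^S$, the cost term cancels and I would write
\begin{align*}
|(T^y v_1)_s - (T^y v_2)_s| = \Bigl|\gamma \sum_{s'} p^y_{s,s'}\,(v_1(s') - v_2(s'))\Bigr| \leq \gamma \sum_{s'} |p^y_{s,s'}|\, \|v_1 - v_2\|_\infty.
\end{align*}
Expanding $p^y_{s,s'} = \sum_a y(a\mid s) p(s'\mid s,a)$ and using the non-negativity of $p(\cdot\mid s,a)$,
\begin{align*}
\sum_{s'} |p^y_{s,s'}| \leq \sum_a |y(a\mid s)| \sum_{s'} p(s'\mid s,a) = \sum_a |y(a\mid s)|,
\end{align*}
which is strictly less than $1/\gamma$ by assumption. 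Setting $\alpha := \gamma \max_s \sum_a |y(a\mid s)| < 1$ and taking a max over $s$ gives $\|T^y v_1 - T^y v_2\|_\infty \leq \alpha \|v_1 - v_2\|_\infty$, establishing the first claim.

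Second, I would show that the infinite sum defining $v(y)$ converges and coincides with the unique fixed point. The row-sum bound above yields $\|\gamma^t (p^y)^t c^y_\lambda\|_\infty \leq \alpha^t \|c^y_\lambda\|_\infty$, so the series $\sum_{t\geq 0} \gamma^t (p^y)^t c^y_\lambda$ converges absolutely, making $v(y)$ well defined. A one-line index shift then gives
\begin{align*}
T^y v(y) = c^y_\lambda + \gamma p^y \sum_{t=0}^{\infty} \gamma^t (p^y)^t c^y_\lambda = \sum_{t=0}^{\infty} \gamma^t (p^y)^t c^y_\lambda = v(y),
\end{align*}
so $v(y)$ is a fixed point; combined with the contraction from part 1 and Banach's fixed-point theorem, it is the unique one.

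The main subtlety will be the sign issue: because $y$ is allowed to leave the simplex, $p^y$ need not be stochastic and can have negative entries, so one cannot directly invoke the standard $\gamma$-contraction of the classical Bellman operator. Carrying absolute values through the row-sum computation and then substituting the hypothesis $\sum_a |y(a\mid s)| < 1/\gamma$ is precisely what recovers a contraction constant strictly below one; this single inequality simultaneously drives the contraction and the absolute convergence of the Neumann-type series, so no further technical work is needed.
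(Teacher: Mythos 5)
Your proof is correct and follows essentially the same route as the paper: bound the row sums of $p^y$ via the triangle inequality and the hypothesis $\gamma\sum_a|y(a\mid s)|<1$ to get a max-norm contraction, then verify the fixed-point identity by peeling off the $t=0$ term of the defining series. Your handling of the absolute values (and the explicit absolute-convergence remark for the series) is in fact slightly cleaner than the paper's case-split argument, but it is the same idea.
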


\begin{proof}
We start by proving the first claim. Unlike in classical results on MDPs, $y$ is not a policy. However, since it is not `too far' from being a policy we get the usual contraction property by standard proof techniques. 

Let $v',v\in \mathbb{R}^S$, and assume $ (T^y v')_s\geq  (T^y v)_s$.
\begin{align*}
    (T^y v')_s - (T^y v)_s &= \gamma \sum_{a}y(a\mid s)\sum_{s'}p(s'\mid s,a)(v'_{s'}-v_{s'})\\
    &\leq \gamma\sum_{a}y(a\mid s) \sum_{s'}p(s'\mid s,a)\norm{v'_{s'}-v_{s'}}_{\infty}\\
    &= \gamma \norm{v'_{s'}-v_{s'}}_{\infty}\sum_{a}y(a\mid s) \\
    &\leq \gamma \norm{v'_{s'}-v_{s'}}_{\infty}\sum_{a}|y(a\mid s)| \\   
    &<  \norm{v'_{s'}-v_{s'}}_{\infty} .   
\end{align*}
In the fourth relation we used the assumption that $\gamma \sum_{a}|y(a\mid s)|< 1$. Repeating the same proof for the other case where $(T^y v')_s <  (T^y v)_s$, concludes the proof of the first claim.

To prove the second claim, we use the definition of $v(y)$.
\begin{align*}
    v(y) &\triangleq   \sum_{t=0}^{\infty} \gamma^t (p^y)^t c_\lambda^y \\
    &= c_\lambda^y + \sum_{t=1}^{\infty} \gamma^t (p^y)^t c_\lambda^y  \\
    &= c_\lambda^y + \gamma p^y\br*{\sum_{t=0}^{\infty} \gamma^t (p^y)^{t}c_\lambda^y} \\
    &= c_\lambda^y + \gamma p^y v(y).
\end{align*}
In the third relation we used the distributive property of matrix multiplication and in the forth relation we used the definition of $v(y)$. Thus, $v(y) = T^y v(y)$, i.e., $v(y)$ is the fixed point of the operator~$T^y$.
\end{proof}

\subsection{Policy Gradient Theorem for Regularized MDPs}

We now derive the Policy Gradient Theorem for regularized MDPs for tabular policy representation. Specifically, we use the notion of an extended value function and an extended $q$-functions defined in the previous section.

\begin{lemma}
Let $y\in \{y'\in \mathbb{R}^{S\times A}: \forall s, \sum_{a}|y'(a\mid s)|< \frac{1}{\gamma} \}$. Then,
$$
v_s(y) = \sum_{a'} y(a'\mid s) q_{s,a'}(y)
$$
\end{lemma}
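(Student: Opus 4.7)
The plan is to apply the fixed-point characterization of the extended value function from the preceding lemma and then simply factor the sum over actions. Specifically, since $y$ lies in the set $\{y' : \sum_a |y'(a\mid s)| < 1/\gamma\}$, Lemma \ref{lemma: extended value function} tells us that $v(y)$ is the fixed point of $T^y$, so
\begin{align*}
v_s(y) = c^y_{\lambda,s} + \gamma \sum_{s'} p^y_{s,s'}\, v_{s'}(y).
\end{align*}

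Next, I would substitute the explicit definitions of $c^y_{\lambda,s}$ and $p^y_{s,s'}$, namely $c^y_{\lambda,s} = \sum_{a'} y(a'\mid s)\bigl(c(s,a') + \lambda \omega_s(y)\bigr)$ and $p^y_{s,s'} = \sum_{a'} y(a'\mid s)\,p(s'\mid s,a')$, and factor $y(a'\mid s)$ out of the resulting double sum to obtain
\begin{align*}
v_s(y) = \sum_{a'} y(a'\mid s)\Bigl( c(s,a') + \lambda \omega_s(y) + \gamma \sum_{s'} p(s'\mid s,a')\, v_{s'}(y) \Bigr).
\end{align*}
The expression in parentheses is exactly $q_{s,a'}(y)$ by the definition in \eqref{eq: definition q(y)}, which yields the claim.

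Since this is essentially a one-line manipulation once the fixed-point equation is in hand, there is no substantive obstacle to overcome; the only thing to be careful about is that one is allowed to invoke the fixed-point equation, which requires the assumption $\gamma \sum_a |y(a\mid s)| < 1$ so that Lemma \ref{lemma: extended value function} applies. The identity is the natural analogue of the usual $v^\pi(s) = \sum_a \pi(a\mid s)\, q^\pi(s,a)$ relation, now generalized to extended (and possibly non-stochastic) $y$ and to the regularized cost, with the regularization term $\lambda \omega_s(y)$ absorbed into $q_{s,a}(y)$.
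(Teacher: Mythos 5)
Your proof is correct and is essentially the paper's argument run in the opposite direction: the paper expands $\sum_{a'} y(a'\mid s)q_{s,a'}(y)$ via the definition of $q$ and then invokes the fixed-point property $v(y)=T^y v(y)$, whereas you start from the fixed-point equation and factor out $y(a'\mid s)$ to recover the $q$-values. Both hinge on the same two ingredients (the definition \eqref{eq: definition q(y)} and Lemma \ref{lemma: extended value function}), so there is no substantive difference.
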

\begin{proof}
Using \eqref{eq: definition q(y)}, we get
\begin{align*}
    \sum_{a'} y(a'\mid s)q_{s,a'}(y)&= \sum_{a'} y(a'\mid s)(c(s,a)+\lambda \omega_s(y)) +\gamma \sum_{s'} p(s'\mid s,a')v_{s'}(y)\\
    &=c^y_{\lambda,s} + \gamma \sum_{s'} p(s'\mid s,a')v_{s'}(y)= v_s(y),
\end{align*}
where the last equality is by the fixed-point property of Lemma \ref{lemma: extended value function}.
\end{proof}

We now derive the Policy Gradient Theorem for extended (regularized) value functions.
\begin{theorem}[Policy Gradient for Extended Regularized Value Functions]
Let $y \in \{y: \forall s, \sum_{a}|y(a\mid s)|< \frac{1}{\gamma} \}$. Furthermore, consider a fixed $s,a$ and $\bar s$. Then,
\begin{align*}
    \partial_{y_{\bar{s},\bar{a}}} v_{s}(y)= \sum_{t=0}^\infty \gamma^t p_t^y(s_t\mid s)\delta_{\bar{s},s_t}\left( q_{s,\bar{a}}(y) + \lambda \partial_{y_{s,\bar{a}}}\omega_s(y)\br*{\sum_{a'}y(a'\mid s)}\right),
\end{align*}
where $p^y(s_t \mid s) = \sum_{s_1,..,s_t} p^y(s_{t}\mid s_{t-1})\cdot\cdot\cdot  p^y(s_{1}\mid s),$ and  $p^y_t(s_0\mid s)=1$.
\end{theorem}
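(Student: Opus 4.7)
\smallskip
\textbf{Proof proposal.} The plan is to differentiate the Bellman-type identity $v_s(y) = \sum_{a'} y(a'\mid s)\, q_{s,a'}(y)$ from the preceding lemma with respect to $y_{\bar s,\bar a}$, and then recognize the resulting recursion as a geometric series that can be summed using the contraction property established in Lemma~\ref{lemma: extended value function}. Applying the product rule, the derivative of the factor $y(a'\mid s)$ contributes a Kronecker delta term that is nonzero only when $s=\bar s$ and $a'=\bar a$, yielding the direct piece $\delta_{s,\bar s}\, q_{s,\bar a}(y)$. The remaining piece comes from $\sum_{a'} y(a'\mid s)\,\partial_{y_{\bar s,\bar a}} q_{s,a'}(y)$, which I would compute by inserting the definition \eqref{eq: definition q(y)} of the extended $q$-function.

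From that definition, $\partial_{y_{\bar s,\bar a}} q_{s,a'}(y) = \lambda\,\partial_{y_{\bar s,\bar a}}\omega_s(y) + \gamma \sum_{s'} p(s'\mid s,a')\,\partial_{y_{\bar s,\bar a}} v_{s'}(y)$. Since $\omega_s$ depends only on $y(\cdot\mid s)$, the first term vanishes unless $s=\bar s$, producing the factor $\delta_{s,\bar s}\,\partial_{y_{s,\bar a}}\omega_s(y)$ multiplied by $\sum_{a'} y(a'\mid s)$ after summing over $a'$. Collecting everything and using $p^y_{s,s'} = \sum_{a'} y(a'\mid s) p(s'\mid s,a')$ gives the recursion
\begin{align*}
\partial_{y_{\bar s,\bar a}} v_s(y) \;=\; g(s) \;+\; \gamma \sum_{s'} p^y_{s,s'}\, \partial_{y_{\bar s,\bar a}} v_{s'}(y),
\end{align*}
where $g(s) \triangleq \delta_{s,\bar s}\bigl(q_{s,\bar a}(y) + \lambda\,\partial_{y_{s,\bar a}}\omega_s(y)\sum_{a'} y(a'\mid s)\bigr)$.

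To solve this linear fixed-point equation, I would iterate it $t$ times, producing $\partial_{y_{\bar s,\bar a}} v_s(y) = \sum_{t=0}^{T-1} \gamma^t (P^y)^t g\,(s) + \gamma^T (P^y)^T \partial_{y_{\bar s,\bar a}} v(y)\,(s)$, and then send $T\to\infty$. By Lemma~\ref{lemma: extended value function}, the hypothesis $\sum_a |y(a\mid s)| < 1/\gamma$ ensures that $\gamma P^y$ is a strict max-norm contraction, so the remainder vanishes in the limit and the series $\sum_{t\ge 0}\gamma^t (P^y)^t g$ converges absolutely. Writing $((P^y)^t g)(s) = \sum_{s_t} p_t^y(s_t\mid s)\, g(s_t)$, the Kronecker delta in $g$ collapses the sum over $s_t$ to $s_t = \bar s$, which gives exactly the claimed formula.

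The main technical subtlety will be justifying that $v_s(y)$ is actually differentiable in $y$ and that the termwise differentiation above is legitimate. I would handle this by observing that on the open domain $\{y:\gamma\sum_a|y(a\mid s)|<1\ \forall s\}$, the matrix $I-\gamma P^y$ is invertible (again by Lemma~\ref{lemma: extended value function}), hence $v(y)=(I-\gamma P^y)^{-1}c^y_\lambda$ is a smooth function of $y$ whenever $\omega$ is, and the series representation \eqref{eq: definition v(y)} converges uniformly on compact subsets of this domain. The recursion above is then just the componentwise gradient of this smooth function, with all manipulations justified by uniform convergence.
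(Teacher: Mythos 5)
Your proposal is correct and follows essentially the same route as the paper: differentiate the identity $v_s(y)=\sum_{a'}y(a'\mid s)q_{s,a'}(y)$ by the product rule, compute $\partial_{y_{\bar s,\bar a}}q_{s,a'}(y)$ from the definition of the extended $q$-function, and unroll the resulting linear recursion in $\partial_{y_{\bar s,\bar a}}v(y)$ into the geometric series. Your explicit treatment of the vanishing remainder term via the contraction of Lemma~\ref{lemma: extended value function} and of the differentiability of $v(y)=(I-\gamma P^y)^{-1}c^y_\lambda$ only makes the paper's "iteratively applying this relation" step more rigorous.
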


\begin{proof}
Following similar derivation to the original Policy Gradient Theorem~\cite{sutton2000policy}, for every $s$,
\begin{align*}
    &\partial_{y_{\bar{s},\bar{a}}} v_{s}(y) \\
    &= \sum_{a'} (\partial_{y_{\bar{s},\bar{a}}}y(a'\mid s)) q_{s,a'}(y) + y(a'\mid s)\partial_{y_{\bar{s},\bar{a}}}q_{s,a'}(y)\\
    &= \sum_{a'} \delta_{s,\bar{s}}\delta_{a',\bar{a}} q_{s,a'}(y) + y(a'\mid s)\partial_{y_{\bar{s},\bar{a}}}q_{s,a'}(y).
\end{align*}

We now explicitly write the last term,
\begin{align*}
    &\partial_{y_{\bar{s},\bar{a}}}q_{s,a'}(y)\\
    & = \partial_{y_{\bar{s},\bar{a}}}\left(c(s,a')+ \lambda \omega_s(y) + \gamma \sum_{s'} p(s'\mid s,a')v_{s'}(y)\right)\\
    & =  \lambda \delta_{s,\bar{s}}\partial_{y_{s,\bar{a}}}\omega_s(y) + \gamma \sum_{s'} p(s'\mid s,a')\partial_{y_{\bar{s},\bar{a}}}v_{s'}(y).
\end{align*}

Plugging this back yields,
\begin{align*}
    &\partial_{y_{\bar{s},\bar{a}}} v_{s}(y) \\
    &= \sum_{a'} \delta_{s,\bar{s}}\delta_{a',\bar{a}} q_{s,a'}(y) +  \lambda y(a'\mid s)\delta_{s,\bar{s}}\partial_{y_{s,\bar{a}}}\omega_s(y) \\
    &\quad+ \gamma \sum_{s'}\sum_{a'} y(a'\mid s)p(s'\mid s,a')\partial_{y_{\bar{s},\bar{a}}}v_{s'}(y)\\
    &= \sum_{a'} \delta_{s,\bar{s}}\delta_{a',\bar{a}} q_{s,a'}(y) +  \lambda y(a'\mid s)\delta_{s,\bar{s}}\partial_{y_{s,\bar{a}}}\omega_s(y)+ \gamma \sum_{s'}p^y(s'\mid s)\partial_{y_{\bar{s},\bar{a}}}v_{s'}(y).
\end{align*}

Iteratively applying this relation yields
\begin{align*}
    \partial_{y_{\bar{s},\bar{a}}} v_{s}(y)= \sum_{t=0}^\infty \gamma^t p_t^y(s_t\mid s)\delta_{\bar{s},s_t}\left( q_{s,\bar{a}}(y) + \lambda \partial_{y_{s,\bar{a}}}\omega_s(y)\br*{\sum_{a'}y(a'\mid s)}\right),
\end{align*}
where,
\begin{align*}
    p^y(s_t \mid s) = \sum_{s_1,..,s_t} p^y(s_{t}\mid s_{t-1})\cdot\cdot\cdot  p^y(s_{1}\mid s),
\end{align*}
and $p^y_t(s_0\mid s)=1$.
\end{proof}

Returning to the specific notation for RL, defined in Section~\ref{sec: notation and pre}, by setting $y=\pi$, i.e., when $y$ is a policy, we get the Policy Gradient Theorem for regularized MDPs, since for all $s$, $\sum_{a'}\pi(a'\mid s)=1$.
\begin{corollary}[Policy Gradient for Regularized MDPs]\label{Cor: Policy Gradient thm}
Let $\pi \in \ssimplex$. Then, $\nabla_{\pi} v^\pi \in \mathbb{R}^{S\times S\times A}$ and
\begin{align*}
    \nabla_{\pi} v^\pi(s,\bar s,\bar a) \triangleq \nabla_{\pi(\bar a \mid \bar s)} v_\lambda^\pi(s)= \sum_{t=0}^\infty \gamma^t p^\pi(s_t=\bar{s}\mid s)\left(\lambda \partial_{\pi(\bar a \mid \bar s)}\omega^\pi(\bar{s})+  \qHpi(\bar{s},\bar{a})\right).
\end{align*}
\end{corollary}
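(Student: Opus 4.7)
The plan is to derive the corollary as an immediate specialization of the preceding Policy Gradient theorem for extended regularized value functions, by instantiating $y=\pi$ where $\pi\in \ssimplex$ is a bona fide policy. The first step is to verify that $\pi$ lies in the admissible domain of the theorem: since $\pi(\cdot\mid s)$ is a probability distribution on $\mathcal{A}$ for every $s$, we have $\sum_{a'}|\pi(a'\mid s)|=\sum_{a'}\pi(a'\mid s)=1$, and because $\gamma\in(0,1)$ this is strictly less than $1/\gamma$. Hence $\pi$ belongs to $\{y'\in\mathbb{R}^{S\times A}: \forall s,\ \sum_a |y'(a\mid s)|<1/\gamma\}$ and the theorem applies.

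Next, I would plug $y=\pi$ into the conclusion of the theorem. The key simplification is that the factor $\sum_{a'}y(a'\mid s)$ that multiplies the entropy gradient $\lambda\,\partial_{y_{s,\bar a}}\omega_s(y)$ collapses to $1$ under this substitution, because $\pi$ is stochastic. This eliminates the awkward-looking product and leaves exactly the term $\lambda\,\partial_{\pi(\bar a\mid \bar s)}\omega_{\bar s}(\pi)$ appearing in the corollary. In parallel, the extended $q$-function $q_{\cdot,\bar a}(\pi)$ coincides, by its definition in~\eqref{eq: definition q(y)} specialized to $y=\pi$, with the regularized $q$-function $q_\lambda^\pi(\cdot,\bar a)$ used throughout the paper.

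Finally, I would interpret the Kronecker delta $\delta_{\bar s,s_t}$ together with the transition probability $p^\pi_t(s_t\mid s)$. Reading the theorem as summing (implicitly) over the time-$t$ state $s_t$, the delta selects $s_t=\bar s$, turning the factor $\sum_{s_t}p^\pi_t(s_t\mid s)\,\delta_{\bar s,s_t}$ into $p^\pi(s_t=\bar s\mid s)$ as defined in the theorem. This yields exactly the corollary's expression
\[
\nabla_{\pi(\bar a\mid\bar s)} v_\lambda^\pi(s)=\sum_{t=0}^\infty \gamma^t\, p^\pi(s_t=\bar s\mid s)\bigl(\lambda\,\partial_{\pi(\bar a\mid\bar s)}\omega^\pi(\bar s)+q_\lambda^\pi(\bar s,\bar a)\bigr).
\]

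There is no substantive obstacle: all the heavy lifting (iterative unrolling of the Bellman-type recursion for the gradient, validity of the fixed-point characterization of $v(y)$ via Lemma on extended value functions) is already done in the theorem. The only care required is the bookkeeping noted above, namely (i) checking the domain inclusion for $\pi$, (ii) using stochasticity to kill the $\sum_{a'}\pi(a'\mid \bar s)$ factor, and (iii) matching the extended objects $q_{\bar s,\bar a}(\pi)$ and $\omega_{\bar s}(\pi)$ with their policy-indexed counterparts $q_\lambda^\pi(\bar s,\bar a)$ and $\omega^\pi(\bar s)$.
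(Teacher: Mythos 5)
Your proposal is correct and matches the paper's own argument: the paper proves the corollary simply by setting $y=\pi$ in the extended Policy Gradient theorem and using $\sum_{a'}\pi(a'\mid s)=1$ to eliminate the extra factor. Your additional bookkeeping (domain check $\sum_{a'}|\pi(a'\mid s)|=1<1/\gamma$ and matching the extended $q$-function and $\omega$ with their policy-indexed counterparts) just makes explicit what the paper leaves implicit.
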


\subsection{The Linear Approximation of the Policy's Value and The Directional Derivative for Regularized MDPs}

In this section, we derive the directional derivative in policy space for regularized MDPs with tabular policy representation. 

The linear approximation of the value function of the policy $\pi'$, around the policy $\pi$, is given by

$$ v_{\lambda}^{\pi'} \approx \vHpi + \inner*{ \nabla_{\pi} \vHpi , \pi' - \pi}$$

In the MD framework, we take the $\argmin$ w.r.t. to this linear approximation. Note that the minimizer is independent on the zeroth term, $\vHpi$, and thus the optimization problem depends only on the directional derivative, $\inner*{ \nabla_{\pi} \vHpi , \pi' - \pi}$. To keep track with the MD formulation, we chose to refer to Proposition~\ref{prop: directional derivative for MDPs} as the `linear approximation of a policy's value', even though it is actually the directional derivative.

\propDirectionalDerivative*
See that \eqref{eq: Regularized value directional derivative second} is a vector in $\mathbb{R}^S$, whereas \eqref{eq: Regularized value directional derivative first} is a scalar.

\begin{proof}
We start by proving the first claim. Consider the inner product, $\left\langle \nabla_{\pi(\cdot\mid \bar{s})} v^\pi(s), \pi'(\cdot\mid \bar{s}) - \pi(\cdot\mid \bar{s}) \right\rangle$. By the linearity of the inner product and using Corollary~\ref{Cor: Policy Gradient thm} we get,
\begin{align}
    &\left\langle \nabla_{\pi(\cdot\mid \bar{s})} v^\pi(s), \pi'(\cdot\mid \bar{s}) - \pi(\cdot\mid \bar{s}) \right\rangle \nonumber\\
    &= \sum_{t=0}^\infty \gamma^t p^\pi(s_t=\bar{s}\mid s)\left\langle \lambda \nabla_{\pi(\cdot \mid \bar s)}\omegaS{\bar {s}}{\pi}+  \qHpi(\bar{s},\cdot), \pi'(\cdot\mid \bar{s}) - \pi(\cdot\mid \bar{s}) \right\rangle \nonumber\\
    &= \sum_{t=0}^\infty \gamma^t p^\pi(s_t=\bar{s}\mid s)\left( \lambda \left\langle \nabla_{\pi(\cdot \mid \bar s)}\omegaS{\bar{s}}{\pi}, \pi'(\cdot\mid \bar{s}) - \pi(\cdot\mid \bar{s}) \right\rangle+  \left\langle \qHpi(\bar{s},\cdot) , \pi'(\cdot\mid \bar{s}) - \pi(\cdot\mid \bar{s}) \right\rangle \right), \label{eq: directional derivative proof 1}
\end{align}

The following relations hold. \yon{replace $a$ with $a'$ in the lines below}
\begin{align}
    &\left\langle q_\lambda^\pi(\bar{s},\cdot), \pi'(\cdot\mid \bar{s}) - \pi(\cdot\mid \bar{s}) \right\rangle \nonumber\\
    &= \left\langle q_\lambda^\pi(\bar{s},\cdot), \pi'(\cdot\mid \bar{s}) \right\rangle - \left\langle q_\lambda^\pi(\bar{s},\cdot), \pi(\cdot\mid \bar{s}) \right\rangle \nonumber\\
    &= \sum_{a'} \pi'(a'\mid \bar s)\br*{ c(\bar{s},a) + \lambda \omegaS{\bar s}{\pi} +\gamma\sum_{s'}P(s'\mid \bar{s},a)\vHpi(s')}\nonumber\\
    &\quad -\sum_{a'} \pi(a'\mid \bar s)\br*{ c(\bar{s},a) + \lambda \omegaS{\bar s}{\pi} +\gamma\sum_{s'}P(s'\mid \bar{s},a)\vHpi(s')}\nonumber\\
    &= \sum_{a'} \pi'(a'\mid \bar{s})\br*{ c(\bar{s},a) + \lambda\omegaS{\bar s}{\pi} +\gamma\sum_{s'}P(s'\mid \bar{s},a)\vHpi(s')} - \vHpi(\bar{s})\nonumber\\
    & = \sum_{a'} \pi'(a'\mid \bar s)\br*{ c(\bar{s},a) + \lambda \omegaS{\bar s}{\pi'} - \lambda \omegaS{\bar s}{\pi'} + \lambda \omegaS{\bar s}{\pi} +\gamma\sum_{s'}P(s'\mid \bar{s},a)\vHpi(s')}- \vHpi(\bar{s}) \nonumber \\
&= \sum_{a'} \pi'(a'\mid \bar s)\br*{ c(\bar{s},a) + \lambda \omegaS{\bar s}{\pi'} +\gamma\sum_{s'}P(s'\mid \bar{s},a)\vHpi(s')}- \vHpi(\bar{s}) + \lambda\br*{\omegaS{\bar{s}}{\pi}- \omegaS{\bar{s}}{\pi'}}\nonumber\\
    &=  c_\lambda^{\pi'}(\bar{s}) +\gamma\sum_{s'}P^{\pi'}(s'\mid \bar{s})\vHpi(s')- \vHpi(\bar{s}) + \lambda\br*{\omegaS{\bar s}{\pi}- \omegaS{\bar s}{\pi'}}\nonumber\\
    &= (T^{\pi'}_\lambda v^{\pi}_\lambda)(\bar{s})- \vHpi(\bar{s}) + \lambda\br*{\omegaS{\bar{s}}{\pi}- \omegaS{\bar{s}}{\pi'}}
    \label{eq: relation between bellman and directional q}
\end{align}
The third relation holds by the fixed-point property of $v_\lambda^\pi$, and the last relation is by the definition of the regularized Bellman operator.

Plugging this back into \eqref{eq: directional derivative proof 1}, we get,
\begin{align}
    &\left\langle \nabla_{\pi(\cdot\mid \bar{s})} v^\pi(s), \pi'(\cdot\mid \bar{s}) - \pi(\cdot\mid \bar{s}) \right\rangle \nonumber\\
    &= \sum_{t=0}^\infty \gamma^t p^\pi(s_t=\bar{s}\mid s) \times \nonumber\\
    & \left( -\lambda \left( \omegaS{s}{\pi'}- \omegaS{s}{\pi}-\left\langle \nabla_{\pi(\cdot \mid \bar s)}\omegaS{\bar {s} }{\pi}, \pi'(\cdot\mid \bar{s}) - \pi(\cdot\mid \bar{s}) \right\rangle\right)+  (T^{\pi'}_\lambda  v_\lambda^\pi)(\bar{s}) -  v_\lambda^\pi(\bar{s}) \right) \nonumber \\
    &= \sum_{t=0}^\infty \gamma^t p^\pi(s_t=\bar{s}\mid s)\left( (T^{\pi'}_\lambda  v_\lambda^\pi)(\bar{s}) -  v_\lambda^\pi(\bar{s})-\lambda \bregmanS{\bar{s}}{\pi'}{\pi} \right) \label{eq supp: inner prodcut per state s bar}
\end{align}

Thus, we have that
\begin{align*}
    \left\langle \nabla_{\pi} v^\pi(s), \pi' - \pi \right\rangle &\triangleq \sum_{\bar s} \sum_a  \nabla_{\pi(a\mid \bar{s})} v^\pi(s)\left( \pi'(a\mid \bar{s}) - \pi(a\mid \bar{s}) \right)\\
     &= \sum_{\bar s}  \left\langle \nabla_{\pi(\cdot\mid \bar{s})} v^\pi(s), \pi'(\cdot\mid \bar{s}) - \pi(\cdot\mid \bar{s}) \right\rangle\\
    &= \sum_{\bar s}  \sum_{t=0}^\infty \gamma^t p^\pi(s_t=\bar{s}\mid s)\left( (T^{\pi'}_\lambda  v_\lambda^\pi)(\bar{s}) -  v_\lambda^\pi(\bar{s})-\lambda \bregmanS{\bar{s}}{\pi'}{\pi} \right)\\
    &= \sum_{\bar s}  (I-\gamma P^\pi)^{-1}_{s,\bar s}\left( (T^{\pi'}_\lambda  v_\lambda^\pi)(\bar{s}) -  v_\lambda^\pi(\bar{s})-\lambda \bregmanS{\bar{s}}{\pi'}{\pi} \right)\\
    & =  \brs*{(I-\gamma P^\pi)^{-1}\left( T^{\pi'}_\lambda  v_\lambda^\pi -  v_\lambda^\pi-\lambda \bregman{\pi'}{\pi} \right)}(s).
\end{align*}
Where the third relation is by~\eqref{eq supp: inner prodcut per state s bar},
the forth by defining the matrix $\sum_{t=0}^{\infty}\gamma^t P^\pi = (I-\gamma P^\pi)^{-1}$, and the fifth by the definition of matrix-vector product.

To prove the second claim, multiply both sides of the first relation~\eqref{eq: Regularized value directional derivative first} by $
\mu$. For the LHS we get,
\begin{align*}
    \sum_{s} \mu(s)\left\langle \nabla_{\pi(\cdot\mid \bar{s})} v^\pi(s), \pi'(\cdot\mid \bar{s}) - \pi(\cdot\mid \bar{s}) \right\rangle &=     \left\langle \sum_{s} \mu(s) \nabla_{\pi(\cdot\mid \bar{s})} v^\pi(s), \pi'(\cdot\mid \bar{s}) - \pi(\cdot\mid \bar{s}) \right\rangle \\
 &=     \left\langle  \nabla_{\pi(\cdot\mid \bar{s})}  \sum_{s} \mu(s) v^\pi(s), \pi'(\cdot\mid \bar{s}) - \pi(\cdot\mid \bar{s}) \right\rangle \\
 &=     \left\langle  \nabla_{\pi(\cdot\mid \bar{s})}   \mu v^\pi, \pi'(\cdot\mid \bar{s}) - \pi(\cdot\mid \bar{s}) \right\rangle.
\end{align*}
In the first and second relation we used the linearity of the inner product and the derivative, and in the third relation the definition of $\mu v^\pi$. Lastly, observe that multiplying the  RHS by $\mu$ yields  ${\mu(I-\gamma P^{\pi})^{-1}=\frac{1}{1-\gamma}d_{\mu,\pi}}$.
\end{proof}

\section{Uniform Trust Region Policy Optimization}\label{supp: uniform TRPO proof}

In this Appendix, we derive the Uniform TRPO algorithm (Algorithm~\ref{alg: uniform TRPO}) and prove its convergence for both the unregularized and regularized versions. As discussed in Section \ref{sec: uniform TRPO}, both Uniform Projected Policy Gradient and Uniform NE-TRPO are instances of Uniform TRPO, by a proper choice of the Bregman distance. 
In Appendix~\ref{supp: uniform TRPO}, we explicitly show that the iterates
\begin{align}
\pi_{k+1}\in \argmin_{\pi\in \ssimplex} \left\{ \langle \nabla \vHpik, \pi - \pi_k \rangle + \frac{1}{t_k}(I-\gamma P^{\pi_k})^{-1}\bregman{\pi}{\pi_k} \right\}, \label{eq: supp iterates uniform TRPO}
\end{align}
result in algorithm \ref{alg: uniform TRPO}. In Appendix~\ref{supp: Solution of TRPO iteration}, we derive the updates of the \emph{PolicyUpdate} procedure, Algorithms~\ref{alg: Policy Gradient policy update exact} and \ref{alg: NE-TRPO policy update exact}.
Then, we turn to analyze Uniform TRPO and its instances in Appendix~\ref{supp: Fundamental inequality for Uniform TRPO}. Specifically, we derive the fundamental inequality for Unifom TRPO, similarly to the fundamental inequality for Mirror Descent (\citealp{beck2017first}, Lemma-9.13). Although the objective is not convex, we show that due to the adaptive scaling, by applying the linear approximation of the value of regularized MDPs (Proposition \ref{prop: directional derivative for MDPs}), we can repeat similar derivation to that of MD, with some modifications. Finally, in Appendix~\ref{supp: proof of uniform TRPO}, we go on to prove convergence rates for both the unregularized ($\lambda=0$) and regularized ($\lambda>0$) versions of Uniform TRPO, using a right choice of stepsizes.

\subsection{Uniform TRPO Update Rule}\label{supp: uniform TRPO}

In each TRPO step, we solve the following optimization problem:
\begin{align*}
  {{\pi}_{k+1}}&\in \arg {{\min }_{\pi \in \ssimplex }}\left\{ \left\langle \nabla \vHpik,\pi-\pi_k \right\rangle +\frac{1}{t_k}(I-\gamma P^{\pi_k})^{-1}{B}_{\omega }\left( \pi,\pi_k \right) \right\} \\
  &\in \arg {{\min }_{\pi \in \ssimplex}}\left\{ (I-\gamma P^{\pi_k})^{-1}(T_{\lambda}^{\pi} \vHpik - \vHpik - \lambda \bregman{\pi}{\pi_k}) +\frac{1}{t_k}(I-\gamma P^{\pi_k})^{-1}{B}_{\omega }\left( \pi,\pi_k \right) \right\}  \\
   &\in \arg {{\min }_{\pi\in \ssimplex }} \brc*{ (I-\gamma P^{\pi_k})^{-1}(T_{\lambda}^{\pi} \vHpik - \vHpik + \br*{\frac{1}{t_k} - \lambda } \bregman{\pi}{\pi_k}) } \\
    &\in \arg {{\min }_{\pi\in \ssimplex }} \brc*{ T_{\lambda}^{\pi} \vHpik - \vHpik + \br*{\frac{1}{t_k} - \lambda} \bregman{\pi}{\pi_k} },
\end{align*}
where the second transition holds by plugging in the linear approximation (Proposition \ref{prop: directional derivative for MDPs}), and the last transition holds since $(I-\gamma P^{\pi_k})^{-1}> 0$ and does not depend on $\pi$. Thus, we have,
\begin{align}
      {{\pi}_{k+1}}&\in\arg {{\min }_{\pi \in \ssimplex }}\brc*{ t_k\br*{T_\lambda^{\pi}\vHpik - \vHpik} +(1- \lambda t_k) \bregman{\pi}{\pi_k}}  \label{eq: step of uniform TRPO full}
\end{align}

By discarding terms which do not depend on $\pi$, we get 
\begin{align}
    \pi_{k+1} \in \argmin_{\pi \in \ssimplex} \brc*{  t_k T_\lambda^\pi \vHpik  + \br*{1- \lambda t_k}\bregman{\pi}{\pi_k} } \label{eq: step of uniform polcy MD}
\end{align}

We are now ready to write \eqref{eq: uniform TRPO simplied update}, using the fact that \eqref{eq: step of uniform polcy MD}, can be written as the following state-wise optimization problem: For every $s\in \mathcal{S}$, 
\begin{align*}
    \pi_{k+1}(\cdot \mid s) \in \argmin_{\pi \in \simplex} \brc*{ t_k T_\lambda^\pi \vHpik (s)  + \br*{1- \lambda t_k} \bregmanS{s}{\pi}{\pi_k} } 
\end{align*}

\subsection{The \emph{PolicyUpdate} procedure}\label{supp: Solution of TRPO iteration}

Next, we write the solution for the optimization problem for each of the cases: 

By plugging Lemma~\ref{lemma: connection between bellman and q function} into \eqref{eq: step of uniform TRPO full}
\begin{align*}
    \pi_{k+1} \in \argmin_{\pi \in \ssimplex} \brc*{  t_k \inner*{\qHpik + \lambda \nabla \omega(\pi_k), \pi - \pi_k}  + \bregman{\pi}{\pi_k} } 
\end{align*}

Or again in a state-wise form,
\begin{align}\label{eq: towards closed form solution of uniform TRPO}
    \pi_{k+1}(\cdot \mid s) \in \argmin_{\pi \in \simplex} \brc*{  t_k \inner*{\qHpik(s,\cdot) + \lambda \nabla \omegaS{s}{\pi_k}, \pi - \pi_k(\cdot \mid s)}  + \bregmanS{s}{\pi}{\pi_k} } 
\end{align}

Using \eqref{eq: towards closed form solution of uniform TRPO}, we can plug in the solution of the MD iteration for each of the different cases.

\subsubsection{Euclidean Case:}
For $\omega$ chosen to be the $L_2$ norm, the solution to \eqref{eq: towards closed form solution of uniform TRPO} is the orthogonal projection. For all $s\in \mathcal{S}$ the policy is updated according to

\begin{align*}
     \pi_{k+1} ( \cdot | s ) &= P_{\simplex} \br*{ \pik(\cdot|s) - t_k \qHpik(s,\cdot) -\lambda t_k \pi_k(\cdot \mid s)} \\
     &=  P_{\simplex} \br*{ (1-\lambda t_k) \pik(\cdot|s) - t_k \qHpik(s,\cdot) } ,
\end{align*}
where $P_{\simplex}$ is the orthogonal projection operator over the simplex. Refer to \cite{beck2017first} for details.

Finally, dividing by the constant $1-\lambda t_k$ does not change the optimizer. Thus,
\begin{align}
    \ \pi_{k+1} ( \cdot | s ) = P_{\simplex} \left( \pik(\cdot|s) - \frac{t_k}{1-\lambda t_k} \qHpik(s,\cdot)\right), \label{eq: mirror descent iteration L2}
\end{align}

\subsubsection{Non-Euclidean Case:}
For $\omega$ chosen to be the negative entropy, \eqref{eq: towards closed form solution of uniform TRPO} has the following analytic solution for all $s\in \mathcal{S}$,
\begin{align*}
        \pi_{k+1}(\cdot \mid s) &\in \argmin_{\pi \in \simplex} \brc*{  t_k \inner*{\qHpik(s,\cdot) + \lambda \nabla H(\pi_k(\cdot \mid s), \pi - \pi_k(\cdot \mid s)}  + \dkl{\pi(\cdot \mid s)}{\pi_k(\cdot \mid s)} } \\
        &\in \argmin_{\pi \in \simplex} \brc*{  \inner*{t_k\qHpik(s,\cdot) - (1 - \lambda t_k ) \nabla H(\pi_k(\cdot \mid s), \pi - \pi_k(\cdot \mid s)}  + H(\pi(\cdot \mid s)) - H_k(\pi(\cdot \mid s)) } \\
        &\in \argmin_{\pi \in \simplex} \brc*{  \inner*{t_k\qHpik(s,\cdot) - (1 - \lambda t_k ) \nabla H(\pi_k(\cdot \mid s), \pi}  + H(\pi(\cdot \mid s))}
\end{align*}
where the first transition is by substituting $\omega$ and the Bregman distance, the second is by the definition of the Bregman distance, and the last transition is by omitting constant factors.

By using (\citealp{beck2017first}, Example 3.71), we get

\begin{align*}
    {{\pi }_{k+1}}\left( a|s \right)=\frac{{{ {{\pi }_{k}}\left( a\mid s \right)}}{{e}^{-t_k q_\lambda^{\pi_k}(s,a) -\lambda t_k \nabla_{\pi_k(a\mid s)}H(\pi_k(\cdot \mid s)) }}}{\sum\nolimits_{a'}{\pi_k}\br*{a'\mid s }{{e}^{-t_k q_\lambda^{\pi_k}(s,a') -\lambda t_k \nabla_{\pi_k(a'\mid s)}H(\pi_k(\cdot \mid s))}}}.
\end{align*}

Now, using the derivative of the negative entropy function $H(\cdot)$, we have that for every $s,a$,

\begin{align}\label{eq: mirror descent iteration KL}
    {{\pi }_{k+1}}\left( a|s \right)=\frac{{{ {{\pi }_{k}}\left( a\mid s \right)}}{{e}^{-t_k \br*{ q_\lambda^{\pi_k}(s,a) -\lambda \log \pi_k(a \mid s)} }}}{\sum\nolimits_{a'}{\pi_k}\br*{a'\mid s }{{e}^{-t_k \br*{ q_\lambda^{\pi_k}(s,a') -\lambda \log\pi_k(a' \mid s)}}}},
\end{align}
which concludes the result.





\subsection{Fundamental Inequality for Uniform TRPO}\label{supp: Fundamental inequality for Uniform TRPO}


Central to the following analysis is Lemma \ref{lemma: fundamental inequality uniform regularized}, which we prove in this section. This lemma replaces Lemma~\cite{beck2017first}[9.13] from which it inherits its name, for the RL non-convex case. It has two main differences relatively to Lemma~\cite{beck2017first}[9.13]: (a) The inequality is in vector form (statewise). (b) The non-convexity of $f$ demands replacing the gradient inequality with different proof mechanism, i.e., the directional derivative in RL (see Proposition \ref{prop: directional derivative for MDPs}).

\begin{lemma}[fundamental inequality for Uniform TRPO]\label{lemma: fundamental inequality uniform regularized} 
Let $\{\pi_k\}_{k\geq 0}$ be the sequence generated by the uniform TRPO method with stepsizes $\{t_k\}_{k\geq0}$. Then, for every $\pi$ and $k\geq0$, 
\begin{align*}
    & t_k (I-\gamma P^\pi) \left(\vHpik-\vHpi\right) \\
    &\leq (1-\lambda t_k)\bregman{\pi}{\pi_k} - \bregman{\pi}{\pi_{k+1}} +\lambda t_k(\omega(\pik) - \omega(\pi_{k+1})) + \frac{ t_k^2\gradBound^2}{2}e,
\end{align*}
where $\gradBound$ is defined in the second claim of Lemma \ref{lemma: usefull bounds exact case}, and $e$ is a vector of ones.
\end{lemma}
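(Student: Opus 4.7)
The plan is to establish the claimed vector inequality state-wise, exploiting the simplified per-state subproblem \eqref{eq: uniform TRPO simplied update}, and then promote it to vector form via the identity
\begin{align*}
(I-\gamma P^\pi)(\vHpik-\vHpi) \;=\; \vHpik - T_\lambda^\pi \vHpik,
\end{align*}
which follows from the fixed-point property $T_\lambda^\pi \vHpi=\vHpi$. Using the linearity of $T_\lambda^{(\cdot)}\vHpik(s)$ in $\pi(\cdot|s)$, one can write the per-state LHS as
\begin{align*}
\vHpik(s) - T_\lambda^\pi\vHpik(s) \;=\; \langle q_\lambda^{\pi_k}(s,\cdot),\, \pi_k(\cdot|s)-\pi(\cdot|s)\rangle + \lambda\br*{\omegaS{s}{\pi_k}-\omegaS{s}{\pi}},
\end{align*}
so the task reduces to upper bounding $t_k\langle q_\lambda^{\pi_k}(s,\cdot),\pi_k-\pi\rangle$. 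Meanwhile, after expanding the Bregman distance and discarding $p$-independent constants, \eqref{eq: uniform TRPO simplied update} takes the canonical Bregman-proximal form $\pi_{k+1}(\cdot|s) = \argmin_{p\in\simplex}\{\omegaS{s}{p} + \langle a_k, p\rangle\}$ with $a_k := t_k q_\lambda^{\pi_k}(s,\cdot) - (1-\lambda t_k)\nabla\omegaS{s}{\pi_k}$.

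The key steps are then: (i) invoke the standard Bregman proximal optimality inequality $\omegaS{s}{\pi_{k+1}} + \langle a_k, \pi_{k+1}\rangle + \bregmanS{s}{\pi}{\pi_{k+1}} \leq \omegaS{s}{\pi} + \langle a_k, \pi\rangle$ for any $\pi(\cdot|s)\in\simplex$; (ii) substitute $a_k$ and use the three-point Bregman identity to rewrite $\langle\nabla\omegaS{s}{\pi_k},\pi_{k+1}-\pi\rangle$ in terms of $\bregmanS{s}{\pi}{\pi_k}$, $\bregmanS{s}{\pi_{k+1}}{\pi_k}$, and $\omegaS{s}{\pi_{k+1}}-\omegaS{s}{\pi}$, producing
\begin{align*}
t_k\langle q_\lambda^{\pi_k}(s,\cdot),\pi_{k+1}-\pi\rangle \leq \lambda t_k\br*{\omegaS{s}{\pi}-\omegaS{s}{\pi_{k+1}}} + (1-\lambda t_k)\bregmanS{s}{\pi}{\pi_k} - \bregmanS{s}{\pi}{\pi_{k+1}} - (1-\lambda t_k)\bregmanS{s}{\pi_{k+1}}{\pi_k};
\end{align*}
and (iii) add $t_k\langle q_\lambda^{\pi_k}(s,\cdot), \pi_k-\pi_{k+1}\rangle$ to both sides to convert $\pi_{k+1}$ into $\pi_k$ on the LHS, bounding that added term by Young's inequality with parameter $(1-\lambda t_k)$ together with the $1$-strong convexity of $\omegaS{s}{\cdot}$, giving
\begin{align*}
t_k\langle q_\lambda^{\pi_k}(s,\cdot),\pi_k-\pi_{k+1}\rangle \;\leq\; \frac{t_k^2}{2(1-\lambda t_k)}\|q_\lambda^{\pi_k}(s,\cdot)\|_*^2 + (1-\lambda t_k)\bregmanS{s}{\pi_{k+1}}{\pi_k}.
\end{align*}
The $(1-\lambda t_k)$ Young coefficient is calibrated so that the positive $(1-\lambda t_k)\bregmanS{s}{\pi_{k+1}}{\pi_k}$ exactly cancels the negative one in (ii). Finally, adding $t_k\lambda(\omegaS{s}{\pi_k}-\omegaS{s}{\pi})$ on both sides reconstructs $t_k(\vHpik(s)-T_\lambda^\pi\vHpik(s))$ on the LHS, the $\omegaS{s}{\pi}$ terms telescope to $\lambda t_k(\omegaS{s}{\pi_k}-\omegaS{s}{\pi_{k+1}})$, and the uniform bound $\|q_\lambda^{\pi_k}(s,\cdot)\|_*^2/(1-\lambda t_k)\leq \gradBound^2$ from Lemma~\ref{lemma: usefull bounds exact case} supplies the $\gradBound^2$ term. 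Stacking across $s\in\mathcal{S}$ gives the vector form.

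The main obstacle is the delicate calibration in step (iii): the Bregman proximal optimality in (i)--(ii) produces a negative $-(1-\lambda t_k)\bregmanS{s}{\pi_{k+1}}{\pi_k}$, and eliminating it forces Young's coefficient to match $(1-\lambda t_k)$ exactly. This matching is what the adaptive scaling $(I-\gamma P^{\pi_k})^{-1}$ in \eqref{eq: supp iterates uniform TRPO} buys us after invoking Proposition~\ref{prop: directional derivative for MDPs}: it collapses to the scalar factor $(1-\lambda t_k)$ on the per-state Bregman term. Without this adaptive structure, the classical convex-analysis MD identity (\emph{cf.} \cite{beck2017first}, Lemma 9.13) would not transfer to the non-convex RL objective $\vHpi$, and an uncancelled Bregman residual would propagate through the analysis and spoil the $\tilde O(1/\sqrt{N})$ and $\tilde O(1/N)$ convergence rates proved in Theorem~\ref{theorem: uniform TRPO}.
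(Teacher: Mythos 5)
Your argument is correct and ultimately runs the same mirror-descent calculation as the paper, but it enters through a different door: the paper applies the non-euclidean second prox theorem and the three-points lemma to the vector-form update \eqref{eq: supp iterates uniform TRPO} and invokes Proposition~\ref{prop: directional derivative for MDPs} twice to convert $\langle\nabla\vHpik,\cdot\rangle$ into Bellman differences, whereas you start from the per-state problem \eqref{eq: uniform TRPO simplied update}, fold $(1-\lambda t_k)\nabla\omegaS{s}{\pi_k}$ into the linear term so the update is the canonical prox step $\min_{p\in\simplex}\{\omegaS{s}{p}+\langle a_k,p\rangle\}$, and replace the policy-gradient machinery by the elementary identity $\langle \qHpik(s,\cdot),\pi'-\pi_k(\cdot\mid s)\rangle=T_\lambda^{\pi'}\vHpik(s)-\vHpik(s)+\lambda(\omegaS{s}{\pi_k}-\omegaS{s}{\pi'})$ (Lemma~\ref{lemma: connection between bellman and q function}). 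I verified the pivotal algebra: $a_k=t_k\qHpik(s,\cdot)-(1-\lambda t_k)\nabla\omegaS{s}{\pi_k}$ is indeed the per-state objective after discarding $\pi$-independent constants; the proximal optimality inequality plus the Bregman expansion gives exactly $(1-\lambda t_k)\bregmanS{s}{\pi}{\pi_k}-\bregmanS{s}{\pi}{\pi_{k+1}}-(1-\lambda t_k)\bregmanS{s}{\pi_{k+1}}{\pi_k}+\lambda t_k(\omegaS{s}{\pi}-\omegaS{s}{\pi_{k+1}})$; your Young step calibrated at $1-\lambda t_k$ together with $1$-strong convexity cancels the negative Bregman residual, exactly as in the paper's Fenchel step; and the lift to the stated vector inequality via $T_\lambda^{\pi}\vHpik-\vHpik=(I-\gamma P^{\pi})(\vHpi-\vHpik)$ is the paper's Lemma~\ref{lemma: greedy to difference}. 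The one point to flag is the leftover factor $\frac{1}{1-\lambda t_k}$ on $\frac{t_k^2}{2}\|\qHpik(s,\cdot)\|_*^2$: passing to $\frac{t_k^2\gradBound^2}{2}$ needs $\lambda t_k$ bounded away from $1$ (true for $t_k=\frac{1}{\lambda(k+2)}$, and vacuous when $\lambda=0$); the paper's own proof carries the identical factor and absorbs it the same way, so this is a shared cosmetic issue rather than a gap in your proof. What your route buys is that the prox inequality is applied to a genuine scalar per-state problem instead of, as in the paper, somewhat formally to the $(I-\gamma P^{\pi_k})^{-1}$-scaled vector objective; the trade-off is that the role of the adaptive scaling is hidden inside the reduction to \eqref{eq: uniform TRPO simplied update} rather than displayed explicitly.
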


\begin{proof}

First, notice that assumptions~\ref{assumption: Bregman Distance} and \ref{assumption: main} hold. Assumption~\ref{assumption: Bregman Distance} is a regular assumption on the Bregman distance, which holds trivially both in the euclidean and non-euclidean case, where the optimization domain is the $\ssimplex$. Assumption~\ref{assumption: main} deals with the optimization problem itself and is similar to (\citealp{beck2017first}, Assumption 9.1) over $\simplex$. The only difference is that in our case, the optimization objective $v^\pi$ is non-convex.

Define
$\psi (\pi) \equiv t_k (I-\gamma P^{\pi_k})\langle \nabla \vHpik ,\pi \rangle + \delta_{\ssimplex}(\pi )$ where $\delta_{\ssimplex}(\pi )=0$ when $\pi\in\ssimplex$ and infinite otherwise. Observe it is a convex function in $\pi$, as a sum of two convex functions: The first term is linear in $\pi$ for any $\pi\in \ssimplex$, and thus convex, and $\delta_{\ssimplex}(\pi )$ is convex since $\ssimplex$ is a convex set. Applying the non-euclidean second prox theorem (Theorem \ref{theorem: second prox}), with $a=\pik$, $b=\pi_{k+1}$, we get that for any $\pi\in \ssimplex$,
\begin{align}\label{eq: fundamental vector 1}
\langle \nabla\omega(\pik) - \nabla\omega(\pi_{k+1}), \pi - \pi_{k+1} \rangle \leq t_k(I-\gamma P^{\pi_k}) \langle \nabla \vHpik, \pi-\pi_{k+1} \rangle
\end{align}
By the three-points lemma (\ref{lemma: three-points}),
\begin{align*}
\langle \nabla\omega(\pik) - \nabla\omega(\pi_{k+1}), \pi - \pi_{k+1} \rangle = \bregman{\pi}{\pi_{k+1}}+\bregman{\pi_{k+1}}{\pi_k}-\bregman{\pi}{\pik},
\end{align*}
which, combined with \eqref{eq: fundamental vector 1}, gives,
\begin{align*}
\bregman{\pi}{\pi_{k+1}}+\bregman{\pi_{k+1}}{\pi_k}-\bregman{\pi}{\pik}  \leq t_k(I-\gamma P^{\pi_k}) \langle \nabla \vHpik, \pi-\pi_{k+1} \rangle.
\end{align*}

Therefore, by simple algebraic mainpulation, we get
\begin{align}
&t_k (I-\gamma P^{\pi_k})\langle \nabla \vHpik, \pik-\pi \rangle \nonumber \\
&\leq \bregman{\pi}{\pik}-\bregman{\pi}{\pi_{k+1}} - \bregman{\pi_{k+1}}{\pi_k} + t_k(I-\gamma P^{\pi_k}) \langle \nabla \vHpik, \pik-\pi_{k+1} \rangle \nonumber\\
&= \bregman{\pi}{\pik}-\bregman{\pi}{\pi_{k+1}} - \bregman{\pi_{k+1}}{\pi_k}  + t_k \left(T^{\pik}_\lambda \vHpik - T^{\pi_{k+1}}_\lambda \vHpik \right)+\lambda t_k \bregman{\pi_{k+1}}{\pik}, \label{eq: needed relation for improvement}
\end{align}
where the last equality is due to Proposition \ref{prop: directional derivative for MDPs}, and using $(I-\gamma P^{\pi_k})(I-\gamma P^{\pi_k})^{-1} = I.$

Rearranging we get 
\begin{align}\label{eq: fundamental vector 2}
& t_k (I-\gamma P^{\pik}) \langle \nabla \vHpik, \pik-\pi \rangle   \nonumber \\
& \leq \bregman{\pi}{\pik}-\bregman{\pi}{\pi_{k+1}} - (1-\lambda t_k)\bregman{\pi_{k+1}}{\pi_k}  + t_k \left(T^{\pik}_\lambda \vHpik - T^{\pi_{k+1}}_\lambda \vHpik \right)  \nonumber\\
& \leq \bregman{\pi}{\pik}-\bregman{\pi}{\pi_{k+1}} - \frac{1-\lambda t_k}{2}\norm{\pi_{k+1}-\pi_k}^2  + t_k \left(T^{\pik}_\lambda \vHpik - T^{\pi_{k+1}}_\lambda \vHpik \right)  ,
\end{align}
where the last inequality follows since the Bregman distance is $1$-strongly-convex  for our choices of $B_\omega$ (e.g., \citealp{beck2017first}, Lemma 9.4(a)).

Furthermore, for every state $s\in\mathcal{S}$,
\begin{align*}
    & t_k\left(T^{\pik}_\lambda \vHpik - T^{\pi_{k+1}}_\lambda \vHpik\right)(s) \\
    & = t_k \lambda(\omegaS{s}{\pik} - \omegaS{s}{\pi_{k+1}}) \\
    &+\sum_a  t_k\left(\pik(a|s) - \pi_{k+1} (a|s)\right)(c(s,a)+\gamma \sum_{s'} p(s'|s,a)\vHpik(s')) \\
    & = t_k\lambda(\omegaS{s}{\pik} - \omegaS{s}{\pi_{k+1}}) \\
    & + \left\langle \frac{t_k}{\sqrt{1-\lambda t_k}}(c(s,\cdot)+\gamma \sum_{s'} p(s'|s,\cdot)\vHpik(s')),\sqrt{1-\lambda t_k}(\pik(\cdot|s) - \pi_{k+1} (\cdot|s)) \right\rangle \\
    & \leq \lambda t_k(\omegaS{s}{\pik} - \omegaS{s}{\pi_{k+1}})) \\
    & + \frac{1-\lambda t_k}{2}\norm{\pi_{k+1} - \pik}^2 + \frac{t_k^2}{2(1-\lambda t_k)}\norm{c(s,\cdot)+\gamma \sum_{s'} p(s'|s,\cdot)\vHpik(s')}_*^2 \\
    & \leq  \lambda t_k(\omegaS{s}{\pik} - \omegaS{s}{\pi_{k+1}})+\frac{1-\lambda t_k}{2}\norm{\pi_{k+1} - \pik}^2 + \frac{t_k^2 \gradBound^2}{2(1-\lambda t_k)},
\end{align*}
where the first inequality is due to the Fenchel's inequality on the convex  $\norm{\cdot}^2$ and its convex conjugate $\norm{\cdot}_*^2$, and the last equality uses the fact that $\norm{c(s,\cdot)+\gamma \sum_{s'} p(s'|s,\cdot)\vHpik(s')}_*\leq \norm{c_\lambda(s,\cdot)+\gamma \sum_{s'} p(s'|s,\cdot)\vHpik(s')}_*=\norm{\qHpik\br*{s,\cdot}}_*$, and using the repsective bound in Lemma~\ref{lemma: usefull bounds exact case}.

Plugging the last inequality into \eqref{eq: fundamental vector 2},

\begin{align*}
&t_k(I-\gamma P^{\pik})  \langle \nabla \vHpik, \pik-\pi \rangle \leq  \lambda t_k(\omega(\pik) - \omega(\pi_{k+1}))+ \bregman{\pi}{\pik}-\bregman{\pi}{\pi_{k+1}} + \frac{t_k^2 \gradBound^2}{2(1-\lambda t_k)}e,
\end{align*}
where $e$ is a vector of all ones.

By using Proposition \ref{prop: directional derivative for MDPs} on the LHS, we get,
\begin{align*}
&-t_k (T^{\pi}v^{\pi_k} - v^{\pi_k} - \lambda \bregman{\pi}{\pi_k})  \leq  \lambda t_k(\omega(\pik) - \omega(\pi_{k+1}))+ \bregman{\pi}{\pik}-\bregman{\pi}{\pi_{k+1}} + \frac{t_k^2 \gradBound^2}{2(1-\lambda t_k)}e \\
\iff & -t_k (T^{\pi}v^{\pi_k} - v^{\pi_k})  \leq  \lambda t_k(\omega(\pik) - \omega(\pi_{k+1}))+ (1-\lambda t_k)\bregman{\pi}{\pik}-\bregman{\pi}{\pi_{k+1}} + \frac{t_k^2 \gradBound^2}{2(1-\lambda t_k)}e.
\end{align*}

Lastly, 
\begin{align*}
    & t_k (I-\gamma P^\pi) \left(\vHpik-\vHpi\right) = -t_k (T^{\pi}v^{\pi_k} - v^{\pi_k}) \\
    & \leq (1-\lambda t_k) \bregman{\pi}{\pi_k} - \bregman{\pi}{\pi_{k+1}} +\lambda t_k(\omega(\pik) - \omega(\pi_{k+1}))+ \frac{t_k^2 \gradBound^2}{2(1-\lambda t_k)} e,
\end{align*}
where the first relation holds by the second claim in Lemma \ref{lemma: greedy to difference}.
\end{proof}

\subsection{Proof of Theorem~\ref{theorem: uniform TRPO}}\label{supp: proof of uniform TRPO}

Before proving the theorem, we establish that the policy improves in $k$ for the chosen learning rates.
\begin{lemma}[Uniform TRPO Policy Improvement]\label{lemma: policy improvement uniform}
Let $\{\pik\}_{k\geq 0}$ be the sequence generated by Uniform TRPO. Then, for both the euclidean and non-euclidean versions of the algorithm, for any $\lambda\geq0$, the value improves for all $k$, $$v_\lambda^{\pi_k} \geq v_\lambda^{\pi_{k+1}}.$$
\end{lemma}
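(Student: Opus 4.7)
The plan is to exploit the fact that $\pi_k$ is itself a feasible point in the proximal problem that defines $\pi_{k+1}$, together with the monotonicity of the regularized Bellman operator. Using the simplified statewise update rule \eqref{eq: uniform TRPO simplied update}, the optimality of $\pi_{k+1}(\cdot \mid s)$ gives, for every $s$,
\[
t_k\,T_\lambda^{\pi_{k+1}} v_\lambda^{\pi_k}(s) + (1-\lambda t_k)\,\bregmanS{s}{\pi_{k+1}}{\pi_k} \;\leq\; t_k\,T_\lambda^{\pi_k} v_\lambda^{\pi_k}(s) + (1-\lambda t_k)\,\bregmanS{s}{\pi_k}{\pi_k},
\]
since $\pi_k(\cdot\mid s)\in\simplex$. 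Using $\bregmanS{s}{\pi_k}{\pi_k}=0$ and the fixed-point property $T_\lambda^{\pi_k} v_\lambda^{\pi_k}=v_\lambda^{\pi_k}$, this rearranges to
\[
t_k\,T_\lambda^{\pi_{k+1}} v_\lambda^{\pi_k}(s) \;\leq\; t_k\,v_\lambda^{\pi_k}(s) - (1-\lambda t_k)\,\bregmanS{s}{\pi_{k+1}}{\pi_k}.
\]

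Next I would verify that the coefficient $(1-\lambda t_k)$ is nonnegative for the stepsize schedules prescribed in Theorem~\ref{theorem: uniform TRPO}: trivially when $\lambda=0$, and when $\lambda>0$ with $t_k=1/(\lambda(k+2))$ one has $\lambda t_k = 1/(k+2) \in (0,1)$. Combined with $\bregmanS{s}{\pi_{k+1}}{\pi_k}\geq 0$ and $t_k>0$, dividing by $t_k$ yields the componentwise inequality
\[
T_\lambda^{\pi_{k+1}} v_\lambda^{\pi_k} \;\leq\; v_\lambda^{\pi_k}.
\]

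Finally, I would invoke the monotonicity of $T_\lambda^{\pi_{k+1}}$: since $T_\lambda^{\pi_{k+1}} u = c_\lambda^{\pi_{k+1}} + \gamma P^{\pi_{k+1}} u$ with $P^{\pi_{k+1}}\geq 0$ entrywise, iterating the above inequality gives $(T_\lambda^{\pi_{k+1}})^n v_\lambda^{\pi_k} \leq v_\lambda^{\pi_k}$ for all $n\geq 1$. Letting $n\to\infty$ and using that $T_\lambda^{\pi_{k+1}}$ is a $\gamma$-contraction with unique fixed point $v_\lambda^{\pi_{k+1}}$, the left-hand side converges to $v_\lambda^{\pi_{k+1}}$, so $v_\lambda^{\pi_{k+1}} \leq v_\lambda^{\pi_k}$, as claimed.

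I do not anticipate a real technical obstacle here: the argument is essentially the classical policy-improvement trick in a regularized, trust-region guise. The only place that requires care is ensuring the sign condition $1-\lambda t_k\geq 0$ so that the proximal term does not flip the inequality, but this is exactly what the stepsize schedule of Theorem~\ref{theorem: uniform TRPO} was designed to guarantee.
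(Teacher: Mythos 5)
Your proof is correct, and it reaches the key intermediate inequality $T_\lambda^{\pi_{k+1}} \vHpik \leq \vHpik$ by a more elementary route than the paper. The paper derives this inequality by restating \eqref{eq: needed relation for improvement}, which itself rests on the first-order optimality (variational inequality) condition via the non-euclidean second prox theorem (Theorem~\ref{theorem: second prox}), the three-points lemma (Lemma~\ref{lemma: three-points}), and the directional-derivative formula of Proposition~\ref{prop: directional derivative for MDPs}; setting $\pi=\pi_k$ there yields the stronger statewise bound \eqref{eq: improvement first relation uniform}, namely $t_k\br*{\vHpik - T_\lambda^{\pi_{k+1}}\vHpik} \geq \bregman{\pi_k}{\pi_{k+1}} + (1-\lambda t_k)\bregman{\pi_{k+1}}{\pi_k}$, of which only nonnegativity of the right-hand side is used. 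You instead compare objective values in the simplified statewise problem \eqref{eq: uniform TRPO simplied update}, using only that $\pi_k(\cdot\mid s)$ is feasible, $\bregmanS{s}{\pi_k}{\pi_k}=0$, the fixed-point identity $T_\lambda^{\pi_k}\vHpik=\vHpik$, and the sign conditions $t_k>0$, $1-\lambda t_k\geq 0$ guaranteed by the stepsize schedules of Theorem~\ref{theorem: uniform TRPO}. This avoids the prox-theorem machinery entirely; what it gives up is the quantitative lower bound on the one-step improvement in terms of the Bregman distances, which the paper's lemma does not need either (though the underlying relation \eqref{eq: needed relation for improvement} is reused in the convergence analysis). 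The second half of your argument — iterating $T_\lambda^{\pi_{k+1}}$ using monotonicity and the $\gamma$-contraction to pass to the fixed point $v_\lambda^{\pi_{k+1}}$ — coincides with the paper's.
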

\begin{proof}
Restating \eqref{eq: needed relation for improvement}, we have that for any $\pi$,
\begin{align*}
&t_k (I-\gamma P^{\pi_k})\langle \nabla \vHpik, \pik-\pi \rangle \\
&\leq  \bregman{\pi}{\pik}-\bregman{\pi}{\pi_{k+1}} - \bregman{\pi_{k+1}}{\pi_k}  + t_k \left(T^{\pik}_\lambda \vHpik - T^{\pi_{k+1}}_\lambda \vHpik \right)+\lambda t_k \bregman{\pi_{k+1}}{\pik}.
\end{align*}

Plugging the closed form of the directional derivative (Proposition~\eqref{prop: directional derivative for MDPs}), setting $\pi=\pi_k$, using $\bregman{\pik}{\pik}=0$,  we get,
\begin{align}
&t_k \left(T^{\pik}_\lambda \vHpik - T^{\pi_{k+1}}_\lambda \vHpik \right) \geq \bregman{\pi_k}{\pi_{k+1}} + \bregman{\pi_{k+1}}{\pi_k}(1-\lambda t_k). \label{eq: improvement first relation uniform}
\end{align}

The choice of the learning rate and the fact that the Bregman distance is non negative ($\lambda >0$, $\lambda t_k = \frac{1}{k+2} \leq 1$ and for $\lambda=0$ the RHS of~\eqref{eq: improvement first relation uniform} is positive) implies that
\begin{align}
     &\vHpik - T^{\pi_{k+1}}_\lambda \vHpik = \left(T^{\pik}_\lambda \vHpik - T^{\pi_{k+1}}_\lambda \vHpik \right)  \geq 0 \nonumber \\
     & \rightarrow \vHpik \geq T^{\pi_{k+1}}_\lambda \vHpik. \label{eq: improvement second relation uniform}
\end{align}
Applying iteratively $T^{\pi_{k+1}}_\lambda$ and using its monotonicty we obtain,
\begin{align*}
    \vHpik  \geq T^{\pi_{k+1}}_\lambda \vHpik \geq (T^{\pi_{k+1}}_\lambda)^2\vHpik\geq \cdot\cdot\cdot \geq \lim_{n\rightarrow \infty} (T^{\pi_{k+1}}_\lambda)^n \vHpik = v^{\pi_{k+1}}_\lambda,
\end{align*}
where in the last relation we used the fact $T^{\pi_{k+1}}_\lambda$ is a contraction operator and its fixed point is $v^{\pi_{k+1}}_\lambda$ which proves the claim.
\end{proof}

For the sake of completeness and readability, we restate here Theorem~\ref{theorem: uniform TRPO}, this time including all logarithmic factors:

\begin{theorem*}[Convergence Rate: Uniform TRPO]
Let $\{\pik\}_{k\geq 0}$ be the sequence generated by Uniform TRPO, 

Then, the following holds for all $N\geq1$.
\begin{enumerate}
    \item \emph{(Unregularized)} Let $\lambda=0$, $t_k=\frac{(1-\gamma)}{\euclid \mathrm{C_{\text{max}}} \sqrt{k+1}}$ then
    \begin{align*}
       \norm{v^{\pi_N} - \vstar}_\infty   \leq 
        O\br*{\frac{\euclid \mathrm{C_{\text{max}}}(\bregmanMax + \log N)}{(1-\gamma)^2\sqrt{N}} }    \end{align*}
    
    \item \emph{(Regularized)} Let $\lambda>0$, $t_k =  \frac{1}{\lambda (k+2)}$ then
\begin{align*}
   \norm{v_\lambda^{\pi_N} - \vHstar}_\infty   \leq  O \br*{ \frac{\euclid ^2\CmaxLambda^2 \log N}{\lambda(1
    -\gamma)^3 N} }.
\end{align*}
\end{enumerate}
Where $\euclid =\sqrt{A}, \bregmanMax=1$ for the euclidean case, and $\euclid =1, \bregmanMax=\log A$ for the non-euclidean case.
\end{theorem*}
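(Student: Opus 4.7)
The plan is to set $\pi = \pi^*$ (respectively $\pi^*_\lambda$) in the fundamental inequality of Lemma~\ref{lemma: fundamental inequality uniform regularized}, apply the non-negative operator $(I-\gamma P^{\pi^*})^{-1}$ componentwise, sum over iterations, and then convert the resulting bound into a last-iterate guarantee using the monotone improvement Lemma~\ref{lemma: policy improvement uniform}. In both regimes the crux is that the operator $(I-\gamma P^{\pi^*_\lambda})^{-1} = \sum_{t\ge 0}\gamma^t (P^{\pi^*_\lambda})^t$ is entrywise non-negative, preserves inequalities between vectors, and satisfies $(I-\gamma P^{\pi^*_\lambda})^{-1} e = \tfrac{1}{1-\gamma} e$. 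I also use that the per-iterate gradient norm $\gradBound$ is bounded in terms of $\euclid,\CmaxLambda,(1-\gamma)^{-1}$ via Lemma~\ref{lemma: usefull bounds exact case}, and that $\bregman{\pi^*}{\pi_0}$ and $|\omega(\pi_0)-\omega(\pi_N)|$ are bounded (up to logs) by a constant $\omegaBound$ since $\pi_0$ is uniform.

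\paragraph{Unregularized case.}
With $\lambda=0$ the fundamental inequality at $\pi=\pi^*$ reads
\[
  t_k (I-\gamma P^{\pi^*})(\vpik-\vstar)
  \ \leq\ \bregman{\pi^*}{\pi_k}-\bregman{\pi^*}{\pi_{k+1}} + \tfrac{t_k^2 \gradBound^2}{2} e.
\]
Multiplying by $(I-\gamma P^{\pi^*})^{-1}\geq 0$ and summing over $k=0,\ldots,N-1$ telescopes the Bregman terms, leaving
\[
  \sum_{k=0}^{N-1} t_k (\vpik-\vstar)\ \leq\ (I-\gamma P^{\pi^*})^{-1}\Bigl(\bregman{\pi^*}{\pi_0} + \tfrac{\gradBound^2}{2}\sum_{k=0}^{N-1} t_k^2\, e\Bigr).
\]
Since $\vpik\geq \vstar$ componentwise and Lemma~\ref{lemma: policy improvement uniform} gives $v^{\pi_N}\leq \vpik$ for all $k\leq N$, the vector $v^{\pi_N}-\vstar$ is non-negative and dominated by $\vpik-\vstar$ componentwise. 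Hence $\bigl(\sum_k t_k\bigr)\|v^{\pi_N}-\vstar\|_\infty$ is bounded by the $\infty$-norm of the RHS, which is at most $\tfrac{1}{1-\gamma}\bigl(\omegaBound + \tfrac{\gradBound^2}{2}\sum_k t_k^2\bigr)$. Plugging $t_k=\tfrac{1-\gamma}{\euclid\Cmax\sqrt{k+1}}$, using $\sum_{k=0}^{N-1} t_k = \Theta\bigl(\tfrac{(1-\gamma)\sqrt{N}}{\euclid \Cmax}\bigr)$, $\sum t_k^2 = O\bigl(\tfrac{(1-\gamma)^2 \log N}{\euclid^2 \Cmax^2}\bigr)$, and $\gradBound = O(\euclid \Cmax/(1-\gamma))$ yields the $\tilde O(1/\sqrt N)$ rate.

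\paragraph{Regularized case.}
For $\lambda>0$, dividing the fundamental inequality by $t_k$ and choosing $t_k = \tfrac{1}{\lambda(k+2)}$ makes the coefficients on $\bregman{\pi^*_\lambda}{\pi_k}$ telescope across consecutive iterations: indeed $\tfrac{1-\lambda t_k}{t_k} = \lambda(k+1) = \tfrac{1}{t_{k-1}}$, so summing the inequality over $k=0,\ldots,N-1$ annihilates all interior Bregman terms and yields
\[
  \sum_{k=0}^{N-1}(I-\gamma P^{\pi^*_\lambda})(\vHpik - \vHstar)\ \leq\ \lambda\,\bregman{\pi^*_\lambda}{\pi_0} + \lambda\bigl(\omega(\pi_0)-\omega(\pi_N)\bigr) + \tfrac{\gradBound^2}{2}\sum_{k=0}^{N-1} t_k\,e.
\]
Applying $(I-\gamma P^{\pi^*_\lambda})^{-1}$, invoking $v^{\pi_N}_\lambda\leq \vHpik$ from Lemma~\ref{lemma: policy improvement uniform} together with $v^{\pi_N}_\lambda\geq \vHstar$, and taking the $\infty$-norm produces
\[
  N\,\|v^{\pi_N}_\lambda - \vHstar\|_\infty\ \leq\ \frac{1}{1-\gamma}\Bigl(\lambda\,\omegaBound + \tfrac{\gradBound^2}{2\lambda}\sum_{k=0}^{N-1}\tfrac{1}{k+2}\Bigr).
\]
The harmonic sum is $O(\log N)$, and with $\gradBound = O(\euclid\CmaxLambda/(1-\gamma))$ this gives the promised $\tilde O\!\bigl(\euclid^2 \CmaxLambda^2/(\lambda(1-\gamma)^3 N)\bigr)$ rate.

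\paragraph{Main obstacle.}
The step I expect to be most delicate is moving from a vector inequality like $\sum_k t_k(\vpik-\vstar) \leq Y$ to a scalar bound on $\|v^{\pi_N}-\vstar\|_\infty$; this uses in an essential way both the componentwise non-negativity of $v^{\pi_N}-\vstar$ and the monotonicity of values granted by Lemma~\ref{lemma: policy improvement uniform}, neither of which would hold without the adaptive scaling in the TRPO update. The second subtle point is identifying the exact stepsize schedule $t_k=1/(\lambda(k+2))$ that forces the Bregman coefficients to telescope exactly to $0$ at inner indices in the regularized analysis; any other rate would leave an uncontrolled residual that destroys the $\tilde O(1/N)$ bound.
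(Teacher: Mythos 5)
Your proposal is correct and follows essentially the same route as the paper's proof: plug $\pi=\pi^*$ (resp.\ $\pi^*_\lambda$) into the fundamental inequality of Lemma~\ref{lemma: fundamental inequality uniform regularized}, exploit the componentwise non-negativity of $(I-\gamma P^{\pi^*})^{-1}$ together with $(I-\gamma P^{\pi})^{-1}e=\frac{1}{1-\gamma}e$, telescope (with the $t_k=\frac{1}{\lambda(k+2)}$ schedule making the Bregman coefficients cancel exactly in the regularized case, just as in the paper), and convert to a last-iterate bound via the improvement Lemma~\ref{lemma: policy improvement uniform} before plugging in the stepsizes and the bounds on $\gradBound$ and $\omegaBound$. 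The minor differences (summation index off by one, the $(1-\lambda t_k)^{-1}$ factor in the error term, $\omega(\pi_N)$ versus $\omega(\pi_{N+1})$) only affect constants and do not change the argument.
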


We are now ready to prove Theorem~\ref{theorem: uniform TRPO}, while following arguments from (\citealp{beck2017first}, Theorem 9.18).

\subsubsection{The Unregularized case}
\begin{proof}
Applying Lemma~\ref{lemma: fundamental inequality uniform regularized} with $\pi=\pistar$ and $\lambda=0$ (the unregularized case) and let $e\in {\R}^{\sset}$, a vector ones, the following relations hold.
\begin{align}
  & t_k (I-\gamma P^{\pistar}) \left( \vpik - \vstar \right)  \leq \bregman{\pistar}{\pi_k}- \bregman{\pistar}{\pi_{k+1}} +\frac{t_k^2 \gradBound^2}{2} e
\end{align}

Summing the above inequality over $k=0,1,...,N$, and noticing we get a telescopic sum gives
\begin{align*}
    \sum\limits_{k=0}^{N}  t_k (I-\gamma P^{\pistar}) \left( \vpik - \vstar \right)  &\leq \bregman{\pistar}{\pi_0} - \bregman{\pistar}{\pi_{N+1}}+\sum\limits_{k=0}^{N}\frac{ t_k^2\gradBound^2}{2}e\\
    &\leq \bregman{\pistar}{\pi_0} +\sum\limits_{k=0}^{N}\frac{ t_k^2\gradBound^2}{2}e\\
    &\leq \norm{\bregman{\pistar}{\pi_0}}_\infty e +\sum\limits_{k=0}^{N}\frac{ t_k^2\gradBound^2}{2}e
\end{align*}
where the second relation holds since $\bregman{\pistar}{\pi_{N+1}}\geq 0$ component-wise. From which we get the following relations,
\begin{align}
    &(I-\gamma P^{\pi^*})\sum\limits_{k=0}^{N}  t_k \left( \vpik - \vstar \right) \leq \norm{\bregman{\pistar}{\pi_0}}_\infty e +\sum\limits_{k=0}^{N}\frac{ t_k^2\gradBound^2}{2}e \nonumber\\
    \iff& \sum\limits_{k=0}^{N}  t_k \left( \vpik - \vstar \right) \leq (I-\gamma P^{\pi^*})^{-1}\br*{ \norm{\bregman{\pistar}{\pi_0}}_\infty e +\sum\limits_{k=0}^{N}\frac{ t_k^2\gradBound^2}{2}e} \nonumber \\
    \iff& \sum\limits_{k=0}^{N}  t_k \left( \vpik - \vstar \right) \leq \frac{\norm{\bregman{\pistar}{\pi_0}}_\infty}{1-\gamma}e +\sum\limits_{k=0}^{N}\frac{ t_k^2\gradBound^2}{2(1-\gamma)}e.\label{eq: unfirm non regularized almost one}
\end{align}
In the second relation we multiplied both sides of inequality by $(I-\gamma P^{\pi^*})^{-1}\geq 0$ component-wise. In the third relation we used $(I-\gamma P^{\pi})^{-1}e =\frac{1}{1-\gamma}e$ for any $\pi$. By Lemma~\eqref{lemma: policy improvement uniform} the policies are improving, from which, we get
\begin{align}
    \left( v_\lambda^{\pi_N} - \vstar \right) \sum\limits_{k=0}^{N}  t_k \leq  \sum\limits_{k=0}^{N}  t_k \left( \vpik - \vstar \right). \label{eq: unfirm non regularized almost two}
\end{align}
Combining \eqref{eq: unfirm non regularized almost one}, \eqref{eq: unfirm non regularized almost two} , and dividing by $\sum\limits_{k=0}^{N}  t_k$ we get the following component-wise inequality,
\begin{align*}
    v_\lambda^{\pi_N} - \vstar \leq \frac{\norm{\bregman{\pistar}{\pi_0}}_\infty +\frac{ \gradBound^2}{2}\sum\limits_{k=0}^{N}t_k^2}{(1-\gamma)\sum\limits_{k=0}^{N}  t_k }e
\end{align*}



By plugging in the stepsizes, $t_k=\frac{1}{\gradBound\sqrt{k+1}}$ we get, 
\begin{align*}
    v_\lambda^{\pi_N} - \vstar   \leq 
    O\br*{\frac{\gradBound}{1-\gamma}\frac{\norm{\bregman{\pi^*}{\pi_0}}_\infty + \sum\limits_{k=0}^{N}\frac{1}{k+1}}{\sum\limits_{k=0}^{N} \frac{1}{\sqrt{k+1}}}e}
\end{align*}

Plugging in Lemma~\ref{lemma: bounds on Dw} and bounding the sums (e.g., by using \citealp{beck2017first}, Lemma 8.27(a)) yields,
\begin{align*}
    v_\lambda^{\pi_N} - \vstar    \leq 
    O\br*{\frac{\gradBound}{1-\gamma}\frac{\omegaBound + \log N}{\sqrt{N}}e}.
\end{align*}

Plugging the expressions for $\gradBound, \omegaBound$ in Lemma \ref{lemma: usefull bounds exact case} and Lemma \ref{lemma: bounds on Dw} we conclude the proof.
\end{proof}

\subsubsection{The Regularized case}

\begin{proof}

Applying Lemma~\ref{lemma: fundamental inequality uniform regularized} with $\pi=\pistar$ and $\lambda>0$,
\begin{align*}
  & t_k (I-\gamma P^{\pistar}) \left( \vpik_\lambda - \vstar_\lambda \right)  \\
  & \leq (1-\lambda t_k )\bregman{\pistar}{\pi_k}- \bregman{\pistar}{\pi_{k+1}} +\lambda t_k(\omega(\pik) - \omega(\pi_{k+1}))+\frac{t_k^2\gradBound^2}{2(1-\lambda t_k)} e .
\end{align*}

Plugging $t_k=\frac{1}{\lambda (k+2)}$ and multiplying by $\lambda(k+2)$,

\begin{align*}
  &  (I-\gamma P^{\pistar}) \left( \vpik_\lambda - \vstar_\lambda \right)  \leq \lambda(k+1) \bregman{\pistar}{\pi_k}-  \lambda(k+2)\bregman{\pistar}{\pi_{k+1}} + \lambda\omega(\pik) - \lambda\omega(\pi_{k+1}) 
  +\frac{\gradBound^2}{2\lambda}\frac{1}{k+1} e .
\end{align*}

Summing the above inequality over $k=0,...,N$ yields
\begin{align*}
    &\sum\limits_{k=0}^{N}  (I-\gamma P^{\pistar}) \left( \vpik_\lambda - \vstar_\lambda \right) \\
    &\leq \lambda \bregman{\pistar}{\pi_0} - \lambda (N+3) \bregman{\pistar}{\pi_{N+1}}+ \lambda \omega(\pi_2) - \lambda \omega(\pi_{N+1})+\frac{\gradBound^2}{2\lambda}e\sum\limits_{k=0}^{N} \frac{1}{k+1},
\end{align*}
as the summation results in a telescopic sum.

Observe that for any $\pi,\pi'$ and both our choices of $\omega$, $\omega(\pi)-\omega(\pi')\leq \max_{\pi}|\omega(\pi)|$. For the euclidean case $\max_{\pi}|\omega(\pi)|< 1$ and for the non euclidean case $\max_{\pi}|\omega(\pi)|\leq \log A$. These bounds are the same bounds as the bound for the Bregman distance, $D_\omega$ (see Lemma \ref{lemma: bounds on Dw}). Thus, for both our choices of $\omega$ we can bound $\omega(\pi)-\omega(\pi')<D_\omega$.

Furthermore, since $\bregman{\pistar}{\pi_{N+1}}\geq 0$ the following bound holds:

\begin{align}
&\sum\limits_{k=0}^{N} (I-\gamma P^{\pistar})\left( \vpik_\lambda - \vstar_\lambda \right) \leq  2\lambda \omegaBound e + \frac{\gradBound^2}{2\lambda}e\sum\limits_{k=1}^{N} \frac{1}{k+1} \nonumber \\
    \iff &(I-\gamma P^{\pistar})\sum\limits_{k=0}^{N} \left( \vpik_\lambda - \vstar_\lambda \right) \leq  2\lambda \omegaBound e+ \frac{\gradBound^2}{2\lambda}e\sum\limits_{k=1}^{N} \frac{1}{k+1} \nonumber \\
    \iff &\sum\limits_{k=0}^{N} \left( \vpik_\lambda - \vstar_\lambda \right) \leq  \frac{2\lambda \omegaBound }{1-\gamma}e+ \frac{\gradBound^2}{2\lambda (1-\gamma)}e\sum\limits_{k=1}^{N} \frac{1}{k+1}, \label{eq: uniform regularized proof first relation}
\end{align}
and in the third relation we multiplied both side by $(I-\gamma P^{\pi^*})^{-1}\geq 0$ component-wise and used $(I-\gamma P^{\pi})^{-1}e = \frac{1}{1-\gamma}e$ for any $\pi$.

By Lemma~\ref{lemma: policy improvement uniform} the value $v^{\pi_k}_\lambda$ decreases in $k$, and, thus,
\begin{align}
    (N+1)(v^{\pi_N}_\lambda - v^*_\lambda)\leq \sum\limits_{k=0}^{N} \left( \vpik_\lambda - \vstar_\lambda \right). \label{eq: uniform regularized proof second relation}
\end{align}

Combining \eqref{eq: uniform regularized proof first relation}, \eqref{eq: uniform regularized proof second relation}, and dividing by $N+1$ we get the following component-wise inequality,

\begin{align*}
    &v^{\pi_N}_\lambda - v^*_\lambda \leq  \br*{\frac{2\lambda \omegaBound}{(1
    -\gamma)(N+1)} +\frac{\gradBound^2}{2\lambda(1-\gamma)(N+1)}\sum\limits_{k=1}^{N+1} \frac{1}{k}}e
\end{align*}

Using the fact that $\sum\limits_{k=1}^{N+1} \frac{1}{k} \in O(\log{n})$, we get
\begin{align*}
   v^{\pi_N}_\lambda - v^*_\lambda  \leq  O \br*{ \frac{\lambda^2 \omegaBound + \gradBound^2 \log N}{\lambda(1
    -\gamma)N} e}.
\end{align*}

Plugging the expressions for $\gradBound, \omegaBound$ in Lemma \ref{lemma: usefull bounds exact case} and Lemma \ref{lemma: bounds on Dw} we conclude the proof.
\end{proof}

\section{Exact Trust Region Policy Optimization}\label{supp: mu exact policy mirror descenet}

The derivation of Exact TRPO is similar in spirit to the derivation of Uniform TRPO (Appendix~\ref{supp: uniform TRPO proof}). However, instead of minimizing a vector, the objective to be minimized in this section is the scalar $\mu v^\pi$ \eqref{eq: scalar objective}. This fact complicates the analysis and requires us assuming a finite concentrability coefficient $ C^{\pi_*} = \norm{\frac{d_{\mu,\pi^*}}{\nu}}_\infty < \infty$ (Assumption~\ref{assumption: stochastic MD}), a common assumption in the RL literature \cite{kakade2002approximately,farahmand2010error,scherrer2014approximate,scherrer2014local}. This assumption alleviates the need to deal with exploration and allows us to focus on the optimization problem in MDPs in which the stochasticity of the dynamics induces sufficient exploration. We note that assuming a finite $C^{\pi_*}$ is the weakest assumptions among all other existing concentrability coefficients~\cite{scherrer2014approximate}. 

The Exact TRPO algorithm is as follows:
\begin{center}
\begin{algorithm}[H]
\caption{Exact TRPO}\label{alg: mu exact TRPO}
\begin{algorithmic}
\REQUIRE $t_k$, $\gamma$, $\lambda$, $\pi_0$ is the uniform policy.
\FOR{$k=0,1,...$}
    \STATE $v^{\pi_k} \gets \mu(I-\gamma P^{\pi_k})^{-1} c_\lambda^{\pi_k}$
    \STATE $\mathcal{S}_{d_{\nu,\pi_k}} = \brc*{ s\in\mathcal{S}:  d_{\nu,\pi_k}(s)> 0}$
    \FOR{ $\forall s\in\mathcal{S}_{d_{\nu,\pi_k}}$}
        \FOR{$\forall a\in\aset$}
            \STATE $q^{\pi_k}_\lambda(s,a) \gets c^\pi_\lambda(s,a) + \gamma \sum_{s'} p(s'|s,a) v_\lambda^{\pi_k}(s')$  
        \ENDFOR
    \STATE $\pi_{k+1}(\cdot|s) = $PolicyUpdate($\pi_{k}(\cdot|s),q^{\pi_k}_\lambda(s,\cdot), t_k,\lambda$)
    \ENDFOR
\ENDFOR
\end{algorithmic}
\end{algorithm}
\end{center}
Similarly to Uniform TRPO, the euclidean and non-euclidean
choices of $\omega$ correspond to a PPG and NE-TRPO instances of Exact TRPO: by instantiating PolicyUpdate
with the subroutines \ref{alg: Policy Gradient policy update exact} or \ref{alg: NE-TRPO policy update exact} we get the instances of Exact TRPO respectively. A

The main goal of this section is to create the infrastructure for the analysis of Sample-Based TRPO, which is found in Appendix~\ref{supp: approximate Mirror descent}. Sample-Based TRPO is a sample-based version of Exact TRPO, and for pedagogical reasons we start by analyzing the latter from which the analysis of the first is better motivated.

In this section we prove convergence for Exact TRPO which establishes similar convergence rates as for the Uniform TRPO in the previous section. We now describe the content of each of the subsections: First, in Appendix~\ref{supp: Uniform to Exact}, we show the connection between Exact TRPO and Uniform TRPO by proving Proposition~\ref{proposition: exact to approximate}. In Appendix~\ref{supp: exact TRPO update rule}, we formalize the exact version of TRPO. Then, we derive a fundamental inequality that will be used to prove convergence for the exact algorithms (Appendix \ref{supp: mu exact TRPO}). This inequality is a scalar version of the vector fundamental inequality derived for Uniform TRPO (Lemma~\ref{lemma: fundamental inequality uniform regularized}). This is done by first deriving a state-wise inequality, and then using Assumption~\ref{assumption: stochastic MD} to connect the state-wise local guarantee to a global guarantee w.r.t. the optimal policy $\pi^*$. Finally, we use the fundamental inequality for Exact TRPO to prove the convergence rates of Exact TRPO for both the unregularized and regularized version (Appendix~\ref{supp: exact TRPO unregularized proof}).

\subsection{Relation Between Uniform and Exact TRPO}\label{supp: Uniform to Exact}

Before diving into the proof of Exact TRPO, we prove Proposition~\ref{proposition: uniform to exact update}, which connects the update rules for Uniform and Exact TRPO:
\UniformToExactUpdate*
\begin{proof}
First, notice that for every $s'$
\begin{align*}
    \nu \inner*{\nabla_{\pik(\cdot | s')} \vHpik, \pi-\pik} & = \sum_s \nu(s)  \inner*{\nabla_{\pik(\cdot | s')} \vHpik(s), \pi(\cdot\mid s')-\pik(\cdot\mid s')} \nonumber \\
    & =  \inner*{\sum_s \nabla_{\pik(\cdot | s')} \nu(s)\vHpik(s), \pi(\cdot\mid s')-\pik(\cdot\mid s')} \nonumber \\
    & =  \inner*{\nabla_{\pik(\cdot | s')} \sum_s \nu(s)\vHpik(s), \pi(\cdot\mid s')-\pik(\cdot\mid s')} \nonumber \\
    & = \inner*{\nabla_{\pik(\cdot | s')} \nu\vHpik, \pi(\cdot\mid s')-\pik(\cdot\mid s')},
\end{align*}
where in the second and third transition we used the linearity of the inner product and the derivative, and in the last transition we used the definition of $\nu \vHpik$.

Thus, we have,
\begin{align}\label{eq: switching the location of mu in the derivative}
     \nu \inner*{\nabla \vHpik, \pi-\pik} & = \inner*{\nabla \nu\vHpik, \pi-\pik}.
\end{align}

\begin{align*}
    \nu\br*{\inner*{ \nabla \vHpik, \pi - \pi_k } + \frac{1}{t_k}(I-\gamma P^{\pi_k})^{-1} \bregman{\pi}{\pi_k}}
    & = \br*{\nu \inner*{ \nabla \vHpik, \pi - \pi_k } + \frac{1}{t_k}\nu (I-\gamma P^{\pi_k})^{-1} \bregman{\pi}{\pi_k}} \\ 
    & = \br*{ \inner*{ \nabla \nu\vHpik, \pi - \pi_k } + \frac{1}{t_k}\nu (I-\gamma P^{\pi_k})^{-1} \bregman{\pi}{\pi_k}} \\ 
    & =\inner*{ \nabla \nu\vHpik, \pi - \pi_k } + \frac{1}{t_k(1-\gamma)} d_{\nu,\pi_k} \bregman{\pi}{\pi_k},
\end{align*}
where the second transition is by plugging in \eqref{eq: switching the location of mu in the derivative} and the last transition is by the definition of the stationary distribution $d_{\nu,\pi_k}$.
\end{proof}

\subsection{Exact TRPO Update rule}\label{supp: exact TRPO update rule}

Exact TRPO repeatedly updates the policy by the following update rule (see \eqref{eq: exact scalar TRPO}),
\begin{align}
&\pi_{k+1} \in \argmin_{\pi\in \ssimplex} \left\{ \langle \nabla \nu\vHpik, \pi - \pi_k \rangle + \frac{1}{t_k(1-\gamma)} d_{\nu,\pi_k} \bregman{\pi}{\pi_k}\right\}. \label{eq: exact TRPO update rule original}
\end{align}
Note that differently than regular MD, the gradient here is w.r.t. to $\nu \vHpik$, and not $\mu \vHpik$ which is the true scalar objective \eqref{eq: scalar objective}. This is due to the fact that $d_{\nu,\pi_k}$ is the proper scaling for solving the MDP using the $\nu$-restart model, as can be seen in \eqref{eq: minimzation exact TRPO update 1}.

Using Proposition~\ref{prop: directional derivative for MDPs}, the update rule can be written as follows,
\begin{align}
\pi_{k+1} &\in \argmin_{\pi\in \ssimplex} \left\{ \frac{1}{1-\gamma}d_{\nu,\pi_k}\left(T_\lambda^\pi v^{\pi_k}_\lambda -v^{\pi_k}_\lambda   \right)+ \frac{1}{1-\gamma} \br*{\frac{1}{t_k}-\lambda} d_{\nu,\pi_k} \bregman{\pi}{\pi_k}\right\} \nonumber\\
&\in \argmin_{\pi\in \ssimplex} \left\{ d_{\nu,\pi_k}\left(T_\lambda^\pi v^{\pi_k}_\lambda -v^{\pi_k}_\lambda   +\br*{\frac{1}{t_k}-\lambda}\bregman{\pi}{\pi_k} \right) \right\}. \label{eq: minimzation exact TRPO update 1}
\end{align}

Much like the arguments we followed in Section~\ref{sec: uniform TRPO}, since $d_{\nu,\pi_k}\geq 0$ component-wise, minimizing \eqref{eq: minimzation exact TRPO update 1} is equivalent to minimizing $T_\lambda^\pi v^{\pi_k}_\lambda(s) -v^{\pi_k}_\lambda(s)  +\frac{1}{t_k}\bregmanS{s}{\pi}{\pi_k}$ for all $s$ for which $d_{\nu,\pi_k}(s)>0$. Meaning, the update rule takes the following form,
\begin{align}
    &\forall s \in \brc*{s'\in \mathcal{S} : d_{\nu,\pi_k}(s')>0 } \nonumber\\
    &\pi_{k+1}(\cdot\mid s)\in \arg\min_{\pi\in \simplex} \left(T_{\lambda}^{\pi}v^{\pi_k}_\lambda(s)-v^{\pi_k}(s)+ \br*{ \frac{1}{t_k}-\lambda} \bregmanS{s}{\pi}{\pi_k} \right)\label{eq: minimzation exact TRPO update final},
\end{align}
which can be written equivalently using Lemma~\ref{lemma: connection between bellman and q function}
\begin{align}
    &\forall s \in \brc*{s'\in \mathcal{S} : d_{\nu,\pi_k}(s')>0 } \nonumber\\
    &\pi_{k+1}(\cdot\mid s)\in \arg\min_{\pi\in \simplex} \br*{\inner*{ \qHpik(s,\cdot) + \lambda \nabla\omegaS{s}{\pi_k}, \pi - \pi_k(\cdot \mid s)} + \frac{1}{t_k} \bregmanS{s}{\pi}{\pi_k} }\label{eq: minimzation exact TRPO update final 2},
\end{align}
which will be use in the next section.

The minimization problem is solved component-wise as in Appendix \ref{supp: uniform TRPO}, equations \eqref{eq: mirror descent iteration L2} and \eqref{eq: mirror descent iteration KL} for the euclidean and non-euclidean cases, respectively. Thus, the solution of \eqref{eq: exact TRPO update rule original} is equivalent to a single iteration of Exact TRPO as given in Algorithm~\ref{alg: mu exact TRPO}.
\begin{remark}
Interestingly, the analysis does not depend on the updates in states for which ${d_{\nu,\pi_k}(s)=0}$. Although this might seem odd, the reason for this indifference is Assumption~\ref{assumption: stochastic MD}, by which $\forall s,k, d_{\mu,\pi^*}(s)>0 \rightarrow d_{\nu,\pi_k}(s)>0$. Meaning, by Assumption~\ref{assumption: stochastic MD} in each iteration we update all the states for which $d_{\mu,\pi^*}(s)>0$. This fact is sufficient to prove the convergence of Exact TRPO, with no need to analyze the performance at states for which $d_{\mu,\pi^*}(s)=0$ and $d_{\nu,\pi_k}(s)>0$.
\end{remark}

\subsection{Fundamental Inequality of Exact TRPO}\label{supp: mu exact TRPO}

In this section we will develop the fundamental inequality for Exact TRPO (Lemma~\ref{lemma: mu exact fundamental inequality}) based on its updating rule~\eqref{eq: minimzation exact TRPO update final}. We derive this inequality using two intermediate lemmas:
First, in Lemma~\ref{lemma: mu exact md point-wise inequality} we derive a state-wise inequality which holds for all states $s$ for which $d_{\nu,\pi_k}(s)>0$. Then, in Lemma~\ref{lemma: toward the mu exact fundamental inequality}, we use Lemma~\ref{lemma: mu exact md point-wise inequality} together with Assumption~\ref{assumption: stochastic MD} to prove an inequality related to the stationary distribution of the optimal policy $d_{\mu,\pi^*}$.
Finally, we prove the fundamental inequality for Exact TRPO using Lemma~\ref{lemma: greedy to difference}, which allows us to use the local guarantees of the inequality in Lemma~\ref{lemma: toward the mu exact fundamental inequality} for a global guarantee w.r.t. the optimal value, $\mu v_{\lambda}^*$.

\begin{lemma}[exact state-wise inequality]\label{lemma: mu exact md point-wise inequality} 
For all states $s$ for which $d_{\nu,\pi_k}(s)>0$ the following inequality holds: 
\begin{align*}
        0 & \leq t_k\br*{T_\lambda^{\pi}v_\lambda^{\pi_k}(s) - v_\lambda^{\pi_k}(s)} + \frac{t_k^2 \gradBound^2(k;\lambda)}{2} +(1-\lambda t_k) \bregmanS{s}{\pi}{\pi_k} - \bregmanS{s}{ \pi }{\pi_{k+1}}.
\end{align*}
where $h_\omega$ is defined at the third claim of Lemma \ref{lemma: usefull bounds exact case}.
\end{lemma}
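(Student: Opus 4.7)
The plan is to adapt the derivation of the Uniform TRPO fundamental inequality (Lemma~\ref{lemma: fundamental inequality uniform regularized}) to the state-wise scalar setting. The key simplification is that for states $s$ with $d_{\nu,\pi_k}(s)>0$ the Exact TRPO update decouples into the single-state proximal problem \eqref{eq: minimzation exact TRPO update final 2}, so we can work with a genuine convex subproblem on $\simplex$ without the $(I-\gamma P^{\pi_k})^{-1}$ scaling that complicated the uniform case.

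First, I would apply the non-euclidean second prox theorem (Theorem~\ref{theorem: second prox}) to \eqref{eq: minimzation exact TRPO update final 2} with $a=\pi_k(\cdot\mid s)$, $b=\pi_{k+1}(\cdot\mid s)$ and linear part $t_k(\qHpik(s,\cdot)+\lambda\nabla\omegaS{s}{\pi_k})$, yielding
\begin{align*}
    \inner*{\nabla\omegaS{s}{\pi_k}-\nabla\omegaS{s}{\pi_{k+1}},\,\pi-\pi_{k+1}(\cdot\mid s)} \leq t_k\inner*{\qHpik(s,\cdot)+\lambda\nabla\omegaS{s}{\pi_k},\,\pi-\pi_{k+1}(\cdot\mid s)}
\end{align*}
for every $\pi\in\simplex$. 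The three-points lemma (Lemma~\ref{lemma: three-points}) rewrites the LHS as $\bregmanS{s}{\pi}{\pi_{k+1}}+\bregmanS{s}{\pi_{k+1}}{\pi_k}-\bregmanS{s}{\pi}{\pi_k}$.

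Next, I would split the RHS using $\pi-\pi_{k+1}(\cdot\mid s)=(\pi-\pi_k(\cdot\mid s))+(\pi_k(\cdot\mid s)-\pi_{k+1}(\cdot\mid s))$. On the $(\pi-\pi_k)$ piece, the $q$-function part is handled by \eqref{eq: relation between bellman and directional q}, which gives $\inner{\qHpik(s,\cdot),\pi_k(\cdot\mid s)-\pi}=-\bigl(T_\lambda^\pi\vHpik(s)-\vHpik(s)\bigr)+\lambda\bigl(\omegaS{s}{\pi}-\omegaS{s}{\pi_k}\bigr)$, while the $\lambda\nabla\omegaS{s}{\pi_k}$ part is rewritten via the definition of the Bregman distance as $\lambda(\omegaS{s}{\pi_k}-\omegaS{s}{\pi})+\lambda\bregmanS{s}{\pi}{\pi_k}$. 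The two entropy corrections cancel and the combined contribution is $-t_k\bigl(T_\lambda^{\pi}\vHpik(s)-\vHpik(s)\bigr)+\lambda t_k\bregmanS{s}{\pi}{\pi_k}$.

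Finally, the residual cross term involving $\pi_k-\pi_{k+1}$ is bounded by Fenchel--Young,
\begin{align*}
    t_k\inner*{\qHpik(s,\cdot)+\lambda\nabla\omegaS{s}{\pi_k},\,\pi_k(\cdot\mid s)-\pi_{k+1}(\cdot\mid s)} \leq \tfrac{t_k^2}{2}\gradBound^2(k;\lambda)+\tfrac{1}{2}\norm{\pi_{k+1}(\cdot\mid s)-\pi_k(\cdot\mid s)}^2,
\end{align*}
and the squared-norm is absorbed into $-\bregmanS{s}{\pi_{k+1}}{\pi_k}$ by $1$-strong convexity of $B_\omega$. Collecting all pieces, the coefficient of $\bregmanS{s}{\pi}{\pi_k}$ combines to $(1-\lambda t_k)$, and after rearrangement the inequality of the lemma follows. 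The main obstacle I expect is bookkeeping in step three: one must carefully verify that the two $\lambda$-dependent corrections (the $\lambda(\omegaS{s}{\pi}-\omegaS{s}{\pi_k})$ from \eqref{eq: relation between bellman and directional q} and the one from the Bregman identity) cancel exactly, so that the surviving coefficient of $\bregmanS{s}{\pi}{\pi_k}$ matches the stated $(1-\lambda t_k)$.
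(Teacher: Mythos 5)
Your proposal is correct and follows essentially the same route as the paper's proof: first-order optimality of the state-wise proximal problem (equivalently Theorem~\ref{theorem: second prox}), the three-points lemma, the split $\pi-\pi_{k+1}=(\pi-\pi_k)+(\pi_k-\pi_{k+1})$ with Fenchel--Young on the cross term and the Bellman identity \eqref{eq: relation between bellman and directional q} on the remaining piece, and $1$-strong convexity of $B_\omega$ to absorb the squared norm. The only blemish is a direction slip in your step three --- you state the identities for $\inner*{\cdot,\,\pi_k(\cdot\mid s)-\pi}$ while handling the $(\pi-\pi_k)$ piece, so the ``combined contribution'' you display is the negative of what actually enters the right-hand side --- but since your final assembly uses the correct signs (coefficient $(1-\lambda t_k)$ on $\bregmanS{s}{\pi}{\pi_k}$ and $+t_k\br*{T_\lambda^{\pi}\vHpik(s)-\vHpik(s)}$ after rearrangement), the argument goes through as intended.
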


\begin{proof}
Start by observing that the update rule \eqref{eq: minimzation exact TRPO update final 2} is applied in any state $s$ for which $d_{\nu,\pi_k}(s)>0$. By the first order optimality condition for the solution of \eqref{eq: minimzation exact TRPO update final 2}, for any policy $\pi\in \simplex$ at state $s$,

\begin{align}
    0 &\leq \inner*{ t_k\left(\qHpik(s,\cdot)+\lambda \nabla\omegaS{s}{\pi_k}\right)+   \nabla_{\pi_{k+1}} \bregmanS{s}{\pi_{k+1}}{\pi_k}, \pi - \pi_{k+1}(\cdot \mid s) } \nonumber\\
    & = \underbrace{t_k\inner*{ \qHpik(s,\cdot) +\lambda\nabla\omegaS{s}{\pi_k} ,\pi - \pi_{k+1}(\cdot \mid s)}}_{(1)}+ \underbrace{\inner*{\nabla_{\pi_{k+1}} \bregmanS{s}{\pi_{k+1}}{\pi_k}, \pi - \pi_{k+1}(\cdot \mid s) }}_{(2)} \label{eq: lemma improvement stochastic eq 1}
\end{align}

The first term can be bounded as follows.
\begin{align*}
    &(1) = t_k\inner*{ \qHpik(s,\cdot)+\lambda\nabla\omegaS{s}{\pi_k} ,\pi - \pi_{k+1}(\cdot \mid s)}\\
    &=t_k\inner*{ \qHpik(s,\cdot) + \lambda\nabla\omegaS{s}{\pi_k} ,\pi - \pi_{k}(\cdot \mid s) } \\
    &+ t_k \inner*{ \qHpik(s,\cdot)+\lambda\nabla\omegaS{s}{\pi_k} ,\pi_k(\cdot \mid s) - \pi_{k+1}(\cdot \mid s)}\\
     &\leq t_k\inner*{ \qHpik(s,\cdot)+\lambda\nabla\omegaS{s}{\pi_k}  ,\pi - \pi_{k}(\cdot \mid s)} \\
     & + |\inner*{ t_k \qHpik(s,\cdot)+t_k \lambda\nabla\omegaS{s}{\pi_k}  ,\pi_k(\cdot \mid s) - \pi_{k+1}(\cdot \mid s)}|\\ 
     &\leq t_k\inner*{ \qHpik(s,\cdot)+\lambda\nabla\omegaS{s}{\pi_k}  ,\pi - \pi_{k}(\cdot \mid s)} \\
     & + \frac{t_k^2 \norm{\qHpik(s,\cdot)+\lambda\nabla\omegaS{s}{\pi_k}}_{*}^2}{2}+ \frac{1}{2}\norm{\pi_k(\cdot \mid s) - \pi_{k+1}(\cdot \mid s)}^2,   
\end{align*}

where the last relation follows from Fenchel's inequality using the euclidean or non-euclidean norm $\norm{\cdot}$, and where $\norm{\cdot}_*$ is its dual norm, which is $L_2$ in the euclidean case, and $L_\infty$ in the non-euclidean case. Note that the norms are applied over the action space. Furthermore, by adding and subtracting $\lambda \omegaS{s}{\pi}$,
\begin{align}\label{eq: from inner product to operators}
   &\inner*{ \qHpik(s,\cdot)+\lambda\nabla\omegaS{s}{\pi_k},\pi - \pi_{k}} \nonumber\\
   & = \inner*{ \qHpik(s,\cdot),\pi - \pi_{k}(\cdot \mid s)} + \lambda \inner*{ \nabla\omegaS{s}{\pi_k},\pi - \pi_{k}(\cdot \mid s)}\nonumber\\
   & = T^{\pi}v_\lambda^{\pi_k}(s) - T^{\pi_k}v_\lambda^{\pi_k}(s) -\lambda \omegaS{s}{\pi} + \lambda \omegaS{s}{\pi_k} + \lambda \inner*{ \nabla\omegaS{s}{\pi_k},\pi - \pi_{k}(\cdot \mid s)}\nonumber\\
   & = T_\lambda^{\pi}v_\lambda^{\pi_k}(s) - T_\lambda^{\pi_k}v_\lambda^{\pi_k}(s) -\lambda \bregmanS{s}{\pi}{\pi_k}\nonumber\\
    & = T_\lambda^{\pi}v_\lambda^{\pi_k}(s) - v_\lambda^{\pi_k}(s) -\lambda \bregmanS{s}{\pi}{\pi_k},
\end{align}
where the second transition follows the same steps as in equation~\eqref{eq: relation between bellman and directional q} in the proof of Proposition~\ref{prop: directional derivative for MDPs}, and the third transition is by the definition of the Bregman distance of $\omega$. Note that \eqref{eq: from inner product to operators} is actually given in Lemma~\ref{lemma: connection between bellman and q function}, but is re-derived here for readability.

From which, we conclude that

\begin{align*}
    (1) & \leq t_k\br*{T_\lambda^{\pi}v_\lambda^{\pi_k}(s) - v_\lambda^{\pi_k}(s) -\lambda \bregmanS{s}{\pi}{\pi_k}} \\
    & + \frac{t_k^2 \norm{\qHpik(s,\cdot)+\lambda\nabla\omegaS{s}{\pi_k}}_{*}^2}{2}+ \frac{1}{2}\norm{\pi_k(\cdot \mid s) - \pi_{k+1}(\cdot \mid s)}^2\\
    & \leq t_k\br*{T_\lambda^{\pi}v_\lambda^{\pi_k}(s) - v_\lambda^{\pi_k}(s) -\lambda \bregmanS{s}{\pi}{\pi_k}}  + \frac{t_k^2 \gradBound^2(k;\lambda)}{2} + \frac{1}{2}\norm{\pi_k(\cdot \mid s) - \pi_{k+1}(\cdot \mid s)}^2,
\end{align*}
where in the last transition we used the third claim of Lemma~\ref{lemma: usefull bounds exact case},

We now continue analyzing $(2)$.
\begin{align*}
    &(2) = \inner*{\nabla_{\pi_{k+1}} \bregmanS{s}{\pi_{k+1}}{\pi_k}, \pi -  \pi_{k+1}(\cdot \mid s) }\\
    & = \inner*{\nabla \omegaS{s}{\pi_{k+1}}-  \nabla \omegaS{s}{\pi_k},\pi -  \pi_{k+1}(\cdot \mid s)}\\
    & = \bregmanS{s}{\pi}{\pi_k} - \bregmanS{s}{ \pi }{\pi_{k+1}} - \bregmanS{s}{\pi_{k+1}}{ \pi_k }\\
    &\leq \bregmanS{s}{\pi}{\pi_k} - \bregmanS{s}{ \pi }{\pi_{k+1}} - \frac{1}{2}\norm{\pi_{k}(\cdot \mid s) -  \pi_{k+1}(\cdot \mid s)}^2.
\end{align*}

The first relation, $\nabla_{\pi_{k+1}} \bregmanS{s}{\pi_{k+1}}{\pi_k} = \nabla \omegaS{s}{\pi_{k+1}}-  \nabla \omegaS{s}{\pi_k}$, holds by simply taking the derivative of any Bregman distance w.r.t. $\pi_{k+1}$. The second relation holds by the three-points lemma (Lemma~\ref{lemma: three-points}). The third relation holds by the strong convexity of the Bregman distance, i.e., $\frac{1}{2}\norm{x-y}^2\leq \bregman{x}{y}$, which is straight forward in the euclidean case, and is the well known Pinsker's inequality in the non-euclidean case.

Plugging the above upper bounds for $(1)$ and $(2)$ into \eqref{eq: lemma improvement stochastic eq 1} we get,
\begin{align*}
    0 &\leq t_k\br*{T_\lambda^{\pi}v_\lambda^{\pi_k}(s) - v_\lambda^{\pi_k}(s)} + \frac{t_k^2 (\gradBound^2(k;\lambda)) }{2}  + (1-\lambda t_k)\bregmanS{s}{\pi}{\pi_k} - \bregmanS{s}{ \pi }{\pi_{k+1}},
\end{align*}

and conclude the proof.
\end{proof}

We now turn state another lemma, which connects the state-wise inequality using the discounted stationary distribution of the optimal policy $d_{\mu,\pi^*}$

\begin{lemma}\label{lemma: toward the mu exact fundamental inequality} 
Assuming~\ref{assumption: stochastic MD}, the following inequality holds for all $\pi$.
\begin{align*}
        0 & \leq  t_kd_{\mu,\pi^*} \br*{T_\lambda^{\pi}v_\lambda^{\pi_k} - v_\lambda^{\pi_k}} + \frac{t_k^2 \gradBound^2(k;\lambda)}{2} +(1-\lambda t_k)d_{\mu,\pi^*} \bregman{\pi}{\pi_k} - d_{\mu,\pi^*}\bregman{ \pi }{\pi_{k+1}}.
\end{align*}
where $h_\omega$ is defined at the third claim of Lemma \ref{lemma: usefull bounds exact case}. 
\begin{proof}
    By Assumption~\ref{assumption: stochastic MD}, for all $s$ for which $d_{\mu,\pi^*}(s)>0$ it also holds that $d_{\nu,\pi_k}(s)>0$. Thus, for all $s$ for which $d_{\mu,\pi^*}(s)>0$ the component-wise relation in Lemma~\ref{lemma: mu exact md point-wise inequality}  holds. By multiplying each inequality by the positive number $d_{\mu,\pi^*}(s)$ and summing over all $s$ we get,
    \begin{align*}
        0 & \leq  t_kd_{\mu,\pi^*} \br*{T_\lambda^{\pi}v_\lambda^{\pi_k} - v_\lambda^{\pi_k}} + \frac{t_k^2 \gradBound^2(k;\lambda)}{2} +(1-\lambda t_k)d_{\mu,\pi^*} \bregman{\pi}{\pi_k} - d_{\mu,\pi^*}\bregman{ \pi }{\pi_{k+1}},
\end{align*}
which concludes the proof.

\end{proof}
\end{lemma}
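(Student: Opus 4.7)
The plan is to aggregate the state-wise inequality from Lemma~\ref{lemma: mu exact md point-wise inequality} into the claimed scalar-form inequality by taking a non-negatively weighted sum against $d_{\mu,\pi^*}(s)$. The only subtlety is that the state-wise inequality is only guaranteed at states with $d_{\nu,\pi_k}(s) > 0$, so I need to confirm that all states outside this support are also outside the support of $d_{\mu,\pi^*}$ and hence contribute zero weight.

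For the support argument, I would first observe that
\[
  d_{\nu,\pi_k}(s) = (1-\gamma)\nu(I-\gamma P^{\pi_k})^{-1}(s) \geq (1-\gamma)\nu(s),
\]
since $(I-\gamma P^{\pi_k})^{-1} = \sum_{t\geq 0}(\gamma P^{\pi_k})^t$ is a sum of non-negative matrices with $t=0$ contributing the identity. Hence $\nu(s) > 0$ implies $d_{\nu,\pi_k}(s) > 0$. Assumption~\ref{assumption: stochastic MD} gives $d_{\mu,\pi^*}(s) \leq C^{\pi^*}\nu(s)$ for every $s$, so $\nu(s) = 0$ forces $d_{\mu,\pi^*}(s) = 0$. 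Combining these two facts, every state $s$ with $d_{\mu,\pi^*}(s) > 0$ satisfies $d_{\nu,\pi_k}(s) > 0$, so Lemma~\ref{lemma: mu exact md point-wise inequality} applies at every state that carries positive weight.

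For the aggregation step, I would multiply the pointwise inequality from Lemma~\ref{lemma: mu exact md point-wise inequality} by the non-negative scalar $d_{\mu,\pi^*}(s)$ and sum over $s$; the states with $d_{\mu,\pi^*}(s) = 0$ contribute $0 \leq 0$ and can be ignored. The linear-in-$s$ terms collapse via the row-vector notation into $d_{\mu,\pi^*}(T_\lambda^{\pi}v_\lambda^{\pi_k}-v_\lambda^{\pi_k})$, $d_{\mu,\pi^*}\bregman{\pi}{\pi_k}$, and $d_{\mu,\pi^*}\bregman{\pi}{\pi_{k+1}}$. The $s$-independent constant $\tfrac{t_k^2 \gradBound^2(k;\lambda)}{2}$ is multiplied by $\sum_s d_{\mu,\pi^*}(s) = 1$ (since $(1-\gamma)\mu(I-\gamma P^{\pi^*})^{-1}\mathbf{1} = \mu\mathbf{1} = 1$ via the geometric series identity $(I-\gamma P^{\pi^*})^{-1}\mathbf{1} = \tfrac{1}{1-\gamma}\mathbf{1}$), so it appears with the same constant on the right-hand side. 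This yields exactly the claimed inequality.

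There is essentially no technical obstacle here beyond the support argument: the entire lemma is a mechanical linear combination of Lemma~\ref{lemma: mu exact md point-wise inequality}, and Assumption~\ref{assumption: stochastic MD} is invoked in exactly the minimal form needed to guarantee that the weights used in the combination are supported where the state-wise bound is valid. The reason this lemma is worth stating separately from Lemma~\ref{lemma: mu exact md point-wise inequality} is that it bridges the local (state-wise) proximal guarantee with a global scalar inequality against $\pi^*$, which is the form needed downstream to deduce the fundamental inequality for Exact TRPO and, in turn, its convergence rates.
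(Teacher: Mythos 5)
Your proof is correct and follows exactly the paper's argument: verify via Assumption~\ref{assumption: stochastic MD} that every state with $d_{\mu,\pi^*}(s)>0$ also has $d_{\nu,\pi_k}(s)>0$ (so Lemma~\ref{lemma: mu exact md point-wise inequality} applies wherever the weight is positive), then take the $d_{\mu,\pi^*}$-weighted sum of the state-wise inequalities. Your added details --- the bound $d_{\nu,\pi_k}(s)\geq(1-\gamma)\nu(s)$ and the normalization $\sum_s d_{\mu,\pi^*}(s)=1$ for the constant term --- only make explicit what the paper leaves implicit.
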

Using the previous lemma, we are ready to prove the following Lemma:

\begin{lemma}[fundamental inequality of exact TRPO]\label{lemma: mu exact fundamental inequality}
Let $\{\pi_k\}_{k\geq 0}$ be the sequence generated by the TRPO method using stepsizes $\{t_k\}_{k\geq0}$.  Then, for all $k\geq 0$

\begin{align*}
&t_k(1-\gamma) (\mu v^{\pi_k}_\lambda- \mu v^{\pi^*}_\lambda) \leq d_{\mu,\pi^*}\br*{ (1-\lambda t_k)\bregman{\pi^*}{\pi_k} - \bregman{ \pi^* }{\pi_{k+1}}} +\frac{t_k^2 \gradBound^2(k;\lambda)}{2},
\end{align*}
where $\gradBound(k;\lambda)$ is defined in Lemma~\ref{lemma: usefull bounds exact case}.
\end{lemma}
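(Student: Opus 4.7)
The plan is to apply Lemma~\ref{lemma: toward the mu exact fundamental inequality} with the specific choice $\pi = \pi^*$, and then convert the resulting state-wise expression involving $T_\lambda^{\pi^*}v_\lambda^{\pi_k} - v_\lambda^{\pi_k}$ into the desired global suboptimality gap $\mu v^{\pi_k}_\lambda - \mu v^{\pi^*}_\lambda$. Since Lemma~\ref{lemma: toward the mu exact fundamental inequality} already gives, with $\pi = \pi^*$,
\begin{align*}
    -t_k d_{\mu,\pi^*}\bigl(T_\lambda^{\pi^*}v_\lambda^{\pi_k} - v_\lambda^{\pi_k}\bigr)
    &\leq \frac{t_k^2 \gradBound^2(k;\lambda)}{2}
    + d_{\mu,\pi^*}\Bigl((1-\lambda t_k)\bregman{\pi^*}{\pi_k} - \bregman{\pi^*}{\pi_{k+1}}\Bigr),
\end{align*}
the only remaining task is to identify the left-hand side with $t_k(1-\gamma)(\mu v^{\pi_k}_\lambda - \mu v^{\pi^*}_\lambda)$.

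The key identity I will use is a ``greedy to difference'' manipulation (this is the content of the Lemma~\ref{lemma: greedy to difference} invoked in the proof of Lemma~\ref{lemma: fundamental inequality uniform regularized}). Starting from $T_\lambda^{\pi^*} v_\lambda^{\pi_k} = c_\lambda^{\pi^*} + \gamma P^{\pi^*} v_\lambda^{\pi_k}$ and the fixed-point equation $c_\lambda^{\pi^*} = (I - \gamma P^{\pi^*}) v_\lambda^{\pi^*}$, a one-line substitution gives
\begin{align*}
    v_\lambda^{\pi_k} - T_\lambda^{\pi^*} v_\lambda^{\pi_k}
    = (I - \gamma P^{\pi^*})\bigl(v_\lambda^{\pi_k} - v_\lambda^{\pi^*}\bigr).
\end{align*}
Multiplying on the left by $d_{\mu,\pi^*}$ and using the defining identity $d_{\mu,\pi^*} = (1-\gamma)\mu(I - \gamma P^{\pi^*})^{-1}$, so that $d_{\mu,\pi^*}(I - \gamma P^{\pi^*}) = (1-\gamma)\mu$, I obtain the equality
\begin{align*}
    d_{\mu,\pi^*}\bigl(v_\lambda^{\pi_k} - T_\lambda^{\pi^*} v_\lambda^{\pi_k}\bigr)
    = (1-\gamma)\bigl(\mu v^{\pi_k}_\lambda - \mu v^{\pi^*}_\lambda\bigr).
\end{align*}
Substituting this into the rearranged consequence of Lemma~\ref{lemma: toward the mu exact fundamental inequality} yields the claim directly.

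There is essentially no obstacle here: the hard work—handling the first-order optimality condition of the proximal update, the Fenchel inequality bounding $(1)$, and the three-points lemma bounding $(2)$—was absorbed into Lemma~\ref{lemma: mu exact md point-wise inequality} and Lemma~\ref{lemma: toward the mu exact fundamental inequality}. The only non-trivial conceptual step that remains is recognizing that pushing $d_{\mu,\pi^*}$ through the Bellman residual $v_\lambda^{\pi_k} - T_\lambda^{\pi^*} v_\lambda^{\pi_k}$ collapses exactly to the scalar suboptimality gap scaled by $(1-\gamma)$, which is precisely what makes the concentrability coefficient $C^{\pi^*}$ the relevant quantity when passing from local (state-wise) guarantees to a global bound on $\mu v^{\pi_k}_\lambda - \mu v^{\pi^*}_\lambda$.
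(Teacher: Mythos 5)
Your proposal is correct and matches the paper's own proof: the paper likewise sets $\pi=\pi^*$ in Lemma~\ref{lemma: toward the mu exact fundamental inequality} and then invokes the third claim of Lemma~\ref{lemma: greedy to difference}, which is exactly the identity $d_{\mu,\pi^*}\bigl(v_\lambda^{\pi_k}-T_\lambda^{\pi^*}v_\lambda^{\pi_k}\bigr)=(1-\gamma)\bigl(\mu v_\lambda^{\pi_k}-\mu v_\lambda^{\pi^*}\bigr)$ that you re-derive inline from the fixed-point equation. No gaps.
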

\begin{proof}
Setting $\pi=\pi^*$ in Lemma \ref{lemma: toward the mu exact fundamental inequality}   we get that for any $k$,
\begin{align*}
    &-t_k d_{\mu,\pi^*}\br*{T^{\pi^*}_\lambda \vHpik -  \vHpik} \\
    &\leq  d_{\mu,\pi^*}\br{ (1-\lambda t_k)\bregman{\pi^*}{\pi_k} - \bregman{\pi^* }{\pi_{k+1}}} +\frac{t_k^2 \gradBound^2(k;\lambda)}{2}.
\end{align*}

Furthermore, by the third claim in Lemma~\ref{lemma: greedy to difference},
\begin{align*}
    (1-\gamma)\mu(v_\lambda^*-\vHpik) = d_{\mu,\pi^*}\br*{T^{\pi^*}_\lambda \vHpik -  \vHpik}.
\end{align*}

Combining the two relations and taking expectation on both sides we conclude the proof.

\end{proof}

We are ready to prove the convergence rates for the unregularized and regularized algorithms, much like the equivalent proofs in the case of Uniform TRPO in Appendix~\ref{supp: proof of uniform TRPO}.

\subsection{Convergence proof of Exact TRPO}\label{supp: convergence of exact trpo}

Before proving the theorem, we establish that the policy improves in $k$ for the chosen learning rates.
\begin{lemma}[Exact TRPO Policy Improvement]\label{lemma: policy improvement exact}

Let $\{\pik\}_{k\geq 0}$ be the sequence generated by Exact TRPO. Then, for both the euclidean and non-euclidean versions of the algorithm, for any $\lambda\geq0$, the value improves for all $k$,  $$ v_\lambda^{\pi_k} \geq  v_\lambda^{\pi_{k+1}},$$
and, thus, $\mu v_\lambda^{\pi_k} \geq \mu v_\lambda^{\pi_{k+1}}$
\end{lemma}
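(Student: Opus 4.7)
My plan is to mirror the improvement argument used for Uniform TRPO (Lemma~\ref{lemma: policy improvement uniform}), but derive the analogous inequality \emph{statewise} from the simplified Exact TRPO update rule \eqref{eq: minimzation exact TRPO update final 2}. Concretely, I would fix an arbitrary $s \in \mathcal{S}_{d_{\nu,\pi_k}}$, write the first-order optimality condition for $\pi_{k+1}(\cdot\mid s)$ as the minimizer over $\Delta_\A$, and instantiate it at $\pi_k(\cdot\mid s)$, obtaining
\begin{align*}
t_k \inner*{q^{\pi_k}_\lambda(s,\cdot)+\lambda \nabla \omegaS{s}{\pi_k},\pi_k(\cdot\mid s)-\pi_{k+1}(\cdot\mid s)} + \inner*{\nabla\omegaS{s}{\pi_{k+1}}-\nabla\omegaS{s}{\pi_k},\pi_k(\cdot\mid s)-\pi_{k+1}(\cdot\mid s)} \geq 0.
\end{align*}
The three-points lemma (Lemma~\ref{lemma: three-points}) rewrites the second inner product as $-\bregmanS{s}{\pi_k}{\pi_{k+1}} - \bregmanS{s}{\pi_{k+1}}{\pi_k}$ (since $\bregmanS{s}{\pi_k}{\pi_k}=0$).

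Next I would convert the $q$-function inner product into Bellman operators using Lemma~\ref{lemma: connection between bellman and q function} (equivalently, equation~\eqref{eq: from inner product to operators}), which gives
\begin{align*}
\inner*{q^{\pi_k}_\lambda(s,\cdot)+\lambda\nabla\omegaS{s}{\pi_k},\pi_k(\cdot\mid s)-\pi_{k+1}(\cdot\mid s)} = v^{\pi_k}_\lambda(s) - T^{\pi_{k+1}}_\lambda v^{\pi_k}_\lambda(s) + \lambda \bregmanS{s}{\pi_{k+1}}{\pi_k}.
\end{align*}
Combining these two identities yields the statewise improvement-type bound
\begin{align*}
t_k\br*{v^{\pi_k}_\lambda(s) - T^{\pi_{k+1}}_\lambda v^{\pi_k}_\lambda(s)} \geq \bregmanS{s}{\pi_k}{\pi_{k+1}} + (1-\lambda t_k)\bregmanS{s}{\pi_{k+1}}{\pi_k},
\end{align*}
exactly the statewise analogue of \eqref{eq: improvement first relation uniform}. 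For the chosen stepsizes the factor $1-\lambda t_k$ is nonnegative (for $\lambda=0$ trivially, and for $\lambda>0$ with $t_k = \frac{1}{\lambda(k+2)}$ we get $\lambda t_k = \frac{1}{k+2} \leq 1$), so the right-hand side is nonnegative, giving $T^{\pi_{k+1}}_\lambda v^{\pi_k}_\lambda(s) \leq v^{\pi_k}_\lambda(s)$ on $\mathcal{S}_{d_{\nu,\pi_k}}$.

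The subtle step is handling states $s \notin \mathcal{S}_{d_{\nu,\pi_k}}$, where Algorithm~\ref{alg: mu exact TRPO} performs no update, so $\pi_{k+1}(\cdot\mid s) = \pi_k(\cdot\mid s)$; there we have $T^{\pi_{k+1}}_\lambda v^{\pi_k}_\lambda(s) = T^{\pi_k}_\lambda v^{\pi_k}_\lambda(s) = v^{\pi_k}_\lambda(s)$ by the fixed-point property of $v^{\pi_k}_\lambda$, and the inequality $T^{\pi_{k+1}}_\lambda v^{\pi_k}_\lambda(s) \leq v^{\pi_k}_\lambda(s)$ holds with equality. Thus the inequality $T^{\pi_{k+1}}_\lambda v^{\pi_k}_\lambda \leq v^{\pi_k}_\lambda$ holds on the whole state space. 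I can then iterate the monotone, $\gamma$-contractive operator $T^{\pi_{k+1}}_\lambda$ to conclude, exactly as in the Uniform TRPO proof, that
\begin{align*}
v^{\pi_k}_\lambda \geq T^{\pi_{k+1}}_\lambda v^{\pi_k}_\lambda \geq (T^{\pi_{k+1}}_\lambda)^2 v^{\pi_k}_\lambda \geq \cdots \geq \lim_{n\to\infty}(T^{\pi_{k+1}}_\lambda)^n v^{\pi_k}_\lambda = v^{\pi_{k+1}}_\lambda.
\end{align*}
Multiplying by $\mu \geq 0$ componentwise then gives $\mu v^{\pi_k}_\lambda \geq \mu v^{\pi_{k+1}}_\lambda$. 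The main obstacle, as anticipated, is being careful about unvisited states $s \notin \mathcal{S}_{d_{\nu,\pi_k}}$: the optimality argument only produces the key inequality where the algorithm actually optimizes, so one must separately verify that the non-updated coordinates cannot violate the componentwise comparison before invoking the monotonicity of $T^{\pi_{k+1}}_\lambda$.
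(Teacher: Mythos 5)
Your proposal is correct and follows essentially the same route as the paper's proof: first-order optimality of the simplified statewise update instantiated at $\pi_k(\cdot\mid s)$, the three-points lemma, Lemma~\ref{lemma: connection between bellman and q function} to pass to Bellman operators, the stepsize condition $\lambda t_k\leq 1$, the observation that $\pi_{k+1}(\cdot\mid s)=\pi_k(\cdot\mid s)$ on unvisited states so $T^{\pi_{k+1}}_\lambda v^{\pi_k}_\lambda = v^{\pi_k}_\lambda$ there, and then iterating the monotone contraction $T^{\pi_{k+1}}_\lambda$ before multiplying by $\mu$. No gaps; your explicit treatment of the states outside $\mathcal{S}_{d_{\nu,\pi_k}}$ is exactly the step the paper also handles in equations~\eqref{eq: result 1 improvement exact}--\eqref{eq: result 3 improvement exact}.
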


\begin{proof}

By \eqref{eq: lemma improvement stochastic eq 1}, for any state $s$ for which $d_{\nu,\pi_k}(s)>0$, and for any policy $\pi\in \simplex$ at state $s$,

\begin{align*}
    0 &\leq t_k\inner*{ \qHpik(s,\cdot) +\lambda\nabla\omegaS{s}{\pi_k} ,\pi - \pi_{k+1}(\cdot \mid s)} + \inner*{\nabla_{\pi_{k+1}} \bregmanS{s}{\pi_{k+1}}{\pi_k}, \pi - \pi_{k+1}(\cdot \mid s) }
\end{align*}

Thus,
\begin{align*}
    & 0\leq t_k\inner*{ \qHpik(s,\cdot) +\lambda\nabla\omegaS{s}{\pi_k} ,\pi - \pi_{k+1}(\cdot \mid s)} + \inner*{\nabla \omegaS{s}{\pi_{k+1}}-  \nabla \omegaS{s}{\pi_k},\pi(\cdot\mid s) -  \pi_{k+1}(\cdot \mid s)} \\
    & \rightarrow 0\leq t_k\inner*{ \qHpik(s,\cdot) +\lambda\nabla\omegaS{s}{\pi_k} ,\pi - \pi_{k+1}(\cdot \mid s)} +\bregmanS{s}{\pi}{\pi_k} - \bregmanS{s}{ \pi }{\pi_{k+1}} - \bregmanS{s}{\pi_{k+1}}{ \pi_k },
\end{align*}
where the first relation is by the derivative of the Bregman distance and the second is by the three-point lemma (\ref{lemma: three-points}).

By choosing $\pi=\pi_k(\cdot \mid s)$,
\begin{align*}
 0\leq t_k\inner*{ \qHpik(s,\cdot) +\lambda\nabla\omegaS{s}{\pi_k} ,\pi_k(\cdot \mid s) - \pi_{k+1}(\cdot \mid s)} - \bregmanS{s}{ \pi_k }{\pi_{k+1}} - \bregmanS{s}{\pi_{k+1}}{ \pi_k },
\end{align*}
where we used the fact that $\bregman{s}{\pi_k}{\pi_k}=0$.

Now, Using equation~\eqref{eq: from inner product to operators} (see Lemma~\ref{lemma: connection between bellman and q function}), we get
\begin{align}\label{eq: improvement first relation exact}
 &0\leq t_k\br*{  v_\lambda^{\pi_k}(s) - T_\lambda^{\pi}v_\lambda^{\pi_k}(s)  +\lambda \bregmanS{s}{\pi_{k+1}}{\pi_k}} - \bregmanS{s}{ \pi_k }{\pi_{k+1}} - \bregmanS{s}{\pi_{k+1}}{ \pi_k } \nonumber \\
 & \rightarrow  \bregmanS{s}{ \pi_k }{\pi_{k+1}} + \br*{1-\lambda t_k}\bregmanS{s}{\pi_{k+1}}{ \pi_k } \leq t_k\br*{  v_\lambda^{\pi_k}(s) - T_\lambda^{\pi_{k+1}}v_\lambda^{\pi_k}(s)} 
\end{align}

The choice of the learning rate and the fact that the Bregman distance is non negative ($\lambda >0$, $\lambda t_k = \frac{1}{k+2} \leq 1$ and for $\lambda=0$ the RHS of~\eqref{eq: improvement first relation exact} is positive), implies that  for all $s\in \brc*{s': d_{\nu,\pi_k}(s')>0}$.

\begin{align}
     &0\leq v_\lambda^{\pi_k}(s) - T_\lambda^{\pi_{k+1}}v_\lambda^{\pi_k}(s), \label{eq: result 1 improvement exact}
\end{align}

For all states $s\in\mathcal{S}$ for which $d_{\nu,\pi_k}(s)= 0$, as we do not update the policy in these states we have that $\pi_{k+1}(\cdot \mid s) = \pi_{k}(\cdot \mid s)$. Thus, for all $s\in \brc*{s' : d_{\nu,\pi_k}(s')=0}$,
\begin{align}
    0 = v_\lambda^{\pi_k}(s) - T_\lambda^{\pi_{k}}v_\lambda^{\pi_k}(s) = v_\lambda^{\pi_k}(s) - T_\lambda^{\pi_{k+1}}v_\lambda^{\pi_k}(s).\label{eq: result 2 improvement exact}
\end{align}

Combining \eqref{eq: result 1 improvement exact}, \eqref{eq: result 2 improvement exact} we get that for all $s\in \mathcal{S}$,
\begin{align}
    v_\lambda^{\pi_k}(s) \geq T_\lambda^{\pi_{k+1}}v_\lambda^{\pi_k}(s).\label{eq: result 3 improvement exact}
\end{align}



Applying iteratively $T^{\pi_{k+1}}_\lambda$ and using its monotonicty we obtain,
\begin{align*}
    \vHpik  \geq T^{\pi_{k+1}}_\lambda \vHpik \geq (T^{\pi}_\lambda)^2\vHpik\geq \cdot\cdot\cdot \geq \lim_{n\rightarrow \infty} (T^{\pi_{k+1}}_\lambda)^n \vHpik = v^{\pi_{k+1}}_\lambda,
\end{align*}
where in the last relation we used the fact $T^{\pi_{k+1}}_\lambda$ is a contraction operator and its fixed point is $v^{\pi_{k+1}}_\lambda$.

Finally we conclude the proof by multiplying both sides with $\mu$ which gives $\mu v^{\pi_{k+1}}_\lambda \leq \mu v^{\pi_{k}}_\lambda$
\end{proof}


The following theorem establish the convergence rates of the Exact TRPO algorithms.

\begin{theorem}[Convergence Rate: Exact TRPO]\label{theorem: Exact TRPO}  
Let $\{\pik\}_{k\geq 0}$ be the sequence generated by Exact TRPO
Then, the following holds for all $N\geq1$.
\begin{enumerate}
    \item \emph{(Unregularized)} Let $\lambda=0$, $t_k=\frac{(1-\gamma)}{\euclid \mathrm{C_{\text{max}}} \sqrt{k+1}}$ then
    \begin{align*}
       \mu v^{\pi_N} - \mu \vstar   \leq 
        O\br*{\frac{\euclid \mathrm{C_{\text{max}}}(\bregmanMax + \log N)}{(1-\gamma)^2\sqrt{N}} }    \end{align*}
    \item \emph{(Regularized)} Let $\lambda>0$, $t_k =  \frac{1}{\lambda (k+2)}$ then

\begin{align*}
   \mu  v_\lambda^{\pi_N} - \mu  \vHstar   \leq  O \br*{ \frac{\euclid ^2\CmaxLambda^2 \log N}{\lambda(1
    -\gamma)^3 N} }.
\end{align*}
\end{enumerate}
Where $\euclid =\sqrt{A}, \bregmanMax=1$ for the euclidean case, and $\euclid =1, \bregmanMax=\log A$ for the non-euclidean case.
\end{theorem}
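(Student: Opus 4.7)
The plan is to derive both convergence rates by a direct application of the fundamental inequality of Exact TRPO (the lemma labelled ``fundamental inequality of exact TRPO''), mimicking the proof structure already given for Uniform TRPO in Appendix~\ref{supp: proof of uniform TRPO}. The key simplification here, compared to the Uniform case, is that the fundamental inequality is already scalar-valued: the state-wise vector inequality has been contracted against the discounted visitation $d_{\mu,\pi^*}$, so no additional multiplication by $(I-\gamma P^{\pi^*})^{-1}$ is needed. Moreover, since $d_{\mu,\pi^*}$ is a probability distribution over~$\mathcal{S}$, any bound of the form $d_{\mu,\pi^*} \bregman{\pi^*}{\pi_k} \leq \norm{\bregman{\pi^*}{\pi_k}}_\infty \leq \omegaBound$ goes through without incurring the concentrability coefficient~$C^{\pi^*}$.

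For the unregularized case ($\lambda=0$), I would substitute $\lambda=0$ into the fundamental inequality to obtain
\[
    t_k(1-\gamma)(\mu v^{\pi_k} - \mu v^*) \leq d_{\mu,\pi^*}\bigl(\bregman{\pi^*}{\pi_k} - \bregman{\pi^*}{\pi_{k+1}}\bigr) + \frac{t_k^2 \gradBound^2}{2},
\]
then sum from $k=0$ to $N$. The Bregman terms telescope, and the residual $-d_{\mu,\pi^*}\bregman{\pi^*}{\pi_{N+1}}$ is nonpositive and can be dropped. Next, invoke the Exact TRPO policy improvement lemma (Lemma~\ref{lemma: policy improvement exact}) to replace $\sum_k t_k(\mu v^{\pi_k}-\mu v^*)$ by $(\mu v^{\pi_N}-\mu v^*)\sum_k t_k$ from below, divide by $(1-\gamma)\sum_k t_k$, and bound $d_{\mu,\pi^*}\bregman{\pi^*}{\pi_0}\leq \omegaBound$ using Lemma~\ref{lemma: bounds on Dw}. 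Finally, plug in the step size $t_k = (1-\gamma)/(\euclid\Cmax\sqrt{k+1})$ and use the standard bounds $\sum_{k=0}^N 1/\sqrt{k+1} = \Theta(\sqrt{N})$ and $\sum_{k=0}^N 1/(k+1) = \Theta(\log N)$ to obtain the claimed $\tilde O(1/\sqrt{N})$ rate.

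For the regularized case ($\lambda>0$), the trick is to multiply the fundamental inequality by $\lambda(k+2)$, so that with $t_k = 1/(\lambda(k+2))$ the term $t_k\lambda(k+2) = 1$ and the coefficient $(1-\lambda t_k)\lambda(k+2) = \lambda(k+1)$. This produces a telescoping pattern in the weighted Bregman distances $\lambda(k+1) d_{\mu,\pi^*}\bregman{\pi^*}{\pi_k} - \lambda(k+2) d_{\mu,\pi^*}\bregman{\pi^*}{\pi_{k+1}}$, while the noise term becomes $\gradBound^2/(2\lambda(k+1))$. Summing from $k=0$ to $N$, dropping the negative tail term, again using the improvement lemma to bound $(N+1)(\mu v^{\pi_N}_\lambda - \mu v^*_\lambda)(1-\gamma) \leq \sum_k (1-\gamma)(\mu v^{\pi_k}_\lambda - \mu v^*_\lambda)$, and dividing by $N+1$ yields the $\tilde O(\log N / N)$ bound once the harmonic sum is controlled.

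I do not anticipate a hard obstacle: the calculation is essentially the scalar analogue of the Uniform TRPO proof, with $d_{\mu,\pi^*}$ playing the role of the measure against which the vector inequalities were integrated. The only subtle points are (i) correctly tracking the factor $1-\gamma$ coming from the $(1-\gamma)\mu(I-\gamma P^{\pi^*})^{-1} = d_{\mu,\pi^*}$ identity used in the fundamental inequality, (ii) ensuring that the policy improvement lemma applies under the chosen learning rate schedules (which requires $\lambda t_k \leq 1$, easily verified for $t_k = 1/(\lambda(k+2))$), and (iii) bounding $\omega(\pi_k)-\omega(\pi_{k+1})$ in the regularized case by $\omegaBound$, exactly as done in the Uniform TRPO analysis. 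Substituting the explicit values of $\gradBound$ and $\omegaBound$ from Lemmas~\ref{lemma: usefull bounds exact case} and~\ref{lemma: bounds on Dw} then recovers the constants $\euclid$, $\bregmanMax$, $\Cmax$ and $\CmaxLambda$ in the final statement.
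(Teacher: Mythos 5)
Your proposal follows essentially the same route as the paper's own proof: apply the scalar fundamental inequality of Exact TRPO with $\pi=\pi^*$, telescope the $d_{\mu,\pi^*}$-weighted Bregman terms (after multiplying by $\lambda(k+2)$ in the regularized case), drop the nonpositive tail, invoke the Exact TRPO improvement lemma, and plug in $\gradBound$ and $\omegaBound$ with the stated step sizes. The only superfluous point is your item (iii): the exact fundamental inequality, unlike the Uniform one, carries no $\lambda t_k(\omega(\pi_k)-\omega(\pi_{k+1}))$ term, so no such bound is needed (and if it were present, bounding it by $\omegaBound$ as you propose would be harmless).
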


\subsubsection{The Unregularized case}\label{supp: exact TRPO unregularized proof}
\begin{proof}

Applying Lemma~\ref{lemma: mu exact fundamental inequality}  and $\lambda=0$ (the unregularized case),
\begin{align*}
    &t_k(1-\gamma) (\mu\vpik-\mu \vstar) \\
&\leq    d_{\mu,\pi^*}\br*{ \bregman{\pi^*}{\pi_k} - \bregman{ \pi^* }{\pi_{k+1}}} + \frac{t_k^2 \gradBound^2}{2}.
\end{align*}

Summing the above inequality over $k=0,1,...,N$, gives
\begin{align*}
    &\sum\limits_{k=0}^{N}  t_k(1-\gamma)(\mu \vpik- \mu\vstar) \\
    &\leq   d_{\mu,\pi^*}\bregman{\pistar}{\pi_0} - d_{\mu,\pi^*}\bregman{\pistar}{\pi_{N+1}} +  \sum\limits_{k=0}^{N}\frac{t_k^2 \gradBound^2}{2}\\
    &\leq  d_{\mu,\pi^*}\bregman{\pistar}{\pi_0} +  \sum\limits_{k=0}^{N}\frac{t_k^2 \gradBound^2}{2}\\
    &\leq  \omegaBound +  \sum\limits_{k=0}^{N}\frac{t_k^2 \gradBound^2}{2}.
\end{align*}

where in the second relation we used    $\bregman{\pistar}{\pi_{N+1}}\geq 0$ and thus $d_{\mu,\pi^*}\bregman{\pistar}{\pi_{N+1}}\geq 0$, and in the third relation Lemma \ref{lemma: bounds on Dw}.

By the improvement lemma (Lemma~\ref{lemma: policy improvement exact}), 
$$\mu(v^{\pi_N}-\vstar) \sum\limits_{k=0}^{N}  t_k \leq \sum\limits_{k=0}^{N}  t_k(\mu \vpik-\mu \vstar),$$

and by some algebraic manipulations, we get
\begin{align*}
    \mu v^{\pi_N}-\mu \vstar \leq \frac{1}{1-\gamma}
   \frac{\omegaBound +  \sum\limits_{k=0}^{N}\frac{t_k^2 \gradBound^2}{2}}{\sum\limits_{k=0}^{N} {t_k}}\\
   = \frac{1}{1-\gamma}
   \frac{\omegaBound +  \frac{ \gradBound^2}{2}\sum\limits_{k=0}^{N}t_k^2}{\sum\limits_{k=0}^{N} {t_k}},
\end{align*}
Plugging in the stepsizes $t_k = \frac{1}{\gradBound\sqrt{k}}$, we get, 
\begin{align*}
      \mu v^{\pi_N}-\mu \vstar &\leq \frac{\gradBound}{1-\gamma}
   \frac{2\omegaBound + \sum\limits_{k=0}^{N}\frac{1}{k}}{2\sum\limits_{k=0}^{N} \frac{1}{\sqrt{k}}}.
\end{align*}

Bounding the sums using (\citealp{beck2017first}, Lemma 8.27(a)) yields,
\begin{align*}
      \mu v^{\pi_N}-\mu \vstar &\leq O\br*{\frac{\gradBound}{1-\gamma}
   \frac{\omegaBound + \log{N}}{\sum\limits_{k=0}^{N} \frac{1}{\sqrt{k}}}}.
\end{align*}
Plugging the expressions for $\gradBound$ and $\omegaBound$ in Lemma \ref{lemma: usefull bounds exact case} and Lemma \ref{lemma: bounds on Dw}, we get for the euclidean case,
\begin{align*}
     \mu v^{\pi_N}-\mu \vstar \leq O\br*{\frac{\Cmax \sqrt{A}\log N}{(1-\gamma)^2\sqrt{N}} },
\end{align*}
and for the non-euclidean case,
\begin{align*}
    \mu v^{\pi_N}-\mu \vstar \leq O\br*{\frac{\Cmax(\log A +  \log N)}{(1-\gamma)^2\sqrt{N}} }.
\end{align*}

\end{proof}

\subsubsection{The Regularized case}\label{supp: exact TRPO regularized proof}

\begin{proof}
Applying Lemma~\ref{lemma: mu exact fundamental inequality} and setting $t_k= \frac{1}{\lambda(k+2)}$, we get,

\begin{align*}
&\frac{1-\gamma}{\lambda (k+2)} \br*{\mu v^{\pi_k}_\lambda- \mu v^{\pi^*}_\lambda } \\
&\leq   d_{\mu,\pi^*}\br*{  (1- \frac{1}{(k+2)})\bregman{\pi^*}{\pi_k} - \bregman{ \pi^* }{\pi_{k+1}}} +\frac{\gradBound^2(k;\lambda)}{2\lambda^2(k+2)^2}\\
&\leq   d_{\mu,\pi^*}\br*{  \frac{k+1}{k+2}\bregman{\pi^*}{\pi_k} - \bregman{ \pi^* }{\pi_{k+1}}} +\frac{\gradBound^2(N;\lambda)}{2\lambda^2(k+2)^2},
\end{align*}

where in the second relation we used that fact  $\gradBound(k;\lambda)$ is a non-decreasing function of $k$ for both the euclidean and non-euclidean cases. 

Next, multiplying both sides by $\lambda (k+2)$, summing both sides from $k=0$ to $N$ and using the linearity of expectation, we get,

\begin{align*}
    \sum_{k=0}^{N} (1-\gamma)\br*{
    \mu \vHpik-\mu \vHstar}&\leq  d_{\mu,\pi^*}\br*{  \bregman{\pi^*}{\pi_0} - (N+2)\bregman{ \pi^* }{\pi_{N+1}}} +\sum_{k=0}^{N}\frac{\gradBound^2(N;\lambda)}{2\lambda (k+2)}\\
    &\leq  d_{\mu,\pi^*}  \bregman{\pi^*}{\pi_0} +\sum_{k=0}^{N}\frac{\gradBound^2(N;\lambda)}{2\lambda (k+2)}\\
    &\leq  \omegaBound +\sum_{k=0}^{N}\frac{\gradBound^2(N;\lambda)}{2\lambda (k+2)},
\end{align*}
where the second relation holds by the positivity of the Bregman distance, and the third relation by Lemma \ref{lemma: bounds on Dw} for uniformly initialized $\pi_0$.

Bounding $\sum_{k=0}^N\frac{1}{k+2}\leq O(\log N)$, we get
\begin{align*}
    \sum_{k=0}^{N} 
    \mu\vHpik-\mu\vHstar  \leq O\br*{\frac{\omegaBound}{(1-\gamma)} +\frac{\gradBound^2(N;\lambda) \log N} {\lambda (1-\gamma)}}
\end{align*}

Since $ N \br*{\mu v_\lambda^{\pi_N}-\mu \vstar} \ \leq \sum\limits_{k=0}^{N}  \mu \vpik-\mu \vstar$
by Lemma~\ref{lemma: policy improvement exact} and some algebraic manipulations, we obtain

\begin{align*}
    \mu v_\lambda^{\pi_N}-\mu\vHstar \leq O\br*{\frac{\omegaBound}{(1-\gamma)N} +\frac{\gradBound^2(N;\lambda) \log N} {\lambda (1-\gamma)N}}.
\end{align*}

By Plugging the bounds $\omegaBound, \gradBound$ and $\CmaxLambda$, we get in the euclidean case,
\begin{align*}
    \mu v_\lambda^{\pi_N}-\mu\vHstar \leq O\br*{\frac{\br*{\CmaxSquare+\lambda^2} A \log N} {\lambda (1-\gamma)^3 N}},
\end{align*}
and in the non-euclidean case,
\begin{align*}
    \mu v_\lambda^{\pi_N}-\mu \vHstar \leq O\br*{\frac{(\mathrm{C^2_{\text{max}}}+\lambda^2 \log^2 A) \log^3 N} {\lambda (1-\gamma)^3 N}}.
\end{align*}
 \end{proof}

\section{Sample-Based Trust Region Policy Optimization}\label{supp: approximate Mirror descent}

Sample-Based TRPO is a sample-based version of Exact TRPO which was analyzed in previous section (see Appendix~\ref{supp: mu exact policy mirror descenet}). Unlike Uniform TRPO (see Appendix~\ref{supp: uniform TRPO proof}) which accesses the entire state and computes $v^\pi\in \mathbb{R}^S$ in each iteration, Sample-Based TRPO requires solely the ability to sample from an MDP using a $\nu$-restart model. Similarly to~\cite{kakade2003sample} it requires Assumption~\ref{assumption: stochastic MD} to be satisfied. Thus, Sample-Based TRPO operates under much more realistic assumptions, and, more importantly, puts formal ground to first-order gradient based methods such as NE-TRPO \cite{schulman2015trust}, which was so far considered a heuristic method motivated by CPI \cite{kakade2002approximately}. 

In this section we prove Sample-Based TRPO (Section~\ref{sec: approximate MD}, Theorem~\ref{theorem: Sample-Based TRPO All}) converges to an approximately optimal solution with high probability. The analysis in this section relies heavily on the analysis of Exact TRPO in Appendix~\ref{supp: mu exact policy mirror descenet}. We now describe the content of each of the subsections: First, in Appendix~\ref{supp: Exact to Approximate}, we show the connections between Sample-Based TRPO (using unbiased estimation) and Exact TRPO by proving Proposition~\ref{proposition: exact to approximate}. In Appendix~\ref{supp: approximate update rule TRPO}, we analyze the Sample-Based TRPO update rule and formalize the truncated sampling process. In Appendix~\ref{supp: proof sketch Sample-Based TRPO}, we give a detailed proof sketch of the convergence theorem for Sample-Based TRPO, in order to ease readability. Then, we derive a fundamental inequality that will be used to prove the convergence of both unregularized and regularized versions (Appendix \ref{supp: Sample-Based TRPO funamental inequality}). This inequality is almost identical to the fundamental inequality derived for Exact TRPO (Lemma~\ref{lemma: mu exact fundamental inequality}), but with an additional term which arises due to the approximation error. In Appendix~\ref{supp: Approximation Error bounds}, we analyze the sample complexity needed to bound this approximation error. We go on to prove the convergence rates of Sample-Based TRPO for both the unregularized and regularized version (Appendix~\ref{supp: proof of Sample-Based TRPO}). Finally, in Appendix~\ref{supp: sample complexity of Approximate Mirror Descent}, we calculate the overall sample complexity of both the unregularized and regularized Sample-Based TRPO and compare it to CPI.

\subsection{Relation Between Exact and Sample-Based TRPO}\label{supp: Exact to Approximate}

Before diving into the proof of Sample-Based TRPO, we prove Proposition~\ref{proposition: exact to approximate}, which connects the update rules of Exact TRPO and Sample-Based TRPO (in case of an unbiased estimator for $\qHpik$):
\ExactToApproximateupdate*

\begin{proof}
For any $m=1,...,M$, we take expectation over the sampling process given the filtration $\mathcal{F}_k$, i.e., $s_m \sim d_{\nu,\pi_k},a_m\sim U(\aset),\hatqHpik\sim\qHpik$ (we assume here an unbiased estimation process where we do not truncate the sample trajectories),

\begin{align*}
 & \E \brs*{\inner{ \hat \nabla \nu\vHpik[m], \pi(\cdot\mid s_m) - \pi_k(\cdot\mid s_m) } + \frac{1}{t_k(1-\gamma)}\bregmanS{s_m}{\pi}{\pi_k} \mid \mathcal{F}_k} \\
  & = \E\brs*{\frac{1}{1-\gamma}\inner{A\hatqHpik(s_m,\cdot,m)\mathbbm{1}\brc*{\cdot=a_m}+\nabla\omegaS{s_m}{\pi_k},\pi(\cdot \mid s_m)- \pi_k(\cdot \mid s_m)} +\frac{1}{t_k(1-\gamma)}\bregmanS{s_m}{\pi}{\pi_k} \mid \mathcal{F}_k } 
 \\ 
 & = \frac{1}{1-\gamma} \E \brs*{{\E}_{\hatqHpik}\brs*{\inner{A \hatqHpik(s_m,\cdot,m)\mathbbm{1}\brc*{\cdot=a_m} + \nabla\omegaS{s_m}{\pi_k},\pi(\cdot \mid s_m)- \pi_k(\cdot \mid s_m)} +\frac{1}{t_k}\bregmanS{s_m}{\pi}{\pi_k}\mid s_m,a_m } \mid \mathcal{F}_k } \\
 & = \frac{1}{1-\gamma} \E  \brs*{\inner{{\E}_{\hatqHpik} \brs*{ \hatqHpik(s_m,\cdot,m)\mathbbm{1}\brc*{\cdot=a_m} \mid s_m,a_m} +\nabla\omegaS{s_m}{\pi_k},\pi(\cdot \mid s_m)- \pi_k(\cdot \mid s_m)} +\frac{1}{t_k}\bregmanS{s_m}{\pi}{\pi_k} \mid \mathcal{F}_k } \\
 & = \frac{1}{1-\gamma} \E \brs*{\inner{A \qHpik(s_m,\cdot)\mathbbm{1}\brc*{\cdot=a_m}+\nabla\omegaS{s_m}{\pi_k},\pi(\cdot \mid s_m)- \pi_k(\cdot \mid s_m)} +\frac{1}{t_k}\bregmanS{s_m}{\pi}{\pi_k} \mid \mathcal{F}_k } \\
 & = (*),
 \end{align*} 
 where first transition is by the definition of $\hat \nabla \nu \vHpik[m]$, the second by the smoothing theorem, the third transition is due to the linearity of expectation and the fourth transition is by taking the expectation and due to the fact that $\mathbbm{1}\brc*{a=a_m}$ is zero for any $a \neq a_m$.

\begin{align*}
 & (*) = \frac{1}{1-\gamma} \E \brs*{\inner{A \qHpik(s_m,\cdot)\mathbbm{1}\brc*{\cdot=a_m}+\nabla\omegaS{s_m}{\pi_k},\pi(\cdot \mid s_m)- \pi_k(\cdot \mid s_m)} +\frac{1}{t_k}\bregmanS{s_m}{\pi}{\pi_k} \mid \mathcal{F}_k } \\
 & = \frac{1}{1-\gamma} \E_{s_m} \brs*{ \sum_{a_m \in \aset} \frac{1}{A} \inner{A\qHpik(s_m,\cdot)\mathbbm{1}\brc*{\cdot=a_m}+\nabla\omegaS{s_m}{\pi_k},\pi(\cdot \mid s_m)- \pi_k(\cdot \mid s_m)} +\frac{1}{t_k}\bregmanS{s_m}{\pi}{\pi_k} \mid \mathcal{F}_k } \\
 & = \frac{1}{1-\gamma} \E_{s_m} \brs*{ \inner{\sum_{a_m \in \aset}\frac{1}{A}A\qHpik(s_m,\cdot)\mathbbm{1}\brc*{\cdot=a_m}+\nabla\omegaS{s_m}{\pi_k},\pi(\cdot \mid s_m)- \pi_k(\cdot \mid s_m)} +\frac{1}{t_k}\bregmanS{s_m}{\pi}{\pi_k} \mid \mathcal{F}_k } \\
 & = \frac{1}{1-\gamma} \E_{s_m} \brs*{ \inner{\qHpik(s_m,\cdot)\sum_{a_m \in \aset}\mathbbm{1}\brc*{\cdot=a_m}+\nabla\omegaS{s_m}{\pi_k},\pi(\cdot \mid s_m)- \pi_k(\cdot \mid s_m)} +\frac{1}{t_k}\bregmanS{s_m}{\pi}{\pi_k} \mid \mathcal{F}_k } \\
 & = \frac{1}{1-\gamma} \E_{s_m} \brs*{ \inner{\qHpik(s_m,\cdot)+\nabla\omegaS{s_m}{\pi_k},\pi(\cdot \mid s_m)- \pi_k(\cdot \mid s_m)} +\frac{1}{t_k}\bregmanS{s_m}{\pi}{\pi_k} \mid \mathcal{F}_k } \\
 & = (**).
 \end{align*}
where the second transition is by taking the expectation over $a_m$, the third transition is by the linearity of the inner product and due to the fact that $\inner*{\nabla\omegaS{s_m}{\pi_k},\pi(\cdot \mid s_m)- \pi_k(\cdot \mid s_m)}$ and $\bregmanS{s_m}{\pi}{\pi_k}$ are independent of $a_m$.

Now, taking the expectation over $s_m\sim d_{\nu,\pi_k}$, 
\begin{align*}
 & (**) = \frac{1}{1-\gamma} \E_{s_m} \brs*{ \inner{\qHpik(s_m,\cdot)+\nabla\omegaS{s_m}{\pi_k},\pi(\cdot \mid s_m)- \pi_k(\cdot \mid s_m)} +\frac{1}{t_k}\bregmanS{s_m}{\pi}{\pi_k} \mid \mathcal{F}_k }  \\
 & =  \frac{1}{1-\gamma}\sum_s d_{\nu,\pi_k}(s)\br*{ \inner{\qHpik(s,\cdot)+\nabla\omegaS{s}{\pi_k},\pi(\cdot \mid s)- \pi_k(\cdot \mid s)} +\frac{1}{t_k}\bregmanS{s}{\pi}{\pi_k}}\\
 &= \frac{1}{1-\gamma} d_{\nu,\pi_k} \inner{\qHpik + \nabla\omega(\pi_k) ,\pi- \pi_k} +\frac{1}{t_k}\frac{1}{1-\gamma} d_{\nu,\pi_k}\bregman{\pi}{\pi_k} \\
 & = \frac{1}{1-\gamma} d_{\nu,\pi_k} \br*{T_\lambda^{\pi} \vHpik - \vHpik  - \lambda \bregman{\pi}{\pi_k} } + \frac{1}{t_k}\frac{1}{1-\gamma} d_{\nu,\pi_k}\bregman{\pi}{\pi_k} \\
 & =  \inner*{\nabla \nu \vHpik, \pi - \pi_k} + \frac{1}{t_k(1-\gamma)} d_{\nu,\pi_k}\bregman{\pi}{\pi_k} ,
 \end{align*}
 where the second transition is by taking the expectation w.r.t. to $s_m$, the the fourth is by using the lemma~\ref{lemma: connection between bellman and q function} which connects the bellman operator and the $q$-functions, and the last transition is due to \eqref{eq: Regularized value directional derivative second} in Proposition \ref{prop: directional derivative for MDPs}, which concludes the proof.

\end{proof}

\subsection{Sample-Based TRPO Update Rule}\label{supp: approximate update rule TRPO}

In each step, we solve the following optimization problem~\eqref{eq: approximate scalar TRPO}:
\begin{align*}
    \pi_{k+1} &\in \argmin_{\pi\in \ssimplex} \Big\{ \frac{1}{M}\sum_{m=1}^M  \inner{ \hat \nabla \nu\vHpik[m], \pi(\cdot\mid s_m) - \pi_k(\cdot\mid s_m) }  + \frac{1}{t_k(1-\gamma)}  \bregmanS{s_m}{\pi}{\pi_k}  \Big\} \\
     &\in \argmin_{\pi\in \ssimplex} \brc*{ \frac{1}{M}\sum_{m=1}^{M} \begin{pmatrix} \inner*{ A \hatqHpik(s_m,\cdot,m)\mathbbm{1}\{\cdot = a_m\} +\lambda \nabla \omegaS{s_m}{\pik} , \pi(\cdot \mid s_m) - \pi_k (\cdot \mid s_m)}  + \frac{1}{t_k}  \bregmanS{s_m}{\pi}{\pi_k} \end{pmatrix} } \\
     &\in \argmin_{\pi\in \ssimplex} \brc*{ \sum_{s\in \mathcal{S}}\sum_{m=1}^{M} \mathbbm{1}\brc*{s=s_m} \begin{pmatrix*} \inner*{ A \hatqHpik(s_m,\cdot,m)\mathbbm{1}\{\cdot = a_m\}+\lambda \nabla \omegaS{s_m}{\pik},\pi(\cdot \mid s_m) - \pi_k (\cdot \mid s_m)} \\+ \frac{1}{t_k}  \bregmanS{s_m}{\pi}{\pi_k} \end{pmatrix*}} ,
\end{align*}
where $s_m\sim d_{\nu,\pi_k}(\cdot)$ , $a_m\sim U(\mathcal{A})$, and $\hatqHpik(s_m,a_m,m)$ is the truncated Monte Carlo estimator of $q_\lambda^{\pi_k}(s_m,a_m)$ in the $m$-th trajectory. The notation $\hatqHpik(s_m,\cdot,m)\mathbbm{1}\brc*{\cdot=a_m}$ is a vector with the estimator value at the index $a_m$, and zero elsewhere. Also, we remind the reader we use the notation $A\triangleq |\mathcal{A}|$.
We can obtain a sample $s_m\sim d_{\nu,\pi_k}(\cdot)$ by a similar process as described in~\cite{kakade2002approximately,kakade2003sample}. Draw a start state $s$ from the $\nu$-restart distribution. Then, $s_m=s$ is chosen w.p. $\gamma$. Otherwise, w.p. $1-\gamma$, an action is sampled according to $a\sim\pi_k(s)$ to receive the next state $s$. This process is repeated until $s_m$ is chosen. If the time $T=\frac{1}{1-\gamma}\log \frac{\epsilon}{8 \HoeffdingBound(k,\lambda)}$ is reached, we accept the current state as $s_m$. Note that $\HoeffdingBound(k,\lambda)$ is defined in Lemma~\ref{lemma: one step approximation error bound unbiased}, and $\epsilon$ is the required final error. Finally, when $s_m$ is chosen, an action $a_m$ is drawn from the uniform distribution, and then the trajectory is unrolled using the current policy $\pi_k$ for $T=\frac{1}{1-\gamma}\log \frac{\epsilon}{8 \HoeffdingBound(k,\lambda)}$ time-steps, to calculate $\hatqHpik(s_m,a_m,m)$. Note that this introduces a bias into the estimation of $\qHpik$ \cite{kakade2003sample}[Sections 2.3.3 and 7.3.4]. Lastly, note that the $A$ factor in the estimator is due to importance sampling. 

First, the update rule of Sample-Based TRPO can be written as a state-wise update rule for any $s \in \sset$. Observe that,
\begin{align}
    \pi_{k+1} &\in \argmin_{\pi\in \ssimplex} \brc*{ \sum_{m=1}^{M}  \inner*{ A \hatqHpik(s_m,\cdot,m)\mathbbm{1}\{\cdot = a_m\} +\lambda \nabla \omegaS{s_m}{\pik} , \pi(\cdot \mid s_m) - \pi_k (\cdot \mid s_m)}  + \frac{1}{t_k}  \bregmanS{s_m}{\pi}{\pi_k} } \nonumber \\
    &= \argmin_{\pi\in \ssimplex} \brc*{ \sum_{s\in\mathcal{S}}\sum_{m=1}^{M}\mathbbm{1}\brc*{s=s_m} \begin{pmatrix}  \inner*{  A \hatqHpik(s_m,\cdot,m)\mathbbm{1}\{\cdot = a_m\} +\lambda \nabla \omegaS{s_m}{\pik} , \pi(\cdot \mid s_m) - \pi_k (\cdot \mid s_m)} \\ +  \frac{1}{t_k}  \bregmanS{s_m}{\pi}{\pi_k} \end{pmatrix}}, \label{eq: state-wise approximate update rule} 
\end{align}
The first relation is the definition of the update rule~\eqref{eq: approximate scalar TRPO} without the constant factor $\frac{1}{M}$. See that multiplying the optimization problem by the constant $M$ does not change the minimizer. In the second relation we used the fact that summation on $\sum_{s} \mathbbm{1}\brc*{s=s_m}$ leaves the optimization problem unchanged (as the indicator function is $0$ for all states that are not $s_m$).

Thus, using this update rule we can solve the optimization problem individually per $s\in \sset$,
\begin{align}
    \pi_{k+1}(\cdot|s) &= \argmin_{\pi\in \simplex} \brc*{\sum_{m=1}^{M}\mathbbm{1}\brc*{s=s_m} \begin{pmatrix}  \inner*{  A \hatqHpik(s,\cdot,m)\mathbbm{1}\{\cdot = a_m\} +\lambda \nabla \omegaS{s}{\pik} , \pi - \pi_k (\cdot \mid s)} +  \frac{1}{t_k}  \bregmanS{s}{\pi}{\pi_k} \end{pmatrix}}. \label{eq: state-wise approximate update rule final}
\end{align}
Note that using this representation optimization problem, the solution for states which were not encountered in the $k$-th iteration, $s \notin \mathcal{S}_M^k$, is arbitrary. To be consistent, we always choose to keep the current policy, $\pi_{k+1}(\cdot\mid s) = \pi_k(\cdot\mid s)$.

Now, similarly to Uniform and Exact TRPO, the update rule of Sample-Based TRPO can be written such that the optimization problem is solved individually per visited state $s\in \mathcal{S}^k_M$. This results in the final update rule used in Algorithm~\ref{alg: mu Sample-Based TRPO}.

To prove this, let $n(s)=\sum_{a}n(s,a)$ be the number of times the state $s$ was observed at the $k$-th episode. Using this notation and \eqref{eq: state-wise approximate update rule}, the update rule has the following equivalent forms,

\begin{align}
    \pi_{k+1} &\in \argmin_{\pi\in \ssimplex} \brc*{ \sum_{m=1}^{M}  \inner*{ A \hatqHpik(s_m,\cdot,m)\mathbbm{1}\{\cdot = a_m\} +\lambda \nabla \omegaS{s_m}{\pik} , \pi(\cdot \mid s_m) - \pi_k (\cdot \mid s_m)}  + \frac{1}{t_k}  \bregmanS{s_m}{\pi}{\pi_k} } \nonumber \\
    &= \argmin_{\pi\in \ssimplex} \brc*{ \sum_{s\in\mathcal{S}}\sum_{m=1}^{M}\mathbbm{1}\brc*{s=s_m} \begin{pmatrix}  \inner*{  A \hatqHpik(s_m,\cdot,m)\mathbbm{1}\{\cdot = a_m\} +\lambda \nabla \omegaS{s_m}{\pik} , \pi(\cdot \mid s_m) - \pi_k (\cdot \mid s_m)} \\ +  \frac{1}{t_k}  \bregmanS{s_m}{\pi}{\pi_k} \end{pmatrix}} \nonumber \\
    &= \argmin_{\pi\in \ssimplex} \brc*{ \sum_{s\in\mathcal{S}} \begin{pmatrix}  \inner*{  \sum_{m=1}^{M}\mathbbm{1}\brc*{s=s_m} A \hatqHpik(s_m,\cdot,m)\mathbbm{1}\{\cdot = a_m\} +n(s)\lambda \nabla \omegaS{s}{\pik} , \pi(\cdot \mid s) - \pi_k (\cdot \mid s)} \\ + n(s) \frac{1}{t_k}  \bregmanS{s}{\pi}{\pi_k} \end{pmatrix}} \nonumber\\
    &= \argmin_{\pi\in \ssimplex} \brc*{ \sum_{s\in\mathcal{S}_M^k}  \begin{pmatrix}  \inner*{ \sum_{m=1}^{M}\mathbbm{1}\brc*{s=s_m} A \hatqHpik(s_m,\cdot,m)\mathbbm{1}\{\cdot = a_m\} +n(s) \lambda \nabla \omegaS{s}{\pik} , \pi(\cdot \mid s) - \pi_k (\cdot \mid s)}\\  + n(s) \frac{1}{t_k}  \bregmanS{s}{\pi}{\pi_k} \end{pmatrix}} \nonumber\\
    &= \argmin_{\pi\in \ssimplex} \brc*{ \sum_{s\in\mathcal{S}_M^k} \begin{pmatrix}  \inner*{ \frac{1}{n(s)} \sum_{m=1}^{M}\mathbbm{1}\brc*{s=s_m} A \hatqHpik(s_m,\cdot,m)\mathbbm{1}\{\cdot = a_m\} +\lambda \nabla \omegaS{s}{\pik} , \pi(\cdot \mid s) - \pi_k (\cdot \mid s)} \\ + \frac{1}{t_k}  \bregmanS{s}{\pi}{\pi_k} \end{pmatrix}}.\label{supp eq: approximate simplified first}
\end{align}
In the third relation we used the fact for any $\pi,\pi_k$
\begin{align*}
    \sum_{s}\sum_{m=1}^{M} \bregmanS{s_m}{\pi}{\pi_k} \mathbbm{1}\brc*{s=s_m} &=\sum_{s}\bregmanS{s}{\pi}{\pi_k} \sum_{m=1}^{M} \mathbbm{1}  \brc*{s=s_m}=\sum_{s}\bregmanS{s}{\pi}{\pi_k}n(s).
\end{align*}
The fourth relation holds as the optimization problem is not affected by $s\notin \mathcal{S}_M^k$, and the last relation holds by dividing by $n(s)>0$ as $s\in\mathcal{S}_M^k$ and using linearity of inner product.

Lastly, we observe that~\eqref{supp eq: approximate simplified first} is a sum of functions of $\pi(\cdot\mid s)$, i.e., 
\begin{align*}
    \pi_{k+1} &\in \arg\min_{\pi\in \ssimplex}\brc*{ \sum_{s\in \mathcal{S}_M^k} f(\pi(\cdot\mid s))},
\end{align*}
where $f= \inner{g_s,\pi(\cdot\mid s)}+\frac{1}{t_k}  \bregmanS{s}{\pi}{\pi_k}$, $g_s\in \mathbb{R}^A$ is the vector inside the inner product of~\eqref{supp eq: approximate simplified first}. Meaning, the minimization problem is a sum of independent summands. Thus, in order to minimize the function on $\ssimplex$ it is enough to minimize independently each one of the summands. From this observation, we conclude that the update rule~\eqref{eq: approximate scalar TRPO} is equivalent to update the policy for all $s\in\mathcal{S}_M^k$ by
\begin{align}\label{eq: update rule approximate final}
    \pi_{k+1}(\cdot\mid s) &\in \arg\min_{\pi\in \simplex}\brc*{ \begin{pmatrix} \inner*{ \frac{1}{n(s)} \sum_{m=1}^{M}\mathbbm{1}\brc*{s=s_m} A \hatqHpik(s_m,\cdot,m)\mathbbm{1}\{\cdot = a_m\} +\lambda \nabla \omegaS{s}{\pik} , \pi - \pi_k (\cdot \mid s)} \\ + \frac{1}{t_k}  \bregmanS{s}{\pi}{\pi_k} \end{pmatrix} },
\end{align}
Finally, by plugging in $\hat{q}^{\pi_k}_\lambda(s,a)=\frac{1}{ n(s)}\sum_{i=1}^{n(s,a)} \hat{q}_\lambda^{\pi_k}(s,a, m_i)$, we get
\begin{align*}
    \pi_{k+1}(\cdot\mid s) &\in \arg\min_{\pi\in \simplex}\brc*{ t_k \inner*{ \hatqHpik(s,\cdot) + \lambda \nabla\omegaS{s}{\pi_k}, \pi}  + \bregmanS{s}{\pi}{\pi_k} },
\end{align*}
where $m_i$ is the trajectory index of the $i$-th occurrence of the state $s$.

\subsection{Proof Sketch of Theorem~\ref{theorem: Sample-Based TRPO All}}\label{supp: proof sketch Sample-Based TRPO}

In order to keep things organized for an easy reading, we first go through the proof sketch in high level, which serves as map for reading the proof of Theorem~\ref{theorem: Sample-Based TRPO All} in the following sections.

\begin{enumerate}
    \item We use the Sample-Based TRPO optimization problem described in \ref{supp: approximate update rule TRPO}, to derive a fundamental inequality in Lemma~\ref{lemma: mu approximate fundamental inequality} for the sample-based case (in Appendix~\ref{supp: Sample-Based TRPO funamental inequality}): 
    
    \begin{enumerate}
    
        \item We derive a state-wise inequality by applying similar analysis to Exact TRPO, but for the Sample-Based TRPO optimization problem. By adding and subtracting a term which is similar to \eqref{eq: lemma improvement stochastic eq 1} in the state-wise inequality of Exact TRPO (Lemma~\ref{lemma: mu exact md point-wise inequality}), we write this inequality as a sum between the expected error and an approximation error term. 
        \item For each state, we employ importance sampling of $\frac{d_{\mu,\pi^*}(s)}{d_{\nu,\pi_k}(s)}$ to relate the derived state-wise inequality, to a global guarantee w.r.t. the optimal policy $\pi^*$ and measure $\mu$. This importance sampling procedure is allowed by assumption~\ref{assumption: stochastic MD}, which states that for any $s$ such that $d_{\mu, \pi^*}(s)>0$ it also holds that $\nu(s)>0$, and thus $d_{\nu, \pi_k}(s)>0$ since  $d_{\nu, \pi_k}(s) \geq (1-\gamma)\nu(s)$.
        
        \item By summing over all states we get the required fundamental inequality which resembles the fundamental inequality of Exact TRPO with an additional term due to the approximation error.
    \end{enumerate}

    \item In Appendix \ref{supp: Approximation Error bounds}, we show that the approximation error term is made of two sources of errors: (a) a sampling error due to the finite number of trajectories in each iteration; (b) a truncation error due to the finite length of each trajectory, even in the infinite-horizon case.
    \begin{enumerate} 
    \item In Lemma~\ref{lemma: one step approximation error bound unbiased} we deal with the sampling error. We show that this error is caused by the difference between an empirical mean of i.i.d. random variables and their expected value. Using Lemma~\ref{lemma: usefull bounds approximate} and Lemma~\ref{lemma: difference of gradient of omegas approximate}, we show that these random variables are bounded, and also that they are proportional to the step size $t_k$. Then, similarly to \cite{kakade2003sample}, we use Hoeffding's inequality and the union bound over the policy space (in our case, the space of deterministic policies), in order to bound this error term uniformly. This enables us to find the number of trajectories needed in the $k$-th iteration to reach an error proportional to $C^{\pi^*}t_k\epsilon = \norm{\frac{d_{\mu,\pi^*}}{\nu}}_\infty t_k \epsilon$ with high probability. The common concentration efficient $C^{\pi^*}$, arises due to  $\frac{d_{\mu,\pi^*}(s)}{d_{\nu,\pi_k}(s)}$, the importance sampling ratio used for the global convergence guarantee. 
    \item In Lemma~\ref{lemma: truncation error bound} we deal with the truncation error. We show that we can bound this error to be proportional to $C^{\pi^*}t_k\epsilon$, by using $O\br*{\frac{1}{1-\gamma}}$ samples in each trajectory.
    \end{enumerate}
    Finally, in Lemma~\ref{lemma: cumulative approximation error}, we use the union bound over all $k\in \mathbb{N}$ in order to uniformly bound the error propagation over $N$ iterations of Sample-Based TRPO.
    
    \item In Appendix~\ref{supp: proof of Sample-Based TRPO} we use a similar analysis to the one used for the rates guarantees of Exact TRPO (Appendix~\ref{supp: exact TRPO unregularized proof}), using the above results. The only difference is the approximation term which we bound in~\ref{supp: Approximation Error bounds}. There, we make use of the fact that the approximation term is proportional to the step size $t_k$ and thus decreasing with the number of iterations, to prove a bounded approximation error for any $N$. 
    \item Lastly, in Appendix~\ref{supp: sample complexity of Approximate Mirror Descent}, we calculate the overall sample complexity -- previously we bounded the number of needed iterations and the number of samples needed in every iteartion --  for each of the four cases of Sample-Based TRPO (euclidean vs. non-euclidean, unregularized vs. regularized).
\end{enumerate}

\subsection{Fundamental Inequality of Sample-Based TRPO}\label{supp: Sample-Based TRPO funamental inequality}

\begin{lemma}[sample-based state-wise inequality]\label{lemma: mu approximate md point-wise inequality} 
Let $\{\pi_k\}_{k\geq 0}$ be the sequence generated by Aproximate TRPO using stepsizes $\{t_k\}_{k\geq0}$.  Then, for all states $s$ for which $d_{\nu,\pi_k}(s)>0$ the following inequality holds for all $\pi\in \ssimplex$,
\begin{align*}
        0 & \leq t_k\br*{T_\lambda^{\pi}v_\lambda^{\pi_k}(s) - v_\lambda^{\pi_k}(s)} + \frac{t_k^2 \gradBound^2(k;\lambda)}{2} +(1-\lambda t_k) \bregmanS{s}{\pi}{\pi_k} - \bregmanS{s}{ \pi }{\pi_{k+1}} + \epsilon_k(s,
        \pi).
\end{align*}
where $h_\omega$ is defined at the third claim of Lemma \ref{lemma: usefull bounds approximate}. 
\end{lemma}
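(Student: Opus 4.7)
The plan is to mimic the proof of Lemma~\ref{lemma: mu exact md point-wise inequality} (the exact state-wise inequality) almost line by line, and simply carry along the extra term that comes from using the empirical $\hat q_\lambda^{\pi_k}(s,\cdot)$ in place of the exact $q_\lambda^{\pi_k}(s,\cdot)$. The resulting residual will be what we call $\epsilon_k(s,\pi)$, to be bounded later in Appendix~\ref{supp: Approximation Error bounds}.

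First, I would invoke the first-order optimality condition for the sample-based update rule~\eqref{eq: update rule approximate final}, which, for any $s\in\mathcal{S}^k_M$ and any $\pi\in\simplex$, yields
\[
0\le \inner*{t_k\bigl(\hat q_\lambda^{\pi_k}(s,\cdot)+\lambda\nabla\omegaS{s}{\pi_k}\bigr)+\nabla_{\pi_{k+1}}\bregmanS{s}{\pi_{k+1}}{\pi_k},\,\pi-\pi_{k+1}(\cdot\mid s)}.
\]
For states $s\in\{s':d_{\nu,\pi_k}(s')>0\}\setminus\mathcal{S}^k_M$ (i.e.\ visited with positive probability but not sampled in this batch), recall that Sample-Based TRPO sets $\pi_{k+1}(\cdot\mid s)=\pi_k(\cdot\mid s)$; in that case the claim reduces to $0\le 0$ after absorbing the trivial difference into $\epsilon_k(s,\pi)$.

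Next, I would split the stochastic inner product by adding and subtracting the exact $q$-function:
\[
\inner*{\hat q_\lambda^{\pi_k}(s,\cdot)+\lambda\nabla\omegaS{s}{\pi_k},\pi-\pi_{k+1}(\cdot\mid s)}=\inner*{q_\lambda^{\pi_k}(s,\cdot)+\lambda\nabla\omegaS{s}{\pi_k},\pi-\pi_{k+1}(\cdot\mid s)}+\inner*{\hat q_\lambda^{\pi_k}(s,\cdot)-q_\lambda^{\pi_k}(s,\cdot),\pi-\pi_{k+1}(\cdot\mid s)}.
\]
For the first (exact) inner product I reuse the calculation~\eqref{eq: from inner product to operators} from the proof of Lemma~\ref{lemma: mu exact md point-wise inequality}: it splits further as $\inner{q_\lambda^{\pi_k}(s,\cdot)+\lambda\nabla\omegaS{s}{\pi_k},\pi-\pi_k(\cdot\mid s)}+\inner{q_\lambda^{\pi_k}(s,\cdot)+\lambda\nabla\omegaS{s}{\pi_k},\pi_k(\cdot\mid s)-\pi_{k+1}(\cdot\mid s)}$. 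Lemma~\ref{lemma: connection between bellman and q function} rewrites the first piece as $T_\lambda^\pi v_\lambda^{\pi_k}(s)-v_\lambda^{\pi_k}(s)-\lambda\bregmanS{s}{\pi}{\pi_k}$, while Fenchel's inequality on the second piece produces $\tfrac{t_k^2}{2}\gradBound^2(k;\lambda)$ plus $\tfrac{1}{2}\norm{\pi_k-\pi_{k+1}}^2$, exactly as in the exact case (using Lemma~\ref{lemma: usefull bounds approximate} for the dual-norm bound rather than Lemma~\ref{lemma: usefull bounds exact case}). The gradient-of-Bregman term is then handled by the three-points lemma (Lemma~\ref{lemma: three-points}) and by strong convexity of $B_\omega$, yielding $\bregmanS{s}{\pi}{\pi_k}-\bregmanS{s}{\pi}{\pi_{k+1}}-\tfrac{1}{2}\norm{\pi_k-\pi_{k+1}}^2$; the two $\tfrac{1}{2}\norm{\pi_k-\pi_{k+1}}^2$ pieces cancel, leaving the claimed structure after folding in the $\lambda t_k$-term.

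Finally, I would define
\[
\epsilon_k(s,\pi)\;:=\;t_k\inner*{\hat q_\lambda^{\pi_k}(s,\cdot)-q_\lambda^{\pi_k}(s,\cdot),\,\pi-\pi_{k+1}(\cdot\mid s)},
\]
which is precisely the leftover term after the above manipulations, and rearrange to obtain the stated inequality. There is no real obstacle here beyond careful bookkeeping: the only subtlety, which I would flag rather than resolve at this stage, is that $\epsilon_k(s,\pi)$ depends on $\pi_{k+1}$, so when we later try to bound $\epsilon_k(s,\pi)$ uniformly in Appendix~\ref{supp: Approximation Error bounds} we must control it uniformly over the (finite) set of deterministic policies via Hoeffding plus a union bound; this is also where Lemma~\ref{lemma: difference of gradient of omegas approximate} will be needed, to ensure that the relevant bounded random variables scale with $t_k$ rather than remain $O(1)$.
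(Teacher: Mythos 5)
Your skeleton is the same as the paper's (first-order optimality, add-and-subtract the exact expression, then reuse the exact-case analysis of Lemma~\ref{lemma: mu exact md point-wise inequality} on the exact part), but the residual you choose to call $\epsilon_k(s,\pi)$ is not the paper's, and this is not mere bookkeeping — it is the content of the lemma. In \eqref{def: epsilon in approximate case} the paper defines $\epsilon_k(s,\pi)$ as the difference between the \emph{entire} empirical first-order expression of \eqref{eq: state-wise approximate update rule final} and its exact counterpart: the empirical side keeps the per-trajectory, importance-sampled one-action terms $A\hatqHpik(s_m,\cdot,m)\mathbbm{1}\{\cdot=a_m\}$, is normalized by the deterministic quantity $M_k\, d_{\nu,\pi_k}(s)$ rather than by the random visit count $n(s)$, and includes the term $\nabla \omegaS{s}{\pi_{k+1}}-\nabla\omegaS{s}{\pik}$. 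That structure is exactly what makes $d_{\mu,\pi^*}\epsilon_k$ an average over the $M_k$ i.i.d.\ trajectories of bounded, $t_k$-scaled random variables minus (essentially) its conditional expectation, so that Hoeffding plus a union bound over deterministic policies in Appendix~\ref{supp: Approximation Error bounds} gives $M_k=\tilde O(S/\epsilon^2)$; it is also the only reason Lemma~\ref{lemma: difference of gradient of omegas approximate} is needed. Your residual $t_k\inner*{\hatqHpik(s,\cdot)-\qHpik(s,\cdot),\pi-\pi_{k+1}(\cdot\mid s)}$ destroys this structure: $\hatqHpik(s,\cdot)$ is normalized by the random $n(s)$ (for a state visited once it is a single importance-weighted rollout, with error of order $A\CmaxLambda/(1-\gamma)$ that does not shrink with $M_k$), it is identically zero on unvisited states so the residual there is $-t_k\inner*{\qHpik(s,\cdot),\pi-\pi_k(\cdot\mid s)}$, and the Bregman-gradient terms cancel out of it entirely (so your own remark that Lemma~\ref{lemma: difference of gradient of omegas approximate} will be needed does not fit your definition). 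The downstream concentration argument cannot be applied to this quantity, and per-state concentration is not available under the sampling scheme, so "no real obstacle beyond careful bookkeeping" is where the proposal goes wrong.

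There is also a concrete gap at states with $d_{\nu,\pi_k}(s)>0$ that are not in $\mathcal{S}^k_M$, which the lemma must cover (they carry $d_{\mu,\pi^*}$-mass in the next lemma). For such states $\pi_{k+1}(\cdot\mid s)=\pi_k(\cdot\mid s)$ is in general \emph{not} the minimizer of the per-state surrogate of \eqref{eq: update rule approximate final} with $\hatqHpik(s,\cdot)=0$ when $\lambda>0$, so the first-order condition you start from is unavailable there; and with your definition of $\epsilon_k$ the asserted inequality at such a state is equivalent (using Lemma~\ref{lemma: connection between bellman and q function}) to $0\leq \frac{t_k^2\gradBound^2(k;\lambda)}{2}+\lambda t_k\inner*{\nabla\omegaS{s}{\pi_k},\pi-\pi_k(\cdot\mid s)}$, which need not hold once $t_k$ is small (e.g., euclidean case with $\pi_k$ close to deterministic and $\pi$ supported on other actions). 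Saying the claim "reduces to $0\le0$ after absorbing the trivial difference into $\epsilon_k(s,\pi)$" contradicts the explicit definition you give, and whatever is absorbed still has to be bounded later. The paper avoids all of this by taking the optimality condition of the empirical objective \eqref{eq: state-wise approximate update rule final}, which is identically zero (hence trivially stationary) at unvisited states, dividing by $d_{\nu,\pi_k}(s)$, and adding and subtracting the exact expression including $\nabla_{\pi_{k+1}}\bregmanS{s}{\pi_{k+1}}{\pi_k}$ — so one single definition of $\epsilon_k(s,\pi)$ and one inequality hold for every state with $d_{\nu,\pi_k}(s)>0$, and the error term is in the form the rest of the analysis can actually control.
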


\begin{proof}

Using the first order optimality condition for the update rule \eqref{eq: state-wise approximate update rule final}, the following holds for any $s\in\mathcal{S}$ and thus for any $s\in \brc*{s' : d_{\nu,\pi_k}(s)>0}$,
\begin{align*}
    0 &\leq \frac{1}{M} \sum_{m=1}^{M} \mathbbm{1}\brc*{s=s_m} \inner*{ t_k\left(A \hatqHpik(s_m,\cdot,m)\mathbbm{1}\brc*{\cdot=a_m}+\lambda \nabla\omegaS{s_m}{\pi_k}\right)+   \nabla_{\pi_{k+1}} \bregmanS{s_m}{\pi_{k+1}}{\pi_k}, \pi - \pi_{k+1}(\cdot \mid s_m) }.
\end{align*}

Dividing by $d_{\nu,\pi_k}(s)$ which is strictly positive for all $s$ such that $\mathbbm{1}\brc*{s=s_m}=1$ and  adding and subtracting the term $$\inner*{ t_k\left(\qHpik(s,\cdot)+\lambda \nabla\omegaS{s}{\pi_k}\right)+   \nabla_{\pi_{k+1}} \bregmanS{s}{\pi_{k+1}}{\pi_k}, \pi - \pi_{k+1}(\cdot \mid s) },$$
we get
\begin{align}
    0 \leq & \underbrace{\inner*{ t_k\left(\qHpik(s,\cdot)+\lambda \nabla\omegaS{s}{\pi_k}\right)+   \nabla_{\pi_{k+1}} \bregmanS{s}{\pi_{k+1}}{\pi_k}, \pi - \pi_{k+1}(\cdot \mid s) }}_{(*)} + \epsilon_k (s,\pi)
    \label{eq: first term in state wise approximate inequality},
\end{align}
where we defined $\epsilon_k(s,\pi)$,
\begin{align}
    & \epsilon_k(s,\pi) \nonumber \\ 
    &\triangleq \frac{1}{d_{\nu,\pi_k}(s)} \frac{1}{M}\sum_{m=1}^{M} \! \mathbbm{1}\brc*{s=s_m} \inner*{ t_k\left(A\hatqHpik(s_m,\cdot,m)\!\mathbbm{1}\brc*{\cdot=a_m} \! +\!\lambda \nabla\omegaS{s_m}{\pi_k}\right)\!+  \! \nabla_{\pi_{k+1}} \bregmanS{s_m}{\pi_{k+1}}{\pi_k}, \pi \!-\! \pi_{k+1}(\cdot \!\mid\! s_m) } \nonumber\\
    &-\inner*{ t_k\left(\qHpik(s,\cdot)+\lambda \nabla\omegaS{s}{\pi_k}\right)+   \nabla_{\pi_{k+1}} \bregmanS{s}{\pi_{k+1}}{\pi_k}, \pi - \pi_{k+1}(\cdot \mid s) } \nonumber \\
    &=\frac{1}{d_{\nu,\pi_k}(s)} \frac{1}{M}\!\sum_{m=1}^{M} \! \mathbbm{1}\brc*{s=s_m} \inner*{ t_k\left(A\hatqHpik(s_m,\cdot,m)\mathbbm{1}\brc*{\cdot=a_m} \! + \!\lambda \nabla\omegaS{s_m}{\pi_k}\right) \! + \! \nabla \omegaS{s}{\pi_{k+1}} \! - \! \nabla \omegaS{s}{\pik}, \pi \! - \! \pi_{k+1}(\cdot \! \mid \! s_m) }\nonumber\\
    &-\inner*{ t_k\left(\qHpik(s,\cdot)+\lambda \nabla\omegaS{s}{\pi_k}\right)+   \nabla \omegaS{s}{\pi_{k+1}} -  \nabla \omegaS{s}{\pik}, \pi - \pi_{k+1}(\cdot \mid s) }.\label{def: epsilon in approximate case}
\end{align}

By bounding $(*)$ in \eqref{eq: first term in state wise approximate inequality} using the exact same analysis of Lemma~\ref{lemma: mu exact md point-wise inequality} we conclude the proof.
\end{proof}

Now, we state another lemma which connects the state-wise inequality using the discounted stationary distribution of the optimal policy $d_{\mu,\pi^*}$, similarly to Lemma~\ref{lemma: toward the mu exact fundamental inequality}. 

\begin{lemma}\label{lemma: toward the mu approximate fundamental inequality} 
Let Assumption~\ref{assumption: stochastic MD} hold and let $\{\pi_k\}_{k\geq 0}$ be the sequence generated by Aproximate TRPO using stepsizes $\{t_k\}_{k\geq0}$. Then, for all $k\geq 0$ Then, the following inequality holds for all $\pi$,
\begin{align*}
        0 & \leq  t_kd_{\mu,\pi^*} \br*{T_\lambda^{\pi}v_\lambda^{\pi_k} - v_\lambda^{\pi_k}} + \frac{t_k^2 \gradBound^2(k;\lambda)}{2} +(1-\lambda t_k)d_{\mu,\pi^*} \bregman{\pi}{\pi_k} - d_{\mu,\pi^*}\bregman{ \pi }{\pi_{k+1}} +d_{\mu,\pi^*}\epsilon_k(\cdot,\pi).
\end{align*}
where $h_\omega$ is defined in the third claim of Lemma \ref{lemma: usefull bounds approximate}. 
\begin{proof}
    By Assumption~\ref{assumption: stochastic MD}, for all $s$ for which $d_{\mu,\pi^*}(s)>0$ it also holds that $d_{\nu,\pi_k}(s)>0$. Thus, for all $s$ for which $d_{\mu,\pi^*}(s)>0$ the component-wise relation in Lemma~\ref{lemma: mu approximate md point-wise inequality}  holds. By multiplying each inequality by the positive number $d_{\mu,\pi^*}(s)$ and summing over all $s$ we get,
    \begin{align*}
        0 & \leq  t_kd_{\mu,\pi^*} \br*{T_\lambda^{\pi}v_\lambda^{\pi_k} - v_\lambda^{\pi_k}} + \frac{t_k^2 \gradBound^2(k;\lambda)}{2} +(1-\lambda t_k)d_{\mu,\pi^*} \bregman{\pi}{\pi_k} - d_{\mu,\pi^*}\bregman{ \pi }{\pi_{k+1}} +d_{\mu,\pi^*}\epsilon_k(\cdot,\pi),
\end{align*}
which concludes the proof.

\end{proof}
\end{lemma}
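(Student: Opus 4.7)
The plan is to mirror, almost verbatim, the proof of Lemma~\ref{lemma: toward the mu exact fundamental inequality} from the exact setting, substituting the sample-based state-wise inequality of Lemma~\ref{lemma: mu approximate md point-wise inequality} for the exact one. The only substantive new item to check is that the weighting by $d_{\mu,\pi^*}$ is legitimate, i.e., that the state-wise inequality is available at every $s$ that $d_{\mu,\pi^*}$ charges.

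First I would invoke Assumption~\ref{assumption: stochastic MD}: finiteness of $C^{\pi^*}=\lVert d_{\mu,\pi^*}/\nu\rVert_\infty$ forces $\nu(s)>0$ whenever $d_{\mu,\pi^*}(s)>0$, since otherwise the ratio would blow up. Combined with the elementary inequality $d_{\nu,\pi_k}(s)\geq(1-\gamma)\nu(s)$ coming from the one-step decomposition of the discounted visitation measure, this yields the implication $d_{\mu,\pi^*}(s)>0 \implies d_{\nu,\pi_k}(s)>0$, so Lemma~\ref{lemma: mu approximate md point-wise inequality} applies at every such state.

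Next I would multiply the state-wise inequality at each such $s$ by the nonnegative weight $d_{\mu,\pi^*}(s)$ and sum over $s\in\mathcal{S}$; states with $d_{\mu,\pi^*}(s)=0$ contribute nothing, so whether or not the state-wise bound is available there is irrelevant. Using linearity and the convention $d_{\mu,\pi^*}X\triangleq \sum_s d_{\mu,\pi^*}(s)X(s)$ adopted throughout the paper, the four state-indexed summands collapse to $t_k d_{\mu,\pi^*}(T_\lambda^{\pi}\vHpik-\vHpik)$, $(1-\lambda t_k)d_{\mu,\pi^*}\bregman{\pi}{\pi_k}$, $-d_{\mu,\pi^*}\bregman{\pi}{\pi_{k+1}}$, and $d_{\mu,\pi^*}\epsilon_k(\cdot,\pi)$, while the state-independent term $t_k^2\gradBound^2(k;\lambda)/2$ is preserved because $\sum_s d_{\mu,\pi^*}(s)=1$.

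I do not expect any real obstacle here: once the support condition above is established, the whole argument is a one-line linear combination of the state-wise bounds, identical in structure to the exact case. The only subtlety worth emphasizing is that the error term $\epsilon_k(s,\pi)$ is carried along untouched; its later probabilistic control (Lemma~\ref{lemma: one step approximation error bound unbiased} and Lemma~\ref{lemma: cumulative approximation error}) will apply directly to the aggregated quantity $d_{\mu,\pi^*}\epsilon_k(\cdot,\pi)$ via the concentrability coefficient, but that analysis is deferred and plays no role at this stage of the proof.
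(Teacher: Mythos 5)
Your proposal is correct and matches the paper's own proof: invoke Assumption~\ref{assumption: stochastic MD} (together with $d_{\nu,\pi_k}(s)\geq(1-\gamma)\nu(s)$, which the paper leaves implicit) to get the state-wise inequality of Lemma~\ref{lemma: mu approximate md point-wise inequality} at every state charged by $d_{\mu,\pi^*}$, then take the nonnegative linear combination weighted by $d_{\mu,\pi^*}(s)$ and sum. The extra details you spell out (zero-weight states being harmless, $\sum_s d_{\mu,\pi^*}(s)=1$ preserving the constant term) are exactly what the paper's one-line argument relies on.
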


\begin{lemma}[fundamental inequality of Sample-Based TRPO.]\label{lemma: mu approximate fundamental inequality}
Let $\{\pi_k\}_{k\geq 0}$ be the sequence generated by Aproximate TRPO using stepsizes $\{t_k\}_{k\geq0}$.  Then, for all $k\geq 0$

\begin{align*}
&t_k(1-\gamma) (\mu v^{\pi_k}_\lambda- \mu v^{\pi^*}_\lambda) \leq d_{\mu,\pi^*}\br*{ (1-\lambda t_k)\bregman{\pi^*}{\pi_k} - \bregman{ \pi^* }{\pi_{k+1}}} +\frac{t_k^2 \gradBound^2(k;\lambda)}{2}+d_{\mu,\pi^*}\epsilon_k,
\end{align*}
where $\gradBound(k;\lambda)$ is defined in Lemma~\ref{lemma: usefull bounds approximate} and $\epsilon_k\triangleq \epsilon_k(\cdot,\pi^*)$ where the latter defined in~\eqref{def: epsilon in approximate case}.
\end{lemma}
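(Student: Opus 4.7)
The plan is to mirror the proof of the exact fundamental inequality (Lemma~\ref{lemma: mu exact fundamental inequality}) essentially verbatim, with the sample-based state-wise bound replacing the exact one. All the heavy lifting has already been done in Lemma~\ref{lemma: mu approximate md point-wise inequality} and Lemma~\ref{lemma: toward the mu approximate fundamental inequality}; what remains is a specialization and a Bellman-to-value-gap conversion.

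First, I would apply Lemma~\ref{lemma: toward the mu approximate fundamental inequality} with the choice $\pi = \pi^*$. This directly yields
\[
    0 \leq t_k\, d_{\mu,\pi^*}\!\left(T_\lambda^{\pi^*} v_\lambda^{\pi_k} - v_\lambda^{\pi_k}\right) + \tfrac{t_k^2 \gradBound^2(k;\lambda)}{2} + (1-\lambda t_k)\, d_{\mu,\pi^*}\bregman{\pi^*}{\pi_k} - d_{\mu,\pi^*}\bregman{\pi^*}{\pi_{k+1}} + d_{\mu,\pi^*}\epsilon_k,
\]
where $\epsilon_k = \epsilon_k(\cdot,\pi^*)$ as defined in~\eqref{def: epsilon in approximate case}. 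Rearranging isolates the Bellman-gap term on the left,
\[
    -t_k\, d_{\mu,\pi^*}\!\left(T_\lambda^{\pi^*} v_\lambda^{\pi_k} - v_\lambda^{\pi_k}\right) \leq (1-\lambda t_k)\, d_{\mu,\pi^*}\bregman{\pi^*}{\pi_k} - d_{\mu,\pi^*}\bregman{\pi^*}{\pi_{k+1}} + \tfrac{t_k^2 \gradBound^2(k;\lambda)}{2} + d_{\mu,\pi^*}\epsilon_k.
\]

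Next, I would invoke the third claim of Lemma~\ref{lemma: greedy to difference} (the same identity used in the exact proof), which says
\[
    (1-\gamma)\,\mu\!\left(v_\lambda^{\pi_k} - v_\lambda^{*}\right) = -\, d_{\mu,\pi^*}\!\left(T_\lambda^{\pi^*} v_\lambda^{\pi_k} - v_\lambda^{\pi_k}\right).
\]
Multiplying by $t_k$ and substituting into the left-hand side of the previous display immediately gives the stated bound. No additional estimates are required, since the error-bearing quantity $\epsilon_k$ is carried through untouched and bounded separately in Appendix~\ref{supp: Approximation Error bounds}.

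There is no genuine obstacle in this step; the work has been absorbed into the two prerequisite lemmas. The only minor subtlety to double-check is that Assumption~\ref{assumption: stochastic MD} is the hypothesis that lets us use Lemma~\ref{lemma: toward the mu approximate fundamental inequality} at $\pi=\pi^*$ (it ensures $d_{\nu,\pi_k}(s)>0$ whenever $d_{\mu,\pi^*}(s)>0$, so the state-wise optimality conditions in Lemma~\ref{lemma: mu approximate md point-wise inequality} are valid at every $s$ contributing to $d_{\mu,\pi^*}$). Once this is noted, the lemma follows by the two-line manipulation above, in complete analogy to the exact case, with the extra additive term $d_{\mu,\pi^*}\epsilon_k$ being the sole artifact of the sample-based approximation.
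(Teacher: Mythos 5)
Your proposal is correct and matches the paper's own proof: it specializes Lemma~\ref{lemma: toward the mu approximate fundamental inequality} to $\pi=\pi^*$ and then converts the Bellman gap to the value gap via the third claim of Lemma~\ref{lemma: greedy to difference}, exactly as the paper does, with the approximation term $d_{\mu,\pi^*}\epsilon_k$ carried along unchanged. Your remark that Assumption~\ref{assumption: stochastic MD} is what justifies applying the state-wise bound on the support of $d_{\mu,\pi^*}$ is also consistent with the paper's treatment.
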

\begin{proof}
Setting $\pi=\pi^*$ in Lemma \ref{lemma: toward the mu approximate fundamental inequality} and denoting $\epsilon_k\triangleq \epsilon_k(\cdot,\pi^*)$, we get that for any $k$,
\begin{align*}
    &-t_k d_{\mu,\pi^*}\br*{T^{\pi^*}_\lambda \vHpik -  \vHpik} \\
    &\leq  d_{\mu,\pi^*}\br{ (1-\lambda t_k)\bregman{\pi^*}{\pi_k} - \bregman{\pi^* }{\pi_{k+1}}} +\frac{t_k^2 \gradBound^2(k;\lambda)}{2} + d_{\mu,\pi^*}\epsilon_k.
\end{align*}

Furthermore, by the third claim of Lemma~\ref{lemma: greedy to difference},
\begin{align*}
    (1-\gamma)\mu(v_\lambda^*-\vHpik) = d_{\mu,\pi^*}\br*{T^{\pi^*}_\lambda \vHpik -  \vHpik}.
\end{align*}

Combining the two relations on both sides we concludes the proof.

\end{proof}

\subsection{Approximation Error Bound}\label{supp: Approximation Error bounds}

In this section we deal with the approximation error, the term $d_{\mu,\pi^*}\epsilon_k$ in Lemma~\ref{lemma: mu approximate fundamental inequality}. Two factors effects $d_{\mu,\pi^*}\epsilon_k$: (1) the error due to Monte-Carlo sampling, which we bound using Hoeffding's inequality and the union bound; (2) the error due to the truncation in the sampling process (see Appendix~\ref{supp: approximate update rule TRPO}). The next two lemmas bound these two sources of error. We first discuss the analysis of using an unbiased sampling process (Lemma~\ref{lemma: one step approximation error bound unbiased}), i.e., when no truncation is taking place, and then move to discuss the use of the truncated trajectories (Lemma~\ref{lemma: truncation error bound}). Finally, in Lemma~\ref{lemma: one step approximation error bound} we combine the two results to bound $d_{\mu,\pi^*}\epsilon_k$ in the case of the full truncated sampling process discussed in Appendix~\ref{supp: approximate update rule TRPO}.

The unbiased $q$-function estimator uses a full unrolling of a trajectory, i.e., calculates the (possibly infinite) sum of retrieved costs following the policy $\pi_k$ in the $m$-th trajecotry of the $k$-th iteration,
$$ \hatqHpik(s_m,a_m,m) \triangleq \sum_{t=0}^{\infty} {\gamma^t \br*{c\br*{s^{k,m}_t,a^{k,m}_t} + \lambda \omegaS{s^{k,m}_t}{\pi_k}}},$$
where the notation $s^{k,m}_t$ refer to the state encountered in the $m$-th trajectory of the $k$-th iteration, at the $t$ step of estimating the $\qHpik$ function. Moreover, $(s_m,a_m)=(s^{k,m}_0,a^{k,m}_0)$ and $\hatqHpik(s,a,m)=0$ for any $(s,a) \neq (s_m,a_m) $.

The truncated biased $q$-function estimator, truncates the trajectory after $T$ interactions with the MDP, where $T$ is predefined:
$$ \hat q_{\lambda,\text{trunc}}^{\pi_k}(s,a,m) \triangleq \sum_{t=0}^{T-1} \gamma^t \br*{c\br*{s^{k,m}_t,a^{k,m}_t} + \lambda \omegaS{s^{k,m}_t}{\pi_k}}  $$

The following lemma describes the number of trajectories needed in the $k$-th update, in order to bound the error to be proportional to $\epsilon$ w.p. $1-\delta'$, using an unbiased estimator.

\begin{lemma}[Approximation error bound with unbiased sampling]\label{lemma: one step approximation error bound unbiased}
For any $\epsilon,\tilde \delta>0$, if the number of trajectories in the $k$-th iteration is
$$ M_k \geq \frac{2\HoeffdingBound(k,\lambda)^2}{\epsilon^2} \br*{ S \log {2A} + \log{1 / \tilde \delta}},$$
then with probability of $1- \tilde \delta$,
\begin{align*}
    d_{\mu,\pi^*}\epsilon_k \leq t_k \norm{ \frac{d_{\mu,\pi^*}}{d_{\nu,\pi_k}} }_\infty \frac{\epsilon}{2},
\end{align*}
where $\HoeffdingBound(k,\lambda)=\frac{4  A\CmaxLambda}{1-\gamma}$ and $\HoeffdingBound(k,\lambda)=  \frac{4 A\CmaxLambda}{1-\gamma} \br*{1+ \mathbbm{1}\{\lambda \neq 0 \} \log k}$ in the~euclidean and non-euclidean settings respectively.

\begin{proof}
Plugging the definition of $\epsilon_k\triangleq\epsilon_k(\cdot,\pi^*)$ in \eqref{def: epsilon in approximate case}, we get,
\begin{align*}
    &d_{\mu,\pi^*}\epsilon_k \\
    &= \sum_s \! \frac{d_{\mu,\pi^*}(s)}{M_k d_{\nu,\pi_k}(s)}  \! \sum_{m=1}^{M_k} \mathbbm{1}\brc*{s=s_m} \inner*{ t_k\left(A\hatqHpik(s_m,\cdot,m)\!+ \! \lambda \nabla\omegaS{s_m}{\pi_k}\right) \! + \!   \nabla \omegaS{s}{\pi_{k+1}}\! - \! \nabla \omegaS{s}{\pik},\pi^*(\cdot \! \mid \! s) \! - \! \pi_{k+1}(\cdot \! \mid  \! s_m) }\\
    &- \sum_s d_{\mu,\pi^*}(s) \inner*{ t_k\left(\qHpik(s,\cdot)+\lambda \nabla\omegaS{s}{\pi_k}\right)+   \nabla \omegaS{s}{\pi_{k+1}} -  \nabla \omegaS{s}{\pik}, \pi^*(\cdot \mid s) - \pi_{k+1}(\cdot \mid s) }\\
    &= \frac{1}{M_k} \! \sum_{m=1}^{M_k} \! \sum_s \mathbbm{1} \brc*{s=s_m} \frac{d_{\mu,\pi^*}(s)}{ d_{\nu,\pi_k}(s)}  \inner*{ t_k\left(A\hatqHpik(s_m,\cdot,m) \!+ \!\lambda \nabla\omegaS{s_m}{\pi_k}\right) \! + \!  \nabla \omegaS{s}{\pi_{k+1}} \! - \! \nabla \omegaS{s}{\pik}, \pi^*(\cdot \! \mid \! s) \! - \! \pi_{k+1}(\cdot \! \mid \! s_m) }\\
    &- \sum_s d_{\nu,\pi_k}(s) \frac{d_{\mu,\pi^*}(s)}{d_{\nu,\pi_k}(s)} \inner*{ t_k\left(\qHpik(s,\cdot)+\lambda \nabla\omegaS{s}{\pi_k}\right)+   \nabla \omegaS{s}{\pi_{k+1}} -  \nabla \omegaS{s}{\pik}, \pi^*(\cdot \mid s) - \pi_{k+1}(\cdot \mid s) },
\end{align*}
where in the last transition we used the fact that for every $s \neq s_m$ the identity function $\mathbbm{1}\brc*{s=s_m}=0$.

We define, 
\begin{align}
    &\hat{X}_k(s_m,\cdot,m) \triangleq  t_k\left(A\hatqHpik(s_m,\cdot,m)+\lambda \nabla\omegaS{s_m}{\pi_k}\right)+  \nabla \omegaS{s_m}{\pi_{k+1}} -  \nabla \omegaS{s_m}{\pik},\label{eq: hat X def}\\
    &X_k(s,\cdot) \triangleq  t_k\left(\qHpik(s,\cdot)+\lambda \nabla\omegaS{s}{\pi_k}\right)+   \nabla \omegaS{s}{\pi_{k+1}} -  \nabla \omegaS{s}{\pik} \label{eq: x def}.
\end{align}

Using this definition, we have,
\begin{align}
    d_{\mu,\pi^*}\epsilon_k =& \frac{1}{M_k}\sum_{m=1}^{M_k}\sum_s \mathbbm{1}\brc*{s=s_m} \frac{d_{\mu,\pi^*}(s)}{ d_{\nu,\pi_k}(s)}  \inner*{ \hat{X}_k(s_m,\cdot,m), \pi^*(\cdot \mid s_m) - \pi_{k+1}(\cdot \mid s_m) } \nonumber\\
    &- \sum_s d_{\mu,\pi^*}(s) \inner*{ X_k(s,\cdot), \pi^*(\cdot \mid s) - \pi_{k+1}(\cdot \mid s) }.   \label{eq: bounding the approximation error 1}
\end{align}
In order to remove the dependency on the randomness of $\pi_{k+1}$, we can bound this term in a uniform way:
\begin{align}
    d_{\mu,\pi^*}\epsilon_k \leq & \max_{\pi'} \Big\{ \frac{1}{M_k}\sum_{m=1}^{M_k}\sum_s \mathbbm{1}\brc*{s=s_m} \frac{d_{\mu,\pi^*}(s)}{ d_{\nu,\pi_k}(s)}  \inner*{ \hat{X}_k(s_m,\cdot,m), \pi^*(\cdot \mid s_m) - \pi'(\cdot \mid s_m) } \nonumber\\
    &- \sum_s d_{\mu,\pi^*}(s) \inner*{ X_k(s_m,\cdot),\pi^*(\cdot \mid s)- \pi'(\cdot \mid s) } \Big\}.  \label{eq: bounding the approximation error 2}
\end{align}

In this lemma, we analyze the case where no truncation is taken into account. In this case we, we will now show that for any $\pi'$
$$ \E \brs*{\sum_s \mathbbm{1}\brc*{s=s_m} \frac{d_{\mu,\pi^*}(s)}{ d_{\nu,\pi_k}(s)}  \inner*{ \hat{X}_k(s,\cdot,m), \pi^*(\cdot \mid s_m)- \pi'(\cdot \mid s_m) }} =\sum_s d_{\mu,\pi^*}(s) \inner*{ X_k(s,\cdot), \pi^*(\cdot \mid s) - \pi'(\cdot \mid s) }, $$
which means that $\frac{1}{M_k}\sum_{m=1}^{M_k}\sum_s \mathbbm{1}\brc*{s=s_m} \frac{d_{\mu,\pi^*}(s)}{ d_{\nu,\pi_k}(s)}  \inner*{ \hat{X}_k(s,\cdot,m), \pi^*(\cdot \mid s_m) - \pi'(\cdot \mid s_m) } $ is an unbiased estimator.

This fact comes from the from the following relations:

\begin{align}\label{eq: expectation of sample}
    &\E[\sum_s \mathbbm{1}\brc*{s=s_m} \frac{d_{\mu,\pi^*}(s)}{ d_{\nu,\pi_k}(s)}  \inner*{ \hat{X}_k(s,\cdot,m), \pi^*(\cdot \mid s_m)- \pi'(\cdot \mid s_m) }] \nonumber\\
    &= \E\brs*{\E\brs*{\sum_s \mathbbm{1}\brc*{s=s_m} \frac{d_{\mu,\pi^*}(s)}{ d_{\nu,\pi_k}(s)}  \inner*{ \hat{X}_k(s,\cdot,m), \pi^*(\cdot \mid s) - \pi'(\cdot \mid s) }\mid s_m}} \nonumber \\
    &=\E\brs*{\E\brs*{\frac{d_{\mu,\pi^*}(s_m)}{ d_{\nu,\pi_k}(s_m)}  \inner*{ \hat{X}_k(s_m,\cdot,m), \pi^*(\cdot \mid s_m) - \pi'(\cdot \mid s_m) }\mid s_m}}\nonumber\\
    &=\E\brs*{\frac{d_{\mu,\pi^*}(s_m)}{ d_{\nu,\pi_k}(s_m)} \E\brs*{ \inner*{ \hat{X}_k(s_m,\cdot,m), \pi^*(\cdot \mid s_m) - \pi'(\cdot \mid s_m) }\mid s_m}}\nonumber\\
    &=\E\brs*{\frac{d_{\mu,\pi^*}(s_m)}{ d_{\nu,\pi_k}(s_m)}  \inner*{ \E\brs*{\hat{X}_k(s_m,\cdot,m)\mid s_m}, \pi^*(\cdot \mid s_m) - \pi'(\cdot \mid s_m) }}\nonumber\\
    &=\E\brs*{\frac{d_{\mu,\pi^*}(s_m)}{ d_{\nu,\pi_k}(s_m)}  \inner*{ X_k(s_m,\cdot), \pi^*(\cdot \mid s_m) - \pi'(\cdot \mid s_m) }}\nonumber\\
    &=\sum_s d_{\nu,\pi_k}(s)\frac{d_{\mu,\pi^*}(s)}{ d_{\nu,\pi_k}(s)}  \inner*{ X_k(s,\cdot), \pi^*(\cdot \mid s) - \pi'(\cdot \mid s) }\nonumber\\
    &=\sum_s d_{\mu,\pi^*}(s) \inner*{ X_k(s,\cdot), \pi^*(\cdot \mid s) - \pi'(\cdot \mid s) },
\end{align}
where the first transition is by law of total expectation; the second transition is by the fact the indicator function is zero for every $s \neq s_m$; the third transition is by the fact $s_m$ is not random given $s_m$; the fourth transition is by the linearity of expectation and the fact that $\pi^*(\cdot \mid s_m) - \pi'(\cdot \mid s_m)$ is not random given $s_m$; the fifth transition is by taking the expectation of $\hat X$ in the state $s_m$; finally, the sixth transition is by explicitly taking the expectation over the probability that $s_m$ is drawn from $d_{\nu,\pi_k}$ in the $m$-th trajectory (by following $\pi_k$ from the restart distribution $\nu$).

Meaning, \eqref{eq: bounding the approximation error 2} is a difference between an empirical mean of $M_k$ random variables and their mean for a the fixed policy $\pi'$, which maximizes the following expression
\begin{align}
    d_{\mu,\pi^*}\epsilon_k \leq & \max_{\pi'} \Bigg\{ \frac{1}{M_k}\sum_{m=1}^{M_k}\sum_s \mathbbm{1}\brc*{s=s_m} \frac{d_{\mu,\pi^*}(s)}{ d_{\nu,\pi_k}(s)}  \inner*{ \hat{X}_k(s,\cdot,m), \pi^*(\cdot \mid s) - \pi'(\cdot \mid s) } \nonumber\\
    &- \E\brs*{\sum_s \mathbbm{1}\brc*{s=s_m} \frac{d_{\mu,\pi^*}(s)}{ d_{\nu,\pi_k}(s)}  \inner*{ \hat{X}_k(s,\cdot,m), \pi^*(\cdot \mid s) - \pi'(\cdot \mid s) }}\Bigg\}. \label{eq: maximum bound over policies}
\end{align}

As we wish to obtain a uniform bound on $\pi'$, we can use the common approach of bounding \eqref{eq: maximum bound over policies} uniformly for all $\pi'\in \ssimplex$ using the union bound. Note that the above optimization problem is a linear programming optimization problem in $\pi'$, where $\pi'\in \ssimplex$. It is a well known fact that for linear programming, there is an extreme point which is the optimal solution of the problem \cite{bertsimas1997introduction}[Theorem 2.7]. The set of extreme points of $\ssimplex$ is the set of all deterministic policies denoted by $\Pi^{\text{det}}$. Therefore, in order to bound the maximum in \eqref{eq: maximum bound over policies}, it suffices to uniformly bound all policies $\pi' \in \Pi^{\text{det}}$.

Now, notice that $\frac{d_{\mu,\pi^*}(s_m)}{ d_{\nu,\pi_k}(s_m)}\inner*{\hat X_k(s_m,\cdot,m), \pi^*(\cdot \mid s_m) - \pi'(\cdot \mid s_m)}$ is bounded for all $s_m$ and $\pi'$,
\begin{align} \label{eq: bounded RV hat X}
& \frac{d_{\mu,\pi^*}(s_m)}{ d_{\nu,\pi_k}(s_m)}\inner*{\hat X_k(s_m,\cdot,m), \pi^*(\cdot \mid s_m) - \pi'(\cdot \mid s_m)} \nonumber \\
&  =\inner*{\frac{d_{\mu,\pi^*}(s_m)}{ d_{\nu,\pi_k}(s_m)}\hat X_k(s_m,\cdot,m), \pi^*(\cdot \mid s_m) - \pi'(\cdot \mid s_m)}  \nonumber \\
& \leq \norm{\frac{d_{\mu,\pi^*}(s_m)}{ d_{\nu,\pi_k}(s_m)}\hat X_k(s_m,\cdot,m)}_\infty \norm{\pi^*(\cdot \mid s_m) - \pi'(\cdot \mid s_m)}_1 \nonumber \\ 
& \leq 2 \frac{d_{\mu,\pi^*}(s_m)}{ d_{\nu,\pi_k}(s_m)}\norm{\hat X_k(s_m,\cdot,m)}_\infty  \nonumber \\
& \leq 2\norm{\frac{d_{\mu,\pi^*}}{ d_{\nu,\pi_k}}}_\infty\norm{\hat X_k(s_m,\cdot,m)}_\infty  \nonumber \\
& = 2 \norm{\frac{d_{\mu,\pi^*}}{ d_{\nu,\pi_k}}}_\infty\norm{ t_k\left(A\hatqHpik(s_m,\cdot,m)+\lambda \nabla\omegaS{s_m}{\pi_k}\right)+  \nabla \omegaS{s_m}{\pi_{k+1}} -  \nabla \omegaS{s_m}{\pik}}_\infty \nonumber \\
& \leq 2 \norm{\frac{d_{\mu,\pi^*}}{ d_{\nu,\pi_k}}}_\infty\br*{ t_k  \norm{A\hatqHpik(s_m,\cdot,m)+\lambda \nabla\omegaS{s_m}{\pi_k}}_\infty +   \norm{\nabla \omegaS{s_m}{\pi_{k+1}} -  \nabla \omegaS{s_m}{\pik}}_\infty} \nonumber \\
& \leq 2 t_k \norm{\frac{d_{\mu,\pi^*}}{ d_{\nu,\pi_k}}}_\infty \br*{ \hatgradBound(k;\lambda) + 2 \diffBound(k)} \nonumber \\
& = 2 \norm{\frac{d_{\mu,\pi^*}}{ d_{\nu,\pi_k}}}_\infty (t_k \hatgradBound(k;\lambda) + \diffBound(k)) \nonumber \\
& \triangleq t_k \norm{\frac{d_{\mu,\pi^*}}{ d_{\nu,\pi_k}}}_\infty \HoeffdingBound(k,\lambda),
\end{align}
where the second transition is due to H{\"o}lder's inequality; the third transition is due to the bound of the $TV$ distance between two random variables; the sixth transition is due to the triangle inequality; finally, the seventh transition is by plugging in the bounds in Lemma~\ref{lemma: usefull bounds approximate} and Lemma~\ref{lemma: difference of gradient of omegas approximate}. Also, we defined $\HoeffdingBound(k,\lambda)=\frac{4  A\CmaxLambda}{1-\gamma}$ and $\HoeffdingBound(k,\lambda)=  \frac{4  A\CmaxLambda}{1-\gamma} \br*{1+\mathbbm{1}\{ \lambda \neq 0 \} \log k }$ in the euclidean and non-euclidean cases respectively.

Thus, by Hoeffding and the union bound over the set of deterministic policies,
$$ P\br*{ d_{\mu,\pi^*}\epsilon_k \geq  t_k \norm{\frac{d_{\mu,\pi^*}}{ d_{\nu,\pi_k}}}_\infty \frac{\epsilon}{2}  } \leq 2|\Pi^{\text{det}}| \exp\br*{-\frac{M_k\epsilon^2}{2\HoeffdingBound(k,\lambda)^2}}=\tilde \delta .$$


In other words, in order to guarantee that 
\begin{align*}
d_{\mu,\pi^*}\epsilon_k \leq t_k \norm{\frac{d_{\mu,\pi^*}}{d_{\nu,\pi_k}}}_\infty \frac{\epsilon}{2},
\end{align*}
we need the number of trajectories $M_k$ to be at least
\begin{align*}
    & M_k \geq \frac{2\HoeffdingBound(k,\lambda)^2}{\epsilon^2} \br*{ S \log {2A} + \log{1 / \tilde \delta}},
\end{align*}
where we used the fact that there are $|\Pi^{\text{det}}|=A^S$ deterministic policies.

which concludes the result.

\end{proof}

\end{lemma}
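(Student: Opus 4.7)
The plan is to reduce the quantity $d_{\mu,\pi^*}\epsilon_k$ to the deviation of an empirical mean of i.i.d.\ bounded random variables from its expectation, and then apply Hoeffding together with a union bound over a suitable finite policy class. I would first unfold the definition of $\epsilon_k$ from~\eqref{def: epsilon in approximate case}, evaluated at $\pi=\pi^*$, and introduce the shorthand $\hat X_k(s_m,\cdot,m)$ and $X_k(s,\cdot)$ as in~\eqref{eq: hat X def}--\eqref{eq: x def}. The key observation is that by multiplying and dividing by $d_{\nu,\pi_k}(s)$, the term $d_{\mu,\pi^*}\epsilon_k$ can be written as the difference between the per-trajectory importance-weighted sum $\frac{d_{\mu,\pi^*}(s_m)}{d_{\nu,\pi_k}(s_m)}\inner{\hat X_k(s_m,\cdot,m),\pi^*(\cdot\mid s_m)-\pi_{k+1}(\cdot\mid s_m)}$, averaged over $m=1,\ldots,M_k$, and its conditional expectation given $\mathcal{F}_k$. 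A computation in the style of~\eqref{eq: expectation of sample} (iterated expectation, first over the rollout given $s_m,a_m$, then over $a_m\sim U(\A)$, then over $s_m\sim d_{\nu,\pi_k}$) confirms that this expectation is exactly the deterministic term $\sum_s d_{\mu,\pi^*}(s)\inner{X_k(s,\cdot),\pi^*(\cdot\mid s)-\pi_{k+1}(\cdot\mid s)}$ that one needs to subtract.

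The next step is to handle the data dependence of $\pi_{k+1}$, since $\pi_{k+1}$ is itself a function of the samples and one cannot directly invoke Hoeffding. I would replace $\pi_{k+1}$ by a supremum over all $\pi'\in\ssimplex$, as in~\eqref{eq: maximum bound over policies}. Because, for each fixed $m$ and each fixed sampled state $s_m$, the expression $\inner{\hat X_k(s_m,\cdot,m),\pi^*(\cdot\mid s_m)-\pi'(\cdot\mid s_m)}$ is linear in $\pi'(\cdot\mid s_m)$, the maximizer over the compact polytope $\simplex$ lies at an extreme point, i.e.\ a deterministic policy. Hence the supremum over $\pi'\in\ssimplex$ can be replaced by a maximum over the finite class $\Pi^{\text{det}}$ of deterministic policies, of cardinality $A^S$; this is the same reduction used throughout the CPI literature.

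With the supremum restricted to a finite class, I would proceed by bounding the per-sample random variable uniformly. By H\"older's inequality,
\begin{align*}
\left|\tfrac{d_{\mu,\pi^*}(s_m)}{d_{\nu,\pi_k}(s_m)}\inner{\hat X_k(s_m,\cdot,m),\pi^*(\cdot\mid s_m)-\pi'(\cdot\mid s_m)}\right|
\le 2\left\|\tfrac{d_{\mu,\pi^*}}{d_{\nu,\pi_k}}\right\|_\infty \|\hat X_k(s_m,\cdot,m)\|_\infty.
\end{align*}
Expanding $\hat X_k$ and applying the triangle inequality yields a bound of the form $2\|d_{\mu,\pi^*}/d_{\nu,\pi_k}\|_\infty\,\bigl(t_k\hatgradBound(k;\lambda)+\diffBound(k)\bigr)$, where $\hatgradBound$ controls the magnitude of the estimated $q$-value plus the regularizer gradient (via Lemma~\ref{lemma: usefull bounds approximate}) and $\diffBound$ controls $\|\nabla\omegaS{s}{\pi_{k+1}}-\nabla\omegaS{s}{\pi_k}\|_\infty$ (via Lemma~\ref{lemma: difference of gradient of omegas approximate}). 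Crucially, $\diffBound(k)=O(t_k)$ up to at most a $\log k$ factor in the non-euclidean case, so that the whole bound collapses to $t_k\,\|d_{\mu,\pi^*}/d_{\nu,\pi_k}\|_\infty\,\HoeffdingBound(k,\lambda)$ for the stated $\HoeffdingBound(k,\lambda)$.

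With boundedness in hand, Hoeffding's inequality applied to the $M_k$ i.i.d.\ centered samples yields, for each fixed $\pi'\in\Pi^{\text{det}}$,
\begin{align*}
\Pr\!\left(\text{deviation}\ge t_k\left\|\tfrac{d_{\mu,\pi^*}}{d_{\nu,\pi_k}}\right\|_\infty\tfrac{\epsilon}{2}\right)\le 2\exp\!\left(-\tfrac{M_k\epsilon^2}{2\HoeffdingBound(k,\lambda)^2}\right).
\end{align*}
A union bound over the $A^S$ deterministic policies, combined with inverting the inequality in $M_k$, produces the stated sample size $M_k\ge \tfrac{2\HoeffdingBound(k,\lambda)^2}{\epsilon^2}(S\log 2A+\log 1/\tilde\delta)$. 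The main obstacle I anticipate is the step of obtaining a $t_k$-proportional bound on $\diffBound(k)=\|\nabla\omega(\pi_{k+1})-\nabla\omega(\pi_k)\|_\infty$ in the non-euclidean (entropy) case, where $\nabla\omega$ blows up near the boundary of the simplex; without a careful argument (essentially, that the exponentiated-gradient update cannot move $\nabla\omega$ by more than $O(t_k\,\text{poly}\log k)$ per step), $\HoeffdingBound$ would inherit a $t_k^{-1}$ factor and the final sample complexity would deteriorate badly. The auxiliary Lemma~\ref{lemma: difference of gradient of omegas approximate} cited in the text is exactly what is needed to close this gap.
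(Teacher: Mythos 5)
Your proposal is correct and follows essentially the same route as the paper's proof: the same reduction of $d_{\mu,\pi^*}\epsilon_k$ to an empirical-mean-minus-expectation via the importance weights $\tfrac{d_{\mu,\pi^*}(s_m)}{d_{\nu,\pi_k}(s_m)}$, the same replacement of the data-dependent $\pi_{k+1}$ by a maximum over the extreme points $\Pi^{\text{det}}$ of $\ssimplex$, the same H\"older/triangle-inequality bound $t_k\hatgradBound(k;\lambda)+\diffBound(k)$ on the per-sample variables, and the same Hoeffding plus union bound over the $A^S$ deterministic policies. You also correctly identify the crux — that $\diffBound(k)$ must scale with $t_k$, which is precisely the content of Lemma~\ref{lemma: difference of gradient of omegas approximate} — so no gaps remain.
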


The following lemma described with error due to the use of truncated trajectories:

\begin{lemma}[Truncation error bound]\label{lemma: truncation error bound}
The bias of the truncated sampling process in the $k$-th iteration, with maximal trajectory length of $T=\frac{1}{1-\gamma}\log \frac{\epsilon}{8 \HoeffdingBound(k,\lambda)}$ is $t_k  \norm{\frac{d_{\mu,\pi^*}}{d_{\nu,\pi_k}}}_\infty \frac{\epsilon}{4}$,
where $\HoeffdingBound(k,\lambda)=\frac{4  A\CmaxLambda}{1-\gamma}$ and $\HoeffdingBound(k,\lambda)=  \frac{2 A\CmaxLambda}{1-\gamma} \br*{\frac{1}{1-\lambda t_k}+1+\lambda \log k}$ in the~euclidean and non-euclidean settings respectively.
\begin{proof}

We start this proof by defining notation related to the truncated sampling process. 
First, denote $d_{\nu,\pi_k}^{\text{trunc}}(s)$, the probability to choose a state $s$, using the truncated biased sampling process of length $T$, as described in Appendix~\ref{supp: approximate update rule TRPO}. Observe that
$$ d_{\nu,\pi_k}^{\text{trunc}}(s) = (1-\gamma) \sum_{t=0}^{T-1} \gamma^t p(s_t=s \mid \nu, \pi_k) + \gamma^{T} p(s_{T}=s \mid \nu, \pi_k) $$

We also make use in this proof in the following definitions (see \eqref{eq: hat X def} and \eqref{eq: x def}),
\begin{align*}
    &\hat{X}_k(s_m,\cdot,m) \triangleq  t_k\left(A \hat q_{\lambda,\text{trunc}}^{\pi_k}(s_m,\cdot,m)+\lambda \nabla\omegaS{s_m}{\pi_k}\right)+  \nabla \omegaS{s_m}{\pi_{k+1}} -  \nabla \omegaS{s_m}{\pik},\\
    &X_k(s,\cdot) \triangleq  t_k\left(\qHpik(s,\cdot)+\lambda \nabla\omegaS{s}{\pi_k}\right)+   \nabla \omegaS{s}{\pi_{k+1}} -  \nabla \omegaS{s}{\pik} .
\end{align*}

Lastly, we denote the expectation of $\hat X_k (s,\cdot,m)$ using the truncated sampling process as $X^\text{trunc}_k(s,\cdot)$,
$$ X^{\text{trunc}}_k(s,a) = \E \hat X_k(s,a,m) $$

Now, we move on to the proof. We first split the bias to two different sources of bias:
    \begin{align*}
        & \E_{s\sim d_{\nu,\pi_k}^{\text{trunc}}} \frac{d_{\mu,\pi^*}(s)}{d_{\nu,\pi_k}(s)}\inner*{X_k^{\text{trunc}}(s,\cdot),\pi(\cdot\mid s)-\pi'(\cdot\mid s)} - \E_{s\sim d_{\nu,\pi_k}} \frac{d_{\mu,\pi^*}(s)}{d_{\nu,\pi_k}(s)}\inner*{X_k(s,\cdot),\pi(\cdot\mid s)-\pi'(\cdot\mid s)} \\ 
        &=\br*{\E_{s\sim d_{\nu,\pi_k}^{\text{trunc}}} \frac{d_{\mu,\pi^*}(s)}{d_{\nu,\pi_k}(s)}\inner*{X_k^{\text{trunc}}(s,\cdot),\pi(\cdot\mid s)-\pi'(\cdot\mid s)}  - \E_{s\sim d_{\nu,\pi_k}}\frac{d_{\mu,\pi^*}(s)}{d_{\nu,\pi_k}(s)}\inner*{X_k^{\text{trunc}}(s,\cdot),\pi(\cdot\mid s)-\pi'(\cdot\mid s)}} \\
        & + \br*{\E_{s\sim d_{\nu,\pi_k}}\frac{d_{\mu,\pi^*}(s)}{d_{\nu,\pi_k}(s)}\inner*{X_k^{\text{trunc}}(s,\cdot),\pi(\cdot\mid s)-\pi'(\cdot\mid s)}  - \E_{s\sim d_{\nu,\pi_k}}\frac{d_{\mu,\pi^*}(s)}{d_{\nu,\pi_k}(s)}\inner*{X_k(s,\cdot),\pi(\cdot\mid s)-\pi'(\cdot\mid s)}}.
    \end{align*}
    
    The first source of bias is due to the truncation of the state sampling after $T$ iterations, and the second source of bias is due to the truncation done in the estimation of $\qHpik(s,a)$, for the chosen state $s$ and action $a$.

    First, we bound the first error term.
    Observe that for any $s$,
    
    \begin{align}\label{eq: truncation of stationary distribution}
        \sum_s \abs*{ d_{\nu,\pi_k}^{\text{trunc}}(s) - d_{\nu,\pi_k}(s) }  &= \sum_s \abs*{(1-\gamma)\sum_{t=0}^{T-1} \gamma^t p(s_t=s\mid\nu,\pi_k) + \gamma^{T} p(s_{T}=s \mid \nu,\pi_k) - (1-\gamma)\sum_{t=0}^\infty \gamma^t p(s_t=s\mid\nu,\pi_k) }\nonumber \\
        & = \sum_s \abs*{\gamma^{T} p(s_{T}=s \mid \nu,\pi_k) - (1-\gamma)\sum_{t=T}^\infty \gamma^t p(s_t=s\mid\nu,\pi_k) } \nonumber\\
        & \leq  \sum_s\abs*{\gamma^{T} p(s_{T}=s \mid \nu,\pi_k)} +\sum_s\abs*{ (1-\gamma)\sum_{t=T}^\infty \gamma^t p(s_t=s\mid\nu,\pi_k) } \nonumber \\
        & =  \sum_s{\gamma^{T} p(s_{T}=s \mid \nu,\pi_k)} +\sum_s { (1-\gamma)\sum_{t=T}^\infty \gamma^t p(s_t=s\mid\nu,\pi_k) } \nonumber \\
        & = {\gamma^{T} \sum_sp(s_{T}=s \mid \nu,\pi_k)} +\ { (1-\gamma)\sum_{t=T}^\infty \gamma^t \sum_s p(s_t=s\mid\nu,\pi_k) } \nonumber \\
        & \leq \gamma^{T} + (1-\gamma)\sum_{t=T}^\infty \gamma^t  \nonumber \\
        & = 2\gamma^T
    \end{align}
   
where the third transition is due to the triangle inequality, the fourth transition is due to the fact that for any $t$, $\gamma^t p(s_t \mid \nu,\pi_k) \geq 0$ and the sixth transition is by the fact that $\sum_s p(s_t=s|\nu,\pi_k)\leq 1$ for any $t$ as a probability distribution.
    
Thus,
\begin{align*}
    & \E_{s\sim d_{\nu,\pi_k}^{\text{trunc}}}  \frac{d_{\mu,\pi^*}(s)}{d_{\nu,\pi_k}(s)} \inner*{X_k^{\text{trunc}}(s,\cdot),\pi(\cdot\mid s)-\pi'(\cdot\mid s)}  - \E_{s\sim d_{\nu,\pi_k}} \frac{d_{\mu,\pi^*}(s)}{d_{\nu,\pi_k}(s)} \inner*{X_k^{\text{trunc}}(s,\cdot),\pi(\cdot\mid s)-\pi'(\cdot\mid s)} \\
    & = \sum_s \br*{d_{\nu,\pi_k}^{\text{trunc}}(s) - d_{\nu,\pi_k}(s)} \frac{d_{\mu,\pi^*}(s)}{d_{\nu,\pi_k}(s)} \inner*{X_k^{\text{trunc}}(s,\cdot),\pi(\cdot\mid s)-\pi'(\cdot\mid s)} \\
    & \leq \sum_s \abs*{d_{\nu,\pi_k}^{\text{trunc}}(s) - d_{\nu,\pi_k}(s)}\abs*{ \frac{d_{\mu,\pi^*}(s)}{d_{\nu,\pi_k}(s)} \inner*{X_k^{\text{trunc}}(s,\cdot),\pi(\cdot\mid s)-\pi'(\cdot\mid s)}} \\
    & \leq \max_s \abs*{\frac{d_{\mu,\pi^*}(s)}{d_{\nu,\pi_k}(s)} \inner*{X_k^{\text{trunc}}(s,\cdot),\pi(\cdot\mid s)-\pi'(\cdot\mid s)}} \sum_s \abs*{d_{\nu,\pi_k}^{\text{trunc}}(s) - d_{\nu,\pi_k}(s)} \\
    & \leq 2 \gamma^T \norm{\frac{d_{\mu,\pi^*}}{d_{\nu,\pi_k}}}_\infty\max_s \abs*{\inner*{X_k^{\text{trunc}}(s,\cdot),\pi(\cdot\mid s)-\pi'(\cdot\mid s)}} \\
    & \leq  \norm{\frac{d_{\mu,\pi^*}}{d_{\nu,\pi_k}}}_\infty t_k  \HoeffdingBound(k,\lambda)2\gamma^T,
\end{align*}
where the fourth transition is by plugging in \eqref{eq: truncation of stationary distribution} and the last transition is by repeating similar analysis to \eqref{eq: bounded RV hat X}.

Now, by simple arithmetic, for any $\epsilon>0$, if the trajectory length $T>\frac{1}{1-\gamma}\log \frac{\epsilon}{16\HoeffdingBound(k,\lambda)}$, we get that the first bias term is bounded,
\begin{align}\label{eq: error due to a finite horizon of chosen state}
     &\E_{s\sim d_{\nu,\pi_k}^{\text{trunc}}}  \frac{d_{\mu,\pi^*}(s)}{d_{\nu,\pi_k}(s)} \inner*{X_k^{\text{trunc}}(s,\cdot),\pi(\cdot\mid s)-\pi'(\cdot\mid s)}  - \E_{s\sim d_{\nu,\pi_k}} \frac{d_{\mu,\pi^*}(s)}{d_{\nu,\pi_k}(s)} \inner*{X_k^{\text{trunc}}(s,\cdot),\pi(\cdot\mid s)-\pi'(\cdot\mid s)} \nonumber \\
     &\leq \norm{\frac{d_{\mu,\pi^*}}{d_{\nu,\pi_k}}}_\infty t_k \frac{\epsilon}{8}
\end{align}

Next, we bound the second error term.

First, observe that for any $s,a$,
\begin{align}\label{eq: difference between biased and unbiased q}
    &\abs*{\E \hat q_{\lambda,\text{trunc}}^{\pi_k}(s,a,m) - \qHpik(s,a)} \nonumber\\
    &= \abs*{ \E \brs*{\sum_{t=0}^{T-1} \gamma^t \br*{c_t(s_t,a_t) + \lambda \omegaS{s_t}{\pi_k}} \mid s_0=s,a_0=a} -\E \brs*{\sum_{t=0}^\infty \gamma^t \br*{c_t(s_t,a_t) + \lambda \omegaS{s_t}{\pi_k}} \mid s_0=s,a_0=a} }\nonumber\\
    &= \abs*{\E \brs*{\sum_{t=0}^{T-1} \gamma^t \br*{c_t(s_t,a_t) + \lambda \omegaS{s_t}{\pi_k}} - \sum_{t=0}^\infty \gamma^t \br*{c_t(s_t,a_t) + \lambda \omegaS{s_t}{\pi_k}} \mid s_0=s,a_0=a} } \nonumber\\
    & = \abs*{\E \brs*{\sum_{t={T}}^\infty \gamma^t \br*{c_t(s_t,a_t) + \lambda \omegaS{s_t}{\pi_k}} \mid s_0=s,a_0=a} }\nonumber\\
    & \leq \gamma^{T}\frac{\CmaxLambda}{1-\gamma}
\end{align}

Now,

\begin{align*}
    &\E_{s\sim d_{\nu,\pi_k}}\frac{d_{\mu,\pi^*}(s)}{d_{\nu,\pi_k}(s)} \inner*{X_k^{\text{trunc}}(s,\cdot),\pi(\cdot\mid s)-\pi'(\cdot\mid s)}  - \E_{s\sim d_{\nu,\pi_k}}\frac{d_{\mu,\pi^*}(s)}{d_{\nu,\pi_k}(s)} \inner*{X_k(s,\cdot),\pi(\cdot\mid s)-\pi'(\cdot\mid s)} \\
    &= \E_{s\sim d_{\nu,\pi_k}}\frac{d_{\mu,\pi^*}(s)}{d_{\nu,\pi_k}(s)} \inner*{X_k^{\text{trunc}}(s,\cdot)-X_k(s,\cdot),\pi(\cdot\mid s)-\pi'(\cdot\mid s)} \\
    &= \sum_s d_{\nu,\pi_k}(s)\frac{d_{\mu,\pi^*}(s)}{d_{\nu,\pi_k}(s)} \inner*{X_k^{\text{trunc}}(s,\cdot)-X_k(s,\cdot),\pi(\cdot\mid s)-\pi'(\cdot\mid s)} \\
    & \leq \max_s \frac{d_{\mu,\pi^*}(s)}{d_{\nu,\pi_k}(s)} \inner*{X_k^{\text{trunc}}(s,\cdot)-X_k(s,\cdot),\pi(\cdot\mid s)-\pi'(\cdot\mid s)} \\
    & \leq t_k \norm{\frac{d_{\mu,\pi^*}}{d_{\nu,\pi_k}}}_\infty\max_s \inner*{X_k^{\text{trunc}}(s,\cdot)-X_k(s,\cdot),\pi(\cdot\mid s)-\pi'(\cdot\mid s)} \\
    & = t_k \norm{\frac{d_{\mu,\pi^*}}{d_{\nu,\pi_k}}}_\infty\max_s\inner*{\E \hat q_{\lambda,\text{trunc}}^{\pi_k}(s,\cdot,m)-\qHpik(s,\cdot),\pi(\cdot\mid s)-\pi'(\cdot\mid s)} \\
    & \leq t_k \norm{\frac{d_{\mu,\pi^*}}{d_{\nu,\pi_k}}}_\infty\max_s \norm{\E \hat q_{\lambda,\text{trunc}}^{\pi_k}(s,\cdot,m)-\qHpik(s,\cdot)}_\infty \norm{\pi(\cdot\mid s)-\pi'(\cdot\mid s)}_1 \\
    &\leq 2 \norm{\frac{d_{\mu,\pi^*}}{d_{\nu,\pi_k}}}_\infty t_k \frac{\CmaxLambda}{1-\gamma} \gamma^T,
\end{align*}
where the first transition is due to the linearity of expectation, the third transition is by the fact the summation of $d_{\nu,\pi_k}$ is convex, the fourth transition is by the fact $\frac{d_{\mu,\pi^*}(s)}{d_{\nu,\pi_k}(s)}$ is non-negative for any $s$ and by maximizing each term separately, the fifth transition is by using the definitions of $X_k$ and $X_k^{\text{trunc}}$, the sixth is using H{\"o}lder's inequality and the last transition is due to \eqref{eq: difference between biased and unbiased q}.

Now, using the same $T$, by the fact $\HoeffdingBound(k,\lambda) > \frac{2\CmaxLambda}{1-\gamma}$, we have that

\begin{align}\label{eq: error due to a finite horizon of q estimation}
    &\E_{s\sim d_{\nu,\pi_k}}\frac{d_{\mu,\pi^*}(s)}{d_{\nu,\pi_k}(s)} \inner*{X_k^{\text{trunc}}(s,\cdot),\pi(\cdot\mid s)-\pi'(\cdot\mid s)}  - \E_{s\sim d_{\nu,\pi_k}}\frac{d_{\mu,\pi^*}(s)}{d_{\nu,\pi_k}(s)} \inner*{X_k^{\text{trunc}}(s,\cdot),\pi(\cdot\mid s)-\pi'(\cdot\mid s)} \nonumber\\
    &\leq  \norm{\frac{d_{\mu,\pi^*}}{d_{\nu,\pi_k}}}_\infty t_k \frac{\epsilon}{8}.
\end{align}

Finally, combining \eqref{eq: error due to a finite horizon of chosen state} and \eqref{eq: error due to a finite horizon of q estimation} concludes the results.
\end{proof}

\end{lemma}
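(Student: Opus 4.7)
The plan is to split the total bias into two independent pieces, each driven by the truncation length $T$. In Sample-Based TRPO two truncations occur: first, the state $s_m$ is drawn by a geometric procedure that is cut after $T$ steps, so its law $d^{\text{trunc}}_{\nu,\pi_k}$ differs from the true $d_{\nu,\pi_k}$; second, the Monte Carlo rollout that produces $\hat q^{\pi_k}_{\lambda,\text{trunc}}(s_m,a_m,m)$ is cut after $T$ transitions, so its expectation equals $\qHpik$ only up to a tail remainder. I would add and subtract the hybrid quantity in which the truncated estimator is drawn against the true distribution $d_{\nu,\pi_k}$. This reduces the bias under analysis to (a) a distributional-shift term for the state sample and (b) a pointwise estimator-bias term for the $q$-rollout.

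For (a), I would use the explicit form $d^{\text{trunc}}_{\nu,\pi_k}(s)=(1-\gamma)\sum_{t=0}^{T-1}\gamma^{t}p(s_t=s\mid\nu,\pi_k)+\gamma^{T}p(s_T=s\mid\nu,\pi_k)$, so that the triangle inequality together with the tail identity $(1-\gamma)\sum_{t=T}^{\infty}\gamma^{t}=\gamma^{T}$ yields $\sum_s|d^{\text{trunc}}_{\nu,\pi_k}(s)-d_{\nu,\pi_k}(s)|\le 2\gamma^{T}$. The integrand $\inner{X^{\text{trunc}}_k(s,\cdot),\pi-\pi'}$ is bounded in max-norm by the same computation used in the unbiased case (Lemma~\ref{lemma: one step approximation error bound unbiased}), giving $|\inner{X^{\text{trunc}}_k(s,\cdot),\pi-\pi'}|\le t_k\HoeffdingBound(k,\lambda)$ via H\"older and the bounds of Lemma~\ref{lemma: usefull bounds approximate} and Lemma~\ref{lemma: difference of gradient of omegas approximate} on the $q$-function and on $\nabla\omegaS{s}{\pi_{k+1}}-\nabla\omegaS{s}{\pi_k}$. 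Pulling out $\lVert d_{\mu,\pi^*}/d_{\nu,\pi_k}\rVert_\infty$ produces a first contribution of order $2\gamma^{T}t_k\lVert d_{\mu,\pi^*}/d_{\nu,\pi_k}\rVert_\infty\HoeffdingBound(k,\lambda)$.

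For (b), I would control the rollout bias pointwise: since the truncated estimator drops the geometric tail from step $T$ onward and the per-step shaped cost is at most $\CmaxLambda$, a direct calculation gives $|\E\hat q^{\pi_k}_{\lambda,\text{trunc}}(s,a,m)-\qHpik(s,a)|\le \gamma^{T}\CmaxLambda/(1-\gamma)$. Applying H\"older against $\lVert\pi-\pi'\rVert_1\le 2$ and reintroducing the importance factor yields a second contribution of order $2\gamma^{T}t_k\lVert d_{\mu,\pi^*}/d_{\nu,\pi_k}\rVert_\infty\CmaxLambda/(1-\gamma)$, which is dominated by the first since $\HoeffdingBound(k,\lambda)\ge 2\CmaxLambda/(1-\gamma)$. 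Choosing $T=\frac{1}{1-\gamma}\log\frac{16\HoeffdingBound(k,\lambda)}{\epsilon}$ (absorbing the sign so that $\gamma^{T}\le \epsilon/(16\HoeffdingBound(k,\lambda))$) makes each contribution at most $t_k\lVert d_{\mu,\pi^*}/d_{\nu,\pi_k}\rVert_\infty\epsilon/8$; summing yields the advertised $\epsilon/4$ bound.

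The main obstacle I anticipate is bookkeeping rather than a deep inequality. The constant $\HoeffdingBound(k,\lambda)$ must be defined consistently with its counterpart in Lemma~\ref{lemma: one step approximation error bound unbiased}, and in the non-euclidean regularized case the $\nabla\omega$ difference grows like $\log k$ or like $1/(1-\lambda t_k)$, which inflates $\HoeffdingBound$ and hence the required $T$. Arranging the exponent so that $\gamma^{T}$ exactly cancels $\HoeffdingBound(k,\lambda)$ in both regimes is what forces the logarithmic choice of $T$ and is the only place where care is required; everything else is a mechanical application of the triangle inequality, geometric tail bounds, and H\"older.
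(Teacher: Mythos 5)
Your proposal matches the paper's proof essentially step for step: the same decomposition into a state-distribution shift term (bounded via $\sum_s|d^{\text{trunc}}_{\nu,\pi_k}(s)-d_{\nu,\pi_k}(s)|\le 2\gamma^{T}$) and a rollout-bias term (bounded pointwise by $\gamma^{T}\CmaxLambda/(1-\gamma)$ and dominated using $\HoeffdingBound(k,\lambda)\ge 2\CmaxLambda/(1-\gamma)$), with the same choice of $T$ making each piece at most $t_k\lVert d_{\mu,\pi^*}/d_{\nu,\pi_k}\rVert_\infty\,\epsilon/8$. Your rewriting of the truncation length as $T=\frac{1}{1-\gamma}\log\frac{16\HoeffdingBound(k,\lambda)}{\epsilon}$ is just the sign-corrected form of the paper's expression and does not change the argument.
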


In the next lemma we combine the results of Lemmas~\ref{lemma: one step approximation error bound unbiased} and \ref{lemma: truncation error bound} to bound the overall approximation error due to both sampling and truncation.

\begin{lemma}[Approximation error bound using truncated biased sampling]\label{lemma: one step approximation error bound}
For any $\epsilon,\tilde \delta>0$, if the number of trajectories in the $k$-th iteration is
$$ M_k \geq \frac{8\HoeffdingBound(k,\lambda)^2}{\epsilon^2} \br*{ S \log {2A} + \log{1 / \tilde \delta}},$$
and the number of samples in the truncated sampling process is of length 

$$T_k\geq\frac{1}{1-\gamma}\log \frac{\epsilon}{8\HoeffdingBound(k,\lambda)},$$
then with probability of $1- \tilde \delta$,
\begin{align*}
    d_{\mu,\pi^*}\epsilon_k \leq t_k \norm{ \frac{d_{\mu,\pi^*}}{d_{\nu,\pi_k}} }_\infty \frac{\epsilon}{2},
\end{align*}
and the overall number of interaction with the MDP is in the $k$-th iteration is
$$ O\br*{\frac{\HoeffdingBound(k,\lambda)^2 \br*{S\log A + \log1/\tilde \delta}}{(1-\gamma)\epsilon^2}},$$
where $\HoeffdingBound(k,\lambda)=\frac{4  A\CmaxLambda}{1-\gamma}$ and $\HoeffdingBound(k,\lambda)=  \frac{2 A\CmaxLambda}{1-\gamma} \br*{\frac{1}{1-\lambda t_k}+1+\lambda \log k}$ in the~euclidean and non-euclidean settings respectively.
\end{lemma}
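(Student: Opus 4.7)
The plan is to decompose $d_{\mu,\pi^*}\epsilon_k$ into a \emph{sampling} error plus a \emph{truncation} bias, control each contribution by a separate budget of $t_k\!\norm{d_{\mu,\pi^*}/d_{\nu,\pi_k}}_\infty\epsilon/4$, and invoke the already-proved Lemmas~\ref{lemma: one step approximation error bound unbiased} and~\ref{lemma: truncation error bound}. Concretely, let $\hat X_k^{\mathrm{trunc}}(s,\cdot,m)$ be the random variable defined exactly as $\hat X_k$ in~\eqref{eq: hat X def} but with the truncated rollout $\hat q_{\lambda,\mathrm{trunc}}^{\pi_k}$, and let $X_k^{\mathrm{trunc}}(s,\cdot)=\E[\hat X_k^{\mathrm{trunc}}(s,\cdot,m)\mid s_m=s]$ denote its conditional mean under the truncated sampling distribution $d_{\nu,\pi_k}^{\mathrm{trunc}}$. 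Following~\eqref{eq: bounding the approximation error 2}, I would bound $d_{\mu,\pi^*}\epsilon_k$ uniformly over the (finite) set of deterministic policies $\Pi^{\mathrm{det}}$ by the maximum of
\begin{align*}
\underbrace{\tfrac{1}{M_k}\!\sum_{m=1}^{M_k}\!\tfrac{d_{\mu,\pi^*}(s_m)}{d_{\nu,\pi_k}(s_m)}\inner{\hat X_k^{\mathrm{trunc}}(s_m,\cdot,m),\pi^*\!-\!\pi'}-\E_{s\sim d_{\nu,\pi_k}^{\mathrm{trunc}}}\!\tfrac{d_{\mu,\pi^*}(s)}{d_{\nu,\pi_k}(s)}\inner{X_k^{\mathrm{trunc}}(s,\cdot),\pi^*\!-\!\pi'}}_{=:E_{\mathrm{samp}}}\\
+\underbrace{\E_{s\sim d_{\nu,\pi_k}^{\mathrm{trunc}}}\!\tfrac{d_{\mu,\pi^*}(s)}{d_{\nu,\pi_k}(s)}\inner{X_k^{\mathrm{trunc}}(s,\cdot),\pi^*\!-\!\pi'}-\E_{s\sim d_{\nu,\pi_k}}\!\tfrac{d_{\mu,\pi^*}(s)}{d_{\nu,\pi_k}(s)}\inner{X_k(s,\cdot),\pi^*\!-\!\pi'}}_{=:E_{\mathrm{trunc}}}.
\end{align*}

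For $E_{\mathrm{trunc}}$, I would simply cite Lemma~\ref{lemma: truncation error bound}: the same split used there (finite-horizon state distribution plus finite-horizon $q$-estimation) shows that choosing $T_k\geq\frac{1}{1-\gamma}\log\frac{\epsilon}{8\HoeffdingBound(k,\lambda)}$ forces $E_{\mathrm{trunc}}\leq t_k\norm{d_{\mu,\pi^*}/d_{\nu,\pi_k}}_\infty\epsilon/4$ deterministically, uniformly in $\pi'$.

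For $E_{\mathrm{samp}}$, I would replay the Hoeffding + union-bound argument of Lemma~\ref{lemma: one step approximation error bound unbiased}, the only change being that the per-trajectory summands are now defined through $\hat q_{\lambda,\mathrm{trunc}}^{\pi_k}$ rather than $\hat q_\lambda^{\pi_k}$. The boundedness estimate~\eqref{eq: bounded RV hat X} still applies because $|\hat q_{\lambda,\mathrm{trunc}}^{\pi_k}|\leq\CmaxLambda/(1-\gamma)$, so each summand is bounded by $t_k\norm{d_{\mu,\pi^*}/d_{\nu,\pi_k}}_\infty\HoeffdingBound(k,\lambda)$. Hoeffding's inequality applied to the $M_k$ i.i.d.\ trajectories, combined with a union bound over the $|\Pi^{\mathrm{det}}|=A^S$ deterministic policies, then yields $E_{\mathrm{samp}}\leq t_k\norm{d_{\mu,\pi^*}/d_{\nu,\pi_k}}_\infty\epsilon/4$ with probability $\geq 1-\tilde\delta$ provided $M_k\geq\frac{8\HoeffdingBound(k,\lambda)^2}{\epsilon^2}(S\log 2A+\log 1/\tilde\delta)$ (the factor $8$ replaces the factor $2$ of Lemma~\ref{lemma: one step approximation error bound unbiased} because we only afford $\epsilon/4$ instead of $\epsilon/2$). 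Adding the two bounds gives the advertised $\epsilon/2$, and the total interaction budget is $M_k\cdot T_k$, which after absorbing the $\log(\HoeffdingBound/\epsilon)$ factor into the $O(\cdot)$ matches the stated sample count.

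The only non-routine step is verifying that the truncated estimator still satisfies the $\HoeffdingBound$ boundedness used in~\eqref{eq: bounded RV hat X}; this is immediate since truncation can only shrink the magnitude of the geometric sum, so the same bound on $\norm{\hat X_k^{\mathrm{trunc}}}_\infty$ (and hence the same concentration constant) carries over without modification. Everything else is bookkeeping of the $1/4$ vs.\ $1/2$ error budgets.
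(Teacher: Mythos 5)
Your proposal is correct and mirrors the paper's own argument: the paper likewise splits $d_{\mu,\pi^*}\epsilon_k$ into an empirical-mean-minus-expectation term handled by the Hoeffding/union-bound argument of Lemma~\ref{lemma: one step approximation error bound unbiased} (with the $\epsilon/4$ budget forcing the factor $8$ in $M_k$) plus a truncation-bias term handled by Lemma~\ref{lemma: truncation error bound}, and concludes with the same $M_k\cdot T_k$ bookkeeping for the interaction count. The boundedness check for the truncated estimator that you flag is also the only point the paper needs beyond direct citation, and your justification of it is the same.
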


\begin{proof}

Repeating the same steps of Lemma~\ref{lemma: one step approximation error bound unbiased}, we re-derive
equation \eqref{eq: bounding the approximation error 2},
\begin{align*}
    d_{\mu,\pi^*}\epsilon_k \leq & \max_{\pi'} \Big\{ \frac{1}{M_k}\sum_{m=1}^{M_k}\sum_s \mathbbm{1}\brc*{s=s_m} \frac{d_{\mu,\pi^*}(s)}{ d_{\nu,\pi_k}(s)}  \inner*{ \hat{X}_k(s,\cdot,m), \pi^*(\cdot \mid s_m) - \pi'(\cdot \mid s_m) } \nonumber\\
    &- \sum_s d_{\mu,\pi^*}(s) \inner*{ X_k(s,\cdot),\pi^*(\cdot \mid s)- \pi'(\cdot \mid s) } \Big\}.
\end{align*}

Now, we move on to deal with a truncated trajectory: In Appendix~\ref{supp: approximate update rule TRPO} we defined a nearly unbiased estimation process for $\qHpik$, i.e., $\frac{1}{M_k}\sum_{m=1}^{M_k}\sum_s \mathbbm{1}\brc*{s=s_m} \frac{d_{\mu,\pi^*}(s)}{ d_{\nu,\pi_k}(s)}  \inner*{ \hat{X}_k(s,\cdot,m), \pi^*(\cdot \mid s_m) - \pi'(\cdot \mid s_m) } $ is no longer an unbiased estimator as in Lemma~\ref{lemma: one step approximation error bound unbiased}. In what follows we divide the error to two sources of error, one due to the finite sampling error (finite number of trajectories) and the other due to the bias admitted by the truncation.

For any $\pi'$, denote the following variables,
\begin{align*}
    & \hat Y_m(\pi') \triangleq \frac{d_{\mu,\pi^*}(s_m)}{ d_{\nu,\pi_k}(s_m)}\inner{\hat X_k(s_m,\cdot,m), \pi^*(\cdot \mid s_m) - \pi'(\cdot \mid s_m)} \\
    & Y(\pi') \triangleq  \sum_s d_{\mu,\pi^*}(s) \inner*{ X_k(s,\cdot), \pi^*(\cdot \mid s) - \pi'(\cdot \mid s) }.
\end{align*}
By plugging this new notation in \eqref{eq: bounding the approximation error 2},  we can write,
\begin{align}\label{eq: estimation error with bias}
    d_{\mu,\pi^*}\epsilon_k &\leq \max_{\pi'}  \frac{1}{M} \sum_{m=1}^M \hat Y_m(\pi') - Y(\pi')  \nonumber\\
    &= \max_{\pi'}  \frac{1}{M} \sum_{m=1}^M \hat Y_m(\pi') - \E \hat Y_m(\pi') + \E \hat Y_m(\pi') - Y(\pi')  \nonumber \\
    &\leq \underbrace{\max_{\pi'} \frac{1}{M} \sum_{m=1}^M \hat Y_m(\pi') - \E \hat Y_m(\pi')}_{(1)} + \underbrace{\max_{\pi'}  \vphantom{\frac{1}{M}\sum_{m=1}^M} \E \hat Y_m(\pi') - Y(\pi') }_{(2)}, 
\end{align}
where the first inequality is by plugging in the definition of $Y(\pi'),\hat Y_M(\pi')$ in \eqref{eq: bounding the approximation error 2} and the last transition is by maximizing each of the terms in the sum independently. Note that (1) describes the error due to the finite sampling and (2) describes the error due to the truncation of the trajectories. Importantly, notice that in the case where we do not truncate the trajectory, the second term (2) equals zero by \eqref{eq: expectation of sample}. We will now use Lemma~\ref{lemma: one step approximation error bound unbiased} and Lemma~\ref{lemma: truncation error bound} to bound (1) and (2) respectively:


First, look at the \textbf{first term} (1). By definition it an unbiased estimation process. Furthermore, by equation \eqref{eq: bounded RV hat X}, $\hat Y_m(\pi')$ is bounded for all $s_m$ and $\pi'$ by
\begin{align*}
\hat Y_m(\pi') \leq  t_k \norm{\frac{d_{\mu,\pi^*}}{ d_{\nu,\pi_k}}}_\infty \HoeffdingBound(k,\lambda),
\end{align*}

Thus by applying Lemma~\ref{lemma: one step approximation error bound unbiased} we get that in order to guarantee that 
\begin{align}\label{eq: Hoeffding term bound}
\max_{\pi'} \frac{1}{M} \sum_{m=1}^M \br*{ \hat Y_m(\pi') - \E \hat Y_m(\pi')}\leq t_k \norm{\frac{d_{\mu,\pi^*}}{d_{\nu,\pi_k}}}_\infty \frac{\epsilon}{4},
\end{align}
we need the number of trajectories $M_k$ to be at least
\begin{align*}
    & M_k \geq \frac{8\HoeffdingBound(k,\lambda)^2}{\epsilon^2} \br*{ S \log {2A} + \log{1 / \tilde \delta}}.
\end{align*}

Next, we bound the \textbf{second term} (2).
By Lemma~\ref{lemma: truncation error bound}, using a trajectory of maximal length
$\frac{1}{1-\gamma} \log \frac{\epsilon}{8\HoeffdingBound (k,\lambda)}$,
the errors due to the truncated estimation process are bounded as follows,
\begin{align}\label{eq: bias term bound}
\max_{\pi'}  \vphantom{\frac{1}{M}\sum_{m=1}^M} \E \hat Y_m(\pi') - Y(\pi') \leq t_k \norm{\frac{d_{\mu,\pi^*}}{d_{\nu,\pi_k}}}_\infty \frac{\epsilon}{4}
\end{align}

Bounding the two terms by \eqref{eq: Hoeffding term bound} and \eqref{eq: bias term bound}, and plugging them back in \eqref{eq: estimation error with bias}, we get that using $M_k$ trajectories, where each trajectory is of length $O(\frac{1}{1-\gamma} \log \epsilon)$, we have that w.p. $1-\tilde \delta$
$$ d_{\mu,\pi^*} \epsilon_k \leq t_k \norm{\frac{d_{\mu,\pi^*}}{d_{\nu,\pi_k}}}_\infty \frac{\epsilon}{4} + t_k \norm{\frac{d_{\mu,\pi^*}}{d_{\nu,\pi_k}}}_\infty \frac{\epsilon}{4} = t_k \norm{\frac{d_{\mu,\pi^*}}{d_{\nu,\pi_k}}}_\infty \frac{\epsilon}{2}, $$
which concludes the result.

\end{proof}

So far, we proved the number of samples needed for a bounded error with high probability in the $k$-th iteration of Sample-Based TRPO. The following Lemma gives a bound for the accumulative error of Sample-Based TRPO after $k$ iterations.

\begin{lemma}[Cumulative approximation error]\label{lemma: cumulative approximation error}
For any $\epsilon,\delta>0$, if the number of trajectories in the $k$-th iteration is
$$ M_k \geq \frac{8\HoeffdingBound(N,\lambda)^2}{\epsilon^2} \br*{ S \log {2A} + \log{2(k+1)^2 / \delta}},$$
and the number of samples in the truncated sampling process is of length 
$$T\geq\frac{1}{1-\gamma}\log \frac{\epsilon}{8\HoeffdingBound(k,\lambda)},$$
then, with probability greater than $1-\delta$, uniformly on all $k\in \mathbb{N}$,
\begin{align*}
    \sum_{k=0}^N d_{\mu,\pi^*}\epsilon_k \leq \frac{\epsilon/2}{1-\gamma} \norm{ \frac{d_{\mu,\pi^*}}{\nu}}_\infty \sum_{k=0}^N t_k,
\end{align*}
where $\HoeffdingBound(k,\lambda)=\frac{4  A\CmaxLambda}{1-\gamma}$ and $\HoeffdingBound(k,\lambda)=  \frac{4 A \CmaxLambda}{1-\gamma} \br*{1+\mathbbm{1}\{ \lambda \neq 0 \} \log k }$ in the euclidean and non-euclidean settings respectively. 
\begin{proof}

Using Lemma~\ref{lemma: one step approximation error bound} with $\tilde \delta = \frac{6}{\pi^2}\frac{\delta}{(k+1)^2}$ and the union bound over all $k\in \mathbb{N}$, we get that w.p. bigger than $$\sum_{k=0}^{\infty} \frac{6}{\pi^2} \frac{\delta}{(k+1)^2} = \frac{6}{\pi^2} \delta \sum_{k=0}^{\infty} \frac{1}{(k+1)^2}= \delta ,$$
for any $k$, the following inequality holds
$$ d_{\mu,\pi^*}\epsilon_k \leq t_k \norm{ \frac{d_{\mu,\pi^*}}{d_{\nu,\pi_k}}}_\infty \frac{\epsilon}{2}. $$
where we used the solution to Basel's problem (the sum of reciprocals of the squares of the natural numbers) for calculating $\sum_{k=0}^\infty \frac{1}{(k+1)^2}$.

Thus, by summing the inequalities for $k=0,1,...,N$, we obtain
\begin{align*}
    \sum_{k=0}^N d_{\mu,\pi^*}\epsilon_k &\leq \frac{\epsilon}{2} \sum_{k=0}^N t_k\norm{ \frac{d_{\mu,\pi^*}}{d_{\nu,\pi_k}}}_\infty.
\end{align*}

Now, Using the fact that $\norm{ \frac{d_{\mu,\pi^*}}{d_{\nu,\pi_k}}}_\infty \leq \frac{1}{1-\gamma}\norm{ \frac{d_{\mu,\pi^*}}{\nu}}_\infty$, we have that w.p. of at least $\delta$,
\begin{align*}
    \sum_{k=0}^N d_{\mu,\pi^*}\epsilon_k \leq \frac{\epsilon/2}{1-\gamma} \norm{ \frac{d_{\mu,\pi^*}}{\nu}}_\infty\sum_{k=0}^N t_k.
\end{align*}
Lastly, by bounding $\pi^2/6\leq 2$ we conclude the proof.

\end{proof}
\end{lemma}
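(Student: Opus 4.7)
The plan is to chain together the per-iteration approximation bound, the standard $\pi^2/6$ summability trick for the union bound, and a single worst-case comparison between the stationary distribution and the restart distribution.

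First, I would invoke the per-iteration bound (Lemma on the approximation error with truncated biased sampling) separately at each iteration $k$, but with a shrinking confidence parameter $\tilde\delta_k \triangleq \frac{6}{\pi^2}\frac{\delta}{(k+1)^2}$. With this choice, the sample-count condition $M_k \geq \frac{8\,\HoeffdingBound(k,\lambda)^2}{\epsilon^2}\bigl(S\log 2A + \log 1/\tilde\delta_k\bigr)$ becomes exactly the hypothesis of the present lemma (after absorbing the $\log(6/\pi^2) \leq 0$ constant and noting that $\HoeffdingBound$ is non-decreasing in $k$, so the uniform choice at $N$ dominates each $k \leq N$). The trajectory-length condition is assumed to hold for each $k$, so the per-iteration conclusion ${d_{\mu,\pi^*}\epsilon_k \leq t_k\|d_{\mu,\pi^*}/d_{\nu,\pi_k}\|_\infty\,\epsilon/2}$ holds with probability at least $1-\tilde\delta_k$.

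Second, I would apply the union bound over all $k \in \mathbb{N}$. The total failure probability is at most
\[
\sum_{k=0}^\infty \tilde\delta_k = \frac{6\delta}{\pi^2}\sum_{k=0}^\infty \frac{1}{(k+1)^2} = \frac{6\delta}{\pi^2}\cdot\frac{\pi^2}{6} = \delta,
\]
using the Basel identity. Hence on a single good event of probability at least $1-\delta$, the per-iteration inequality holds simultaneously for every $k$, and summing from $k=0$ to $N$ yields
\[
\sum_{k=0}^N d_{\mu,\pi^*}\epsilon_k \;\leq\; \frac{\epsilon}{2}\sum_{k=0}^N t_k\,\Bigl\|\tfrac{d_{\mu,\pi^*}}{d_{\nu,\pi_k}}\Bigr\|_\infty.
\]

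Third, to remove the dependence on the iteration-dependent denominator $d_{\nu,\pi_k}$, I would use the elementary lower bound $d_{\nu,\pi_k}(s) = (1-\gamma)\nu(s) + \gamma \sum_{s'} \cdots \geq (1-\gamma)\nu(s)$, which holds by definition of the discounted occupancy as a mixture placing weight $(1-\gamma)$ on the starting distribution. This gives $\|d_{\mu,\pi^*}/d_{\nu,\pi_k}\|_\infty \leq \frac{1}{1-\gamma}\|d_{\mu,\pi^*}/\nu\|_\infty$ uniformly in $k$, and substituting produces the claimed bound.

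No real obstacle is expected here, since the heavy lifting (bounded random variables, Hoeffding, union bound over deterministic policies, truncation bias) has already been handled in the one-iteration Lemma. The only point requiring care is making sure the $\tilde\delta_k$ schedule is summable while keeping the logarithmic factor $\log(1/\tilde\delta_k) = \log((k+1)^2\pi^2/(6\delta)) \leq \log(2(k+1)^2/\delta)$ compatible with the sample complexity $M_k$ stated in the lemma hypothesis.
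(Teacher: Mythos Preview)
Your proposal is correct and follows essentially the same approach as the paper: apply the per-iteration approximation-error lemma with $\tilde\delta_k=\frac{6}{\pi^2}\frac{\delta}{(k+1)^2}$, take a union bound over all $k$ using the Basel sum, sum the resulting per-iteration inequalities, and finally replace $d_{\nu,\pi_k}$ by $(1-\gamma)\nu$ via $d_{\nu,\pi_k}(s)\geq(1-\gamma)\nu(s)$. Your verification that $\log\bigl((k+1)^2\pi^2/(6\delta)\bigr)\leq\log\bigl(2(k+1)^2/\delta\bigr)$ (equivalently $\pi^2/6\leq2$) is exactly the constant-matching step the paper uses at the end.
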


We are ready to prove the convergence rates for the unregularized and regularized algorithms, similarly to the proofs of Exact TRPO (see Appendix~\ref{supp: exact TRPO unregularized proof}).

\subsection{Proof of Theorem \ref{theorem: Sample-Based TRPO All}}\label{supp: proof of Sample-Based TRPO}

For the sake of completeness and readability, we restate here Theorem~\ref{theorem: Sample-Based TRPO All}, this time including all logarithmic factors, but excluding higher orders in $\lambda$ (All constants are in the proof):

\begin{theorem*}[Convergence Rate: Sample-Based TRPO]
Let $\{\pik\}_{k\geq 0}$ be the sequence generated by Sample-Based TRPO, using $M_k \geq \frac{\HoeffdingBound(N,\lambda)^2}{2\epsilon^2} \br*{ S \log {2A} + \log{\pi^2(k+1)^2 / 6\delta}}$ trajectories in each iteration, and $\{\mu v_{\textbf{best}}^k\}_{k\geq0}$ be the sequence of best achieved values, $\mu v_{\textbf{best}}^N\triangleq \arg\min_{k=0,...,N} \mu v^{\pik}_\lambda-\mu v_\lambda^*$. Then, with probability greater than $1-\delta$ for every $\epsilon>0$ the following holds for all $N\geq1$.
\begin{enumerate}
    \item \emph{(Unregularized)} Let $\lambda=0$, $t_k=\frac{(1-\gamma)^2}{\euclid \Cmax  \sqrt{k+1}}$ then
    \begin{align*}
       &\mu v_{\textbf{best}}^N - \mu \vstar  \\
       &\leq O\br*{\frac{\euclid \Cmax (\bregmanMax + \log N)}{(1-\gamma)\sqrt{N}}+ \frac{C^{\pi^*}\epsilon}{(1-\gamma)^2}}
    \end{align*}
    \item \emph{(Regularized)} Let $\lambda>0$, $t_k =  \frac{1}{\lambda (k+2)}$ then

\begin{align*}
  \mu  v_{\textbf{best}}^N - \mu  \vHstar   \leq  O \br*{ \frac{\euclid ^2 \approxAgrowth \CmaxLambda^2 \log N}{\lambda(1
    -\gamma)^3 N} + \frac{C^{\pi^*}\epsilon}{(1-\gamma)^2}}.
\end{align*}
\end{enumerate}
Where $\euclid =\sqrt{A}, \approxAgrowth=1, \bregmanMax=1, \HoeffdingBound(k,\lambda)=\frac{4  A\CmaxLambda}{1-\gamma}$ for the euclidean case, and $\euclid =1 , \approxAgrowth=A^2, \bregmanMax=\log A, \HoeffdingBound(k,\lambda)=  \frac{4 A \CmaxLambda}{1-\gamma} \br*{1+\mathbbm{1}\{ \lambda \neq 0 \} \log k }$ for the non-euclidean case.

\end{theorem*}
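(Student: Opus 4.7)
The proof proceeds by mimicking the analysis of Exact TRPO (Theorem on Exact TRPO), but carrying along the extra term $d_{\mu,\pi^*}\epsilon_k$ that appears on the right-hand side of the fundamental inequality of Sample-Based TRPO (Lemma~\ref{lemma: mu approximate fundamental inequality}). The key structural difference with the exact case is that we can no longer appeal to a policy improvement argument to justify replacing $v_\lambda^{\pi_k}$ by $v_\lambda^{\pi_N}$; instead, we use the ``best policy in hindsight'' trick, namely $\mu v_{\textbf{best}}^N - \mu v_\lambda^* \leq \mu v^{\pi_k}_\lambda - \mu v_\lambda^*$ for every $k \leq N$, so that
\begin{equation*}
\Big(\sum_{k=0}^N t_k\Big)\big(\mu v_{\textbf{best}}^N - \mu v_\lambda^*\big) \;\leq\; \sum_{k=0}^N t_k\big(\mu v_\lambda^{\pi_k} - \mu v_\lambda^*\big).
\end{equation*}

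The plan for the unregularized case ($\lambda=0$) is as follows. First, instantiate Lemma~\ref{lemma: mu approximate fundamental inequality} at $\pi=\pi^*$ with $\lambda=0$ and sum from $k=0$ to $N$. The Bregman-distance terms telescope, leaving $d_{\mu,\pi^*}\bregman{\pi^*}{\pi_0} \leq \omegaBound$ (Lemma~\ref{lemma: bounds on Dw}) minus a nonnegative term, plus $\sum_k t_k^2\gradBound^2/2$, plus $\sum_k d_{\mu,\pi^*}\epsilon_k$. The approximation error is controlled, uniformly in $N$ with probability $\geq 1-\delta$, by Lemma~\ref{lemma: cumulative approximation error}, which gives $\sum_k d_{\mu,\pi^*}\epsilon_k \leq \frac{\epsilon/2}{1-\gamma}\|d_{\mu,\pi^*}/\nu\|_\infty \sum_k t_k$. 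Dividing through by $(1-\gamma)\sum_k t_k$ and applying the best-policy-in-hindsight inequality yields
\begin{equation*}
\mu v_{\textbf{best}}^N - \mu v^* \;\leq\; \frac{1}{1-\gamma}\cdot\frac{\omegaBound + \tfrac{1}{2}\gradBound^2\sum_k t_k^2}{\sum_k t_k} + \frac{C^{\pi^*}\epsilon}{2(1-\gamma)^2}.
\end{equation*}
Plugging in $t_k = (1-\gamma)/(\euclid \Cmax \sqrt{k+1})$ and bounding $\sum_{k=0}^N 1/(k+1) = O(\log N)$, $\sum_{k=0}^N 1/\sqrt{k+1} = \Theta(\sqrt{N})$ delivers the claimed $\tilde O(1/\sqrt{N})$ rate with the $C^{\pi^*}\epsilon/(1-\gamma)^2$ residual.

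For the regularized case ($\lambda>0$), the same template applies but with $t_k = 1/(\lambda(k+2))$. After multiplying the fundamental inequality by $\lambda(k+2)$ as in the proof of the regularized Exact TRPO, the weighted Bregman terms telescope because $(1-\lambda t_k)\lambda(k+2) = \lambda(k+1)$, leaving a finite $\omegaBound$-type boundary contribution plus $\sum_k \gradBound^2(k;\lambda)/(2\lambda(k+2))$. The gradient bound $\gradBound^2(k;\lambda)$ from Lemma~\ref{lemma: usefull bounds approximate} is non-decreasing, so it can be pulled out at value $k=N$, contributing the factor $\approxAgrowth$ (which equals $1$ in the euclidean case and $A^2$ in the non-euclidean one, due to the $\log$-type growth of $\hatgradBound$). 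Summing $1/(k+2) = O(\log N)$ and dividing by $N$ (from $\sum_{k=0}^N 1 = N+1$ on the LHS after applying the best-policy-in-hindsight bound) yields the $\tilde O(1/N)$ rate, and the approximation error is handled identically via Lemma~\ref{lemma: cumulative approximation error}.

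The main obstacle is making the cumulative error bound compatible with both learning-rate schedules simultaneously. In particular, the per-iteration Hoeffding-type bound produced by Lemma~\ref{lemma: one step approximation error bound} is proportional to $t_k$; without this $t_k$-dependence (which in turn relies on Lemma~\ref{lemma: difference of gradient of omegas approximate} to keep $\|\nabla\omegaS{s}{\pi_{k+1}} - \nabla\omegaS{s}{\pi_k}\|_\infty = O(t_k)$), the residual error would scale as $\sum_k 1$ rather than $\sum_k t_k$ and the proof would break down. This is also why the non-euclidean sample complexity picks up the extra $\approxAgrowth = A^2$ factor. Once this $t_k$-scaled bound is in hand, the residual $\epsilon$-error is absorbed cleanly into the telescoping argument, and the remaining work is the same Fenchel/telescoping manipulation as in Exact TRPO.
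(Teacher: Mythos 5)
Your proposal matches the paper's own proof essentially step for step: instantiate the sample-based fundamental inequality at $\pi^*$, telescope the (weighted) Bregman terms, replace the missing improvement argument with the best-in-hindsight bound, and absorb the $t_k$-proportional approximation error (which indeed hinges on Lemma~\ref{lemma: difference of gradient of omegas approximate}) before plugging in the two stepsize schedules. The only cosmetic difference is in the regularized case, where after multiplying by $\lambda(k+2)=1/t_k$ the paper invokes the per-iteration bound $d_{\mu,\pi^*}\epsilon_k \leq t_k \norm{d_{\mu,\pi^*}/d_{\nu,\pi_k}}_\infty \epsilon/2$ (uniformly over $k$ via the union bound) rather than the cumulative lemma verbatim, but this is exactly the $t_k$-scaling you identify as the crux, so the argument goes through as you describe.
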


The proof of this theorem follows the almost identical steps as the proof of Theorem~\ref{theorem: Exact TRPO} in Appendix~\ref{supp: exact TRPO unregularized proof}, but two differences: The first, is the fact we also have the additional approximation error term $d_{\mu,\pi^*}\epsilon_k$. The second, is that for the sample-based case, as we don't have improvement guarantees such as in Lemma~\ref{lemma: policy improvement exact}, we prove convergence for best policy in hindsight, which have the value $\mu v_{\textbf{best}}^N\triangleq \arg\min_{k=0,...,N} \mu v^{\pik}_\lambda-\mu v_\lambda^*$.

\subsubsection{The Unregularized Case}

\begin{proof}
Applying Lemma~\ref{lemma: mu approximate fundamental inequality} and $\lambda=0$ (the unregularized case),
\begin{align*}
    &t_k(1-\gamma) (\mu\vpik-\mu \vstar) \\
&\leq    d_{\mu,\pi^*}\br*{ \bregman{\pi^*}{\pi_k} - \bregman{ \pi^* }{\pi_{k+1}}} + \frac{t_k^2 \gradBound^2}{2}+d_{\mu,\pi^*}\epsilon_k.
\end{align*}

Summing the above inequality over $k=0,1,...,N$, gives
\begin{align*}
    &\sum\limits_{k=0}^{N}  t_k(1-\gamma)(\mu \vpik- \mu\vstar) \\
    &\leq   d_{\mu,\pi^*}\bregman{\pistar}{\pi_0} - d_{\mu,\pi^*}\bregman{\pistar}{\pi_{N+1}} +  \sum\limits_{k=0}^{N}\frac{t_k^2 \gradBound^2}{2} + \sum_{k=0}^{N} d_{\mu,\pi^*}\epsilon_k\\
    &\leq  d_{\mu,\pi^*}\bregman{\pistar}{\pi_0} +  \sum\limits_{k=0}^{N}\frac{t_k^2 \gradBound^2}{2} + \sum_{k=0}^{N} d_{\mu,\pi^*}\epsilon_k\\
    &\leq  \omegaBound +  \sum\limits_{k=0}^{N}\frac{t_k^2 \gradBound^2}{2} + \sum_{k=0}^{N} d_{\mu,\pi^*}\epsilon_k.
\end{align*}

where in the second relation we used    $\bregman{\pistar}{\pi_{N+1}}\geq 0$ and thus $d_{\mu,\pi^*}\bregman{\pistar}{\pi_{N+1}}\geq 0$, and in the third relation Lemma \ref{lemma: bounds on Dw}.

Using the definition of $v_{\text{best}}^N$, we have that
$$\mu(v_{\text{best}}^N-\vstar) \sum\limits_{k=0}^{N}  t_k \leq \sum\limits_{k=0}^{N}  t_k(\mu \vpik-\mu \vstar),$$

and by some algebraic manipulations, we get
\begin{align*}
    \mu v_{\text{best}}^N-\mu \vstar \leq \frac{1}{1-\gamma}
   \frac{\omegaBound +  \sum\limits_{k=0}^{N}\frac{t_k^2 \gradBound^2}{2}+\sum\limits_{k=0}^{N} d_{\mu,\pi^*}\epsilon_k}{\sum\limits_{k=0}^{N} {t_k}}\\
   = \frac{1}{1-\gamma}
   \frac{\omegaBound +  \frac{ \gradBound^2}{2}\sum\limits_{k=0}^{N}t_k^2}{\sum\limits_{k=0}^{N} {t_k}} +\frac{1}{1-\gamma}\frac{\sum\limits_{k=0}^{N} d_{\mu,\pi^*}\epsilon_k}{{\sum\limits_{k=0}^{N} {t_k}}},
\end{align*}
Plugging in the stepsizes $t_k = \frac{1}{\gradBound\sqrt{k}}$, we get, 
\begin{align*}
      \mu v_{\textbf{best}}^N-\mu \vstar &\leq \frac{\gradBound}{1-\gamma}
   \frac{2\omegaBound + \sum\limits_{k=0}^{N}\frac{1}{k}}{2\sum\limits_{k=0}^{N} \frac{1}{\sqrt{k}}} + \frac{1}{1-\gamma}\frac{\sum\limits_{k=0}^{N} d_{\mu,\pi^*}\epsilon_k}{{\sum\limits_{k=0}^{N} {t_k}}}.
\end{align*}

Bounding the sums using (\citealp{beck2017first}, Lemma 8.27(a)) yields,

\begin{align*}
      \mu v_{\textbf{best}}^N-\mu \vstar &\leq O\br*{\frac{\gradBound}{1-\gamma}
   \frac{\omegaBound + \log{N}}{\sum\limits_{k=0}^{N} \frac{1}{\sqrt{k}}} + \frac{1}{\sum_{k=0}^N t_k}\frac{1}{1-\gamma}\sum_{k=0}^N d_{\mu,\pi^*}\epsilon_k}.
\end{align*}

Plugging in Lemma~\ref{lemma: cumulative approximation error}, we get that for any $\br*{\epsilon,\delta}$, if the number of trajectories in the $k$-th iteration is
$$ M_k \geq \frac{\HoeffdingBound(N,\lambda)^2}{2\epsilon^2} \br*{ S \log {2A} + \log{\pi^2(k+1)^2 / 6\delta}},$$
then, with probability greater than $1-\delta$,
\begin{align*}
      \mu v_{\textbf{best}}^N-\mu \vstar &\leq O\br*{\frac{\gradBound}{1-\gamma}
   \frac{\omegaBound + \log{N}}{\sum\limits_{k=0}^{N} \frac{1}{\sqrt{k}}} + \frac{1}{\sum_{k=0}^N t_k}\frac{\epsilon}{(1-\gamma)^2} \norm{ \frac{d_{\mu,\pi^*}}{\nu}}_\infty \sum_{k=0}^N t_k},
\end{align*}
where $\HoeffdingBound(k,\lambda)=\frac{4  A\CmaxLambda}{1-\gamma}$ and $\HoeffdingBound(k,\lambda)=  \frac{4 A \CmaxLambda}{1-\gamma} \br*{1+\mathbbm{1}\{ \lambda \neq 0 \} \log k }$ in the euclidean and non-euclidean settings respectively.

By rearranging, we get,
\begin{align*}
      \mu v_{\textbf{best}}^N-\mu \vstar &\leq O\br*{\frac{\gradBound}{1-\gamma}
   \frac{\omegaBound + \log{N}}{\sum\limits_{k=0}^{N} \frac{1}{\sqrt{k}}} + \frac{\epsilon}{(1-\gamma)^2} \norm{ \frac{d_{\mu,\pi^*}}{\nu}}_\infty }.
\end{align*}

Thus, for the euclidean case,
\begin{align*}
     \mu v_{\textbf{best}}^N-\mu \vstar \leq O\br*{\frac{\Cmax \sqrt{A}\log N}{(1-\gamma)^2\sqrt{N}} +\frac{1}{(1-\gamma)^2}\norm{\frac{d_{\mu,\pi^*}}{\nu}}\epsilon},
\end{align*}
and for the non-euclidean case,
\begin{align*}
    \mu v_{\textbf{best}}^N-\mu \vstar \leq O\br*{\frac{\Cmax(\log A +  \log N)}{(1-\gamma)^2\sqrt{N}} +\frac{1}{(1-\gamma)^2}\norm{\frac{d_{\mu,\pi^*}}{\nu}}\epsilon}.
\end{align*}

\end{proof}

\subsubsection{The Regularized Case}
\begin{proof}

Applying Lemma~\ref{lemma: mu approximate fundamental inequality} and setting $t_k= \frac{1}{\lambda(k+2)}$, we get,

\begin{align*}
&\frac{1-\gamma}{\lambda (k+2)} \br*{\mu v^{\pi_k}_\lambda- \mu v^{\pi^*}_\lambda } \\
&\leq   d_{\mu,\pi^*}\br*{  (1- \frac{1}{(k+2)})\bregman{\pi^*}{\pi_k} - \bregman{ \pi^* }{\pi_{k+1}}} +\frac{\gradBound^2(k;\lambda)}{2\lambda^2(k+2)^2} + d_{\mu,\pi^*}\epsilon_k\\
&\leq   d_{\mu,\pi^*}\br*{  \frac{k+1}{k+2}\bregman{\pi^*}{\pi_k} - \bregman{ \pi^* }{\pi_{k+1}}} +\frac{\gradBound^2(N;\lambda)}{2\lambda^2(k+2)^2} + d_{\mu,\pi^*}\epsilon_k,
\end{align*}

where in the second relation we used that fact  $\gradBound(k;\lambda)$ is a non-decreasing function of $k$ for both the euclidean and non-euclidean cases. 

Next, multiplying both sides by $\lambda (k+2)$, summing both sides from $k=0$ to $N$ and using the linearity of expectation, we get,

\begin{align*}
    \sum_{k=0}^{N} (1-\gamma)\br*{
    \mu \vHpik-\mu \vHstar}&\leq  d_{\mu,\pi^*}\br*{  \bregman{\pi^*}{\pi_0} - (N+2)\bregman{ \pi^* }{\pi_{N+1}}} +\sum_{k=0}^{N}\frac{\gradBound^2(N;\lambda)}{2\lambda (k+2)} + \sum\limits_{k=0}^{N} \lambda (k+2) d_{\mu,\pi^*}\epsilon_k\\
    &\leq  d_{\mu,\pi^*}  \bregman{\pi^*}{\pi_0} +\sum_{k=0}^{N}\frac{\gradBound^2(N;\lambda)}{2\lambda (k+2)} + \sum\limits_{k=0}^{N} \lambda (k+2) d_{\mu,\pi^*}\epsilon_k\\
    &\leq  \omegaBound +\sum_{k=0}^{N}\frac{\gradBound^2(N;\lambda)}{2\lambda (k+2)} + \sum\limits_{k=0}^{N} \lambda (k+2)d_{\mu,\pi^*}\epsilon_k \\ 
    & = \omegaBound +\sum_{k=0}^{N}\frac{\gradBound^2(N;\lambda)}{2\lambda (k+2)} + \sum\limits_{k=0}^{N} \frac{1}{t_k} d_{\mu,\pi^*}\epsilon_k,
\end{align*}
where the second relation holds by the positivity of the Bregman distance, the third relation by Lemma \ref{lemma: bounds on Dw} for uniformly initialized $\pi_0$, and the last relation by plugging back $t_k=\frac{1}{\lambda(k+2)}$ in the last term..

Bounding $\sum_{k=0}^N\frac{1}{k+2}\leq O(\log N)$, we get
\begin{align*}
    \sum_{k=0}^{N} 
    \mu\vHpik-\mu\vHstar  \leq O\br*{\frac{\omegaBound}{(1-\gamma)} +\frac{\gradBound^2(N;\lambda) \log N} {\lambda (1-\gamma)} + \frac{1}{1-\gamma}\sum\limits_{k=0}^{N} \frac{1}{t_k} d_{\mu,\pi^*}\epsilon_k}.
\end{align*}

By the definition of $v^N_{\text{best}}$, which gives $ (N+1) \br*{\mu v^N_{\text{best}}-\mu \vstar} \leq \sum\limits_{k=0}^{N}  \mu \vpik-\mu \vstar$,
and some algebraic manipulations, we obtain

\begin{align*}
    \mu v_{\textbf{best}}^N-\mu\vHstar \leq O\br*{\frac{\omegaBound}{(1-\gamma)N} +\frac{\gradBound^2(N;\lambda) \log N} {\lambda (1-\gamma)N} + \frac{1}{1-\gamma}\frac{1}{N}\sum_{k=0}^N \frac{1}{t_k}d_{\mu,\pi^*}\epsilon_k }.
\end{align*}

Plugging in Lemma~\ref{lemma: one step approximation error bound}, we get that for any $\br*{\epsilon,\delta}$, if the number of trajectories in the $k$-th iteration is
$$ M_k \geq \frac{\HoeffdingBound(k,\lambda)^2}{2\epsilon^2} \br*{ S \log {2A} + \log{\pi^2(k+1)^2 / 6\delta}},$$
then with probability of at least $1-\delta$, 
\begin{align*}
    \mu v_{\textbf{best}}^N-\mu\vHstar \leq O\br*{\frac{\omegaBound}{(1-\gamma)N} +\frac{\gradBound^2(N;\lambda) \log N} {\lambda (1-\gamma)N} + \frac{\epsilon}{(1-\gamma)^2} \norm{\frac{d_{\mu,\pi^*}}{\nu}}_\infty}.
\end{align*}
where $\HoeffdingBound(k,\lambda)=\frac{4  A\CmaxLambda}{1-\gamma}$ and $\HoeffdingBound(k,\lambda)=  \frac{4 A \CmaxLambda}{1-\gamma} \br*{1+\mathbbm{1}\{ \lambda \neq 0 \} \log k }$ in the~euclidean and non-euclidean settings respectively.

By Plugging the bounds $\omegaBound, \gradBound$ and $\CmaxLambda$, we get in the euclidean case,
\begin{align*}
    \mu v_{\textbf{best}}^N-\mu\vHstar \leq O\br*{\frac{\br*{\CmaxSquare+\lambda^2} A \log N} {\lambda (1-\gamma)^3 N}+\frac{1}{(1-\gamma)^2}\norm{\frac{d_{\mu,\pi^*}}{\nu}}_\infty\epsilon},
\end{align*}
and in the non-euclidean case,
\begin{align*}
    \mu v_{\textbf{best}}^N-\mu \vHstar \leq O\br*{\frac{(\mathrm{C^2_{max}}+\lambda^2 \log^2 A) A^2 \log^3 N} {\lambda (1-\gamma)^3 N}+\frac{1}{(1-\gamma)^2}\norm{\frac{d_{\mu,\pi^*}}{\nu}}_\infty\epsilon},
\end{align*}

\end{proof}

\subsection{Sample Complexity of Sample-Based TRPO}\label{supp: sample complexity of Approximate Mirror Descent}

In this section we calculate the overall \emph{sample complexity} of Sample-Based TRPO, i.e., the number interactions with the MDP the algorithm does in order to reach a close to optimal solution.

By Lemma~\ref{lemma: cumulative approximation error}, in order to have $\frac{1}{(1-\gamma)^2}\norm{\frac{d_{\mu,\pi^*}}{\nu}}_\infty \frac{\epsilon}{2}$ approximation error, we need $M_k\geq O\br*{ \frac{\HoeffdingBound(k,\lambda)^2}{\epsilon^2} \br*{ S \log {2A} + \log{(k+1)^2 / \delta}} }$ trajectories in each iteration, and the number of samples in each truncated trajectory is $T_k \geq O\br*{\frac{1}{1-\gamma}\log \frac{\epsilon}{\HoeffdingBound(k,\lambda)}},$ where $\HoeffdingBound(k,\lambda)=  \frac{4 A \CmaxLambda}{1-\gamma} \br*{1+\mathbbm{1}\{ \lambda \neq 0 \} \log k }$ in the~euclidean and non-euclidean settings respectively.

Therefore, the number of samples in each iteration required to guarantee a $\frac{1}{(1-\gamma)^2}\norm{\frac{d_{\mu,\pi^*}}{\nu}}_\infty \frac{\epsilon}{2}$ error is 
$$ O\br*{ \frac{\HoeffdingBound(k,\lambda)^2\log \frac{\epsilon}{\HoeffdingBound(k,\lambda)}}{(1-\gamma)\epsilon^2} \br*{ S \log {2A} + \log{(k+1)^2 / \delta}} }. $$ 

The overall sample complexity is acquired by multiplying the number of iterations $N$ required to reach an $\frac{\epsilon/2}{(1-\gamma)^2}$  optimization error multiplied with the iteration-wise sample complexity, given above.
Combining the two errors and using the fact that $C^{\pi^*}\geq 1$, we have that the overall error 
$$\frac{1}{(1-\gamma)^2}\br*{1+C^{\pi^*}}\frac{\epsilon}{2} \leq \frac{2}{(1-\gamma)^2}C^{\pi^*}\frac{\epsilon}{2} = \frac{1}{(1-\gamma)^2}C^{\pi^*}\epsilon.$$
In other words, the overall error of the algorithm is bounded by $\frac{1}{(1-\gamma)^2}C^{\pi^*}\epsilon$

Finally, the sample complexity to reach a $\frac{1}{(1-\gamma)^2}C^{\pi^*}\epsilon$ error for the different cases is arranged in the following table (the complete analysis is provided the the next section):
\begin{table}[ht]
\centering
\renewcommand*{\arraystretch}{2}
\begin{tabular}{|c|c|c|}
    \hline
    & \textbf{Euclidean} & \textbf{Non-Euclidean (KL)} \\ \hline
     \textbf{Unregularized} &  $ \frac{A^3 C_{\text{max}}^4  }{\br*{1-\gamma}^3\epsilon^4}\br*{\log |\Pi^{\text{det}}| + \log \frac{1}{\delta}}$ & $\frac{A^2 C_{\text{max}}^4  }{\br*{1-\gamma}^3\epsilon^4}\br*{\log |\Pi^{\text{det}}| + \log \frac{1}{\delta}}$ \\ \hline
     \textbf{Regularized} & $\frac{A^3 C_{\text{max},\lambda}^4}{\lambda\br*{1-\gamma}^4\epsilon^3}\br*{\log |\Pi^{\text{det}}| + \log \frac{1}{\delta}}$ & $\frac{A^4 C_{\text{max},\lambda}^4   }{\lambda \br*{1-\gamma}^4\epsilon^3}\br*{\log |\Pi^{\text{det}}| + \log \frac{1}{\delta}}$ \\ \hline
\end{tabular}
\end{table}

The same bound for CPI as given in \cite{kakade2003sample} is 
$$ \frac{A^2 C_{\text{max}}^4}{(1-\gamma)^5\epsilon^4}\br*{\log{|\Pi^{\text{det}}|+\log \frac{1}{\delta}}}, $$ 
where we omitted logarithmic factors in $1-\gamma$ and $\epsilon$. Notice that this bound is similar to the bound of Sample-Based TRPO observed in this paper, as expected.

In order to translate this bound using our notation bound, we used \cite{kakade2003sample}[Theorem 7.3.3] with $H=\frac{1}{1-\gamma}$, which states that in order to guarantee a bounded advantage of for any policy $\pi'$,
$\mathbb{A}_{\pi}(\nu,\pi')\leq (1-\gamma)\epsilon$ we need $O\br*{\frac{\log \epsilon (\log {\Pi}^{\text{det}} + \log \frac{1}{\delta}}{(1-\gamma)^5 \epsilon^4}}$ samples. Then, by \cite{kakade2002approximately}[Corollary 4.5] with $\mathbb{A}_{\pi}(\nu,\pi')\leq (1-\gamma)\epsilon$ we get that $(1-\gamma)(\mu v^\pi - \mu v^*) \leq \frac{\epsilon}{1-\gamma} \norm{\frac{d_{\mu,\pi^*}}{\nu}}_\infty$, or $\mu v^\pi - \mu v^* \leq \frac{\epsilon}{(1-\gamma)^2} \norm{\frac{d_{\mu,\pi^*}}{\nu}}_\infty$. Finally, the $C_{\text{max}}^4$ factor comes from using a non-normalized MDP, where the maximum reward is $C_\text{max}$. We get $C_{\text{max}}^2$ from number of iterations needed for convergence, and the number of samples in each iteration is also proportional to $C_{\text{max}}^2$

\subsubsection{The Unregularized Case}
\paragraph{The euclidean case:}

The error after $N$ iterations is bounded by
\begin{align*}
     \mu v_{\textbf{best}}^N-\mu \vstar \leq O\br*{\frac{\Cmax \sqrt{A}\log N}{(1-\gamma)^2\sqrt{N}} +\frac{1}{(1-\gamma)^2}\norm{\frac{d_{\mu,\pi^*}}{\nu}}\frac{\epsilon}{2}}.
\end{align*}

Thus, in order to reach an error of  $\frac{1}{(1-\gamma)^2} C^{\pi^*} \epsilon$ error, we need 
\begin{align*}
     N \leq O\br*{\frac{C_{\text{max}}^2 A\log \epsilon}{\epsilon^2}}.
\end{align*}

Thus, the sample complexity to reach $\frac{1}{(1-\gamma)^2} C^{\pi^*} \epsilon$ error when logarithmic factors are omitted is
$$ \tilde O\br*{\frac{A^3 C_{\text{max}}^4  }{\br*{1-\gamma}^3\epsilon^4}\br*{\log |{\Pi}^{\text{det}}| + \log \frac{1}{\delta}}}$$

\paragraph{The non-euclidean case:}

The error after $N$ iterations is bounded by
\begin{align*}
    \mu v_{\textbf{best}}^N-\mu \vstar \leq O\br*{\frac{\Cmax(\log A +  \log N)}{(1-\gamma)^2\sqrt{N}} +\frac{1}{(1-\gamma)^2}\norm{\frac{d_{\mu,\pi^*}}{\nu}}\frac{\epsilon}{2}}.
\end{align*}

Thus, in order to reach an error of  $\frac{1}{(1-\gamma)^2}C^{\pi^*}\epsilon$ error, we need 
\begin{align*}
     N \leq O\br*{\frac{C_{\text{max}}^2 \log^2 A \log^2 \epsilon }{\epsilon^2}}.
\end{align*}

Thus, the sample complexity to reach $\frac{1}{(1-\gamma)^2}C^{\pi^*}\epsilon$ error when logarithmic factors are omitted is
$$ \tilde O\br*{\frac{A^2 C_{\text{max}}^4  }{\br*{1-\gamma}^3\epsilon^4}\br*{\log |{\Pi}^{\text{det}}| + \log \frac{1}{\delta}}}$$

\subsubsection{The Regularized Case}
\paragraph{The euclidean case:}

The error after $N$ iterations is bounded by
\begin{align*}
    \mu v_{\textbf{best}}^N-\mu\vHstar \leq O\br*{\frac{\CmaxLambdaSquare A \log N} {\lambda (1-\gamma)^3 N}+\frac{1}{(1-\gamma)^2}\norm{\frac{d_{\mu,\pi^*}}{\nu}}\frac{\epsilon}{2}},
\end{align*}
Thus, in order to reach an error of  $\frac{1}{(1-\gamma)^2} C^{\pi^*} \epsilon$ error, we need 
\begin{align*}
     N \leq O\br*{\frac{\CmaxLambdaSquare A \log \epsilon} {\lambda (1-\gamma) \epsilon}}
\end{align*}

Thus, the sample complexity to reach $\frac{1}{(1-\gamma)^2} C^{\pi^*} \epsilon$ error when logarithmic factors are omitted is
$$ \tilde O\br*{\frac{A^3 C_{\text{max},\lambda}^4 }{\lambda\br*{1-\gamma}^4\epsilon^3}\br*{\log |{\Pi}^{\text{det}}| + \log \frac{1}{\delta}}}$$

\paragraph{The non-euclidean case:}

The error after $N$ iterations is bounded by
\begin{align*}
    \mu v_{\textbf{best}}^N-\mu \vHstar \leq O\br*{\frac{\log A}{(1-\gamma)N}+\frac{\CmaxLambdaSquare A^2 \log^3 N}{\lambda(1-\gamma)^3 N}}+\frac{1}{(1-\gamma)^2}\norm{\frac{d_{\mu,\pi^*}}{\nu}}_\infty\frac{\epsilon}{2}.
\end{align*}

Rearranging, we get,
\begin{align*}
    \mu v_{\textbf{best}}^N-\mu \vHstar \leq O\br*{\frac{(\CmaxSquare+\lambda^2\log^2 A)A^2 \log^3 N}{\lambda(1-\gamma)^3 N}}+\frac{1}{(1-\gamma)^2}\norm{\frac{d_{\mu,\pi^*}}{\nu}}_\infty \frac{\epsilon}{2},
\end{align*}
which can also be written with $\CmaxLambdaSquare$
\begin{align*}
    \mu v_{\textbf{best}}^N-\mu \vHstar \leq O\br*{\frac{\CmaxLambdaSquare A^2 \log^3 N}{\lambda(1-\gamma)^3 N}}+\frac{1}{(1-\gamma)^2}\norm{\frac{d_{\mu,\pi^*}}{\nu}}\epsilon.
\end{align*}

Thus, in order to reach an error of  $\frac{1}{(1-\gamma)^2} C^{\pi^*} \epsilon$ error, we need 
\begin{align*}
     N \leq \tilde O\br*{\frac{\CmaxLambdaSquare A^2 } {\lambda (1-\gamma) \epsilon}},
\end{align*}
omitting logarithmic factors.

Thus, the sample complexity to reach $\frac{1}{(1-\gamma)^2} C^{\pi^*} \epsilon$ error when logarithmic factors are omitted is
$$ \tilde O\br*{\frac{ A^4 C_{\text{max},\lambda}^4  }{\lambda \br*{1-\gamma}^4\epsilon^3}\br*{\log |{\Pi}^{\text{det}}| + \log \frac{1}{\delta}}}$$

\section{Useful Lemmas}

The next lemmas will provide useful bounds for uniform, exact and Sample-Based TRPO.
In this section, we define $\norm{\cdot}_*$ to be the dual norm of $\norm{\cdot}$.
\begin{lemma}[Connection between the regularized Bellman operator and the $q$-function]\label{lemma: connection between bellman and q function}
For any $\pi,\pi'$ the following holds:
\begin{align*}
    \inner*{q_\lambda^{\pi} + \lambda\nabla \omega(\pi), \pi' - \pi} = T_\lambda^{\pi'}\vHpi - \vHpi - \lambda \bregman{\pi'}{\pi}
\end{align*}
\begin{proof}

First, note that for any $s$
\begin{align*}
& \inner*{q_\lambda^{\pi}(s,\cdot) , \pi'(\cdot\mid s) } \\
& = \sum_a \pi'(a\mid s) q_\lambda^{\pi}(s,a) \\
&= \sum_a \pi'(a\mid s) \br*{c_\lambda^\pi(s,a) + \gamma \sum_{s'} p(s'|s,a) \vHpi } \\
&=\sum_a \pi'(a\mid s)  \br*{c(s,a) + \lambda \omegaS{s}{\pi} + \gamma \sum_{s'} p(s'|s,a) \vHpi } \\
&= \sum_a \pi'(a\mid s) \br*{c(s,a) + \lambda \omegaS{s}{\pi'} - \lambda \omegaS{s}{\pi'} + \lambda \omegaS{s}{\pi} + \gamma \sum_{s'} p(s'|s,a) \vHpi } \\
&= \sum_a \pi'(a\mid s) \br*{c(s,a) + \lambda \omegaS{s}{\pi'} + \gamma \sum_{s'} p(s'|s,a) \vHpi } + \lambda \omegaS{s}{\pi} - \lambda \omegaS{s}{\pi'} \\
&= c_\lambda^{\pi'}(s) + \gamma P^{\pi'} \vHpi(s) + \lambda \omegaS{s}{\pi} - \lambda \omegaS{s}{\pi'} \\
&= T_\lambda^{\pi'}\vHpi (s) + \lambda \omegaS{s}{\pi} - \lambda \omegaS{s}{\pi'},
\end{align*}
where the second transition is by the definition of $\qHpi$, the third is by the definition of $c_\lambda^\pi$, the fourth is by adding and subtracting $\lambda \omegaS{s}{\pi'}$, the fifth is by the fact $\lambda \omegaS{s}{\pi'}$ is independent of $a$ and the seventh is by the definition of the regularized Bellman operator.

Thus,
$$ \inner*{q_\lambda^{\pi} , \pi' } =  T_\lambda^{\pi'}\vHpi + \lambda \omega(\pi) - \lambda \omega (\pi') $$

Now, note that by the definition of the $q$-function $\inner*{\qHpi,\pi}=\vHpi$ and thus,
$$ \inner*{q_\lambda^{\pi} , \pi' -\pi } =  T_\lambda^{\pi'}\vHpi - \vHpi + \lambda \omega(\pi) - \lambda \omega (\pi') .$$

Finally, by adding to both sides $\inner*{\lambda\nabla \omega (\pi),\pi' -\pi}$, we get,
$$ \inner*{q_\lambda^{\pi} + \lambda \nabla \omega (\pi), \pi' -\pi } =  T_\lambda^{\pi'}\vHpi - \vHpi + \lambda \omega(\pi) - \lambda \omega (\pi') + \lambda\inner*{ \nabla  \omega (\pi),\pi' -\pi} .$$

To conclude the proof, note that by the definition of the Bregman distance we have,
$$ \inner*{q_\lambda^{\pi} + \lambda \nabla \omega (\pi), \pi' -\pi } =  T_\lambda^{\pi'}\vHpi - \vHpi - \lambda \bregman{\pi'}{\pi}.$$

\end{proof}
\end{lemma}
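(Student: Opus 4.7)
The plan is to prove the identity state-wise and then assemble the vector form. Fix an arbitrary state $s$ and begin with the inner product $\langle q_\lambda^\pi(s,\cdot), \pi'(\cdot\mid s)\rangle$. Expanding $q_\lambda^\pi(s,a)$ via its definition gives a sum containing $c(s,a)$, the regularizer $\lambda \omega_s(\pi)$, and the term $\gamma \sum_{s'} p(s'\mid s,a) v_\lambda^\pi(s')$. The key rewriting step is to add and subtract $\lambda \omega_s(\pi')$ inside the bracket so that $c(s,a) + \lambda \omega_s(\pi')$ reconstitutes $c_\lambda^{\pi'}(s,a)$, while the leftover $\lambda(\omega_s(\pi) - \omega_s(\pi'))$ pulls outside the sum over $a$ because it does not depend on $a$ and $\sum_a \pi'(a\mid s) = 1$. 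Collecting terms gives
\begin{align*}
\langle q_\lambda^\pi(s,\cdot), \pi'(\cdot\mid s)\rangle = T_\lambda^{\pi'} v_\lambda^\pi(s) + \lambda\bigl(\omega_s(\pi) - \omega_s(\pi')\bigr).
\end{align*}

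Next I would subtract $\langle q_\lambda^\pi(s,\cdot), \pi(\cdot\mid s)\rangle$ from both sides, using the fixed-point identity $\langle q_\lambda^\pi(s,\cdot), \pi(\cdot\mid s)\rangle = v_\lambda^\pi(s)$ (this follows directly from the definitions of $q_\lambda^\pi$ and $v_\lambda^\pi$, since $\pi(\cdot\mid s)$ sums to one so the $\lambda\omega_s(\pi)$ contribution factors out cleanly and the remaining Bellman-type sum equals $v_\lambda^\pi(s)$). This yields
\begin{align*}
\langle q_\lambda^\pi(s,\cdot), \pi'(\cdot\mid s) - \pi(\cdot\mid s)\rangle = T_\lambda^{\pi'} v_\lambda^\pi(s) - v_\lambda^\pi(s) + \lambda\bigl(\omega_s(\pi) - \omega_s(\pi')\bigr).
\end{align*}

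Finally, I would add $\lambda\langle \nabla \omega_s(\pi), \pi'(\cdot\mid s) - \pi(\cdot\mid s)\rangle$ to both sides and recognize on the right-hand side the negative Bregman distance:
\begin{align*}
-B_\omega(s;\pi',\pi) = -\omega_s(\pi') + \omega_s(\pi) + \langle \nabla \omega_s(\pi), \pi'(\cdot\mid s) - \pi(\cdot\mid s)\rangle,
\end{align*}
by definition of $B_\omega$. This gives the claim state-wise. Promoting to the vector identity in the statement is then just a matter of reading off each coordinate: the left-hand side $\langle q_\lambda^\pi + \lambda\nabla\omega(\pi), \pi' - \pi\rangle$ is interpreted as a vector in $\mathbb{R}^S$ whose $s$-th entry is the fixed-state inner product, and similarly for the right-hand side.

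There is no genuine obstacle here; the lemma is a direct algebraic identity once one performs the add-and-subtract trick with $\lambda\omega_s(\pi')$ that converts $c^\pi$ into $c_\lambda^{\pi'}$. The only thing to be careful about is the bookkeeping: keeping track of which policy's regularizer appears in the cost ($\pi$ inside $q_\lambda^\pi$ versus $\pi'$ inside $T_\lambda^{\pi'}$), and making sure the Bregman term emerges with the correct ordering of arguments $(\pi',\pi)$ rather than the reverse.
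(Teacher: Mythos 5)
Your proposal is correct and follows essentially the same route as the paper's proof: expand $\inner*{q_\lambda^{\pi}(s,\cdot),\pi'(\cdot\mid s)}$, add and subtract $\lambda\omegaS{s}{\pi'}$ to reconstitute $T_\lambda^{\pi'}v_\lambda^{\pi}(s)$, subtract the fixed-point identity $\inner*{q_\lambda^{\pi}(s,\cdot),\pi(\cdot\mid s)}=v_\lambda^{\pi}(s)$, and then add the gradient term to assemble the Bregman distance $\bregmanS{s}{\pi'}{\pi}$. The state-wise bookkeeping and the final promotion to the vector identity match the paper's argument, so there is nothing to add.
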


\begin{lemma}[Bounds regarding the updates of Uniform and Exact TRPO]\label{lemma: usefull bounds exact case}
For any $k\geq0$ and state $s$, which is updated in the $k$-th iteration,
the following relations hold for both Uniform TRPO \eqref{eq: minimzation exact TRPO update final} and Exact TRPO~\eqref{eq: supp iterates uniform TRPO}:

\begin{enumerate}
    \item $\norm{\nabla \omega(\pi_k \br*{\cdot|s})}_* \leq O(1)$ and $\norm{\nabla \omega(\pi_k \br*{\cdot|s})}_* \leq O(\frac{\CmaxLambda \log k}{\lambda(1-\gamma)})$, in the euclidean and non-euclidean cases, respectively.
    \item $\norm{\qHpik \br*{s,\cdot}}_*\leq \gradBound$, where $\gradBound=  O(\frac{\sqrt{A}\CmaxLambda}{1-\gamma})$ and $\gradBound=  O(\frac{\CmaxLambda}{1-\gamma})$ in the euclidean and non-euclidean cases, respectively. 
    \item $\norm{\qHpik\br*{s,\cdot} + \lambda \nabla \omega(\pi_k \br*{\cdot|s})}_*\leq \gradBound(k;\lambda)$, where $\gradBound(k;\lambda)=  O(\frac{\sqrt{A}\CmaxLambda}{1-\gamma})$ and $\gradBound(k;\lambda)=  O(\frac{\CmaxLambda \br*{1+\mathbbm{1}\{\lambda \neq 0\}\log k}}{1-\gamma})$ in the euclidean and non-euclidean cases, respectively, and $\mathbbm{1}\{\lambda \neq 0\}=0$ in the unregularized case ($\lambda$=0) and $\mathbbm{1}\{\lambda \neq 0\}=1$ otherwise. 
\end{enumerate}
Where for every state $s$, $\norm{\cdot}_*$ denotes the dual norm over the action space, which is $L_1$ in the euclidean case, and $L_\infty$ in non-euclidean cases.

\begin{proof}

We start by proving the {\bf first claim}:

For the {\bf euclidean case}, $\omega \br*{\cdot}= \frac{1}{2}\norm{\cdot}^2_2$. Thus, for every state $s$,
\begin{align*}
    \norm{\nabla\omega(\pi(\cdot|s))}_2 = \norm{\pi(\cdot|s)}_2\leq\norm{\pi(\cdot|s)}_1= 1,
\end{align*}
where the inequality is due to the fact that $\norm{\cdot}_2\leq\norm{\cdot}_1$.

The statement holds by the properties of $\frac{1}{2}\norm{\cdot}^2_2$ and thus holds for both the uniform and exact versions.

For the {\bf non-euclidean case}, $\omega(\cdot)=H(\cdot)+\log{\aset}$. Now, consider exact TRPO~\eqref{eq: supp iterates uniform TRPO}. By taking the logarithm of \eqref{eq: mirror descent iteration KL}, we have,

\begin{align}
\log\pi_{k}(a\mid s) &= \log \pi_{k-1}(a\mid s) \nonumber \\ &- t_{k-1} \br*{ q_\lambda^{\pi_{k-1}}(s,a) + \lambda \log \pi_{k-1}(a\mid s) } \nonumber \\
&-\log\br*{\sum_{a'} \pi_{k-1}(a'\mid s)\exp\br*{- t_{k-1} \br*{q_\lambda^{\pi_{k-1}}(s,a') +\lambda \log \pi_{k-1}(a'\mid s)}}}.
\label{eq: non-euclidean bound eq 1}
\end{align}

Notice that for $k\geq0$, for every state-action pair, $q_\lambda^{\pi_{k}}(a|s)\geq 0$. Thus, 
\begin{align}
    \log\br*{\sum_{a'} \pi_{k}(a'\mid s)\exp\br*{-t_k\br*{ q_\lambda^{\pi_k}(s,a')+\lambda \log \pi_{k}(a'\mid s)}}} & \leq
    \log\br*{\sum_{a'} \pi_{k}(a'\mid s)\exp\br*{-t_k \lambda \log \pi_{k}(a'\mid s)}}  \nonumber\\
    &= \log\br*{\sum_{a'} \pi_{k}(a'\mid s) \pi_{k}^{-\lambda t_k}(a'\mid s)}.\label{eq: bound on logsumexp first}
\end{align}
Where the first relation holds since $q_\lambda^{\pi}(s,a)\geq 0 $. Applying Jensen's inequality we can further bound the above.
\begin{align}
    \eqref{eq: bound on logsumexp first}&= \log\br*{A\sum_{a'}\frac{1}{A} \pi_{k}^{1-\lambda t_k}(a'\mid s)} \nonumber \\
    &= \log\br*{A\sum_{a'} \frac{1}{A}\pi_{k}^{1-\lambda t_k}(a'\mid s)} \nonumber \\
    &\leq \log\br*{A \left(\sum_{a'} \frac{1}{A}\pi_{k}(a'\mid s) \right)^{1-\lambda t_k}} \nonumber \\
    &= \log\br*{A \left(\frac{1}{A}\sum_{a'} \pi_{k}(a'\mid s) \right)^{1-\lambda t_k}} \nonumber \\
    &= \log\br*{A \left(\frac{1}{A} \right)^{1-\lambda t_k}} = \log\br*{A^{\lambda t_k} } = \lambda t_k \log A.  \label{eq: bound on logsumexp}
\end{align}
In the third relation we applied Jensen's inequality for concave functions. As $0\leq 1-\lambda t_k\leq 1$ (by the choice of the learning rate in the regularized case) we have that $X^{1-\lambda t_k}$ is a concave function in $X$, and thus ${\sum_{a'=1}^{A}\frac{1}{A}\pi_{k}^{1-\lambda t_k}(a'\mid s) \leq \left(\sum_{a'=1}^A \frac{1}{A}\pi_{k}(a'\mid s) \right)^{1-\lambda t_k}}$ by Jensen's inequality. Combining this inequality with the fact that $A$ is positive and $\log$ is monotonic function establishes the third relation.

Furthermore, note that for every $k$, and for every $s,a$
\begin{align}\label{eq: upper bound on log pi}
    \log \pi_k(a|s) \leq 0 
\end{align}

Plugging \eqref{eq: bound on logsumexp} and \eqref{eq: upper bound on log pi} in \eqref{eq: non-euclidean bound eq 1}, we get,

\begin{align}\label{eq: exact lower bound on log pi k}
    \log \pi_{k}(a\mid s) &\geq \log \pi_{k-1}(a\mid s) -t_{k-1} \br*{q_\lambda^{\pi_{k-1}}(s,a) +\lambda \log A} \nonumber\\
    & \geq \log \pi_0(a|s) - \sum_{i=0}^{k-1} t_k   \br*{q_\lambda^{\pi_i}(s,a) + \lambda \log A} \nonumber\\
    & \geq - \log A - \br*{\frac{\CmaxLambda}{1-\gamma} + \lambda \log A}\sum_{i=0}^{k-1} t_i \nonumber\\
    & = - \log A - \br*{\frac{\CmaxLambda}{\lambda(1-\gamma)} +\log A}\sum_{i=0}^{k-1} \frac{1}{i+2} \nonumber\\
    & \geq - \log A - \br*{\frac{\CmaxLambda}{\lambda(1-\gamma)}+ \log A} (1+\log k) \nonumber \\
    &  \geq - \frac{\mathrm{C_{\text{max}}}+3\lambda \log    A}{\lambda(1-\gamma)} (1+\log k) \nonumber \\ 
    &  \geq - \frac{\mathrm{3\CmaxLambda} }{\lambda(1-\gamma)} (1+\log k),
\end{align} 
where the second relation holds by unfolding the recursive formula for each $k$ and the fourth by plugging in the stepsizes for the regularized case, i.e. $t_k=\frac{1}{\lambda(k+2)}$. The final relation holds since $\CmaxLambda =\mathrm{C_{\text{max}} }+\lambda \log{A}$.

To conclude, since $\log \pi_k(a\mid s)\leq 0$ and $\nabla \omega(\pi)=\nabla H(\pi)= 1+ \log \pi$, we get that for the non-euclidean case, $$\norm{\nabla \omega(\pi_k)}_\infty \leq O\br*{\frac{\CmaxLambda}{\lambda(1-\gamma)}\log k}.$$

This concludes the proof of the first claim for both the euclidean and non-euclidean cases, in both exact scenarios. Interestingly, in the non-euclidean case, the gradients can grow to infinity due to the fact that the gradient of the entropy of a deterministic policy is unbounded. However, this result shows that a deterministic policy can only be obtained after an infinite time, as the gradient is bounded by a logarithmic rate.

Next, we prove the {\bf second claim}:

It holds that for any state-action pair $ q_\lambda^{\pi_{k}}\br*{s,a} \in \brs*{0,\frac{\CmaxLambda}{1-\gamma}} $.

For the {\bf euclidean case}, we have that
$$ \norm{ q_\lambda^{\pi_{k}}\br*{s,\cdot}}_* = \norm{ q_\lambda^{\pi_{k}}\br*{s,\cdot}}_2 \leq \sqrt{\sum_{a\in \aset} \br*{\frac{\CmaxLambda}{1-\gamma}}^2}=\frac{\sqrt{A}\CmaxLambda}{1-\gamma}. $$

For the {\bf non-euclidean case}, we have that
$$ \norm{ q_\lambda^{\pi_{k}}\br*{s,\cdot}}_* = \norm{ q_\lambda^{\pi_{k}}\br*{s,\cdot}}_\infty \leq \frac{\CmaxLambda}{1-\gamma}, $$
which concludes the proof of the second claim.

Finally, we prove the {\bf third claim}:
For any state $s$, by the triangle inequality,
$$\norm{ q_\lambda^{\pi_{k}}\br*{s,\cdot} + \lambda\nabla \omega(\pi_k \br*{\cdot|s})}_* \leq \norm{ q_\lambda^{\pi_{k}}\br*{s,\cdot}}_* + \lambda\norm{\nabla \omega(\pi_k\br*{\cdot|s})}_* ,$$
by plugging the two former claims for the euclidean and non-euclidean cases, we get the required result.
\end{proof}
\end{lemma}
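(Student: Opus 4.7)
The plan is to prove the three claims in sequence, with the bulk of the work in the non-euclidean part of the first claim, since the other bounds either follow from elementary facts about $q$-functions or from the triangle inequality.

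For the \textbf{first claim} in the \emph{euclidean case}, the calculation is immediate: $\omega(\cdot)=\tfrac12\|\cdot\|_2^2$ gives $\nabla\omega(\pi_k(\cdot\mid s))=\pi_k(\cdot\mid s)$, and for any probability vector $\|\pi_k(\cdot\mid s)\|_2\leq\|\pi_k(\cdot\mid s)\|_1=1$. This holds regardless of whether the updates are Uniform or Exact.

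For the \emph{non-euclidean case}, $\nabla\omega(\pi_k)=\mathbf{1}+\log\pi_k$, so since $\log\pi_k(a\mid s)\leq 0$, the task reduces to lower-bounding $\log\pi_k(a\mid s)$. I would start from the closed-form NE-TRPO update (Algorithm~\ref{alg: NE-TRPO policy update exact}), take logarithms, and unfold the recursion:
\begin{align*}
\log\pi_{k}(a\mid s)=\log\pi_{k-1}(a\mid s)-t_{k-1}\bigl(q_\lambda^{\pi_{k-1}}(s,a)+\lambda\log\pi_{k-1}(a\mid s)\bigr)-Z_{k-1}(s),
\end{align*}
where $Z_{k-1}(s)$ is the log-partition term. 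The main obstacle is controlling $Z_{k-1}(s)$ from above: dropping the non-negative $q$-term inside the exponent gives $Z_{k-1}(s)\leq\log\sum_{a'}\pi_{k-1}(a'\mid s)^{1-\lambda t_{k-1}}$, and since $0\leq 1-\lambda t_{k-1}\leq 1$, Jensen's inequality applied to the concave map $x\mapsto x^{1-\lambda t_{k-1}}$ yields $Z_{k-1}(s)\leq\lambda t_{k-1}\log A$. Iterating and using the uniform initialization $\log\pi_0(a\mid s)=-\log A$, I would obtain
\begin{align*}
\log\pi_k(a\mid s)\geq -\log A-\sum_{i=0}^{k-1}t_i\bigl(q_\lambda^{\pi_i}(s,a)+\lambda\log A\bigr).
\end{align*}
Plugging in the regularized stepsize $t_i=\tfrac{1}{\lambda(i+2)}$ gives $\sum_i t_i\leq O(\tfrac{\log k}{\lambda})$, and bounding $q_\lambda^{\pi_i}\leq\CmaxLambda/(1-\gamma)$ produces the desired $O\bigl(\tfrac{\CmaxLambda\log k}{\lambda(1-\gamma)}\bigr)$ bound. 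The anticipated subtlety is verifying the Jensen step carefully and checking that the argument goes through for Exact TRPO (where some states are not updated) — but in unupdated states $\pi_k(\cdot\mid s)=\pi_{k-1}(\cdot\mid s)$ so the bound is trivially inherited.

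For the \textbf{second claim}, since $c_\lambda^\pi(s,a)\in[0,\CmaxLambda]$ the $q$-function satisfies $q_\lambda^{\pi_k}(s,a)\in[0,\CmaxLambda/(1-\gamma)]$ entrywise. The euclidean case follows from $\|\cdot\|_2\leq\sqrt{A}\|\cdot\|_\infty$ and the non-euclidean case is the elementary $\|\cdot\|_\infty$ bound. For the \textbf{third claim}, I would simply apply the triangle inequality $\|q_\lambda^{\pi_k}+\lambda\nabla\omega(\pi_k)\|_*\leq\|q_\lambda^{\pi_k}\|_*+\lambda\|\nabla\omega(\pi_k)\|_*$ and combine the first two claims — the $\mathbbm{1}\{\lambda\neq 0\}$ indicator in the non-euclidean bound arises precisely because the $\log k$ term only enters through the entropy-gradient contribution, which is multiplied by $\lambda$.
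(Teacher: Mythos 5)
Your proposal is correct and follows essentially the same route as the paper's proof: the same euclidean norm comparison, the same log-unfolding of the NE-TRPO update with the Jensen bound $Z_k(s)\leq\lambda t_k\log A$ on the log-partition term, the same stepsize summation giving the $O(\log k/\lambda)$ factor, and the same elementary bounds plus triangle inequality for the second and third claims. The only (harmless) addition is your explicit remark that unupdated states in Exact TRPO trivially inherit the bound, which the paper leaves implicit.
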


The next lemma follows similar derivation to Lemma \ref{lemma: usefull bounds exact case}, with small changes tailored for the sample-based case. Note that in the sample-based case, and $A$ factor is added in claims 1,3 and 4.

\begin{lemma}[Bounds regarding the updates of Sample-Based TRPO]\label{lemma: usefull bounds approximate}
For any $k\geq0$ and state $s$, which is updated in the $k$-th iteration,
the following relations hold for Sample-Based TRPO~\eqref{eq: update rule approximate final}:

\begin{enumerate}
    \item $\norm{\nabla \omega(\pi_k \br*{\cdot|s})}_* \leq O(1)$ and $\norm{\nabla \omega(\pi_k \br*{\cdot|s})}_* \leq O(\frac{A\CmaxLambda \log k}{\lambda(1-\gamma)})$, in the euclidean and non-euclidean cases, respectively.
    \item $\norm{\qHpik \br*{s,\cdot}}_*\leq \gradBound$, where $\gradBound=  O(\frac{\sqrt{A}\CmaxLambda}{1-\gamma})$ and $\gradBound=  O(\frac{\CmaxLambda}{1-\gamma})$ in the euclidean and non-euclidean cases, respectively. 
    \item $\norm{\qHpik\br*{s,\cdot} + \lambda \nabla \omega(\pi_k \br*{\cdot|s})}_*\leq \gradBound(k;\lambda)$, where $\gradBound(k;\lambda)=  O(\frac{\sqrt{A}\CmaxLambda}{1-\gamma})$ and $\gradBound(k;\lambda)=  O(\frac{\CmaxLambda \br*{1+ \mathbbm{1}\{\lambda \neq 0\} A\log k}}{1-\gamma})$ in the euclidean and non-euclidean cases, respectively, and $\mathbbm{1}\{\lambda \neq 0\}=0$ in the unregularized case ($\lambda$=0) and $\mathbbm{1}\{\lambda \neq 0\}=1$ in the regularized case ($\lambda>0$).
    \item $\norm{A\hatqHpik\br*{s,\cdot,m} + \lambda \nabla \omega(\pi_k \br*{\cdot|s})}_\infty\leq \hatgradBound(k;\lambda)$, where $\hatgradBound(k;\lambda)=  O(\frac{A\CmaxLambda}{1-\gamma})$ and $\hatgradBound(k;\lambda)=  O(\frac{A\CmaxLambda \br*{1+\mathbbm{1}\{\lambda\neq 0
    \}\log k}}{1-\gamma})$ in the euclidean and non-euclidean cases, respectively, and $\mathbbm{1}\{\lambda \neq 0\}=0$ in the unregularized case ($\lambda$=0) and $\mathbbm{1}\{\lambda \neq 0\}=1$ in the regularized case ($\lambda>0$).
\end{enumerate}
Where for every state $s$, $\norm{\cdot}_*$ denotes the dual norm over the action space, which is $L_1$ in the euclidean case, and $L_\infty$ in non-euclidean cases.
\end{lemma}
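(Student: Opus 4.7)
The plan is to largely mirror the proof of Lemma~\ref{lemma: usefull bounds exact case}, tracking the extra factor of $A$ introduced by the importance-sampled estimator $A\hatqHpik(s_m,\cdot,m)\mathbbm{1}\{\cdot=a_m\}$ that appears in the sample-based update rule~\eqref{eq: update rule approximate final}. I would dispose of claim~2 immediately, since $\qHpik$ is the exact (deterministic) $q$-function of the regularized MDP and is componentwise in $[0,\CmaxLambda/(1-\gamma)]$; the euclidean and non-euclidean norm bounds then follow by counting actions, exactly as in the proof of the second claim of Lemma~\ref{lemma: usefull bounds exact case}.

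For claim~1 in the euclidean case, $\nabla \omega(\pi_k(\cdot\mid s))=\pi_k(\cdot\mid s)$, so $\norm{\nabla\omega(\pi_k(\cdot\mid s))}_2\le\norm{\pi_k(\cdot\mid s)}_1=1$, independent of the update rule. For claim~1 in the non-euclidean case I would imitate equation~\eqref{eq: exact lower bound on log pi k}: starting from the closed-form KL update (which for Sample-Based TRPO replaces $q_\lambda^{\pi_{k-1}}(s,a)$ by $\hatqHpik[k-1](s,a)$, the empirical average from Algorithm~\ref{alg: mu Sample-Based TRPO}), take logarithms, use $\hatqHpik[k-1](s,a)\ge 0$ and the bound $\hatqHpik[k-1](s,a)\le A\CmaxLambda/(1-\gamma)$ (this is where the extra $A$ enters, since the per-trajectory rollout is bounded by $\CmaxLambda/(1-\gamma)$ and the definition $\hat q_\lambda^{\pi_k}(s,a)=\tfrac{A}{\sum_a n(s,a)}\sum_i\hat q_\lambda^{\pi_k}(s,a,m_i)$ can concentrate all visits on a single action). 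The Jensen-based upper bound on the log-normalizer in~\eqref{eq: bound on logsumexp} applies verbatim because it only exploits $0\le 1-\lambda t_k\le 1$. Unrolling the recursion and summing the stepsizes $t_k=1/(\lambda(k+2))$ as in the exact proof yields $\log\pi_k(a\mid s)\ge -O\!\br*{\tfrac{A\CmaxLambda}{\lambda(1-\gamma)}\log k}$, hence $\norm{\nabla\omega(\pi_k(\cdot\mid s))}_\infty=\norm{1+\log\pi_k(\cdot\mid s)}_\infty=O\!\br*{\tfrac{A\CmaxLambda\log k}{\lambda(1-\gamma)}}$.

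Claims~3 and~4 are then immediate consequences of the triangle inequality on the dual norm together with the preceding bounds. For claim~3, in the euclidean case the term $\lambda\norm{\nabla\omega(\pi_k(\cdot\mid s))}_2\le\lambda$ is absorbed into the $O(\sqrt{A}\CmaxLambda/(1-\gamma))$ bound on $\norm{\qHpik(s,\cdot)}_2$ using $\lambda\le\CmaxLambda$; in the non-euclidean case it contributes $\lambda\cdot O(\tfrac{A\CmaxLambda\log k}{\lambda(1-\gamma)})=O(\tfrac{A\CmaxLambda\log k}{1-\gamma})$, which combined with $\norm{\qHpik(s,\cdot)}_\infty\le\CmaxLambda/(1-\gamma)$ produces the stated $O(\CmaxLambda(1+\mathbbm{1}\{\lambda\ne 0\}A\log k)/(1-\gamma))$, where the indicator handles the unregularized case in which no logarithmic growth appears. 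Claim~4 follows similarly: $\norm{A\hatqHpik(s,\cdot,m)}_\infty\le A\CmaxLambda/(1-\gamma)$ from the per-rollout bound, and adding $\lambda\norm{\nabla\omega(\pi_k(\cdot\mid s))}_\infty$ gives the two cases stated.

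The only genuinely delicate step is the non-euclidean part of claim~1. The potential pitfall is that the logarithmic growth of $\norm{\nabla\omega(\pi_k)}_\infty$ could blow up once we use the much larger sampled gradients with their extra factor of $A$. What rescues the argument is that the same $\lambda t_k$ penalty that tamed the exact case also tames the sampled one: the $\log$-sum-exp normalizer is bounded by $\lambda t_k\log A$ irrespective of how large the stochastic $q$-estimates are, and the only place the estimator magnitude enters is multiplicatively through $\sum_i t_i$, which converges in ratio to $\log k$. Verifying that the $A$ factor therefore propagates only linearly (not exponentially) into the final bound is the main thing to check carefully.
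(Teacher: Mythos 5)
Your proposal is correct and follows essentially the same route as the paper: claims 1 (euclidean), 2, 3 and 4 are handled identically, and for the non-euclidean part of claim 1 the paper likewise repeats the exact-case recursion with the sampled $q$-estimates bounded by $A\CmaxLambda/(1-\gamma)$ (the extra $A$ from importance sampling), so the $\log k$ growth with a linear $A$ factor appears exactly as you argue.
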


\begin{proof}

We start by proving the {\bf first claim}:

For the {\bf euclidean case}, in the same manner as in the exact cases, $\omega \br*{\cdot}= \frac{1}{2}\norm{\cdot}^2_2$. Thus, for every state $s$,
\begin{align*}
    \norm{\nabla\omega(\pi(\cdot|s))}_2 = \norm{\pi(\cdot|s)}_2\leq\norm{\pi(\cdot|s)}_1= 1,
\end{align*}
where the inequality is due to the fact that $\norm{\cdot}_2\leq\norm{\cdot}_1$.

For the {\bf non-euclidean case}, $\omega(\cdot)=H(\cdot)+\log{\aset}$. 
The bound for the sample-based version for the non-euclidean choice of $\omega$ follows similar reasoning with mild modification. By \eqref{supp eq: approximate simplified first}, in the sample-based case, a state $s$ is updated in the $k$-th iteration using the approximation of the $\qHpik(s,a)$ in this state,
\begin{align*}
    \hatqHpik(s,a) \triangleq \frac{A\sum_{m=1}^{M} \mathbbm{1}\brc*{s=s_m,a=a_m} \hat q_\lambda^{\pik}(s_m,\cdot,m)}{n(s)} \leq  \frac{A\sum_{m=1}^{M} \mathbbm{1}\brc*{s=s_m,a=a_m}  \frac{\CmaxLambda}{1-\gamma}}{n(s)} \leq \frac{A \CmaxLambda}{1-\gamma},
\end{align*}
where we denoted $n(s)=\sum_{a}n(s,a)$ the number of times the state $s$ was observed at the $k$-th episode and used the fact $\hatqHpik(s_m,\cdot,m_i)$ is sampled by unrolling the MDP. Thus, it holds that $$ \hatqHpik(s,a)\leq \frac{A\CmaxLambda}{1-\gamma}.$$
Interestingly, because we use the importance sampling factor $A$ in the approximation of $\qHpik$, we obtain an additional $A$ factor.

Thus, by repeating the analysis in Lemma \ref{lemma: usefull bounds exact case}, equation \eqref{eq: exact lower bound on log pi k}, we obtain,
\begin{align}\label{eq: bound on log pik in the sample based case}
    \log(\pi_{k}(a\mid s)) &\geq \log (\pi_{k-1}(a\mid s)) - t_{k-1} \br*{\hat q_\lambda^{\pi_{k-1}}(s,a) + \lambda \log A} \nonumber\\
    & \geq \log \pi_0(a|s) - \sum_{i=0}^{k-1} t_i \br*{\hat q_\lambda^{\pi_i}(s,a) + \lambda \log A} \nonumber \\
    & \geq - \log A - \br*{\frac{A\CmaxLambda}{1-\gamma}+\lambda \log A}\sum_{i=0}^{k-1} t_i \nonumber\\
    & = - \log A - \br*{\frac{A\CmaxLambda}{\lambda(1-\gamma)}+\log A}\sum_{i=0}^{k-1} \frac{1}{i+2} \nonumber\\
    & \geq - \log A - \br*{\frac{A\CmaxLambda}{\lambda(1-\gamma)}+\log A} (1+\log k)  \nonumber\\
    &  \geq - \frac{A\mathrm{C_{\text{max}}}+3A\lambda \log A}{\lambda(1-\gamma)} (1+\log k) \nonumber \\
    & \geq - \frac{3 A\CmaxLambda}{\lambda(1-\gamma)}, 
\end{align} 
where the second relation holds by unfolding the recursive formula for each $k$ and the fourth by plugging in the stepsizes for the regularized case, i.e. $t_k=\frac{1}{\lambda(k+2)}$. The final relation holds since $\CmaxLambda =\mathrm{C_{\text{max}} }+\lambda \log{A}$.
Thus,
\begin{align*}
    \log(\pi_{k}(a\mid s)) &  \geq - \frac{3A \CmaxLambda}{\lambda(1-\gamma)} (1+\log k),
\end{align*} 
This concludes the proof of the first claim for both the euclidean and non-euclidean cases.

As in the exact case, in the non-euclidean case, the gradients can grow to infinity due to the fact that the gradient of the entropy of a deterministic policy is unbounded. However, this result shows that a deterministic policy can only be obtained after an infinite time, as the gradient is bounded by a logarithmic rate.

Next, we prove the {\bf second claim}:

It holds that for any state-action pair $ q_\lambda^{\pi_{k}}\br*{s,a} \in \brs*{0,\frac{\CmaxLambda}{1-\gamma}} $.

For the {\bf euclidean case}, we have that
$$ \norm{ q_\lambda^{\pi_{k}}\br*{s,\cdot}}_* = \norm{ q_\lambda^{\pi_{k}}\br*{s,\cdot}}_2 \leq \sqrt{\sum_{a\in \aset} \br*{\frac{\CmaxLambda}{1-\gamma}}^2}=\frac{\sqrt{A}\CmaxLambda}{1-\gamma}. $$

For the {\bf non-euclidean case}, we have that
$$ \norm{ q_\lambda^{\pi_{k}}\br*{s,\cdot}}_* = \norm{ q_\lambda^{\pi_{k}}\br*{s,\cdot}}_\infty \leq \frac{\CmaxLambda}{1-\gamma}, $$
which concludes the proof of the second claim.

Next, we prove the {\bf third claim}:
For any state $s$, by the triangle inequality,
$$\norm{ q_\lambda^{\pi_{k}}\br*{s,\cdot} + \lambda\nabla \omega(\pi_k \br*{\cdot|s})}_* \leq \norm{ q_\lambda^{\pi_{k}}\br*{s,\cdot}}_* + \lambda\norm{\nabla \omega(\pi_k\br*{\cdot|s})}_* ,$$
by plugging the two former claims for the euclidean and non-euclidean cases, we get the required result.

Finally, the {\bf fourth claim} is the same as the third claim, but with an additional $A$ factor due to the importance sampling factor,
$$\norm{ A\hat q_\lambda^{\pi_{k}}\br*{s,\cdot,m} + \lambda\nabla \omega(\pi_k \br*{\cdot|s})}_\infty \leq A\norm{ \hat q_\lambda^{\pi_{k}}\br*{s,\cdot,m}}_\infty + \lambda\norm{\nabla \omega(\pi_k\br*{\cdot|s})}_\infty .$$
\end{proof}

Using the same techniques of the last lemma, we prove the following technical lemma, regarding the change in the gradient of the Bregman generating function $\omega$ of two consecutive iterations of TRPO, in the sample-based case.

\begin{lemma}[{bound on the difference of the gradient of $\omega$ between two consecutive policies in the sample-based case}]\label{lemma: difference of gradient of omegas approximate}
For each state-action pair, $s,a$, the difference between two consecutive policies of Sample-Based TRPO is bounded by:
\begin{align*}
    \norm{\nabla \omega({\pi_{k+1}}) - \nabla \omega({\pik})}_{\infty,\infty} \leq \diffBound(k),
\end{align*}
where $\diffBound(k)=t_k \frac{A^{3/2}\CmaxLambda}{1-\gamma}$ and $\diffBound(k)=t_k \frac{A\CmaxLambda \log k}{1-\gamma}$ in the euclidean and non-euclidean cases respectively, $k$ is the iteration number and $t_k$ is the step size used in the update.

\begin{proof}

In both the euclidean in non-euclidean cases, we discuss optimization problem \eqref{eq: update rule approximate final} for the sample-based case. Thus, 
for any visited state in the $k$-th iteration, $s\in \mathcal{S}^k_M \triangleq \brc*{s'\in \sset: \sum_{m=1}^{M} \mathbbm{1}\brc*{s'=s_m}>0}$, by \eqref{supp eq: approximate simplified first}
\begin{align*}
    \hatqHpik(s,a) \triangleq \frac{A\sum_{m=1}^{M} \mathbbm{1}\brc*{s=s_m,a=a_m}  \hat q_\lambda^{\pik}(s_m,\cdot,m)}{n(s)} \leq  \frac{A\sum_{m=1}^{M} \mathbbm{1}\brc*{s=s_m,a=a_m}  \frac{\CmaxLambda}{1-\gamma}}{n(s)} \leq \frac{A \CmaxLambda}{1-\gamma},
\end{align*}

where we denoted $n(s)=\sum_{a}n(s,a)$ the number of times the state $s$ was observed at the $k$-th episode and used the fact $\hatqHpik(s_m,\cdot,m_i)$ is sampled by unrolling the MDP. Thus, it holds that $$\hatqHpik(s,a)\leq \frac{A\CmaxLambda}{1-\gamma}.$$
Interestingly, because we use the importance sampling factor $A$ in the approximation of $\qHpik$, we obtain an additional $A$ factor.

First, notice that for states which were not encountered in the $k$-th iteration, i.e., all states $s$ for which $\sum_{m=1}^{M} \mathbbm{1}\brc*{s=s_m}=0$, the solution of the optimization problem is $\pi_{k+1}(\cdot\mid s)=\pi_{k}(\cdot\mid s)$. Thus, $\nabla \omegaS{s}{\pi_{k+1}}=\nabla \omegaS{s}{\pi_{k}}$ and the inequality trivially holds.

We now turn to discuss the case where $\sum_{m=1}^{M} \mathbbm{1}\brc*{s=s_m}>0$, i.e., $s \in \mathcal{S}^k_M$. We separate here the analysis for the euclidean and non-euclidean cases:
 
For the \textbf{euclidean case}, $\omega(\cdot)=\frac{1}{2}\norm{\cdot}_2^2$. Thus, the derivative of $\omega$ at a state $s$ is,
\begin{align}\label{eq: derivative of euclidean bregman generator approximate}
    \nabla \omegaS{s}{\pi} = \pi(\cdot \mid s).
\end{align}
By the first order optimality condition, for any state $s$ and policy $\pi$,
$$  \inner*{\nabla \omegaS{s}{\pi_{k+1}} - \nabla \omegaS{s}{\pi_k}, \pi_{k+1} (\cdot\mid s) - \pi}\leq t_k \inner*{\hat q_\lambda^{\pi_k}(s , \cdot) +\lambda \nabla \omegaS{s}{\pi_k} , \pi - \pi_{k+1}(\cdot\mid s)}. $$
Plugging in $\pi\triangleq \pi_k(\cdot \mid s)$, we get
\begin{align*} 
&\inner*{\nabla \omegaS{s}{\pi_{k+1}} - \nabla \omegaS{s}{\pi_k}, \pi_{k+1} (\cdot\mid s) - \pi_k(\cdot\mid s)}\leq t_k \inner{\hat q_\lambda^{\pi_k}(s , \cdot) + \lambda \nabla \omegaS{s}{\pi_k}, \pi_k(\cdot\mid s) - \pi_{k+1}(\cdot\mid s)}. 
\end{align*}
Plugging in \eqref{eq: derivative of euclidean bregman generator approximate}, we have that
\begin{align*} 
&\inner*{\nabla \omegaS{s}{\pi_{k+1}} - \nabla \omegaS{s}{\pi_k}, \nabla \omegaS{s}{\pi_{k+1}} - \nabla \omegaS{s}{\pi_k}}\leq t_k \inner*{\hat q_\lambda^{\pi_k}(s , \cdot) + \lambda \nabla \omegaS{s}{\pi_k} , \pi_k(\cdot\mid s) - \pi_{k+1}(\cdot\mid s)} ,
\end{align*}
which can be also written as
\begin{align*}
\norm{\nabla \omegaS{s}{\pi_{k+1}} - \nabla \omegaS{s}{\pi_k}}_2^2 \leq t_k \inner*{\hat q_\lambda^{\pi_k}(s , \cdot) + \lambda \nabla \omegaS{s}{\pi_k} , \pi_k(\cdot\mid s) - \pi_{k+1}(\cdot\mid s)}.
\end{align*}
Bounding the RHS using the Cauchy-Schwartz inequality, we get,
\begin{align*}
\norm{\nabla \omegaS{s}{\pi_{k+1}} - \nabla \omegaS{s}{\pi_k}}_2^2 \leq t_k \norm{\hat q_\lambda^{\pi_k}(s , \cdot) + \lambda \nabla \omegaS{s}{\pi_k}}_2 \norm{\pi_k(\cdot\mid s) - \pi_{k+1}(\cdot\mid s)}_2,
\end{align*}
which is the same as
\begin{align*}
\norm{\nabla \omegaS{s}{\pi_{k+1}} - \nabla \omegaS{s}{\pi_k}}_2^2 \leq t_k \norm{\hat q_\lambda^{\pi_k}(s , \cdot) + \lambda \nabla \omegaS{s}{\pi_k}}_2 \norm{\nabla \omegaS{s}{\pi_{k+1}} - \nabla \omegaS{s}{\pi_k}}_2,
\end{align*}
Dividing by $\norm{\nabla \omegaS{s}{\pi_{k+1}} - \nabla \omegaS{s}{\pi_k}}_2>0$ and noticing that in case it is $0$ the bound is trivially satisfied,
\begin{align*}
\norm{\nabla \omegaS{s}{\pi_{k+1}} - \nabla \omegaS{s}{\pi_k}}_2 \leq t_k \norm{\hat q_\lambda^{\pi_k}(s , \cdot) + \lambda \nabla \omegaS{s}{\pi_k}}_2.
\end{align*}
Finally, using the norm equivalence we get,
\begin{align*}
\norm{\nabla \omegaS{s}{\pi_{k+1}} - \nabla \omegaS{s}{\pi_k}}_\infty\leq \norm{\nabla \omegaS{s}{\pi_{k+1}} - \nabla \omegaS{s}{\pi_k}}_2 \leq t_k \norm{\hat q_\lambda^{\pi_k}(s , \cdot) + \lambda \nabla \omegaS{s}{\pi_k}}_2.
\end{align*}
Using the fourth claim of Lemma \ref{lemma: usefull bounds approximate} (in the euclidean setting), and the fact the this inequality holds uniformly for all $s \in \mathcal{S}^k_M$ concludes the result.

For the \textbf{non-euclidean case}, $ \omegaS{s}{\pi} = \sum_a \pi(a \mid s)\log\pi(a\mid s)$. Thus, the derivative at the state action pair, $s,a$, is
\begin{align*}
    \nabla_{\pi(a\mid s)} \omegaS{s}{\pi} = 1 + \log \pi(a \mid s).
\end{align*}
Thus, the difference between two consecutive policies is:
\begin{align*}
    \nabla_{\pi_{k+1}(a\mid s)} \omegaS{s}{\pi_{k+1}} - \nabla_{\pi_k(a\mid s)} \omegaS{s}{\pi_{k}} = \log \pi_{k+1}(a \mid s) - \log \pi_{k}(a \mid s)
\end{align*}

Restating \eqref{eq: non-euclidean bound eq 1},
\begin{align*}
\log\pi_{k+1}(a\mid s) &= \log \pi_{k}(a\mid s) \nonumber \\ &- t_{k} \br*{ \hat q_\lambda^{\pi_{k}}(s,a) + \lambda \log \pi_{k}(a\mid s) } \nonumber \\
&-\log\br*{\sum_{a'} \pi_{k}(a'\mid s)\exp\br*{- t_{k} \br*{\hat q_\lambda^{\pi_{k}}(s,a') +\lambda \log \pi_{k}(a'\mid s)}}}.
\end{align*}

First, we will bound $\log\pi_{k+1}(a\mid s)- \log \pi_{k}(a\mid s)$ from below:

Similarly to equation \ref{eq: bound on logsumexp}, bounding the last term in the RHS,
\begin{align*}
\log\br*{\sum_{a'} \pi_{k}(a'\mid s)\exp\br*{-t_{k} \br*{\hat q_\lambda^{\pi_{k}}(s,a')+\lambda \log \pi_k(a' \mid s) }}} \leq t_k\lambda\log A.
\end{align*}
Together with the fact that $\lambda t_k\log \pi_k(a\mid s) \leq 0 $, we obtain,
$$ \log\pi_{k+1}(a\mid s) - \log \pi_{k}(a\mid s)  \geq -  t_{k} \br*{\hat q_\lambda^{\pi_{k}}(s,a) +\lambda \log A}\geq -t_k \br*{\frac{A \CmaxLambda}{1-\gamma} +\lambda \log A} \geq -2t_k \frac{A \CmaxLambda}{1-\gamma}, $$ 
where the last relation is by the definition of $\CmaxLambda$

Next, it is left to bound $\log\pi_{k+1}(a\mid s) - \log \pi_{k}(a\mid s)$  from above.
Notice that,
\begin{align*}
\log\br*{\sum_{a'} \pi_{k}(a'\mid s)\exp\br*{-t_{k} \br*{\hat q_\lambda^{\pi_{k}}(s,a')+\lambda \log \pi_k(a' \mid s) }}} &\geq \log \sum_{a'} \pi_k(a' \mid s) \exp \br*{-t_k \frac{A\CmaxLambda}{1-\gamma} -\lambda t_k \log \pi (a' \mid s) } \\
& \geq \log \sum_{a'} \pi_k (a' \mid s) \exp \br*{-t_k \frac{A\CmaxLambda}{1-\gamma}  }  \\
& = \log \sum_{a'} \pi_k (a' \mid s)  + \log\exp \br*{-t_k \frac{A\CmaxLambda}{1-\gamma}  }  \\
 & = -t_k \frac{A\CmaxLambda}{1-\gamma} ,
\end{align*}

where in the first transition we used the fact that in the sample-based case $\norm{\hat q_\lambda^{\pi_k}}_{\infty,\infty}\leq \frac{A\CmaxLambda}{1-\gamma} $ due to the importance sampling applied in the estimation process, in the second transition we used the fact that the exponent is minimized when $\lambda t_k \log \pi(a'|s)$ is maximized and the fact that $\log \pi(a'|s)\leq 0$, and the last transition is by the fact $\sum_{a'} \pi_k(a'|s) = 0$.

Thus, we have
\begin{align*}
    \log \pi_{k+1}(a\mid s) - \log \pi_{k}(a\mid s) 
    &\leq  - t_k \br*{ \hat q_\lambda^{\pi_{k}}(s,a) + \lambda \log \pi_{k}(a\mid s) } + t_k \frac{A \CmaxLambda}{1-\gamma} \\
    & \leq  t_k \frac{A\CmaxLambda}{1-\gamma} - \lambda t_k \log \pi_{k}(a\mid s) \\
    & \leq  t_k \frac{A\CmaxLambda}{1-\gamma} + \lambda t_k \frac{A \Cmax + 2A\lambda \log A}{\lambda (1-\gamma)}(1+\log k)\\
    & \leq  t_k \frac{4 A\CmaxLambda \log k}{1-\gamma} ,
\end{align*}
where the third transition is due to \eqref{eq: bound on log pik in the sample based case}, and the last transition is by the the definition of $\CmaxLambda$.

Combining the two bounds we have,
\begin{align*}
-2t_k \frac{A\CmaxLambda}{1-\gamma} &\leq \log\pi_{k+1}(a\mid s) - \log \pi_{k}(a\mid s) \leq  4 t_{k}\frac{A\CmaxLambda}{1-\gamma}\log k\\
\iff -2t_k \frac{A\CmaxLambda}{1-\gamma} &\leq 1+ \log\pi_{k+1}(a\mid s) - (1-\log \pi_{k}(a\mid s) \leq  4 t_{k}\frac{A\CmaxLambda}{1-\gamma}\log k\\
\iff -2t_k \frac{A\CmaxLambda}{1-\gamma} &\leq \nabla_{\pi_{k+1}(a\mid s)} \omegaS{s}{\pi_{k+1}} - \nabla_{\pi_{k}(a\mid s)} \omegaS{s}{\pi_{k}}  \leq 4 t_{k}\frac{A\CmaxLambda}{1-\gamma}\log k,
\end{align*}
which concludes the proof.

\end{proof}
\end{lemma}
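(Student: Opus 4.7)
\begin{proofsketch}
The plan is to split into the euclidean and non-euclidean cases, handle unvisited states trivially (since $\pi_{k+1}(\cdot\mid s)=\pi_k(\cdot\mid s)$ when $s\notin \mathcal{S}^k_M$ and therefore the gradients agree), and for each visited state $s\in \mathcal{S}^k_M$ exploit the first-order optimality condition of the per-state update rule in~\eqref{eq: update rule approximate final}. Throughout, I will use the sampling-induced bound $\hat{q}_\lambda^{\pi_k}(s,a)\leq A\,\CmaxLambda/(1-\gamma)$ coming from the importance-sampling factor $A$ in the estimator, together with the pointwise bounds on $\nabla\omega(\pi_k)$ and on $\hat{q}_\lambda^{\pi_k}+\lambda\nabla\omega(\pi_k)$ supplied by Lemma~\ref{lemma: usefull bounds approximate} (claims 1 and 4).

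For the euclidean case the argument is short: since $\omega(\pi)=\tfrac12\|\pi\|_2^2$, we have $\nabla\omegaS{s}{\pi}=\pi(\cdot\mid s)$, so the first-order optimality condition at $\pi_{k+1}(\cdot\mid s)$ tested against $\pi=\pi_k(\cdot\mid s)$ reads
\begin{align*}
\langle \nabla\omegaS{s}{\pi_{k+1}}-\nabla\omegaS{s}{\pi_k},\,\pi_{k+1}(\cdot\mid s)-\pi_k(\cdot\mid s)\rangle\leq t_k\langle \hat{q}_\lambda^{\pi_k}(s,\cdot)+\lambda\nabla\omegaS{s}{\pi_k},\,\pi_k(\cdot\mid s)-\pi_{k+1}(\cdot\mid s)\rangle.
\end{align*}
The LHS is exactly $\|\nabla\omegaS{s}{\pi_{k+1}}-\nabla\omegaS{s}{\pi_k}\|_2^2$, and Cauchy--Schwarz on the RHS followed by division by $\|\nabla\omegaS{s}{\pi_{k+1}}-\nabla\omegaS{s}{\pi_k}\|_2$ (handling the zero case separately) yields the $L^2$ bound $t_k\,\hat h_\omega(k;\lambda)$. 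Converting $\|\cdot\|_2$ to $\|\cdot\|_\infty$ via norm equivalence (which is free here) and inserting $\hat h_\omega(k;\lambda)=O(A\CmaxLambda/(1-\gamma))$ from Lemma~\ref{lemma: usefull bounds approximate} gives the euclidean bound $A^{3/2}\CmaxLambda/(1-\gamma)\cdot t_k$, where one factor of $\sqrt{A}$ comes from upgrading an $L^\infty$-style bound on the action vector to $L^2$.

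For the non-euclidean case, I will instead exploit the explicit exponential form of the update: taking logs,
\begin{align*}
\log\pi_{k+1}(a\mid s)-\log\pi_k(a\mid s)=-t_k\bigl(\hat{q}_\lambda^{\pi_k}(s,a)+\lambda\log\pi_k(a\mid s)\bigr)-\log Z_k(s),
\end{align*}
where $Z_k(s)=\sum_{a'}\pi_k(a'\mid s)\exp(-t_k(\hat{q}_\lambda^{\pi_k}(s,a')+\lambda\log\pi_k(a'\mid s)))$. Since $\nabla H(\pi)=1+\log\pi$, controlling $\|\nabla\omega(\pi_{k+1})-\nabla\omega(\pi_k)\|_\infty$ reduces to two-sided bounds on the displayed log-ratio. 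For the lower bound I reuse the Jensen-based estimate $\log Z_k(s)\leq \lambda t_k\log A$ (as in the derivation leading to~\eqref{eq: bound on logsumexp}) and drop the non-positive term $-\lambda t_k\log\pi_k(a\mid s)$. For the upper bound I lower-bound $\log Z_k(s)\geq -t_k A\CmaxLambda/(1-\gamma)$ by replacing $\hat q_\lambda^{\pi_k}$ with its maximum $A\CmaxLambda/(1-\gamma)$ and using $\log\pi_k\leq 0$ to kill the entropy term inside the exponent. Finally, the remaining $-\lambda t_k\log\pi_k(a\mid s)$ term is bounded using the already-established estimate~\eqref{eq: bound on log pik in the sample based case}, $-\log\pi_k(a\mid s)\leq 3A\CmaxLambda(1+\log k)/(\lambda(1-\gamma))$, which produces the extra $\log k$ factor.

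The main obstacle is precisely this last step in the non-euclidean upper bound: the naive bound $-\log\pi_k\leq \log A$ is not available here because the iterates of Sample-Based TRPO can assign exponentially small mass to actions, and the $\log k$ growth in $\|\nabla\omega(\pi_k)\|_\infty$ must be carefully tracked through the recursion. Once the bound $\|\nabla\omega(\pi_k)\|_\infty=O(A\CmaxLambda\log k/(\lambda(1-\gamma)))$ from Lemma~\ref{lemma: usefull bounds approximate} is plugged in and the dominant scaling is extracted, combining the two one-sided bounds yields $|\log\pi_{k+1}(a\mid s)-\log\pi_k(a\mid s)|\leq O(t_k A\CmaxLambda\log k/(1-\gamma))$, which is the claimed $A_\omega(k)$ for the non-euclidean case.
\end{proofsketch}
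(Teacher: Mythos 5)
Your proposal matches the paper's proof essentially step for step: the same trivial treatment of unvisited states, the same first-order-optimality plus Cauchy--Schwarz argument in the euclidean case (with the $\sqrt{A}$ upgrade from the $\ell_\infty$ bound of Lemma~\ref{lemma: usefull bounds approximate} giving $A^{3/2}$), and the same two one-sided bounds on the log-ratio in the non-euclidean case, using the Jensen estimate $\log Z_k(s)\leq \lambda t_k\log A$ for the lower bound and the estimate~\eqref{eq: bound on log pik in the sample based case} to produce the $\log k$ factor in the upper bound. The argument is correct and no further changes are needed.
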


\begin{lemma}[{bounds on initial distance $\omegaBound$}]\label{lemma: bounds on Dw}
Let $\pi_0$ be the uniform policy over all states, and $D_\omega$ be an upper bound on $\max_\pi\norm{\bregman{\pi_0}{\pi}}_\infty$, i.e., ${ \max_\pi\norm{\bregman{\pi_0}{\pi}} \leq D_\omega}$. Then, the following claims hold.
\begin{enumerate}
    \item For $\omega(\cdot)=\frac{1}{2}\norm{\cdot}_2^2$, $\omegaBound =  1.$
    \item For $\omega(\cdot)=H(\cdot)$, $\omegaBound = \log A.$
\end{enumerate}
\end{lemma}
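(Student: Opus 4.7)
The plan is to verify each claim by direct computation at a fixed state $s$, and then take the maximum over $s$; since $\pi_0$ is uniform at every state, the bound obtained at a single state will be state-independent, so $\norm{B_\omega(\pi_0,\pi)}_\infty$ reduces to the statewise Bregman distance $B_\omega(\pi_0(\cdot\mid s),\pi(\cdot\mid s))$ maximized over $\pi(\cdot\mid s)\in\Delta_{\A}$.

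For claim 1 (euclidean $\omega(\cdot)=\tfrac12\norm{\cdot}_2^2$), the Bregman distance is $\tfrac12\norm{\pi_0(\cdot\mid s)-\pi(\cdot\mid s)}_2^2$. I would maximize this convex function over the simplex; the maximum of a convex function on a polytope is attained at an extreme point, i.e., a deterministic policy. For a deterministic $\pi$ put a mass $1$ at some action and $0$ elsewhere, so the squared distance to the uniform distribution equals $(1-\tfrac1A)^2+(A-1)\tfrac{1}{A^2}=\tfrac{A-1}{A}$. Hence $B_\omega(\pi_0(\cdot\mid s),\pi(\cdot\mid s))\le \tfrac{A-1}{2A}<1$, and the bound $D_\omega=1$ follows.

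For claim 2 (non-euclidean $\omega(\cdot)=H(\cdot)$), the Bregman distance is the KL divergence $\dkl{\pi(\cdot\mid s)}{\pi_0(\cdot\mid s)}$. With $\pi_0(a\mid s)=1/A$, I expand
\begin{align*}
\dkl{\pi(\cdot\mid s)}{\pi_0(\cdot\mid s)}=\sum_{a}\pi(a\mid s)\log\frac{\pi(a\mid s)}{1/A}=\log A + \sum_{a}\pi(a\mid s)\log\pi(a\mid s)=\log A - H(\pi(\cdot\mid s)).
\end{align*}
Since $H(\pi(\cdot\mid s))\ge 0$, we get $\dkl{\pi(\cdot\mid s)}{\pi_0(\cdot\mid s)}\le \log A$, yielding $D_\omega=\log A$.

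There is no real obstacle here: both claims are elementary consequences of extremizing a convex objective over the simplex from the uniform center, and both reduce to a one-line computation once one observes that the statewise bound is uniform in $s$ because $\pi_0$ is uniform. The only care needed is the translation of $\omega(\cdot)=H(\cdot)$ used in the lemma statement to the shifted version $\omega(\cdot)=H(\cdot)+\log A$ used elsewhere in the paper, but since shifting $\omega$ by an additive constant does not change the Bregman distance, the computation above is unaffected.
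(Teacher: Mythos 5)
Your proof is correct and follows essentially the same elementary computation as the paper's: the euclidean claim by bounding the squared distance of a point of the simplex from the uniform distribution (you do it exactly via extreme points, the paper via a cruder term-by-term bound, both giving $D_\omega=1$), and the KL claim via $\dkl{\pi}{\pi_0}=\log A+\sum_a\pi(a\mid s)\log\pi(a\mid s)\le\log A$. One remark: like the paper's own proof, you actually bound the direction $\bregmanShort{\pi}{\pi_0}$ (i.e.\ $\dkl{\pi}{\pi_0}$), which is the quantity used in the convergence analyses, whereas the literal order $\bregmanShort{\pi_0}{\pi}=\dkl{\pi_0}{\pi}$ written in the lemma statement would be unbounded for policies assigning zero mass to some action; your reading is the intended one.
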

\begin{proof}
For brevity, without loss of generality we omit the dependency on the state $s$. We start by proving the first claim. For the euclidean case,
\begin{align*}
   \bregman{\pi}{\pi_0} &= \frac{1}{2}\norm{\pi-\pi_0}_2^2\\
   &=\frac{1}{2}\sum_{a}(\pi(a)-\frac{1}{A})^2\\
   &\leq\frac{1}{2}\sum_{a}\pi^2(a)+\sum_{a}\frac{1}{A^2}\\
   &=\frac{1}{2A} + \frac{1}{2}\sum_{a}\pi^2(a)\\
   &\leq \frac{1}{2A} + \frac{1}{2}\sum_{a}\pi(a)= \frac{1}{2A} +\frac{1}{2},
\end{align*}
where the fifth relation holds since $x^2\leq x$ for $x\in [0,1]$, and the sixth relation holds since $\pi$ is a probability measure.

For the non-euclidean case the following relation holds.
\begin{align*}
    \bregman{\pi}{\pi_0} &= \dkl{\pi}{\pi_0}\\
                        &= \sum_a \pi(a)\log A\pi(a)\\
                        &= \sum_a \pi(a)\log \pi(a) + \sum_a \pi(a)\log A\\
                        &= \sum_a \pi(a)\log \pi(a) + \log A\sum_a \pi(a)\\
                        &= H(\pi) + \log A,
\end{align*}
where $H$ is the negative entropy. Since $H(\pi)\leq 0$ we get that $\bregman{\pi}{\pi_0}\leq \log A$ and conclude the proof.

\end{proof}

The following Lemma as many instances in previous literature (e.g., \cite{scherrer2014local}[Lemma 1]) in the unregularized case, when $\lambda=0$. Here we generalize it to the regularized case, for $\lambda>0$.
\begin{lemma}[value difference to Bellman differences]\label{lemma: greedy to difference}
For any policies $\pi$ and $\pi'$, the following claims hold:
\begin{enumerate}
    \item $v_\lambda^{\pi'}-v_\lambda^\pi = (I-\gamma P^{\pi'})^{-1}(T_\lambda^{\pi'} v_\lambda^\pi - v_\lambda^\pi).$
    \item $T_\lambda^{\pi'} v_\lambda^\pi - v_\lambda^\pi = (I-\gamma P^{\pi'})(v_\lambda^{\pi'}-v_\lambda^\pi).$
    \item $\mu\br*{v_\lambda^{\pi'}-v_\lambda^\pi} = \frac{1}{1-\gamma}d_{\mu,\pi'}(T_\lambda^{\pi'} v_\lambda^\pi - v_\lambda^\pi)$.
\end{enumerate}

\begin{proof}
The first claim holds by the following relations.
\begin{align*}
     v_\lambda^{\pi'}-v_\lambda^\pi &= (I - \gamma P^{\pi'})^{-1} c_\lambda^{\pi'} - (I-\gamma P^{\pi'})^{-1} (I-\gamma P^{\pi'}) v_\lambda^\pi \\
     & = (I-\gamma P^{\pi'})^{-1} (c_\lambda^{\pi'} + \gamma P^{\pi'} v_\lambda^\pi - v_\lambda^\pi)\\
     &= (I-\gamma P^{\pi'})^{-1} (T_\lambda^{\pi'} v_\lambda^\pi - v_\lambda^\pi).
\end{align*}

The second claim follows by multiplying both sides by $(I-\gamma P^{\pi'})$. The third claim holds by multiplying both sides of the first claim by $\mu$ and using the definition $d_{\mu,\pi'}=(1-\gamma)\mu(I-\gamma P^{\pi'})^{-1}$.
\end{proof}
\end{lemma}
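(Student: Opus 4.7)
The plan is to prove the three claims in order, since each one reduces algebraically to an identity involving the definition of $v_\lambda^{\pi'}$ as the fixed point of $T_\lambda^{\pi'}$ and the definition of $d_{\mu,\pi'}$.

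For claim 1, I will start from the fixed-point identity $v_\lambda^{\pi'}=(I-\gamma P^{\pi'})^{-1}c_\lambda^{\pi'}$ and rewrite $v_\lambda^\pi$ as $(I-\gamma P^{\pi'})^{-1}(I-\gamma P^{\pi'})v_\lambda^\pi$. Subtracting, factoring out $(I-\gamma P^{\pi'})^{-1}$ on the left, and recognizing $c_\lambda^{\pi'}+\gamma P^{\pi'}v_\lambda^\pi=T_\lambda^{\pi'}v_\lambda^\pi$ yields the claim. The only nontrivial input is that $(I-\gamma P^{\pi'})$ is invertible, which holds because $P^{\pi'}$ is a stochastic matrix and $\gamma<1$ (so $\gamma P^{\pi'}$ has spectral radius $<1$).

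Claim 2 follows immediately from claim 1 by left-multiplying both sides by $(I-\gamma P^{\pi'})$, which cancels the inverse. Claim 3 follows by left-multiplying claim 1 by the row vector $\mu$ and then using the definition $d_{\mu,\pi'}\triangleq (1-\gamma)\mu(I-\gamma P^{\pi'})^{-1}$, so that $\mu(I-\gamma P^{\pi'})^{-1}=\tfrac{1}{1-\gamma}d_{\mu,\pi'}$.

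There is no real obstacle here: the whole lemma is a one-line manipulation repeated three times, and is just the regularized analogue of the standard performance-difference identity used throughout approximate policy iteration (the presence of the regularizer does not interfere, since it is absorbed into $c_\lambda^{\pi'}$ and $T_\lambda^{\pi'}$ which satisfy exactly the same linear identities as in the unregularized case). The only thing to be careful about is bookkeeping: $\mu$ and $d_{\mu,\pi'}$ are row vectors while $v_\lambda^\pi$ and the Bellman residuals are column vectors, and $(I-\gamma P^{\pi'})^{-1}$ acts on the appropriate side in each claim.
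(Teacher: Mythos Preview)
Your proposal is correct and follows essentially the same approach as the paper: both write $v_\lambda^{\pi'}=(I-\gamma P^{\pi'})^{-1}c_\lambda^{\pi'}$ and $v_\lambda^\pi=(I-\gamma P^{\pi'})^{-1}(I-\gamma P^{\pi'})v_\lambda^\pi$, factor out the inverse, and identify $T_\lambda^{\pi'}v_\lambda^\pi$, then obtain claims 2 and 3 by multiplying by $(I-\gamma P^{\pi'})$ and by $\mu$ respectively. Your extra remark about invertibility of $(I-\gamma P^{\pi'})$ is a nice touch but does not change the argument.
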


\section{Useful Lemmas from Convex Analysis}\label{supp: Useful Lemmas from Convex Analysis}

We state two basic results which are essential to the analysis of  convergence. A full proof can be found in \cite{beck2017first}.
\begin{lemma}[Beck 2017, Lemma 9.11, three-points lemma]\label{lemma: three-points} Suppose that $\omega:\E\rightarrow(-\infty,\infty]$ is proper closed and convex. Suppose in addition that $\omega$ is differentiable over $\dom(\partial\omega)$. Assume that $\mathbf{a},\mathbf{b}\in\dom(\partial\omega)$ and $\textbf{c}\in\dom(\omega)$. Then the following equality holds:
\begin{align*}
    \langle \nabla\omega(\mathbf{b})- \nabla\omega(\mathbf{a}), \mathbf{c} - \mathbf{a} \rangle = \bregman{\mathbf{c}}{\mathbf{a}} + \bregman{\mathbf{a}}{\mathbf{b}} - \bregman{\mathbf{c}}{\mathbf{b}}.
\end{align*}
\end{lemma}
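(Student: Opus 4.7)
The plan is to prove this identity by direct expansion, since it is a purely algebraic consequence of the definition of the Bregman distance, and no deeper structural property of $\omega$ is needed beyond differentiability at $\mathbf{a}$ and $\mathbf{b}$.

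First, I would recall the definition $\bregman{x}{y} = \omega(x) - \omega(y) - \langle \nabla\omega(y), x-y\rangle$, which requires $y \in \dom(\partial\omega)$ so that $\nabla\omega(y)$ makes sense. The hypothesis $\mathbf{a},\mathbf{b}\in\dom(\partial\omega)$ guarantees that all three Bregman distances $\bregman{\mathbf{c}}{\mathbf{a}}$, $\bregman{\mathbf{a}}{\mathbf{b}}$, and $\bregman{\mathbf{c}}{\mathbf{b}}$ are well-defined (note that $\mathbf{c}$ appears only as a first argument, so it suffices that $\mathbf{c} \in \dom(\omega)$).

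Next, I would write out the right-hand side term by term:
\begin{align*}
\bregman{\mathbf{c}}{\mathbf{a}} &= \omega(\mathbf{c}) - \omega(\mathbf{a}) - \langle \nabla\omega(\mathbf{a}), \mathbf{c}-\mathbf{a}\rangle, \\
\bregman{\mathbf{a}}{\mathbf{b}} &= \omega(\mathbf{a}) - \omega(\mathbf{b}) - \langle \nabla\omega(\mathbf{b}), \mathbf{a}-\mathbf{b}\rangle, \\
\bregman{\mathbf{c}}{\mathbf{b}} &= \omega(\mathbf{c}) - \omega(\mathbf{b}) - \langle \nabla\omega(\mathbf{b}), \mathbf{c}-\mathbf{b}\rangle.
\end{align*}
Summing the first two and subtracting the third, the $\omega(\mathbf{c})$, $\omega(\mathbf{a})$, and $\omega(\mathbf{b})$ terms all cancel. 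What remains is $-\langle \nabla\omega(\mathbf{a}), \mathbf{c}-\mathbf{a}\rangle - \langle \nabla\omega(\mathbf{b}), \mathbf{a}-\mathbf{b}\rangle + \langle \nabla\omega(\mathbf{b}), \mathbf{c}-\mathbf{b}\rangle$.

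Finally, I would combine the two inner products involving $\nabla\omega(\mathbf{b})$ by linearity: $\langle \nabla\omega(\mathbf{b}), (\mathbf{c}-\mathbf{b}) - (\mathbf{a}-\mathbf{b})\rangle = \langle \nabla\omega(\mathbf{b}), \mathbf{c}-\mathbf{a}\rangle$. Adding the remaining term $-\langle \nabla\omega(\mathbf{a}), \mathbf{c}-\mathbf{a}\rangle$ yields $\langle \nabla\omega(\mathbf{b}) - \nabla\omega(\mathbf{a}), \mathbf{c}-\mathbf{a}\rangle$, which matches the left-hand side. There is no real obstacle here; the only thing to be careful about is keeping signs straight during the cancellation of the $\omega$ values, and noting that the hypothesis on $\mathbf{c}$ (merely $\mathbf{c}\in\dom(\omega)$) is exactly what is needed since $\mathbf{c}$ never appears as an argument of $\nabla\omega$.
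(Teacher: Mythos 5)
Your proof is correct: the identity follows exactly from expanding the three Bregman distances via $\bregman{x}{y}=\omega(x)-\omega(y)-\langle\nabla\omega(y),x-y\rangle$, cancelling the $\omega$ values, and combining the $\nabla\omega(\mathbf{b})$ terms, and your remark on why $\mathbf{c}\in\dom(\omega)$ suffices is exactly right. The paper does not prove this lemma itself but cites Beck (2017, Lemma 9.11), whose proof is this same direct expansion, so your argument is essentially the standard one.
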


\begin{theorem}[Beck 2017, Theorem 9.12, non-euclidean second prox theorem]\label{theorem: second prox}
\begin{itemize} 
    \item[]
    \item $\omega:\E\rightarrow (-\infty,\infty]$ be a proper closed and convex function differentiable over $\dom(\partial\omega)$.
    
    \item $\psi:\E\rightarrow(-\infty,\infty]$ be a proper closed and convex function satisfying $\dom(\psi) \subseteq \dom(\omega)$.
    
    \item $\omega+\delta_{\dom(\psi)}$ be $\sigma$-strongly convex ($\sigma>0$).
    
\end{itemize}
Assume that $\mathbf{b}\in\dom(\partial\omega)$, and let $\mathbf{a}$ be defined by
\begin{align*}
    \mathbf{a}={\arg\min}_{\mathbf{x}\in \E} \{\psi(\mathbf{x})+\bregman{\mathbf{x}}{\mathbf{b}}\}.
\end{align*}

Then $\mathbf{a}\in\dom(\partial\omega)$ and for all $\mathbf{u}\in\dom(\psi)$,
\begin{align*}
        \langle \nabla\omega(\mathbf{b})- \nabla\omega(\mathbf{a}), \mathbf{u} - \mathbf{a} \rangle \leq \psi(\mathbf{u}) -\psi(\mathbf{a}).
\end{align*}
\end{theorem}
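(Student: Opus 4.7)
The plan is to derive the claimed inequality from the first-order optimality condition at $\mathbf{a}$ together with the subgradient inequality for $\psi$, after first justifying that $\mathbf{a}$ exists, is unique, and lies in $\dom(\partial\omega)$.

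First I would establish existence and uniqueness of the minimizer. Define $F(\mathbf{x})\triangleq \psi(\mathbf{x})+\bregman{\mathbf{x}}{\mathbf{b}}$ on $\dom(\psi)$. Since $\bregman{\mathbf{x}}{\mathbf{b}}=\omega(\mathbf{x})-\omega(\mathbf{b})-\langle\nabla\omega(\mathbf{b}),\mathbf{x}-\mathbf{b}\rangle$, the function $F$ is proper closed and, because $\omega+\delta_{\dom(\psi)}$ is $\sigma$-strongly convex and $\psi$ is convex, $F$ is $\sigma$-strongly convex on $\dom(\psi)$. Strong convexity guarantees that the minimizer $\mathbf{a}$ exists and is unique, so $\mathbf{a}=\arg\min_{\mathbf{x}\in\E}F(\mathbf{x})$ is well defined.

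Next I would verify $\mathbf{a}\in\dom(\partial\omega)$ and derive the key optimality identity. Because $\mathbf{b}\in\dom(\partial\omega)$, $\nabla\omega(\mathbf{b})$ is a bona fide vector, so the linear term in $F$ is finite everywhere and the subdifferential sum rule gives
\begin{equation*}
\partial F(\mathbf{x}) = \partial\psi(\mathbf{x}) + \partial\omega(\mathbf{x}) - \nabla\omega(\mathbf{b})
\end{equation*}
whenever the usual qualification holds (which follows from $\dom(\psi)\subseteq\dom(\omega)$ and $\mathbf{b}\in\dom(\partial\omega)$, so that $\mathrm{ri}(\dom\psi)\cap\mathrm{ri}(\dom\omega)\ne\emptyset$ in the convex setting). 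Fermat's rule then yields $0\in\partial F(\mathbf{a})$, which, combined with differentiability of $\omega$ on $\dom(\partial\omega)$, forces $\partial\omega(\mathbf{a})$ to be nonempty — hence $\mathbf{a}\in\dom(\partial\omega)$ and $\partial\omega(\mathbf{a})=\{\nabla\omega(\mathbf{a})\}$. Therefore there exists $\boldsymbol{\xi}\in\partial\psi(\mathbf{a})$ with
\begin{equation*}
\boldsymbol{\xi} \;=\; \nabla\omega(\mathbf{b})-\nabla\omega(\mathbf{a}).
\end{equation*}

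Finally, I would conclude via the subgradient inequality: for any $\mathbf{u}\in\dom(\psi)$,
\begin{equation*}
\psi(\mathbf{u})-\psi(\mathbf{a}) \;\geq\; \langle \boldsymbol{\xi},\,\mathbf{u}-\mathbf{a}\rangle \;=\; \langle \nabla\omega(\mathbf{b})-\nabla\omega(\mathbf{a}),\,\mathbf{u}-\mathbf{a}\rangle,
\end{equation*}
which is exactly the desired inequality.

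The main obstacle is the technical step of justifying that $\mathbf{a}\in\dom(\partial\omega)$ and that the subdifferential sum rule applies cleanly at $\mathbf{a}$ — for general proper closed convex $\omega$, a minimizer of $\psi+B_\omega(\cdot,\mathbf{b})$ over $\E$ might in principle sit on the boundary of $\dom(\omega)$ where $\nabla\omega(\mathbf{a})$ is not defined. Under the stated assumptions (differentiability of $\omega$ on $\dom(\partial\omega)$, $\dom(\psi)\subseteq\dom(\omega)$, $\sigma$-strong convexity of $\omega+\delta_{\dom(\psi)}$, and $\mathbf{b}\in\dom(\partial\omega)$), the standard argument — invoking Fermat's rule together with the sum rule and noting that optimality forces $\nabla\omega(\mathbf{b})-(\text{element of }\partial\psi(\mathbf{a}))\in\partial\omega(\mathbf{a})$ — suffices, but it is precisely this regularity-at-the-optimum verification that carries the weight of the proof.
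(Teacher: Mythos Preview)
The paper does not actually prove this theorem; it merely states it in Appendix~G and defers the proof to \cite{beck2017first}. Your argument is correct and is essentially the standard proof given in Beck's book: Fermat's rule at the minimizer $\mathbf{a}$ together with the subdifferential sum rule yields $\nabla\omega(\mathbf{b})-\nabla\omega(\mathbf{a})\in\partial\psi(\mathbf{a})$, and the subgradient inequality for $\psi$ then gives the claim. You have also correctly identified that the only delicate point is the regularity verification $\mathbf{a}\in\dom(\partial\omega)$, which is exactly where Beck's treatment spends its effort.
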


\end{appendices}

\end{document}